\documentclass[letterpaper,twocolumn,10pt]{article}
\usepackage{usenix}

\usepackage{tikz}
\usepackage{amsmath}

\usepackage{amsmath,amsfonts,bm}

\def\eqref#1{equation~\ref{#1}}

\def\1{\bm{1}}

\DeclareMathAlphabet{\mathsfit}{\encodingdefault}{\sfdefault}{m}{sl}
\SetMathAlphabet{\mathsfit}{bold}{\encodingdefault}{\sfdefault}{bx}{n}

\usepackage{todonotes}
\usepackage{amssymb}
\usepackage{mathtools}
\usepackage{amsthm}

\usepackage[utf8]{inputenc} %
\usepackage[T1]{fontenc}    %
\usepackage{hyperref}       %
\usepackage{xurl}
\usepackage{booktabs}       %
\usepackage{amsfonts}       %
\usepackage{nicefrac}       %
\usepackage{microtype}      %
\usepackage{xcolor}         %

\usepackage{booktabs}
\usepackage{soul}
\usepackage[numbers]{natbib}
\usepackage{comment}

\usepackage[justification=centering]{subfig}
\usepackage{graphicx}
\usepackage{cleveref}
\usepackage{fancyhdr}

\usepackage{enumitem}

\usepackage{nopageno}

\theoremstyle{plain}
\newtheorem{theorem}{Theorem}[section]

\newtheorem{lemma}[theorem]{Lemma}
\newtheorem{corollary}[theorem]{Corollary}
\newtheorem{fact}[theorem]{Fact}
\theoremstyle{definition}
\newtheorem{definition}[theorem]{Definition}

\theoremstyle{remark}

\newif\ifarxiv
\arxivtrue

\renewcommand{\paragraph}[1]{\noindent\textbf{#1}~~}

\DeclareRobustCommand*{\authorrefmark}[1]{\raisebox{0pt}[0pt][0pt]{\textsuperscript{\footnotesize\ensuremath{\ifcase#1\or *\or \dagger\or \ddagger\or%
    \mathsection\or \mathparagraph\or \|\or **\or \dagger\dagger%
    \or \ddagger\ddagger \else\textsuperscript{\expandafter\romannumeral#1}\fi}}}}

\begin{document}

\date{}

\title{\Large \bf Gradients Look Alike: Sensitivity is Often Overestimated in DP-SGD}

\author{Anvith Thudi\authorrefmark{3}\authorrefmark{4},Hengrui Jia\authorrefmark{3}\authorrefmark{4},Casey Meehan\authorrefmark{2},Ilia Shumailov\authorrefmark{1},Nicolas Papernot\authorrefmark{3}\authorrefmark{4} \\ 
 University of Toronto\authorrefmark{3}, Vector Institute\authorrefmark{4}, University of California, San Diego\authorrefmark{2}, University of Oxford\authorrefmark{1}}
\maketitle

\begin{abstract}
\looseness=-1
Differentially private stochastic gradient descent (DP-SGD) is the canonical approach to private deep learning. While the current privacy analysis of DP-SGD is known to be tight in some settings, several empirical results suggest that models trained on common benchmark datasets leak significantly less privacy for many datapoints. Yet, despite past attempts, a rigorous explanation for why this is the case has not been reached. Is it because there exist tighter privacy upper bounds when restricted to these dataset settings, or are our attacks not strong enough for certain datapoints? In this paper, we provide the first per-instance (i.e., ``data-dependent") DP analysis of DP-SGD. Our analysis captures the intuition that points with similar neighbors in the dataset enjoy better data-dependent privacy than outliers. Formally, this is done by modifying the per-step privacy analysis of DP-SGD to introduce a dependence on the distribution of model updates computed from a training dataset. We further develop a new composition theorem to effectively use this new per-step analysis to reason about an entire training run. Put all together, our evaluation shows that this novel DP-SGD analysis allows us to now \emph{formally} show that DP-SGD leaks significantly less privacy for many datapoints (when trained on common benchmarks) than the current data-independent guarantee. This implies privacy attacks will necessarily fail against many datapoints if the adversary does not have sufficient control over the possible training datasets. \footnote{Accepted at the 33rd USENIX Security Symposium. This version contains an extended appendix.}
\end{abstract}

\section{Introduction}

Differential Privacy (DP) is the standard framework for private data analysis~\citep{dwork2006calibrating}. Making an algorithm differentially private limits the success any attack can have in knowing whether any datapoint was or was not an input to the algorithm given just the outputs of the algorithm. %
To obtain this notion of indistinguishability the algorithm needs to perform a noisy analysis of the data. %
In the case of deep learning, the canonical private training algorithm is DP-SGD~\citep{abadi2016deep}, where Gaussian noise is added to the gradients computed on training examples. Much work has gone into improving the privacy analysis of DP-SGD for a given amount of noise \citep{mironov2017renyi,mironov2019r,gopi2021numerical} in an effort to minimize the impact of noise on performance.

\looseness=-1
To reiterate, this current privacy analysis for DP-SGD  is \textit{data independent}: it assumes an upper-bound on how much \emph{any} individual datapoint from \textit{any} dataset can have their privacy leaked. It is furthermore now known to be tight; there exist specific pairs of datasets and models for which a privacy attack can match the upper-bound of DP-SGD~\citep{nasr2021adversary}. Yet, when training on common benchmark datasets like CIFAR10, \citet{carlini2022privacy} empirically saw that even strong privacy attacks perform significantly worse for many datapoints than the guarantees associated with DP-SGD. %
That is, when training on real-world data, there is a gap between what our strongest attacks can achieve and what our current data-independent privacy analysis of DP-SGD can tell us. Hence the question, why is there a gap? Is it because our attacks are still too weak for many datapoints, or is it because there exist tighter privacy upper bounds when restricted to these dataset settings? Past work attempted to answer this by analyzing specific privacy attacks \citep{mahloujifar2022optimal, thudi2022bounding, guo2022bounding}, or studying a weakened notion of DP~\citep{yu2022individual}. But all past work either have bounds limited in scope or with unproven assumptions. No work has yet derived tighter indistinguishability guarantees that are specific to the data being analyzed, analogous to how DP gives indistinguishability guarantees to prevent privacy attacks (for all possible datasets).

Our work provides the first per-instance DP analysis of DP-SGD, i.e., bounds on the distinguishability of outputted models that are specific to training on a given dataset or the dataset plus a point. This analysis bridges the theoretical gap between the tight data-independent analysis of DP-SGD and what is achievable when training on common deep-learning datasets. %
These new guarantees follow from an exploration of the role of \textit{sensivity} in privacy analysis. Currently, to obtain data-independent privacy guarantees, a model trainer needs to bound how much \emph{any} individual datapoint from \textit{any} dataset can contribute to a gradient update---a quantity known as the algorithm's sensitivity. This is currently done by setting an upper-bound ahead of time which is enforced during training by clipping the gradient computed on each datapoint %
to a norm below this preset sensitivity value. However, we highlight that this overestimates the sensitivity of DP-SGD to a \emph{specific} datapoint in a \textit{given} dataset when that datapoint's update is similar to the update given by many other datapoints in this dataset. In deep learning, many mini-batches in a dataset do produce similar gradients~\citep{shumailov2021dataorder, thudi2022necessity, kong2022forgeability}, hence such a case of overestimating sensitivity is common. Our per-instance (i.e., ``data-dependent") DP analysis of DP-SGD leverages this phenomenon.

Let us first focus on a single update of DP-SGD. Intuitively, if many of the datapoints produced almost the same gradient, then with high probability we would have obtained the same updated model with or without one of these datapoints. Making this intuition rigorous, we introduce a class of distributions we call \textit{sensitivity distributions}: broadly they capture the difference between updates computed from a given mini-batch to sampling another mini-batch. From this, we derive new bounds on the privacy leakage of a single DP-SGD update that incorporates how concentrated these distributions are at small values, i.e., have many mini-batches that produce almost the same gradient. Using this bound we can show that for many datapoints in common benchmark datasets, the individual per-step guarantee for that point can be magnitudes lower than the data-independent guarantee.

Building on our analysis for a single DP-SGD update, we give a per-instance bound on the \emph{overall} privacy leakage of a \emph{full DP-SGD run}. The current analysis considers the model that leaks the most privacy at every step (the worst-case model) and notes that summing the maximum per-step leakages bounds the overall privacy leakage of a DP-SGD run. %
Yet, the sensitivity distributions that we introduce are heavily dependent on the model being updated:  e.g., there is a difference between gradients computed using a partially-trained model and a randomly-initialized model. 
Towards not relying on analyzing worst-case models, intuitively it should not matter what the privacy leakage of the worst-case models is if they are  unlikely to be reached. %
More rigorously, we 
develop a new composition theorem
which allows us to upper-bound the overall per-instance privacy leakage of using DP-SGD by the expected privacy leakage at each step during training.

These analytic results give %
a new framework to understand the privacy guarantees of DP-SGD for individual datapoints. However, it remains to verify whether this analysis is tight enough to show that many datapoints have better privacy when training on benchmark datasets.
We thus turn to experimentation.\footnote{The code is at \url{https://github.com/cleverhans-lab/Gradients-Look-Alike-Sensitivity-is-Often-Overestimated-in-DP-SGD}.} 
The crux of implementing our results is to repeat training several times to compute the expected per-step privacy leakage. %
Because this one-dimensional statistic is bounded by the existing worst-case privacy analysis, one  achieves non-trivial estimates with few samples.  Doing this:

\begin{enumerate}[leftmargin=*,noitemsep,topsep=0pt]
    \item We show that when training on common benchmark datasets, many data points have better per-instance privacy than what the current data-independent guarantee associated with DP-SGD tells us.
    For some datapoints, we observe more than a magnitude improvement in the privacy guarantee  $\varepsilon$. %
    This explains the prior results we motivated our work with: for many datapoints, attacks that can only observe the outputs from training with or without the datapoint will fail.
    
    \item  In our framework, we observe a disparity where correctly-classified points obtain better privacy guarantees than misclassified points. In other words, %
    training algorithms that lead to high-performing models quantifiably leak less per-instance privacy for many points. This is as they reach states that have similar updates for large clusters of datapoints. We hypothesize that %
    designing model architectures to be more performative may also make them more private.
    \item In classical privacy analysis, training with higher mini-batch sampling rates leaks more privacy. However we find that for certain update rules, training with higher sampling rates can give better per-instance privacy because mini-batch updates concentrate on the dataset mean; this leads to many mini-batches with similar updates. 
    
\end{enumerate}
 
 \looseness=-1
 The consequences of our work are far reaching: having better per-instance DP guarantees has implications for unlearning, generalization, memorization and privacy auditing because of how DP formulates privacy by preventing distinguishability between the models trained with or without a datapoint. %
 For unlearning, a strong per-instance DP guarantee implies that the models coming from training with a datapoint are indistinguishable to the models trained without it. We further discuss in the paper how per-instance DP guarantees can still be used to satisfy a private notion of unlearning, and provide a naive first algorithm which we hope motivates future work. For generalization and memorization, a strong per-instance DP guarantee implies that the models coming from training without a datapoint perform similarly to those that had trained with it. For privacy auditing, our work provides empirical upper-bounds to complement previous work on lower-bounds established by strong privacy attacks, allowing future work to test if such attacks are tight. We note however that privacy auditing can affect the data-independent privacy guarantee, and discuss mitigations and paths for future work in the paper. With our framework, one can now say a specific datapoint does not need to be unlearned, or that a datapoint will not be memorized. However, our work leaves open how to apply our analysis to obtain better data-independent privacy guarantees when possible.

\section{Background}

Here we describe the current data-independent privacy analysis of DP-SGD (Section~\ref{ssec:DP-SGD_back}) and the relevance of per-instance DP in explaining empirical privacy attacks in contrast to past approaches (Section~\ref{ssec:per-instance_back}). We also discuss the implication of per-instance DP for unlearning and memorization in Section~\ref{ssec:per-instance_back}. Later, in Section~\ref{ssec:back_full_comp}, we describe past work on generalizing composition theorems and how they are not applicable for a better per-instance analysis of DP-SGD.

\paragraph{Machine Learning Notation}
We consider a learning setup where we have a dataset $X = \{x_1,\cdots x_n\}$ with datapoints from some space $\mathfrak{X}$ (e.g., images and their labels). Given a loss function $\mathcal{L}: \mathbb{R}^d \times \mathfrak{X} \rightarrow \mathbb{R}$, our objective is to minimize the loss $\frac{1}{n}\sum_{x_i \in X} \mathcal{L}(\theta,x_i)$ with respect to the parameters $\theta \in \mathbb{R}^d$ of some model. The canonical approach to do this for deep learning models is to use stochastic gradient descent (SGD). However, we consider having an additional requirement that the models we obtain should not leak the ``privacy" of individual datapoints

\subsection{DP-SGD Analysis}
\label{ssec:DP-SGD_back}

DP~\citep{dwork2006calibrating} is the de-facto definition of privacy used in ML. The typical definition used in machine learning is given below, where one thinks of $M$ as the training algorithm:

\begin{definition}[$(\epsilon,\delta)$-DP]
    An algorithm $M$ is said to be $(\epsilon,\delta)$-DP if for all neighbouring datasets $X, X'$ (i.e. Hamming distance $1$ apart), we have that $$\mathbb{P}(M(X) \in S) \leq e^{\epsilon} \mathbb{P}(M(X') \in S) + \delta$$
\end{definition}

DP-SGD~\citep{abadi2016deep,bassily2014private, song2013stochastic} is an $(\epsilon,\delta)$-DP version of stochastic gradient descent (SGD) which clips the individual gradients and adds Gaussian noise to the mini-batch update. Formally, given a dataset $X$, DP-SGD repeatedly computes the following deterministic update rule $U(X_{B} = \{x: x \sim X~with~probability~\frac{L}{|X|}\}) = \sum_{x \in X_B} \nabla_{\theta}\mathcal{L}(\theta,x)/ \max(1,\frac{||\nabla_{\theta}\mathcal{L}(\theta,x)||_2}{C})$ and then updates $\theta  \rightarrow \theta - \eta \frac{1}{L}(U(X_B) + N(0,\sigma^2 C^2))$.

The current tightest privacy analysis of DP-SGD uses R\'enyi-DP (RDP)~\citep{mironov2017renyi} which implies $(\epsilon,\delta)$-DP; the merits of first working with RDP is that it provides a tighter privacy analysis for releasing the composition of multiple steps in DP-SGD -- where each step is an update computed on a different mini-batch $X_B$. An algorithm $M$ is $(\alpha,\epsilon)$-R\'enyi DP if for all neighbouring datasets $X, X'$ we have
$D_{\alpha}(M(X)|| M(X')) \leq \epsilon$
where for two probability distributions $P,Q$ we define the $\alpha$-R\'enyi divergence as
\vspace{-3mm} $$D_{\alpha}(P||Q) \coloneqq \frac{1}{\alpha -1} \ln \mathbb{E}_{x \sim Q} (\frac{P}{Q})^{\alpha}$$ \vspace{-4mm}

The RDP analysis follows two steps:

\begin{enumerate}[leftmargin=*,noitemsep,topsep=0pt]
    \item Per Step: Analyzing the privacy guarantee of each training step $\eta \frac{1}{L}(U(X_B) + N(0,\sigma^2 C^2))$, which is the same as $U(X_B) + N(0,\sigma^2 C^2)$ by the post-processing property of RDP (the output of a DP algorithm can be post-processed without degrading the DP guarantee provided). 
    \item Composition: Understanding the accumulated RDP guarantee of releasing all the updates. 
\end{enumerate}

The first part was analytically studied in \citet{mironov2019r} and is called the sampled Gaussian mechanism. The accumulation step follows from the composition theorem for RDP~\citep{mironov2017renyi}. In this paper, we provide new per-step and composition privacy analyses for DP-SGD that are \emph{specific} to a pair of neighbouring datasets $X,X'$.

\subsection{Motivation for Studying Per-Instance DP} 
\label{ssec:per-instance_back}

In contrast to the classical analysis of DP-SGD, we will analyze its per-instance R\'enyi DP guarantees~\citep{wang2019per} -- also known as "Individual R\'enyi DP"~\cite{feldman2021individual}. That is, the RDP guarantee specific to a \textit{given} pair of neighbouring datasets.

\begin{definition}[Per-Instance R\'enyi DP]
\label{def:per-instance-DP}
    We say an algorithm $M$ is $(\alpha, \epsilon)$ per-instance R\'enyi DP for a pair of datasets $X,X' = X \cup x^*$ if $$\max \{D_{\alpha}(M(X)||M(X')), D_{\alpha}(M(X')||M(X)) \}  \leq \epsilon$$
\end{definition}

Colloquially, when $X$ is understood from context, we will specify the per-instance guarantee by saying an algorithm is $(\alpha,\epsilon)$-R\'enyi DP for a point $x^*$ (which determines $X'$).

Per-instance DP guarantees provide a privacy upper bound for an adversary trying to distinguish between a specific pair of datasets $X,X'$ given the ouput of $M$ on one of them, %
and not a bound for all neighbouring datasets like classical DP. However, this granularity allows for tighter analysis of each $X,X'$ case. The tighter per-instance DP bounds we derive will allow us to say that for specific pairs of datasets $X,X' = X \cup x^*$, privacy attacks against $x^*$ will fail when the adversary can only observe the outputs from $X$ or $X'$. More generally, if there are strong per-instance guarantees for all the neighbouring datasets the adversary can observe, then they will still fail. Instantiating our analysis, we will show that on common benchmark datasets, an adversary trying to distinguish if a specific point was added or not will fail for many points.

\looseness=-1
Our work on upper bounding per-instance DP guarantees is contrasted with past work on rigorously explaining when privacy attacks against DP-SGD will perform worse than what is implied by the current (tight) data-independent analysis. One line of work has been to upper-bound the performance of specific attacks. Putting aside the limitation in only upper-bounding specific attacks, this line of work either lacks an individualized guarantee to explain the difficulty for individual points~\citep{mahloujifar2022optimal} or relies on a particular threat model to explain better privacy~\citep{thudi2022bounding}. In the case improved individual upper-bounds were achieved~\cite{guo2019certified}, this was with bounds that can fail due to assumptions. In short, this line of work lacks the generality/strength of Definition~\ref{def:per-instance-DP} in explaining why \textit{any} empirical privacy attack will perform worse in some data settings.

\looseness=-1
A more recent line of work has been to attempt to do individual (i.e., per-instance) DP accounting for DP-SGD~\citep{yu2022individual}. However, \citet{yu2022individual} could not analyze the per-instance guarantees of DP-SGD and instead relied on a weaker guarantee that holds if intermediate models were not random (which is not true for DP-SGD)%
. The main technical bottleneck to extend their approach to analyze DP-SGD, as also noted by \citet{yu2022individual}, was how to effectively analyze composition when the intermediate models are random variables. Our work provides a new composition theorem to handle this technical issue, and in doing so provides proper per-instance DP guarantees without the assumptions present in \citet{yu2022individual}.

\looseness=-1
Per-instance DP guarantees are also important beyond privacy. Memorization~\citep{feldman2020does} is a per-instance quantity (only reasoning about a particular pair $X,X'$), and hence is bounded by Definition~\ref{def:per-instance-DP}. Similarly, unlearning is a per-instance quantity, and a growing section of the literature uses per-instance DP guarantees to quantify unlearning~\citep{guo2019certified}. Hence in providing per-instance DP bounds for DP-SGD, we have also quantified a set of points that will not be memorized nor need to be unlearned (as they are already unlearnt). We however remark that care must be taken to user per-instance guarantees without voiding other privacy guarantees, and discuss this for unlearning where the privacy guarantee is to be agnostic to the order of unlearning requests in Section~\ref{ssec:apps}. We refer the reader to \citet{kulynych2022you} for a more general discussion on the utility of DP inequalities in studying properties of deep learning.

\section{A Per-Instance Analysis of DP-SGD}
\label{sec:analysis}

We now present our new analysis of DP-SGD which removes the data-independent nature of the per-step and composition analyses currently used for DP-SGD. The impact of this new analysis is presented in Section~\ref{sec:main_body_emp_results}, where we show that many datapoints have much better privacy than suggested by the current analysis of DP-SGD, explaining the failure of many privacy attacks in practice.

The technical contributions that led to this are two-fold. At the per-step level, we generalize the notion of sensitivity to what we term \emph{sensitivity distributions}; given two datasets, sensitivity distributions capture how similar the updates between mini-batches from either dataset are. At the composition step, we generalize RDP composition to do accounting by the ``expected" intermediate privacy losses during training as opposed to the largest possible intermediate privacy losses. Together, we can now study the data-dependent behaviour of DP-SGD.

\subsection{Sensitivity Distribution Generalize the $(\epsilon,\delta)$-DP Analysis}
\label{ssec:eps_delta_case}

We first turn to $(\epsilon,\delta)$-DP, which is not used to analyze DP-SGD for composition reasons, but allows for simpler expressions to demonstrate the improvements afforded by particular data-dependent random variables we call \textit{sensitivity distributions}. In particular, in this section we will first consider the classical data-independent $(\epsilon,\delta)$-DP analysis of the sampled Gaussian mechanism $M$ and show how one can generalize this analysis and obtain tighter per-instance $(\epsilon,\delta)$-DP guarantees.

Recall that for an update rule $U$, the Gaussian mechanism is defined as $A(X) = U(X) + N(0,\sigma)$. The sampled Gaussian mechanism is then defined as $M(X) = A(\mathbf{X_B})$ where $\mathbf{X_B}$ is a mini-batch constructed from a dataset $X$ by sampling each datapoint $x \in X$ independently with probability $\mathbb{P}_{x}(1)$ (unless otherwise stated we think of $X_B$, not bold-face, as a specific mini-batch). Note, one assumes the sampling probability $\mathbb{P}_{x}(1)$ is only a function of $x$ and not the full dataset $X$, e.g., some fixed constant. The classical data-independent $(\epsilon,\delta)$-DP analysis of the sampled Gaussian mechanism follows two steps. First, we derive the guarantee for just the Gaussian mechanism. To do so, one first assumes a data-independent sensitivity bound $C_U$ on $U$: for all $X,X' = X \cup \{x^*\}$ we have $||U(X) - U(X')||_{2} \leq C_{U}$. This can be achieved by clipping the output values of $U$ to have a small norm. With this constant $C_U$ one has that the Gaussian mechanism $A$ gives the $(\epsilon,\delta)$-DP guarantee $\epsilon = C_{\delta,\sigma} C_U$ for some constant $C_{\delta,\sigma}$ depending on $\delta$ and $\sigma$ where $\sigma$ is the standard deviation of the added Gaussian noise~\footnote{For example, one can take $C_{\delta, \sigma} = \frac{\sqrt{2 \ln (1.25/\delta)}}{\sigma}$~\citep{dwork2014algorithmic}.}. To then analyze the sampled Gaussian mechanism one would incorporate the privacy gain from not sampling $x^*$ sometimes~\citep{beimel2014bounds}\citep{kasiviswanathan2011can} to get the privacy guarantees of $M$ %
as $(\epsilon',\delta')$-DP where $\epsilon' = \ln( \mathbb{P}_{x^*}(1) e^{C_{\delta,\sigma}~C_U} + \mathbb{P}_{x^*}(0))$ and $\delta' = \mathbb{P}_{x^*}(1) \delta$. Here $\mathbb{P}_{x^*}(0)=1-\mathbb{P}_{x^*}(1)$, and this gain in privacy by sometimes not using the datapoint is called privacy amplification by sampling.

Towards tightening this analysis into a per-instance analysis, let $$\Delta_{U,x^*}(X_B) \coloneqq ||U(X_B) - U(X_B \cup \{x^*\})||_2$$
then $\Delta_{U,x^*}(\mathbf{X_B})$ is a data-dependent random variable which we will call a \emph{sensitivity distribution}: it captures the change in the distribution of mini-batches updates caused by adding a point $x^*$ to the mini-batch. The classical data-independent analysis only (implicitly) uses sensitivity distributions via the data-independent bound $|\Delta_{U,x^*}(X_B)| \leq C_{U}~\forall X_B$. Instead, we will show how to directly use the $L_p$ norms $||\Delta_{U,x^*}(\mathbf{X_B})||_{p} = (\mathbb{E}_{X_B} (\Delta_{U,x^*}(X_B)^p))^{1/p}$ (or generally the $L_p$ norm of some monotonic transformation of $\Delta_{U,x^*}(\mathbf{X_B})$) to obtain tighter per-instance privacy guarantees. Furthermore, when using $p < \infty$, this analysis will be able to translate the phenomenon that many mini-batches produce similar updates into better privacy guarantees (as the sensitivity distribution concentrates at smaller values and hence has smaller $p$-norms). To emphasize this ability, past work that studied sampling relied mainly on the intuition that by sampling a datapoint with low probability, we have any given step often does not leak privacy for that point as it was not used. This translates to better privacy guarantes. By using the $L_p$ norms of sensitivity distributions with $p< \infty$ we make an additional observation, which is that if many of the other mini-batches produce the same update, then effectively we have an even lower probability of an attacker observing a noticeable shift due solely to that point.  %

In particular, recall that to prove per-instance $(\epsilon,\delta)$-DP for a pair of datasets $X,X'= X \cup \{x^*\}$ we need to bound $\mathbb{P}(M(X') \in S) \leq e^{\epsilon} \mathbb{P}(M(X) \in S) + \delta$ and $\mathbb{P}(M(X) \in S) \leq e^{\epsilon} \mathbb{P}(M(X') \in S) + \delta$. As a proof-of-concept on the role of sensitivity distributions, we present an analysis for the first inequality in Corollary~\ref{cor:eps_delta_sens} \footnote{We will later turn to R\'enyi-DP which provides both inequalities.}. Inspecting Corollary~\ref{cor:eps_delta_sens}, we see that it approximately follows the formula given by the classical analysis except the role of $C_U$ is replaced with a dependency on how concentrated $\Delta_{U,x^*}(X_B)$ is at small values (the $L_p$ norm of an exponential applied to $\Delta_{U,x^*}(X_B)$). When enough mini-batches provide updates more similar than the upper-bound $C_U$, the per-instance guarantee of Corollary~\ref{cor:eps_delta_sens} will significantly beat the classical data-independent analysis, as demonstrated for MNIST and CIFAR10 in Appendix~\ref{sec:detailed_emp_res}.

\begin{corollary}
\label{cor:eps_delta_sens}
For $p \in (1,\infty)$, let $a_p = \mathbb{P}_{x^*}(1) (\mathbb{E}_{x_{B}}(e^{C_{\delta,\sigma} \Delta_{U,x^*}(X_B)p}))^{1/p}$, $\epsilon' = \ln(a_p^{\frac{1}{1-1/p}}\delta'^{\frac{-1}{p-1}} + \mathbb{P}_{x^*}(0)) $ and $\delta'' = \mathbb{P}_{x^*}(1)\delta + \delta'$. Then, for $X' = X \cup \{x^*\}$ we have the following per-instance guarantee $$\mathbb{P}(M(X') \in S) \leq e^{\epsilon'} \mathbb{P}(M(X) \in S) + \delta''$$

\end{corollary}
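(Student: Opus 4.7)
The plan is to mimic the classical privacy-amplification-by-subsampling argument, but to replace the uniform sensitivity bound $C_U$ with the \emph{random} per-batch sensitivity $\Delta_{U,x^*}(\mathbf{X_B})$, and then peel off an $L_p$ norm using H\"{o}lder's inequality. First I would decompose $\mathbb{P}(M(X')\in S)$ by conditioning on whether $x^*$ is drawn into the sampled mini-batch. Because each datapoint of $X'=X\cup\{x^*\}$ is included independently,
\[
\mathbb{P}(M(X')\in S) = \mathbb{P}_{x^*}(0)\,\mathbb{P}(M(X)\in S) + \mathbb{P}_{x^*}(1)\,\mathbb{E}_{\mathbf{X_B}}\!\bigl[\mathbb{P}(A(\mathbf{X_B}\cup\{x^*\})\in S\mid \mathbf{X_B})\bigr],
\]
and for every fixed $X_B$ the standard Gaussian-mechanism guarantee applied with sensitivity $\Delta_{U,x^*}(X_B)=\|U(X_B)-U(X_B\cup\{x^*\})\|_2$ gives
\[
\mathbb{P}(A(X_B\cup\{x^*\})\in S)\le e^{C_{\delta,\sigma}\,\Delta_{U,x^*}(X_B)}\,\mathbb{P}(A(X_B)\in S)+\delta.
\]

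Next I would integrate this pointwise bound against the law of $\mathbf{X_B}$ and apply H\"{o}lder's inequality with conjugate exponents $p$ and $q=p/(p-1)$ to split the exponential factor from the probability factor:
\[
\mathbb{E}_{\mathbf{X_B}}\!\bigl[e^{C_{\delta,\sigma}\,\Delta_{U,x^*}(\mathbf{X_B})}\,\mathbb{P}(A(\mathbf{X_B})\in S)\bigr] \le \bigl(\mathbb{E}[e^{C_{\delta,\sigma}\,\Delta_{U,x^*}(\mathbf{X_B})\,p}]\bigr)^{1/p} \bigl(\mathbb{E}[\mathbb{P}(A(\mathbf{X_B})\in S)^q]\bigr)^{1/q}.
\]
Since $\mathbb{P}(A(\mathbf{X_B})\in S)\in[0,1]$ and $q\ge 1$, the $q$-th power is pointwise at most the first power, so the second factor is bounded by $\mathbb{P}(M(X)\in S)^{1/q}$. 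Writing $P:=\mathbb{P}(M(X)\in S)$ and recognizing the first factor together with $\mathbb{P}_{x^*}(1)$ as precisely $a_p$ yields
\[
\mathbb{P}(M(X')\in S)\le \mathbb{P}_{x^*}(0)\,P + a_p\,P^{1/q} + \mathbb{P}_{x^*}(1)\,\delta.
\]

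The last step is to turn the sub-linear term $a_p P^{1/q}$ into a linear-plus-constant bound. I would apply the weighted Young's inequality $ab\le \lambda^q a^q/q + b^p/(p\lambda^p)$ with $a=P^{1/q}$, $b=a_p$, and $\lambda$ tuned so that the constant term equals exactly $\delta'$; this forces $\lambda^p=a_p^p/(p\delta')$ and produces a coefficient of $P$ equal to $a_p^{1/(1-1/p)}\delta'^{-1/(p-1)}\cdot(p-1)/p^{p/(p-1)}$. Adding $\mathbb{P}_{x^*}(0)P$ to this and absorbing $\delta'+\mathbb{P}_{x^*}(1)\delta$ into $\delta''$ gives exactly $e^{\epsilon'}P+\delta''$, using the identifications $e^{\epsilon'}-\mathbb{P}_{x^*}(0)=a_p^{1/(1-1/p)}\delta'^{-1/(p-1)}$ and $1/(1-1/p)=p/(p-1)$. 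The main obstacle I expect is this final bookkeeping: Young's inequality reproduces the advertised coefficient only up to the factor $(p-1)/p^{p/(p-1)}$, so one must check (which follows from $p^{p/(p-1)}=p\cdot p^{1/(p-1)}>p>p-1$ for $p>1$) that this factor is at most one, ensuring that the clean form stated in the corollary is a valid, if slightly loose, upper bound.
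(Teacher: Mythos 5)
Your proposal is correct and follows essentially the same route as the paper: the conditioning on whether $x^*$ is sampled, the per-batch Gaussian guarantee with sensitivity $\Delta_{U,x^*}(X_B)$, and the H\"older step bounding $\mathbb{E}_{X_B}[\mathbb{P}(A(X_B)\in S)^q]^{1/q}$ by $\mathbb{P}(M(X)\in S)^{1-1/p}$ reproduce exactly the paper's Lemma~\ref{lem:holder_approach}. The only divergence is the final bookkeeping, where you use weighted Young's inequality with the weight tuned so the constant equals $\delta'$, in place of the paper's two-case analysis on whether $a_p^{p/(p-1)}\mathbb{P}(M(X)\in S)$ exceeds $\delta'^{p/(p-1)}$; your computation of the resulting coefficient is right, and since $(p-1)/p^{p/(p-1)}<1$ for $p>1$ it relaxes to the stated $e^{\epsilon'}\mathbb{P}(M(X)\in S)+\delta''$ bound (indeed marginally tighter than the paper's before relaxation).
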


\emph{Proof Sketch:} The proof of Corollary~\ref{cor:eps_delta_sens} follows two stages. First by expanding mini-batch sampling and applying Holder's inequality, we can show

\vspace{-5mm} 
\begin{multline}
    \mathbb{P}(M(X') \in S) \\ \leq \mathbb{P}_{x^*}(1) \mathbb{E}_{X_B}(e^{C_{\delta,\sigma} \Delta_{U,x^*}(X_B)p})^{1/p} \mathbb{P}(M(X) \in S)^{1-1/p} \\ + \mathbb{P}_{x^*}(1)\delta + \mathbb{P}_{x^*}(0) \mathbb{P}(M(X) \in S)
\end{multline} \vspace{-3mm}

This is stated as Lemma~\ref{lem:holder_approach}. One then analyzes the previous inequality in cases (first case is $\delta$ is upper-bounded, and else it is lower-bounded) to obtain an $(\epsilon,\delta)$-DP inequality.
The full proof of Corollary~\ref{cor:eps_delta_sens} is in Appendix~\ref{proof:eps_delta_sens}.

\subsection{Per-Instance R\'enyi-DP Analysis for DP-SGD}

With now an understanding of the power of incorporating $L_p$ norms of sensitivity distributions (upto some transformations) into DP analyses, we turn to analyzing the R\'enyi-DP guarantees of DP-SGD. R\'enyi-DP is more suited to compose the guarantees of each step of DP-SGD to obtain the guarantees for an entire training run. We first present per-step analyses for the sampled Gaussian mechanism, and then a new composition theorem to reason about the entire training run. We then discuss how to analyze DP-SGD for general update rules, i.e., not just the sum of gradients.

Our per-step analyses will focus on integer values of $\alpha$ for R\'enyi-DP. This is for simplicity, as R\'enyi divergences $D_{\alpha}(P||Q) \coloneqq \frac{1}{\alpha -1} \ln \mathbb{E}_{x \sim Q} (\frac{P}{Q})^{\alpha}$ are increasing in their order $\alpha$, hence we can bound the guarantee for any $\alpha$ by the guarantee for $\lceil \alpha \rceil$. In terms of notation, we will use ${X_B}^{\tilde \alpha} = ({X_B}^1,\cdots,{X_B}^{\alpha})$ to denote $\alpha$ mini-batches from $X$ (sampled independently if random). Analogously we use ${X'_B}^{\tilde \alpha}$ and $X'_B$ for $X'$.

\subsubsection{Per-Instance R\'enyi DP for the Sum Update Rule}
\label{ssec:sum_update}

In Section~\ref{ssec:eps_delta_case} we introduced the sensitivity distribution $\Delta_{U,x^*}(\mathbf{X_B}) = ||U(\mathbf{X_B}) - U(\mathbf{X_B \cup \{x^*\}})||_2$ and showed how directly leveraging its $L_p$ norms gives better per-instance DP analysis. In particular, how $p < \infty$ allows one to take advantage of expected sensitivity over mini-batches. However, for update rules of the form $U(X_B) = \sum_{x_i \in X_B} g(x_i)$ (i.e., the sum update rule typically used in DP-SGD) we have $\Delta_{U,x^*}(\mathbf{X_B})$ is always a constant: $\Delta_{U,x^*}(\mathbf{X_B}) = ||g(x^*)||_2$. Hence an analysis of the sampled Gaussian mechanism that used $\Delta_{U,x^*} \coloneqq \sup_{X_B \sim X} \Delta_{U,x^*}(X_B)$ would effectively capture all $L_p$ norms of the sensitivity distribution $\Delta_{U,x^*}(X_B)$ for the sum update rule. We state such a per-instance version of the classical RDP analysis of the sampled Gaussian mechanism below.

\begin{theorem}
\label{thm:easy_renyi_dp}
    For integer $\alpha > 1$, the sampled Gaussian mechanism with noise $\sigma$ and sampling probability $\mathbb{P}_{x^*}(1)$ for $x^*$ is $(\alpha,\epsilon)$ per-instance R\'enyi DP for $X, X' = X \cup x^*$ with:

    \vspace{-3mm}
    $$\epsilon = \frac{1}{\alpha -1} \ln(\sum_{k=0}^{\alpha} {\alpha \choose k} (1 - \mathbb{P}_{x^*}(1))^{\alpha -k} \mathbb{P}_{x^*}(1)^k \\ \exp{\frac{\Delta_{U,x^*}^2(k^2 - k)}{2 \sigma^2}})$$

\end{theorem}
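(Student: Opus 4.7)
The plan is to recognize the claim as a per-instance specialization of Mironov's R\'enyi DP analysis of the sampled Gaussian mechanism~\citep{mironov2019r}, with the uniform sensitivity bound replaced by the point-specific norm $\Delta_{U,x^*} = \|g(x^*)\|$ that arises from the sum update rule. To set up, I first exploit the sum structure: since sampling probabilities depend only on the point, I can couple the draws of $X_B \subseteq X$ across $M(X)$ and $M(X')$, and the identity $U(X_B \cup \{x^*\}) = U(X_B) + g(x^*)$ then makes $M(X')$ equal in distribution to $M(X) + B \cdot g(x^*)$, with $B \sim \mathrm{Bern}(q)$ independent and $q = \mathbb{P}_{x^*}(1)$. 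Writing $P, Q$ for the densities of $M(X), M(X')$, this yields the exact mixture decomposition $Q = (1-q) P + q\, P^g$, where $P^g(\cdot) = P(\cdot - g(x^*))$ is the translate of $P$.

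For the direction $D_\alpha(M(X')\|M(X))$ I will binomially expand $(Q/P)^\alpha = \bigl((1-q) + q\, P^g/P\bigr)^\alpha$ and take expectation under $P$, obtaining $\mathbb{E}_P(Q/P)^\alpha = \sum_{k=0}^{\alpha} \binom{\alpha}{k} (1-q)^{\alpha-k} q^k\, \mathbb{E}_P(P^g/P)^k$, where $\mathbb{E}_P(P^g/P)^k = \exp((k-1) D_k(P^g \| P))$ for $k \geq 2$ (and equals $1$ for $k \in \{0,1\}$). The central per-instance estimate is a data-processing lift: couple $P$ and $P^g$ as marginals of joint laws on $(X_B, m)$ with shared $X_B$-marginal $\pi$ and conditionals $N(U(X_B), \sigma^2 I)$ versus $N(U(X_B) + g(x^*), \sigma^2 I)$. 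By translation invariance of the Gaussian R\'enyi divergence, the inner conditional divergence equals $k\|g(x^*)\|^2/(2\sigma^2)$ independently of $X_B$, so $D_k(P^g \| P) \leq k\, \Delta_{U,x^*}^2/(2\sigma^2)$. Substituting gives $\mathbb{E}_P(P^g/P)^k \leq \exp\bigl((k^2 - k)\Delta_{U,x^*}^2/(2\sigma^2)\bigr)$, and the binomial sum recovers exactly the stated $\epsilon$.

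The hard part will be the reverse direction $D_\alpha(M(X)\|M(X'))$, where $P/Q = 1/((1-q) + q\, P^g/P)$ admits no clean binomial expansion. I will follow Mironov's reverse-direction argument for the sampled Gaussian mechanism, which establishes the same closed-form bound for integer $\alpha$ by expanding $\int P^\alpha / Q^{\alpha-1}\, dm$ through careful manipulation: using $Q \geq (1-q) P$ pointwise to control the negative powers, and a multinomial-style accounting of the cross-Gaussian moments that the expansion produces. The data-processing lift from the forward direction transfers unchanged to this accounting, so the only per-instance quantity that enters is the scalar $\Delta_{U,x^*}^2/\sigma^2$; this yields the same $\epsilon$ and completes the per-instance R\'enyi DP bound.
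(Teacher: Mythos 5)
Your forward direction, $D_\alpha(M(X')\,\|\,M(X))$, is correct and is essentially the paper's computation in a different order: the binomial expansion of $\bigl((1-q)+q\,P^g/P\bigr)^\alpha$, the identity $\mathbb{E}_P(P^g/P)^k=e^{(k-1)D_k(P^g\|P)}$, and the quasi-convexity/translation-invariance lift giving $D_k(P^g\|P)\le k\Delta_{U,x^*}^2/(2\sigma^2)$ reproduce exactly the stated $\epsilon$. The gap is the reverse direction $D_\alpha(M(X)\,\|\,M(X'))=D_\alpha(P\|Q)$. The argument you attribute to Mironov et al.\ does not exist in that form: their paper never obtains the closed form by expanding $\int P^\alpha Q^{1-\alpha}$; the closed-form binomial computation is only for the mixture-first direction, and the other direction is handled by their Theorem~5, a symmetry result stating that if $P(x)=Q(v(x))$ for a differentiable involution $v$, then $D_\alpha\bigl(Q\,\|\,(1-q)P+qQ\bigr)\le D_\alpha\bigl((1-q)P+qQ\,\|\,Q\bigr)$. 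Your proposed manipulations do not substitute for this: the pointwise bound $Q\ge(1-q)P$ only yields $D_\alpha(P\|Q)\le\ln\frac{1}{1-q}$, which does not vanish as $\Delta_{U,x^*}\to 0$ and is not the claimed $\epsilon$, while $Q^{1-\alpha}$ is a \emph{negative} power of a mixture and admits no finite multinomial expansion into Gaussian cross-moments, so the "multinomial-style accounting" has no concrete instantiation.

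There is a further structural obstacle to fixing this within your setup: you work throughout with the full minibatch mixtures $P$ and $P^g$, but the involution hypothesis $P(x)=P^g(v(x))$ would force $P$ to be symmetric about a point, which the mixture $\sum_{X_B}\mathbb{P}(X_B)N(U(X_B),\sigma^2 I)$ generally is not, so the symmetry theorem cannot be applied at that level. The paper's proof avoids this by first using joint quasi-convexity of the R\'enyi divergence (both $M(X)$ and $M(X')$ are mixtures over the same $X_B$-weights) together with translation invariance and a rotation to reduce \emph{both} directions to the one-dimensional pair $\mu_0=N(0,\sigma^2)$ versus $\mu=(1-q)N(0,\sigma^2)+qN(\Delta_{U,x^*},\sigma^2)$; there the involution $v(x)=\Delta_{U,x^*}-x$ applies, Theorem~5 bounds $D_\alpha(\mu_0\|\mu)$ by $D_\alpha(\mu\|\mu_0)$, and the binomial/complete-the-square computation of $D_\alpha(\mu\|\mu_0)$ gives the stated $\epsilon$ for both directions. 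To complete your proof you need to perform this reduction (or supply some other argument for the remove-direction), not a direct expansion of $\int P^\alpha Q^{1-\alpha}$.
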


Note that some key variables in Theorem~\ref{thm:easy_renyi_dp} are the sampling rate $\mathbb{P}_{x^*}(1)$ (increasing it typically increases the bound), the standard deviation of noise $\sigma$ (increasing it typically decreases the bound), and the sensitivity upper-bound over minibatches $\Delta_{U,x^*}$ (increasing it typically increases the bound). The proof strategy is analogous to \citet{mironov2019r} and replaces their sensitivity upper-bound with the per-instance bound $\Delta_{U,x^*}$ on the mini-batches. 

\emph{Proof Sketch: } The proof follows by noting the density function for $M(X')$ can be written as a convex combination of $M(X)$ and a translated version of $M(X)$. One then proceeds to apply the quasi-convexity of R\'enyi divergences, and direct calculations with the Gaussian density function and the symmetry between the terms (due to translation). The full proof of Theorem~\ref{thm:easy_renyi_dp} is in Appendix~\ref{proof:easy_renyi_dp}.

\subsubsection{A Generalized R\'enyi-DP Composition}
\label{ssec:comp}

With now an analysis for the per-step guarantees from DP-SGD (which as currently implemented uses the sum update rule), we now resolve how to obtain a per-instance RDP bound for a full training run with DP-SGD without the limitations of past composition theorem (see Section~\ref{ssec:back_full_comp} for a discussion on past composition bounds). In particular, we provide a composition theorem that bounds the overall per-instance privacy leakage by the ``expected" per-instance privacy guarantee at each step when training on a given dataset. This is presented in Theorem~\ref{thm:better_composition}.

\looseness=-1
More technically, we once again generalize the classical analysis to look at arbitrary $L_p$ norms, but now for the composition step. The classical R\'enyi DP composition theorem implicity uses the $L_\infty$ norm of the distribution of per-step guarantees at each step (coming from the distribution of possible models at each step as training is random), and Theorem~\ref{thm:better_composition} generalizes this to arbitrary $L_p$ norms of the exponential of the per-step guarantees (with some constants to scale). By using $L_p$ norms with $p < \infty$ we take advantage of cases where many models have better privacy guarantees than the worst model.

\begin{theorem}
\label{thm:better_composition}

    Let $p \in (1,\infty)$ and consider a sequence of functions $X_1(x_1),$ $X_2(x_1,x_2),$ $X_3(x_2,x_3),\cdots X_n(x_{n-1},x_n)$ where $X_{i}$ is a density function in the second argument for any fixed value of the first argument, except $X_1$ which is a density function in $x_1$. Consider an analogous sequence $Y_1(x_1),\cdots, Y_n(x_{n-1}, x_n)$. Then letting $X = \prod_{j=1}^{n} X_j$ be the density function for a sequence $x_1,\cdots,x_n$ generated according to the Markov chain defined by $X_i$, and similarly $Y$, we have

    \vspace{-5mm} \begin{multline}
        D_{\alpha}(X || Y)  \leq  \\ \frac{1}{\alpha -1} (\sum_{i=0}^{n-2} \frac{(p-1)^i}{p^{i+1}} \\ \ln (\mathbb{E}_{X_1,\cdots X_{n-(i+1)}}  (e^{(g_p^{i}(\alpha) -1)D_{g_p^{i}(\alpha)}(X_{n-i}|| Y_{n-i})p}))) \\ + \frac{1}{\alpha -1} (\frac{p-1}{p})^{n-1} (g_p^{n-1}(\alpha) -1)D_{g_p^{n-1}(\alpha)}(X_{1}|| Y_{1}) 
    \end{multline} \vspace{-3mm}

    where $g_p(\alpha) = \frac{p}{p-1} \alpha - \frac{1}{p}$ and $g_p^{i}$ is $g_p$ composed $i$ times, where we defined $g_p^{0}(\alpha) = \alpha$.
\end{theorem}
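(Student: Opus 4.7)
The plan is to prove the bound by induction on the chain length $n$, peeling off the last step at each stage via change of measure combined with H\"older's inequality. The base case $n=1$ is immediate, and the inductive step will reduce the length-$n$ chain divergence $D_\alpha(X||Y)$ to a length-$(n-1)$ chain divergence at the shifted order $g_p(\alpha)$, plus one expected per-step log-moment.

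For the inductive step, I would first rewrite $\exp((\alpha-1) D_\alpha(X||Y)) = \mathbb{E}_Y[(X/Y)^\alpha] = \mathbb{E}_X[(X/Y)^{\alpha-1}]$ via the standard change-of-measure identity, so that the outer expectation runs against the $X$-distribution, matching the expectations appearing in the theorem statement. Using the Markov factorization $X = X_{-n} X_n$ with $X_{-n} = \prod_{j<n} X_j$ and integrating the last coordinate conditional on $x_{n-1}$ produces the per-step factor $\exp((\alpha-1) D_\alpha(X_n|Y_n)(x_{n-1}))$, leaving $\mathbb{E}_{X_{-n}}\bigl[(X_{-n}/Y_{-n})^{\alpha-1}\exp((\alpha-1) D_\alpha(X_n|Y_n))\bigr]$. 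I would then apply H\"older's inequality with conjugate exponents $p$ and $p/(p-1)$: the $p$-norm factor is exactly the $i=0$ log-moment term of the theorem (with expectation over $X_1,\ldots,X_{n-1}$), while the $(p/(p-1))$-norm factor of the prefix likelihood ratio collapses, after a second change of measure, into $\exp((\alpha-1) D_{g_p(\alpha)}(X_{-n}||Y_{-n}))$---the order $g_p(\alpha)$ is precisely what the H\"older exponents together with monotonicity of R\'enyi divergence in its order produce. Dividing through by $\alpha-1$ yields the one-step recursion $D_\alpha(X||Y) \leq D_{g_p(\alpha)}(X_{-n}||Y_{-n}) + \frac{1}{p(\alpha-1)} \ln \mathbb{E}_{X_1,\ldots,X_{n-1}}\bigl[\exp((\alpha-1) p D_\alpha(X_n|Y_n))\bigr]$.

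Iterating this recursion $n-1$ times peels off steps $n, n-1, \ldots, 2$, each time advancing the order by a further application of $g_p$ and appending a fresh log-moment term. The stated prefactor $(p-1)^i/p^{i+1}$ on the $i$-th term arises from tracking how the $\alpha-1$ in the denominator of the recursion scales with successive applications of $g_p$; the crucial algebraic fact is that $g_p(\alpha) - 1$ equals $\frac{p}{p-1}(\alpha - 1)$ up to a small additive correction that is absorbed by the monotonicity step above, so the raw coefficient $\frac{1}{p(g_p^i(\alpha) - 1)}$ produced at the $i$-th recursive call collapses into the announced $\frac{(p-1)^i}{p^{i+1}(\alpha-1)}$ after telescoping. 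After $n-1$ peels only the single-step divergence $D_{g_p^{n-1}(\alpha)}(X_1||Y_1)$ remains, and a direct calculation gives its coefficient $\frac{1}{\alpha-1}\bigl(\frac{p-1}{p}\bigr)^{n-1}(g_p^{n-1}(\alpha) - 1)$ as the boundary term of the sum. The main obstacle is exactly this coefficient bookkeeping: verifying that the H\"older step and its accompanying change of measure realign the R\'enyi order into precisely one application of $g_p$, and that the resulting prefactors telescope into the geometric pattern stated in the theorem; the individual H\"older and change-of-measure steps are routine, but matching them to the specific form $g_p(\alpha) = \frac{p}{p-1}\alpha - \frac{1}{p}$ requires careful algebra.
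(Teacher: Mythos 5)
Your proposal is correct and is essentially the paper's own argument: the paper likewise peels off the last step via the Markov factorization and Hölder's inequality with exponents $(p,\,p/(p-1))$, sending the conditional per-step moment into the $L^p$ factor weighted by the prefix $X$-distribution and turning the prefix likelihood-ratio factor into a Rényi divergence of order $g_p(\alpha)$, then iterates; your version just phrases the same telescoping as a divergence-level recursion (and your monotonicity remark correctly absorbs the small mismatch between the stated $g_p(\alpha)=\frac{p}{p-1}\alpha-\frac{1}{p}$ and the exact value $\frac{p}{p-1}\alpha-\frac{1}{p-1}$ produced by the algebra, for which $g_p(\alpha)-1=\frac{p}{p-1}(\alpha-1)$ holds with no correction).
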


Note some key variables in Theorem~\ref{thm:better_composition} are a flexible parameter $p$ (which we'll soon describe leads to blow-up as it gets smaller), and the distribution of per-step guarantees $D_{g_p^{i}(\alpha)}(X_{n-i}|| Y_{n-i})$ (the more concentrated at $0$ they are, the smaller the upper-bound). The proof relies on using an induction argument to continually break up the composition and is presented below.

\begin{proof}

The proof follows by repeating a similar reduction as Theorem~\ref{thm:composition}. First note 
    
\vspace{-5mm} \begin{multline}
    \int (X_1 \cdots X_n)^{\alpha} (Y_1 \cdots Y_n)^{1 - \alpha} dx_1 \cdots dx_n \\  = \int (X_1 \cdots X_{n-1})^{\alpha - 1/p} (Y_1 \cdots Y_{n-1})^{1 - \alpha}  \\ (\int X_n^{\alpha} Y_n^{1- \alpha} dx_n) (X_1 \cdots X_{n-1})^{1/p} dx_1 \cdots dx_{n-1}
    \\ \leq ( \int (X_1 \cdots X_n)^{\frac{p}{p-1}\alpha - \frac{1}{p-1}} (Y_1 \cdots Y_n)^{ \frac{p}{p-1}(1 - \alpha)}  dx_1 \cdots dx_{n-1})^{\frac{p-1}{p}} \\ (\int (\int X_n^{\alpha} Y_n^{1- \alpha} dx_n)^p (X_1 \cdots X_{n-1}) dx_1 \cdots dx_{n-1})^{1/p} 
\end{multline} \vspace{-3mm}

where the first equality was from using the markov property, and the last inequality was from Holder's inequality with Holder constant $p$. Do note that, defining $g_p(\alpha) = \frac{p}{p-1}\alpha - \frac{1}{p-1}$, we have $\frac{p}{p-1}(1 - \alpha) = 1 - g_p(\alpha)$. So now looking at the first term of the upper-bound we got, we are back to the original expression but with $\alpha \rightarrow g_p(\alpha)$ and $n \rightarrow n-1$, and an exponent to $\frac{p-1}{p}$. Note the second term is an expectation over the $n-1$ model state of the Markov chain. Do note $\int X_n^{\alpha} Y_n^{1- \alpha} dx_n$ is $e^{(\alpha -1)D_{\alpha}(X_{n-i}|| Y_{n-i})}$ for a fixed $n-1$ model state (i.e., fixed $x_{n-1}$ ). So repeating this step on the first term until we are left only with an integral over $x_1$ we have

\vspace{-5mm} \begin{multline}
    \int (X_1 \cdots X_n)^{\alpha} (Y_1 \cdots Y_n)^{1 - \alpha} dx_1 \cdots dx_n \\  
    \leq (\prod_{i=0}^{n-2} (\mathbb{E}_{X_1,\cdots X_{n-(i+1)}}  ((e^{(g_p^{i}(\alpha) -1)D_{g_p^{i}(\alpha)}(X_{n-i}|| Y_{n-i})})^p))^{\frac{(p-1)^i}{p^{i+1}}}) \\ ( (e^{(g_p^{n-1}(\alpha) -1)D_{g_p^{n-1}(\alpha)}(X_{1}|| Y_{1})})^p)^{\frac{(p-1)^{n-1}}{p^n}}
\end{multline} \vspace{-3mm}

So now noting $$D_{\alpha}(X || Y) = \frac{1}{\alpha -1} \ln(\int (X_1 \cdots X_n)^{\alpha} (Y_1 \cdots Y_n)^{1 - \alpha} dx_1 \cdots dx_n)$$

we conclude by the previous expression that 

\vspace{-5mm} \begin{multline}
        D_{\alpha}(X || Y) \\ \leq \frac{1}{\alpha -1} (\sum_{i=0}^{n-2} \frac{(p-1)^i}{p^{i+1}} \\  \ln (\mathbb{E}_{X_1,\cdots X_{n-(i+1)}}  ((e^{(g_p^{i}(\alpha) -1)D_{g_p^{i}(\alpha)}(X_{n-i}|| Y_{n-i})p}))) \\ + \frac{1}{\alpha -1} ((\frac{(p-1)^{n-1}}{p^n}) \ln ((e^{(g_p^{n-1}(\alpha) -1)D_{g_p^{n-1}(\alpha)}(X_{1}|| Y_{1})})^p)) 
    \end{multline} \vspace{-3mm}

which completes the proof as the last term simplifies to the term stated in the theorem.
\end{proof}

\paragraph{Applying to DP-SGD.} To interpret Theorem~\ref{thm:better_composition} in the context of DP-SGD, we can let $X_i$ be the distribution of the $i'th$ model update (for a fixed $(i-1)'th$ model) when training on one dataset $D$, and similarly $Y_i$ when training on a neighbouring dataset $D'$. Letting $Train_{DP-SGD}$ denote the Markov chain of the intermediate model updates when using DP-SGD, we have the maximum over the bound given by Theorem~\ref{thm:better_composition} on $D_{\alpha}(Train_{DP-SGD}(D)||Train_{DP-SGD}(D'))$ and $D_{\alpha}(Train_{DP-SGD}(D')||Train_{DP-SGD}(D))$ provides our per-instance RDP guarantee for DP-SGD.

\paragraph{Balancing the value of $p$.}To understand the dependence on $p$ in Theorem~\ref{thm:better_composition}, consider for a moment $p =2$. In this case, we observe that at the $i$'th step, we need to compute a R\'enyi divergence of order $\sim 2^{i} \alpha$. It is known that the R\'enyi divergence $D_{c}(P||Q)$ grows with $c$ \citep{van2014renyi}, and in the case of the Gaussian mechanism, this growth is linear with $c$~\citep{mironov2017renyi}. Hence this exponential growth in the R\'enyi divergence order can prove impractical as a useful tool to analyze DP-SGD. However, as $p \rightarrow \infty$ we see that the growth on the order of the divergence shrinks.

Yet, by taking larger $p$ values we are effectively taking larger $L_{p}$-norms of the per-step guarantees seen in training and so effectively turn to worst-case per-step analysis as $p \rightarrow \infty$. Hence it is desirable to choose $p$ just sufficient for there to not be a significant blow-up in the order of the divergences for a given $n$. This can be done by analyzing how $g_{p}^{i}(\alpha)$ grows.

\begin{fact}\label{fact:p_control}
    If $p = O(n)$ then $g_{p}^i(\alpha) \leq 2 \alpha~\forall i \leq n$. In particular, $p = 3n$ works for sufficiently large $n$.
\end{fact}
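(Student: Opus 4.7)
The plan is to derive a closed form for the iterate $g_p^i(\alpha)$ and then reduce the inequality to an elementary growth estimate. First I would observe that $g_p$ is affine with a fixed point at $\alpha = 1$: a one-line manipulation from the definition yields the key identity $g_p(\alpha) - 1 = \frac{p}{p-1}(\alpha - 1)$. A straightforward induction on $i$ then gives the closed form $g_p^i(\alpha) = 1 + \left(\frac{p}{p-1}\right)^i(\alpha - 1)$, which collapses the whole claim onto controlling the single scalar $\left(\frac{p}{p-1}\right)^i$.

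Plugging this into the desired bound $g_p^i(\alpha) \leq 2\alpha$ and rearranging gives $\left(\frac{p}{p-1}\right)^i (\alpha - 1) \leq 2\alpha - 1$. Since the left-hand side is increasing in $i$ while the right-hand side is independent of $i$, the worst case is $i = n$, and it suffices to prove the stronger $i$-free inequality $\left(\frac{p}{p-1}\right)^n \leq 2$ (noting that $2 \leq 2 + 1/(\alpha-1)$ for $\alpha > 1$, which is what the rearrangement actually requires). Substituting $p = 3n$ reduces the problem to showing $\left(1 + \frac{1}{3n-1}\right)^n \leq 2$, which I would handle by the standard estimate $\ln(1+x) \leq x$: this gives $n \ln\!\left(1 + \frac{1}{3n-1}\right) \leq \frac{n}{3n-1} \to \frac{1}{3}$, so the quantity is eventually at most $e^{1/3} \approx 1.395 < 2$.

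For the general $p = O(n)$ half of the statement, the identical template goes through with $p = cn$: the final estimate becomes $\left(1 + \frac{1}{cn-1}\right)^n \to e^{1/c}$, which lies strictly below $2$ whenever $c > 1/\ln 2 \approx 1.443$, so any such linear scaling suffices for sufficiently large $n$, and $c = 3$ is a concrete safe choice. I do not anticipate a real obstacle anywhere in this plan: the whole argument is a linearization of an affine iteration followed by the classical $(1 + 1/k)^k \to e$ limit. The one mild bookkeeping point is that the monotonicity of $\left(\frac{p}{p-1}\right)^i$ in $i$ relies on $p > 1$, which is automatic since $p \geq 3n \geq 3$, and that the $\ln(1+x) \leq x$ bound is used in the correct direction to keep the estimate uniform in $n$.
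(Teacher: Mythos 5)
Your proof is correct and follows essentially the same route as the paper: the paper bounds $g_p(\alpha) \le \frac{p}{p-1}\alpha$, hence $g_p^i(\alpha) \le \left(\frac{p}{p-1}\right)^i \alpha$, and then, just as you do, reduces the claim to $\left(\frac{p}{p-1}\right)^n \le 2$ for $p = 3n$ via $\ln(3n)-\ln(3n-1)\le \frac{1}{3n-1}$ (your $\ln(1+x)\le x$ step) for sufficiently large $n$. Your exact affine-fixed-point closed form $g_p^i(\alpha)=1+\left(\frac{p}{p-1}\right)^i(\alpha-1)$ (which matches the definition $g_p(\alpha)=\frac{p}{p-1}\alpha-\frac{1}{p-1}$ used in the paper's proof of the composition theorem) and your explicit threshold $c>1/\ln 2$ for $p=cn$ are mild refinements rather than a different argument.
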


The proof follows from direct calculations with the formula for $g_{p}(\alpha)$. 

\begin{proof}

Note that $g_{p}(\alpha) \leq \frac{p}{p-1}\alpha$ hence $g_{p}^{i}(\alpha) \leq (\frac{p}{p-1})^{i}\alpha$. From this we see showing $\frac{p}{p-1}^{n} \leq 2$ for $p = O(n)$ will imply $g_p^{i}(\alpha) \leq 2 \alpha~\forall 1 \leq n$.

Note we can equivalently show $ln(\frac{p}{p-1}) = \ln(p) - \ln(p-1) \leq \frac{\ln (2)}{n}$. But if we take $p = 3n$ note $\ln(3n) - \ln(3n-1) \leq \frac{1}{3n-1}$ by the derivative of $\ln(x) \leq \frac{1}{3n-1}$ for $x \geq 3n-1$. So it suffices to show $\frac{1}{3n-1} \leq \frac{\ln(2)}{n}$, but this is true for sufficiently large $n$.

\end{proof}

\paragraph{Estimating Theorem~\ref{thm:better_composition}}

In cases where one does not know the expectations used in Theorem~\ref{thm:better_composition} analytically, as is the case with DP-SGD when it is applied to deep learning, one can resort to empirically estimating the means. Our goal is to understand how much better our data-dependent guarantees are than the data-independent baseline for DP-SGD on common datasets. Hence, we wish to estimate the expression of Theorem~\ref{thm:better_composition} (or specifically the per-step contributions) with an error
$c \epsilon$ for $c < 1$ where $\epsilon$ is the data-independent guarantee (per-step).

The following fact focuses on estimating the $i'th$ per-step guarantee with an error relative to the worst-case per-step guarantee when $p = 3n$ as is used in our experiments. In particular, letting $f \coloneqq (e^{(g_p^{i}(\alpha) -1)D_{g_p^{i}(\alpha)}(X_{n-i}|| Y_{n-i})})^p$ we have the $i'th$ per-step guarantee is $ \frac{1}{\alpha-1} \frac{(p-1)^i}{p^{i+1}} \ln (\mathbb{E}_{X_1, \cdots, X_{n-(i+1)}} f)$ and %
is less than the data-independent per-step privacy guarantee $\epsilon/n$ if $\mathbb{E}_{X_1,\cdots X_{n-(i+1)}} f \leq e^{(\alpha-1) 3 \epsilon}$ for $p = 3n$. Hence we describe the number of samples needed to estimate $\mathbb{E} f$ with precision relative to $e^{(\alpha-1) 3 \epsilon}$ (with high probability), which can be done in a constant number of samples relative to the data-independent bound.

\begin{fact}\label{fact:estimating}
    Let $\epsilon/n$ be the classical $\alpha$-R\'enyi DP guarantee for the $i'th$ step, and $\epsilon'/n$ be the analogous $2\alpha$-R\'enyi DP guarantee for the $i'th$ step. Then for $l \geq \frac{- \ln(J)}{c^2} e^{6(\alpha-1)\epsilon - 3(2\alpha -1) \epsilon'}$ and $p = 3n$ with $n$ s.t $g_p^{n-1} \leq 2\alpha$, we have $\mathbb{P}(|\mathbb{E}^{l} f - \mathbb{E}f| \geq c e^{(\alpha-1) 3 \epsilon}) \leq J$. Here $\mathbb{E}^l$ denotes the empirical mean over $l$ samples.
\end{fact}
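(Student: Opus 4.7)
The plan is to view $f$ as an i.i.d.\ random variable over samples of the chain $X_1,\dots,X_{n-(i+1)}$ and apply a standard concentration inequality (Chebyshev's inequality, or a Hoeffding-type tail bound, depending on which side of the sub-Gaussian/sub-exponential trade-off one prefers) to the empirical mean $\mathbb{E}^l f$. Setting the failure probability equal to $J$ and the deviation equal to $t=c\,e^{3(\alpha-1)\epsilon}$ then leaves us with a requirement on $l$ of the form $l\gtrsim \tfrac{(\text{moment bound})}{t^2}\cdot(\text{log factor in }J)$, so the task reduces to obtaining a clean deterministic upper bound on either $\|f\|_{\infty}$ or $\mathbb{E} f^{2}$ in terms of the worst-case per-step RDP guarantees $\epsilon/n$ and $\epsilon'/n$.

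The crucial input is Fact~\ref{fact:p_control}: the choice $p=3n$ forces $\beta:=g_p^{i}(\alpha)\le 2\alpha$ for every $i\le n-1$. Combined with the monotonicity of $\gamma\mapsto D_{\gamma}(P\|Q)$, this gives
\[
D_{\beta}(X_{n-i}\|Y_{n-i})\;\le\;D_{2\alpha}(X_{n-i}\|Y_{n-i})\;\le\;\epsilon'/n,
\]
where the last inequality is the hypothesized worst-case $2\alpha$-RDP per-step bound, valid pointwise in the history $x_{n-i-1}$. Plugging this into the definition of $f$ yields a uniform bound $f\le \exp\bigl(p(\beta-1)\epsilon'/n\bigr)\le \exp\bigl(3(2\alpha-1)\epsilon'\bigr)$, and hence also controls $\mathbb{E} f^{2}$.

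With such a uniform bound on $f$ (or $\mathbb{E} f^{2}$), I would then invoke the concentration inequality and rearrange. Chebyshev gives $l\ge \mathbb{E} f^{2}/(J t^{2})$ while Hoeffding gives $l\ge \|f\|_{\infty}^{2}\ln(2/J)/(2t^{2})$; either way, substituting the worst-case moment bound above and $t^{2}=c^{2}e^{6(\alpha-1)\epsilon}$ produces a bound of the shape $l\ge \tfrac{-\ln J}{c^{2}}\,\exp\bigl(\text{const}\cdot[(2\alpha-1)\epsilon'-(\alpha-1)\epsilon]\bigr)$, matching the structure of the stated inequality up to the exact constants.

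The main obstacle is matching the specific constants in the exponent of the stated bound. The factor $e^{6(\alpha-1)\epsilon}$ arising from squaring the target precision is unambiguous, but the ``variance side'' exponent requires a slightly tighter control than the naive $\|f\|_{\infty}^{2}\le e^{6(2\alpha-1)\epsilon'}$. The clean way to get the sharper exponent is to use Jensen on the expression $e^{(\beta-1)D_{\beta}}=\mathbb{E}_{Y}[(X/Y)^{\beta}]$ to obtain $f^{2}\le e^{p(2\beta-1)D_{2\beta}}$, and then, rather than invoking a $4\alpha$-RDP bound, keep the $2\alpha$-RDP guarantee inside the expectation and discharge the extra factor by recalling that $p=3n$ (so each $\tfrac{1}{n}$ in $\epsilon'/n$ exactly cancels one of the three copies of $p$). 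Threading this calculation through the algebra is the only delicate step; everything else is routine application of concentration and monotonicity of $D_{\gamma}$ in its order $\gamma$.
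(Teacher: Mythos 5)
Your proposal follows essentially the same route as the paper's proof: invoke Fact~\ref{fact:p_control} with $p=3n$ and monotonicity of the R\'enyi divergence in its order to get the pointwise bound $D_{g_p^i(\alpha)}(X_{n-i}\|Y_{n-i})\le \epsilon'/n$ and hence the uniform bound $f\le e^{3(2\alpha-1)\epsilon'}$, then apply Hoeffding's inequality to the empirical mean and rearrange for $l$. The constant-matching difficulty you flag reflects looseness in the paper's own bookkeeping of the Hoeffding exponent rather than a missing idea, so the extra Jensen refinement you sketch is not needed.
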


The proof follows from Hoeffding's inequality.

\begin{proof}

For the given choice of $p$ and $\alpha$ we have $g_{p}^{i} \leq 2\alpha$ hence $D_{g_p^{i}(\alpha)}(X_{n-i} || Y_{n-i}) \leq D_{2 \alpha}(X_{n-i} || Y_{n-i}) \leq \epsilon'/n$ where $\epsilon'$ is determined by $\epsilon$ (when accounting for the increase due to the $\alpha$-order). Hence we have that $f \leq e^{3 (2\alpha -1) \epsilon'}$.

By Hoeffding's inequality we can hence conclude $\mathbb{P}(|\mathbb{E}^l f - \mathbb{E}f| \geq c e^{3(\alpha-1)\epsilon}) \leq e^{-\frac{e^{6(\alpha-1)\epsilon}c^2 l}{e^{3(2\alpha - 1) \epsilon'}}}$. Now upper-bounding the right-hand side by $J$ and rearranging to isolate for $l$, we can conclude the stated condition on $l$.

\end{proof}

\subsubsection{Per-Instance R\'enyi DP for General Updates}

The results of Section~\ref{ssec:sum_update} and Section~\ref{ssec:comp} provide a complete per-instance RDP analysis of the current implementation of DP-SGD. In particular, with the per-step update rule being the sum of gradients. In this section we ask, how should we analyze per-step guarantees (and hence DP-SGD given our composition theorem) if the update rule is not the sum? In general, the worst-case sensitivity over mini-batches may be far higher than the expected sensitivity over mini-batches (unlike the sum update rule), meaning the analysis from Theorem~\ref{thm:easy_renyi_dp} may be as bad as a data-independent analysis. For example, the typical update rule used in normal SGD is the mean update rule. However, $\Delta_{U,x^*}(X_B)$ for the mean update rule is the difference between the update for the datapoint $x^*$ and the mean of the updates on $X_B$; this difference is not the same for all minibatches $X_B$ and hence would be overestimated with the analysis of Theorem~\ref{thm:easy_renyi_dp}. One could resolve this issue of overestimating sensitivity by using the $L_p$ norms $||\Delta_{U,x^*}(\mathbf{X_B})||_{p} = (\mathbb{E}_{X_B} (\Delta_{U,x^*}(X_B)^p))^{1/p}$ with $p < \infty$ in the RDP analysis of the sampled Gaussian mechanism, as was done in the $(\epsilon,\delta)$-DP case. %
However, we are not aware of an approach to do this for R\'enyi DP.

\looseness=-1
Instead, we show how a new sensitivity distribution comparing all mini-batches $X_B$ in $X$ to all mini-batches $X'_B$ in $X' = X \cup \{x^*\}$, as opposed to just a single point $x^*$ as done with $\Delta_{U,x^*}(X_B)$, is amenable to a R\'enyi-DP analysis of the sampled Gaussian mechanism that does not look at the maximum privacy leakage over mini-batches. %
If the distribution of all updates given by $X$ is similar to the distribution of all updates given by $X'$, then analysis with this new sensitivity distribution can be expected to beat the current data-independent analysis.

Specifically, given $\alpha$ minibatches sampled from $X$, ${X_{B}}^{\tilde \alpha} \sim X$, and a particular minibatch sampled from $X'$, $X'_B \sim X'$, we define a new sensitivity distribution for $\alpha$-R\'enyi DP as follows: 

\vspace{-5mm}
\begin{multline*}
\Delta_{U,\alpha}({X_{B}}^{\tilde \alpha}, X'_B) \\ \coloneqq \sum_{i} ||U({X_B}^i)||_2^2 - (\alpha-1) ||U(X'_B)||_2^2 - ||\Delta_{\alpha}({X_B}^{\tilde \alpha},X'_B)||_2^2
\end{multline*}

where $\Delta_{\alpha}({X_B}^{\tilde \alpha},X'_B) = (\sum_{i} U({X_B}^i)) - (\alpha - 1) U(X'_B)$. When letting ${X_{B}}^{\tilde \alpha}$ and $X'_B$ be random variables, $\Delta_{U,\alpha}$ effectively compares all the mini-batches in $X'$ to all the $\alpha$-tuples of mini-batches in $X$. The $\alpha$-tuples appear here due to their equivalence with an expectation over mini-batches to the power of $\alpha$ which appears when analyzing $\alpha$-R\'enyi divergences. As described earlier, comparing this to the previous sensitivity distribution $\Delta_{U,x^*}(X_B)$, we see that this new sensitivity will compare all mini-batches in $X$ to all mini-batches in $X'$ (and not just to a point $x^*$) and hence captures more ``global" changes in updates due a datapoint $x^*$.

Theorem~\ref{thm:renyi_dp_sens} states the R\'enyi diveregence of the sampled Gaussian mechanism $M$ between two arbitrary datasets using $\Delta_{U, \alpha}$ through applying a transformation on its fixed $X'_B$ marginal values and taking its expectation over $X'_B$. Taking the maximum of the bounds for $D_{\alpha}(M(X)||M(X'))$ and $D_{\alpha}(M(X')||M(X))$ from Theorem~\ref{thm:renyi_dp_sens} where $X' = X \cup \{x^*\}$ gives a per-instance guarantee of $M$ for $X,X'$.

\begin{theorem}
\label{thm:renyi_dp_sens}
Let $\alpha > 1$ be an integer. Given two arbitrary datasets $X,X'$, the sampled Gaussian mechanism $M$ with noise $\sigma$ satisfies: 

$$D_{\alpha}(M(X')||M(X)) \leq \frac{1}{(\alpha-1)} \mathbb{E}_{X_B} (\ln (\mathbb{E}_{{X'_{B}}^{\tilde \alpha}}(e^{\frac{-1}{2\sigma^2}\Delta_{U,\alpha}({X'_{B}}^{\tilde \alpha}, X_B)})))$$

\end{theorem}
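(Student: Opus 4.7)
The plan is to expand the R\'enyi divergence directly from its definition and reduce the problem to a standard Gaussian integral by exploiting the fact that $\alpha$ is a positive integer, which lets us write the $\alpha$-th power of a mixture as a product of $\alpha$ independent copies. Concretely, I would start from
\[
(\alpha-1)\, D_{\alpha}(M(X')\|M(X)) = \ln \int M(X')(y)^{\alpha}\, M(X)(y)^{1-\alpha}\, dy,
\]
and expand $M(X')(y) = \mathbb{E}_{X'_B}[\phi(y - U(X'_B))]$ (with $\phi$ the $N(0,\sigma^2 I)$ density). Raising to the integer $\alpha$-th power and applying Fubini gives
\[
M(X')(y)^\alpha = \mathbb{E}_{{X'_B}^{\tilde\alpha}}\!\left[\prod_{i=1}^{\alpha} \phi(y - U({X'_B}^i))\right],
\]
where the $\alpha$ mini-batches are independent. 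For the $M(X)(y)^{1-\alpha}$ factor, since $x \mapsto x^{1-\alpha}$ is convex on $(0,\infty)$ when $\alpha > 1$, Jensen's inequality applied to the mixture $M(X)(y) = \mathbb{E}_{X_B}[\phi(y - U(X_B))]$ pointwise introduces a single mini-batch $X_B$ drawn from $X$.

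Next, I would compute the resulting Gaussian integral in closed form; this is the computational core of the argument. Writing $U_i = U({X'_B}^i)$ for $i=1,\dots,\alpha$ and $U_0 = U(X_B)$, the integrand $\prod_{i=1}^{\alpha} \phi(y - U_i)\cdot \phi(y - U_0)^{1-\alpha}$ is proportional to the exponential of a quadratic in $y$ whose leading coefficient is exactly $1$ (the $(1-\alpha)$ term cancels $\alpha - 1$ of the positive ones). Completing the square around the mean vector $V = \sum_i U_i - (\alpha-1) U_0$, the $y$-dependent Gaussian integrates to the normalizing constant, and what remains is precisely
\[
\exp\!\left(-\tfrac{1}{2\sigma^2}\Big(\textstyle\sum_i \|U_i\|^2 - (\alpha-1)\|U_0\|^2 - \|V\|^2\Big)\right) = \exp\!\left(-\tfrac{1}{2\sigma^2}\Delta_{U,\alpha}({X'_B}^{\tilde\alpha}, X_B)\right),
\]
matching the theorem's defined sensitivity exactly.

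The main obstacle is arranging the order of operations so that $\mathbb{E}_{X_B}$ ends up outside the logarithm as in the theorem statement. A naive application of Jensen on the mixture followed by taking $\ln$ yields the (looser) bound with $\ln \mathbb{E}_{X_B}(\cdot)$; by the concavity of $\ln$, the claimed form $\mathbb{E}_{X_B}\ln(\cdot)$ is tighter and cannot be recovered by simply swapping $\ln$ and $\mathbb{E}_{X_B}$. The right approach is a conditional decomposition: condition on $X_B$ first, apply the convexity bound on $M(X)^{1-\alpha}$ per value of $X_B$ inside the integral, perform the Gaussian integral, and only then take the logarithm of the resulting per-batch expression before averaging over $X_B$. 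I would treat that careful conditioning as the key technical step; once in place, the final bound follows immediately by dividing by $\alpha - 1$, and the case $D_{\alpha}(M(X)\|M(X'))$ is handled symmetrically by swapping the roles of $X$ and $X'$.
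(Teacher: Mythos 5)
Your computational core matches the paper's: for integer $\alpha$ you expand $M(X')^{\alpha}$ as an expectation over $\alpha$-tuples ${X'_B}^{\tilde\alpha}$, complete the square, and the Gaussian integral produces exactly $e^{-\frac{1}{2\sigma^2}\Delta_{U,\alpha}({X'_B}^{\tilde\alpha},X_B)}$; the reverse divergence is handled symmetrically, just as in the paper. The gap is in the one step you yourself flag as the crux: moving $\mathbb{E}_{X_B}$ outside the logarithm. Your pointwise Jensen argument (convexity of $t\mapsto t^{1-\alpha}$) only shows that $Q\mapsto \int M(X')^{\alpha}Q^{1-\alpha}\,dy$ is convex, which after taking $\ln$ yields the weaker bound of the form $\frac{1}{\alpha-1}\ln\mathbb{E}_{X_B}(\cdots)$; since the logarithm of a convex function need not be convex, this does not deliver the stated $\frac{1}{\alpha-1}\mathbb{E}_{X_B}\ln(\cdots)$, which is strictly tighter by Jensen, as you note. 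Your proposed remedy --- ``condition on $X_B$ first, apply the convexity bound per value of $X_B$ inside the integral, and only then take the logarithm before averaging'' --- is not a valid manipulation as stated: the logarithm in $D_{\alpha}$ applies to the single integral $\int M(X')^{\alpha}M(X)^{1-\alpha}\,dy$ in which $M(X)$ is one fixed mixture density, so there is no conditioning structure that lets you average logarithms of per-$X_B$ integrals; asserting that decomposition is simply restating the inequality to be proved.

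What is actually needed, and what the paper invokes as its first step, is convexity of the R\'enyi divergence in its \emph{second} argument: $D_{\alpha}\bigl(P\,\big\|\,\sum_{B}p_{B}Q_{B}\bigr)\le\sum_{B}p_{B}D_{\alpha}(P\|Q_{B})$ (Theorem 12 of van Erven and Harremo\"es, cited in the paper), applied with $P=M(X')$ and $Q_{B}=N(U(X_B),\sigma^{2}I)$. This pulls $\mathbb{E}_{X_B}$ outside the log \emph{before} any Gaussian computation, and then your integer-$\alpha$ expansion and completion of the square go through verbatim for each fixed $X_B$. If you prefer not to cite that theorem, you can prove the needed inequality for $\alpha>1$ directly: after multiplying by $\alpha-1>0$ and exponentiating it is equivalent to $\int P^{\alpha}\bigl(\sum_{B}p_{B}Q_{B}\bigr)^{1-\alpha}\le\prod_{B}\bigl(\int P^{\alpha}Q_{B}^{1-\alpha}\bigr)^{p_{B}}$, which follows from the weighted AM--GM inequality pointwise ($\sum_{B}p_{B}Q_{B}\ge\prod_{B}Q_{B}^{p_{B}}$, reversed by the negative exponent $1-\alpha$) together with the generalized H\"older inequality with exponents $1/p_{B}$. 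With that lemma supplied, the rest of your outline is correct and coincides with the paper's proof.
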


Some key variables in Theorem~\ref{thm:renyi_dp_sens} is the standard deviation of noise $\sigma$ (increasing it decreases the upper-bound) and the sensitivity distribution $\Delta_{U,\alpha}({X_{B}}^{\tilde \alpha}, X'_B)$ (the more concentrated at $0$ it is, the smaller the upper-bound). The proof relies on convexity, which is always true for the second argument of the R\'enyi divergence $D_{\alpha}(A||B)$, and then direct calculations involving Gaussians. %

\begin{proof}

For simpler notation, we use $\mu_X = U(X)$. We proceed by taking $\alpha$ to be an integer (to use an expansion similar to Section 3.3 in \citet{mironov2019r}) and utilizing Theorem 12 in~\citet{van2014renyi}. We will let $N_{X_B} = N(\mu_{X_B},\sigma^2)$ where $\mu_{X_B} = U(X_B)$ as stated earlier.

We proceed to bound $D_{\alpha}(M(X') || M(X))$ for arbitrary $X',X$. Hence a completely analogous argument will allow us to also bound $D_{\alpha}(M(X) || M(X'))$ when $X'$ is specifically $X \cup \{x^*\}$. First note

\vspace{-5mm} \begin{multline}
    D_{\alpha}(M(X') || M(X)) = D_{\alpha}(\sum_{X'_B} \mathbb{P}(X'_B) N_{X'_B} || \sum_{X_B} \mathbb{P}(X_B) N_{X_B}) \\ \leq \sum_{X_B} \mathbb{P}(X_B) D_{\alpha}(\sum_{X'_B} \mathbb{P}(X'_B) N_{X'_B} || N_{X_B})
\end{multline} \vspace{-3mm}

where the last inequality is from the fact the divergence is convex in the second argument (Theorem 12 in~\citet{van2014renyi}). 

Now note 
\vspace{-3mm} \begin{multline}
    e^{(\alpha-1)D_{\alpha}(\sum_{X'_B} \mathbb{P}(X'_B) N_{X'_B} || N_{X_B})} \\ = \int (\sum_{X'_B}\mathbb{P}(X'_B) \frac{1}{(\sigma \sqrt{2\pi})^d} e^{\frac{-1}{2\sigma^2} |x - \mu_{X'_B}|^2})^{\alpha} \\ (\frac{1}{(\sigma \sqrt{2\pi})^d} e^{\frac{-1}{2\sigma^2}|x - \mu_{X_B}|^2})^{1- \alpha} dx \\ = \sum_{{X'_B}^{\tilde \alpha}} \mathbb{P}({X'_B}^{\tilde \alpha}) \frac{1}{(\sigma \sqrt{2\pi})^d} \int e^{\frac{-1}{2\sigma^2} ( (\sum_{{X'_B}^i}|x- \mu_{{X'_B}^i}|^2) - (\alpha - 1)|x - \mu_{X_B}|^2)}
\end{multline} \vspace{-3mm}

where we expanded $(\sum_{X'_B}\mathbb{P}(X'_B) \frac{1}{(\sigma \sqrt{2\pi})^d} e^{\frac{-1}{2\sigma^2} |x - \mu_{X'_B}|^2})^{\alpha}$ by noting each term in the product is just iterating through all $\alpha$ tuples of mini-batches from $X'$.

Note we can for now consider the integral in each dimension, as the overall integral is the product of each dimension. Also recall from the theorem statement that we define $$\Delta_{\alpha}({X'_B}^{\tilde \alpha},X_B) = (\sum_{i} \mu_{{X'_B}^i}) - (\alpha - 1) \mu_{X_B}$$ Hence (letting everything be one dimensional for now) we have

\vspace{-5mm} \begin{multline}
    (\sum_{{X'_B}^i}|x- \mu_{{X'_B}^i}|^2) - (\alpha - 1)|x - \mu_{X_B}|^2 \\ = x^2 - 2 \Delta_{\alpha}({X'_B}^{\tilde \alpha},X_B)x + \sum_{i} \mu_{{X'_B}^i}^2 - (\alpha-1) \mu_{X_B}^2 \\ = (x - \Delta_{\alpha}({X'_B}^{\tilde \alpha},X_B))^2 + \sum_{i} \mu_{{X'_B}^i}^2 - (\alpha-1) \mu_{X_B}^2 - \Delta_{\alpha}({X'_B}^{\tilde \alpha},X_B)^2
\end{multline} \vspace{-3mm}

Hence, we have 

\vspace{-5mm} \begin{multline}
    \int e^{\frac{-1}{2\sigma^2} ( (\sum_{{X'_B}^i}|x- \mu_{{X'_B}^i}|^2) - (\alpha - 1)|x - \mu_{X_B}|^2)} \\ = e^{\frac{-1}{2\sigma^2}(\sum_{i} {\mu_{{X'_B}^i}}^2 - (\alpha-1) \mu_{X_B}^2 - \Delta_{\alpha}({X'_B}^{\tilde \alpha},X_B)^2)} \int e^{\frac{-1}{2\sigma^2} (x - \Delta_{\alpha}({X'_B}^{\tilde \alpha},X_B))^2} \\ = \sigma \sqrt{2 \pi} e^{\frac{-1}{2\sigma^2}(\sum_{i} \mu_{{X'_B}^i}^2 - (\alpha-1) \mu_{X_B}^2 - \Delta_{\alpha}({X'_B}^{\tilde \alpha},X_B)^2)}
\end{multline} \vspace{-3mm}

Note going back to the integral over all dimensions we get $= (\sigma \sqrt{2 \pi})^{d} e^{\frac{-1}{2\sigma^2}(\sum_{i} ||{\mu_{{X'_B}^i}||_2}^2 - (\alpha-1) ||\mu_{X_B}||_2^2 - ||\Delta_{\alpha}({X'_B}^{\tilde \alpha},X_B)||_2^2)}$.

Thus to conclude we get 

\vspace{-5mm} \begin{multline}
    D_{\alpha}(M(X') || M(X))  \leq \sum_{X_B} \mathbb{P}(X_B) D_{\alpha}(\sum_{X'_B} \mathbb{P}(X'_B) N_{X'_B} || N_{X_B}) \\ = \sum_{X_B} \mathbb{P}(X_B) \frac{1}{(\alpha-1)} \\ \ln (\sum_{{X'_B}^{\tilde \alpha}} \mathbb{P}({X'_B}^{\tilde \alpha}) e^{\frac{-1}{2\sigma^2}(\sum_{i} ||\mu_{{X'_B}^i}||_2^2 - (\alpha-1) ||\mu_{X_B}||_2^2 - ||\Delta_{\alpha}({X'_B}^{\tilde \alpha},X_B)||_2^2)})
\end{multline} \vspace{-3mm}

A completely analogous calculation gives the same bound with just $X_B$ replaced with $X_B'$ (and vice-versa) for $D_{\alpha}(M(X)||M(X'))$. Taking the max over both these divergences gives a bound on the per-step per-instance R\'enyi-DP guarantee.

\end{proof}

Hence we now have a per-step RDP analysis for DP-SGD that takes advantage of when expected minibatch sensitivity to $x^*$ is much better than the worst cast minibatch sensitivity. While this phenomenon is not useful for studying the sum update rule (what is currently used for DP-SGD) as every mini-batch has the same sensitivity to $x^*$, in Section~\ref{ssec:exp_hard_renyi} we show this analysis allows us to provide a tighter analysis of the mean update rule. Hence, this opens the possibility of future work deploying DP-SGD with different update rules.

\section{Empirical Results}
\label{sec:main_body_emp_results}

In Section~\ref{sec:analysis} we provided the first framework to analyze DP-SGD's per-instance privacy guarantees. This followed by providing new per-step analyses (Theorem~\ref{thm:easy_renyi_dp} and~\ref{thm:renyi_dp_sens}), and a new composition theorem that relies on summing ``expected" per-step guarantees (Theorem~\ref{thm:better_composition}). 
We now highlight several conclusions our framework allows us to make about per-instance privacy when using DP-SGD. For conciseness, we defer a subset of the experimental results to Appendix~\ref{sec:detailed_emp_res}. %

\begin{figure*}[!t]

\centering
\subfloat[Training with the datapoint ($X \cup \{x^*\}$) \label{fig:compo_1_more}]
{
\includegraphics[width=0.32\linewidth]{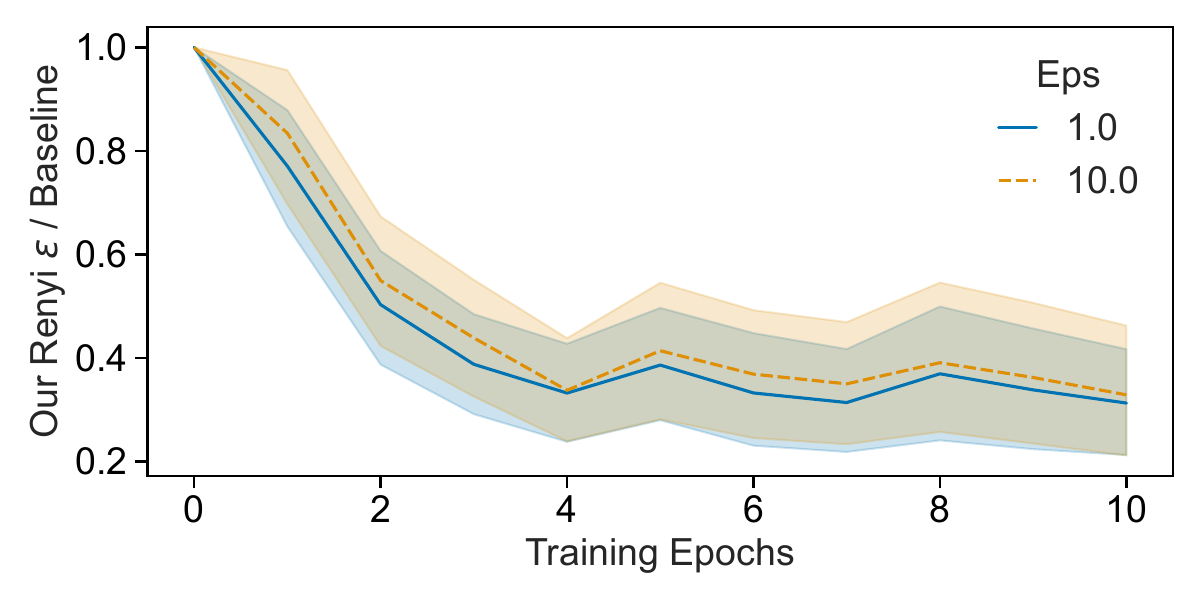}
}
\subfloat[Training with the datapoint ($X \cup \{x^*\}$)\\$\text{~~~~~}$($10^{th}$ percentile) \label{fig:compo_1_more_10per}]
{
\includegraphics[width=0.32\linewidth]{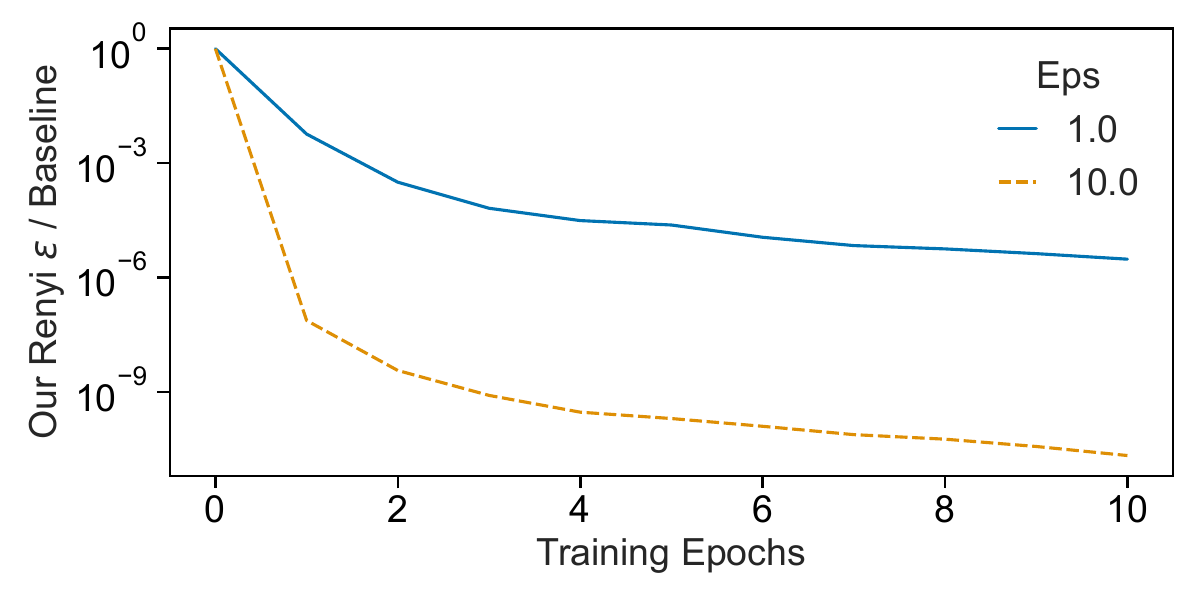}
}
\subfloat[Training with and without the datapont \\ ($X \cup \{x^*\}$ and $X$) \label{fig:compo_1_less}]
{
\includegraphics[width=0.32\linewidth]{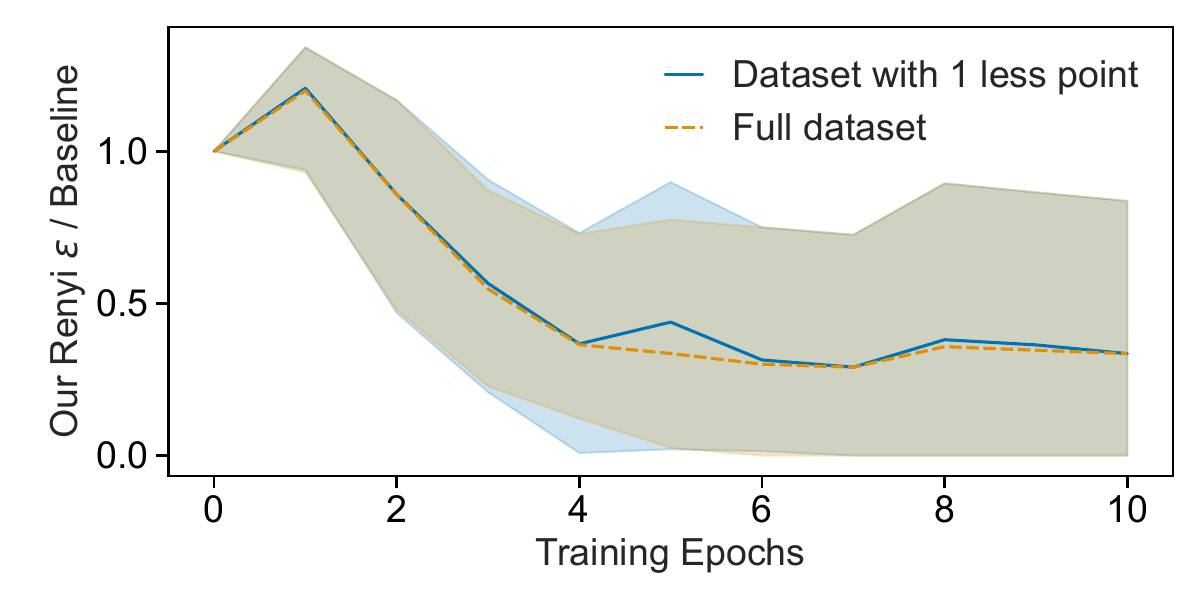}
}
\caption{Per-step privacy contribution from our composition theorem (Theorem~\ref{thm:better_composition}) using the per-step guarantee for the sum update rule (Theorem~\ref{thm:easy_renyi_dp}) as needed for DP-SGD, plotted as a fraction of the baseline data-independent per-step DP-SGD guarantee (Section 3.3 in~\citet{mironov2019r}). The x-axis represents the release of the intermediate models up to a given step in training. The y-axis represents the per-instance privacy leakage for a point given by the release of the model at that training step (relative to the data-independent guarantee); summing all the steps gives the overall privacy leakage of training. The different lines represent changing the Gaussian noise to train with different data-independent $\epsilon,\delta$-DP values. %
The expectations for Theorem~\ref{thm:better_composition} are computed over 10 trials. Figure~\ref{fig:compo_1_more} plots the average relative per-step contribution of 100 random points in MNIST for different strengths of the DP guarantee (i.e., different upper bounds $\varepsilon$) used when training on $X' = X \cup \{x^*\}$. The $10^{th} percentile$ is plotted in Figure~\ref{fig:compo_1_more_10per}. Figure~\ref{fig:compo_1_less} plots expectation over 10 random points in MNIST when training on $X'$ and $X$. We see from both subfigures our per-step contribution more tightly captures the per-instance privacy than the baseline as training progresses: using Theorem~\ref{thm:better_composition} one can conclude that many datapoints have better overall data-dependent privacy guarantees than expected by classical analysis. Note our analysis does worse at the first few steps of training as our composition theorem has a blow up in the order of the R\'enyi divergence for the per-step guarantee for early steps of training; if the sensitivity does not drop quickly enough our composition theorem accounts higher privacy leakage to early steps than the data-independent bounds.
}

\label{fig:composition}
\end{figure*}

\paragraph{Experimental Setup.} In the subsequent experiments, we apply our analysis on MNIST~\citep{lecun1998mnist} and CIFAR-10~\citep{krizhevsky2009learning}. Unless otherwise specified, LeNet-5~\citep{lecun1989backpropagation} and ResNet-20~\citep{he2016deep} were trained on the two datasets for 10 and 200 epochs respectively using DP-SGD, with a mini-batch size equal to 128, $\epsilon=10$, $\delta = 10^{-5}$, $\alpha = 8$ (in cases of R\'enyi DP), and clipping norm $C = 1.0$. All the experiments are repeated 100 times by sampling 100 data points to obtain a distribution/confidence interval if not otherwise stated.
Regarding hardware, we used NVIDIA T4 to accelerate our experiments.

\paragraph{Data Access Assumptions}
We now clarify the data access assumptions needed to run our methods. Theorem~\ref{thm:easy_renyi_dp} for the sum update rule only needs the individual datapoint $x$ and the model, and the composition theorem only additionally needs the checkpoints obtained during training. Hence, as one only needs to compute Theorem~\ref{thm:easy_renyi_dp} and then plug those values in our composition to obtain per-instance guarantees, computing the per-instance DP guarantee for DP-SGD does not require access to the underlying dataset but only the checkpoints and the point $x^*$ in question (applicable for the results in Section~\ref{ssec:exp_better_privacy}). However, Theorem~\ref{thm:renyi_dp_sens} requires sampling minibatches from the datasets, hence our approach to analyze the mean update rule requires further access to the whole dataset (applicable for the results in Section~\ref{ssec:exp_hard_renyi}).

\subsection{Many Datapoints have Better Privacy}
\label{ssec:exp_better_privacy}

Here we describe how our per-instance RDP analysis of DP-SGD, using Theorem~\ref{thm:easy_renyi_dp} for the per-step analysis (with the update rule being the sum of gradients as is typically used) and Theorem~\ref{thm:better_composition} for the composition analysis, allows us to explain why per-instance privacy attacks will fail for many datapoints: many points have better per-instance privacy than the data-independent analysis. We further investigate the distribution of the per-instance privacy guarantees, and which points exhibit better per-instance privacy with our analysis.

\paragraph{Improved Per-Instance Analysis for Most Points} We compare the guarantees given by Theorem~\ref{thm:easy_renyi_dp} for the per-step guarantee in DP-SGD to the guarantee given by the data-independent analysis (see Section 3.3 in \citet{mironov2019r}), and plot per-step contribution coming from our composition theorem. In particular, we take $X$ to be the full MNIST training set, and randomly sample a data point $x^*$ from the test set to create $X' = X \cup x^*$ (as mentioned earlier, we repeat the sampling of $x^*$ 100 times to obtain a confidence interval). We train 10 different models on $X$ with the same initialization and compute the per-step contribution from Theorem~\ref{thm:better_composition} between $X$ and $X'$ (using Theorem~\ref{thm:easy_renyi_dp} to analyze the per-step guarantee from a given model) over the training run, shown in Figure~\ref{fig:compo_1_more}.
We can see that our analysis of the per-step contribution decreases with respect to the baseline as we progress through training. This persists regardless of the expected mini-batch size, the strength of DP used during training, and model architectures; see Figure~\ref{fig:renyi_simple_composition_mnist_sum} in Appendix~\ref{sec:detailed_emp_res}.
By Theorem~\ref{thm:better_composition} we conclude that $D_{\alpha}(Train_{DP-SGD}(X) || Train_{DP-SGD}(X'))$ is significantly less than the baseline for many data points. %

\looseness=-1
To see our improvement over the max of $D_{\alpha}(Train_{DP-SGD}($ $X) || Train_{DP-SGD}(X'))$ and $D_{\alpha}(Train_{DP-SGD}(X') || Train_{DP}$ $_{-SGD}(X))$, i.e., the R\'enyi-DP guarantee, we computed the expectation when training on $X$ and $X'= X \cup \{x^*\}$  for $10$ training points $x^*$ where $X$ is now the training set of MNIST with one point removed and $X'$ is the full training set. Our results are shown in Figure~\ref{fig:compo_1_less} where we see a similar decreasing trend relative to the baseline over training: we conclude by Theorem~\ref{thm:better_composition} that many datapoints have better per-instance R\'enyi DP than the baseline. In other words, we conclude many datapoints have stronger per-instance RDP guarantees than can be demonstrated through the classical data-independent analysis.

\begin{figure}[!t]

\centering
\subfloat[Mini-batch Size = 128 \label{fig:simple_renyi_training_stage}]
{
\includegraphics[width=0.8\linewidth]{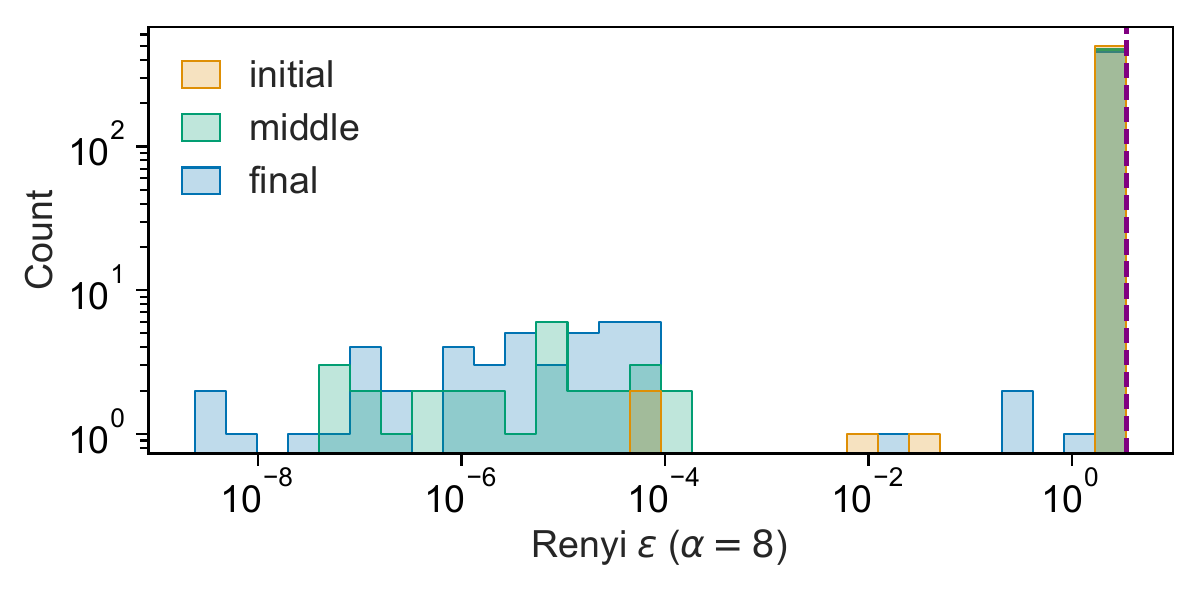}
}

\subfloat[Varying Mini-batch Size \label{fig:simple_renyi_vary_bs}]
{
\includegraphics[width=0.8\linewidth]{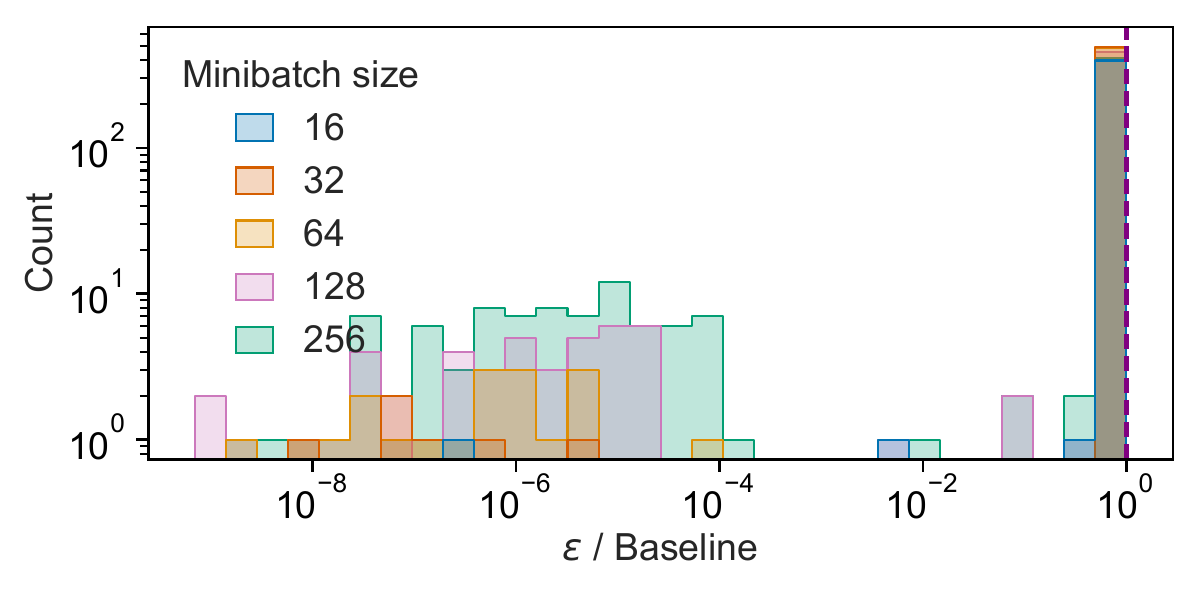}
}

\caption{Distribution plots of our per-step guarantees for the sum update rule given by  Theorem~\ref{thm:easy_renyi_dp} for $500$ datapoints in CIFAR10 with respect to: (a) different stages of training, and (b) varying mini-batch size. The x-axis represents the per-instance guarantee relative to the data-independent guarantee: i.e., the further the mass is to the left, the more our data-dependent guarantees improves upon the data-independent baseline. The purple dashed line 
represents the data-independent baseline. We observe a "long tail" of datapoints with magnitudes better privacy than expected in both plots, illustrated by the log scale on the x-axis.
\vspace{-2mm}
}
\label{fig:simple_renyi}
\end{figure}

\begin{figure}[!t]
\centering
\includegraphics[width=0.8\linewidth]{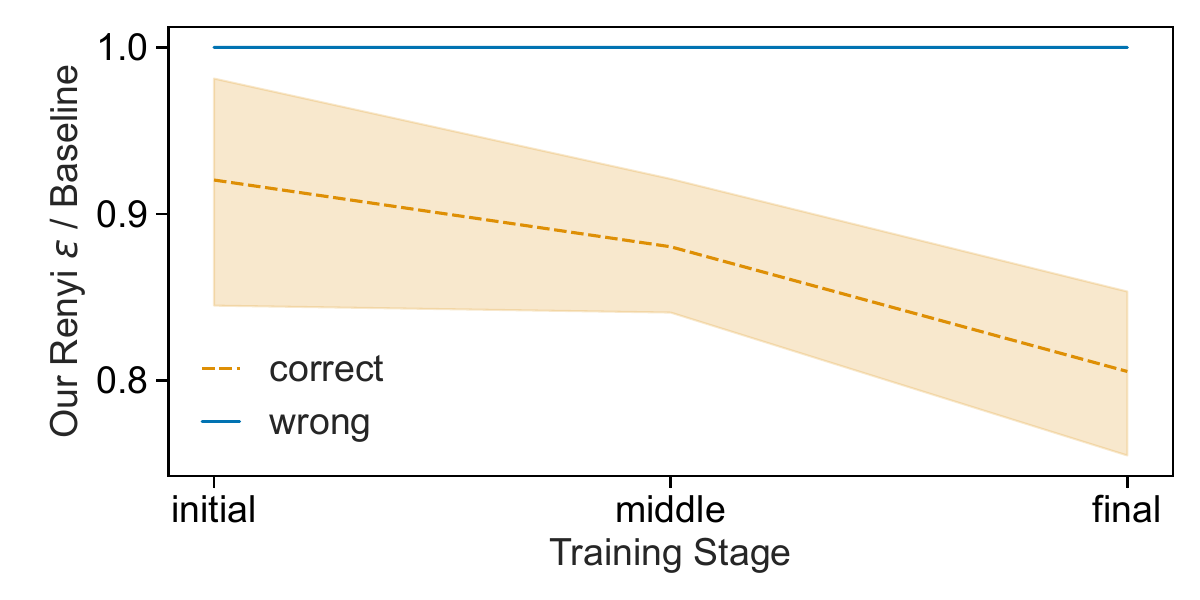}
\caption{Per-step guarantees given by Theorem~\ref{thm:easy_renyi_dp} for $500$ datapoints in CIFAR10 across training stages with respect to correct or incorrect classifications. It can be seen that correctly classified datapoints are on average more private than incorrectly classified ones.\vspace{-5mm}}
\label{fig:correct_incorrect}
\end{figure}

\paragraph{Long-Tail of Better Per-Instance Privacy.} However, the previous figures only show the average effect over datapoints. In Figures~\ref{fig:simple_renyi_training_stage} and~\ref{fig:simple_renyi_vary_bs} we plot the distribution of per-step guarantees over $500$ data points in CIFAR10. The key observations are (1) there exists a long tail of data points with significantly better per-instance privacy than the baseline illustrated by the log-scale in Figures~\ref{fig:simple_renyi_training_stage} and~\ref{fig:simple_renyi_vary_bs},  (2) such improvements mostly exist in the 
later half
of the training process, and (3) such improvements are mostly independent of mini-batch size.

\paragraph{Correct Points are More Per-Instance Private.} Next, we turn to understanding what datapoints are experiencing better privacy when using DP-SGD. In Figure~\ref{fig:correct_incorrect}, we plot the per-step guarantees given by Theorem~\ref{thm:easy_renyi_dp} for correctly and incorrectly classified data points at the beginning, middle, and end of training
on CIFAR10 (and for MNIST in Figure~\ref{fig:renyi_simple_fraction_curve_vary_arch_mnist} in Appendix~\ref{sec:detailed_emp_res}). We see that, on average, correctly classified data points have better per-step privacy guarantees than incorrectly classified data points across training. This disparity holds most strongly towards the end of training.

\subsection{Higher Sampling Rates can give Better Privacy}
\label{ssec:exp_hard_renyi}

We now highlight how our analysis, if it uses Theorem~\ref{thm:renyi_dp_sens} for the per-step analysis, allows us to better analyze DP-SGD with other update rules (not the sum of gradients which is what the current implementation of DP-SGD uses and Section~\ref{ssec:exp_better_privacy} analyzed). In particular, we will analyze the mean update rule and show how it has a privacy trade-off with sampling rate that is opposite to the trade-off for the sum update rule.

\looseness=-1
In normal SGD (with gradient clipping), one computes a mean for the per-step update 
$U(X_B) = \frac{1}{|X_B|}$ $\sum_{x \in X_B} \nabla_{\theta}\mathcal{L}(\theta,x)/ \max(1,\frac{||\nabla_{\theta}\mathcal{L}(\theta,x)||_2}{C})$. 
However, DP-SGD computes a weighted sum $U(X_B) = \frac{1}{L} \sum_{x \in X_B} \nabla_{\theta}\mathcal{L}(\theta,x)/ $ $\max(1,\frac{||\nabla_{\theta}\mathcal{L}(\theta,x)||_2}{C})$. Note the subtle difference between dividing by a fixed constant $L$ (typically the expected mini-batch size when Poisson sampling datapoints) and by the mini-batch size $|X_B|$. This means for the sum the upper-bound on sensitivity is $\frac{C}{L}$, while for the mean the upper-bound on sensitivity is only $C$ (consider neighbouring mini-batches of size 1 and 2). Hence using the mean update rule requires far more noise and so is not practical to use. We highlight how our per-instance analysis by sensitivity distributions provides better guarantees for the mean update rule.

\begin{figure}[!t]
\centering
\subfloat[$D_{\alpha}(M(X')||M(X))$ $\text{~~~~~}$Mini-batch Size = $128$ \label{fig:hard_renyi_training_stage}]
{
\includegraphics[width=0.8\linewidth]{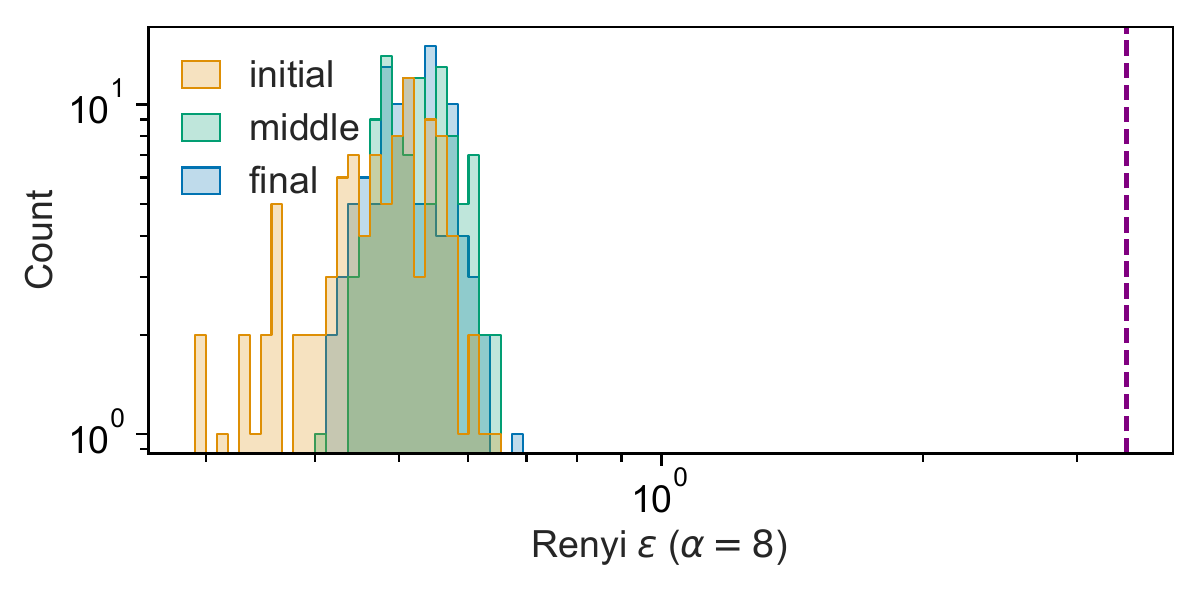}
}
\\\subfloat[$D_{\alpha}(M(X)||M(X'))$ $\text{~~~~~}$Mini-batch Size = $128$
\label{fig:hard_renyi_reverse}]
{
\includegraphics[width=0.8\linewidth]{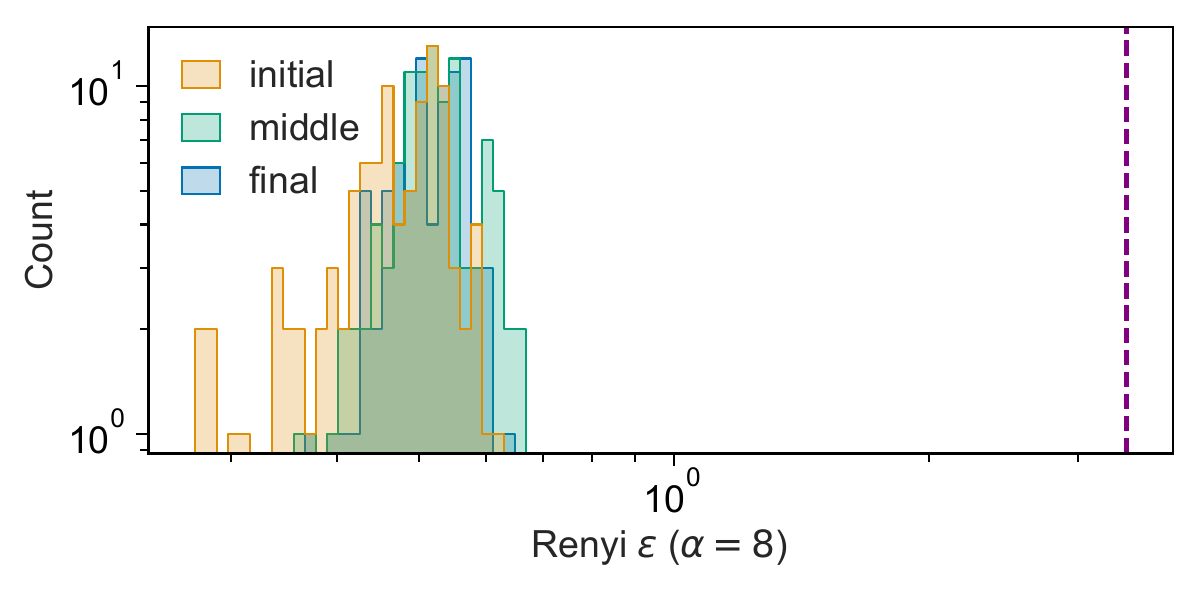}
}
\\\subfloat[$D_{\alpha}(M(X')||M(X))$ $\text{~~~~~}$Varying Mini-batch Size \label{fig:hard_renyi_vary_bs}]
{
\includegraphics[width=0.8\linewidth]{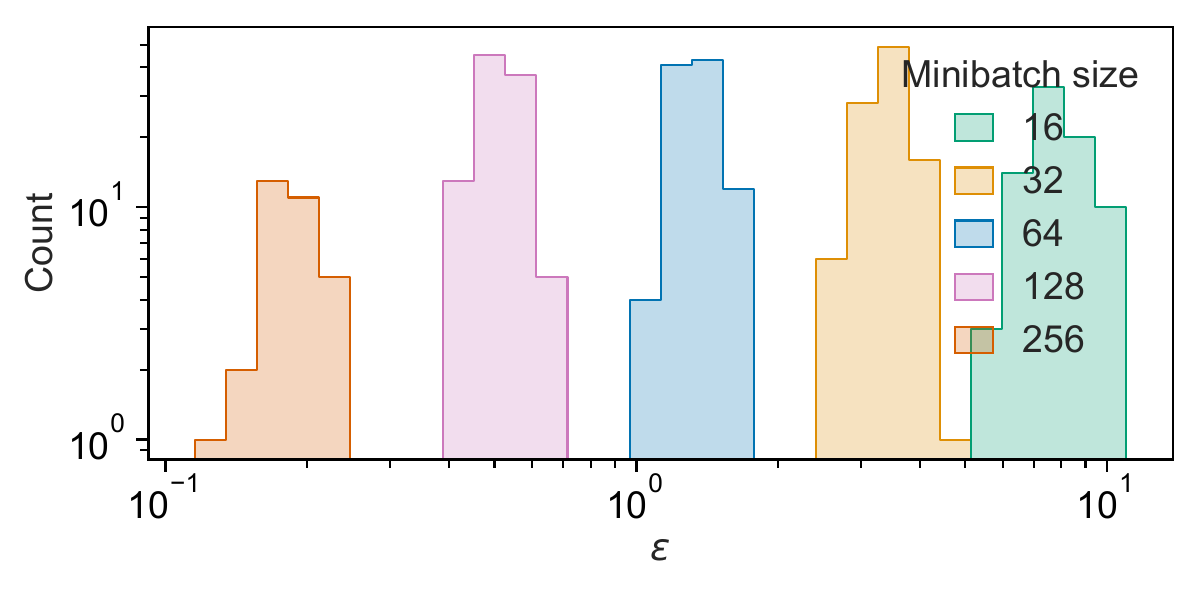}
}

\caption{ Distribution plots (log scale) of our per-step guarantees for the mean update rule (Theorem~\ref{thm:renyi_dp_sens}) for $500$ datapoints in CIFAR10 with respect to different training stages and mini-batch sizes. Bounds on both $D_{\alpha}(M(X)||M(X'))$ and $D_{\alpha}(M(X')||M(X))$ are shown (the maximum of both is the per-instance R\'enyi-DP guarantee) for an expected mini-batch size of 128.  From Figures~\ref{fig:hard_renyi_training_stage},\ref{fig:hard_renyi_reverse}, we conclude our per-step guarantees for the mean update rule (Theorem~\ref{thm:renyi_dp_sens}) gives better data-dependent guarantees for the mean update rule than classical analysis, and from Figure~\ref{fig:hard_renyi_vary_bs} that increasing the expected mini-batch size decreases our bound for this update rule (counter-intuitive to privacy amplification by subsampling).
\vspace{-5mm}
}
\label{fig:hard_renyi}
\end{figure}

\paragraph{Better Analysis of the Mean Update Rule. } Letting $M$ now be the sampled Gaussian mechanism with the mean update rule, we compute the bound on $D_{\alpha}(M(X')||M(X))$ and $D_{\alpha}(M(X)||M(X'))$ given by Theorem~\ref{thm:renyi_dp_sens}, where we estimated the inner and outer expectation using $20$ samples, i.e., $20$ random $X_B'^{\alpha}$ (or $X_B^{\alpha}$) for each of the $20$ random $X_B$ (or $X_B'$). We obtain Figure~\ref{fig:hard_renyi_training_stage} and~\ref{fig:hard_renyi_vary_bs} by repeating this for $500$ data points in CIFAR10 while varying the training stage. We observe that for both divergences, we beat the baseline analysis by more than a magnitude at the middle and end of training. We conclude Theorem~\ref{thm:renyi_dp_sens} gives us better per-instance R\'enyi DP guarantees for the mean update rule.

\paragraph{Per-Instance Privacy Improves with Higher Sampling Rate.} Furthermore, counter-intuitively to typical subsample privacy amplification, in Figure~\ref{fig:hard_renyi_vary_bs} we see that our bound decreases with increasing expected mini-batch size: 
we attribute this to the law of large numbers, whereby increasing the expected mini-batch size leads to sampled mini-batches having similar updates more often and hence the sensitivity distribution concentrates at smaller values. An analogous result is shown for MNIST in Figure~\ref{fig:renyi_hard_eps_distrib_bs_mnist_mean} (in Appendix~\ref{sec:detailed_emp_res}).

\section{Discussion}

Here we first discuss past work on composition theorems (Section~\ref{ssec:back_full_comp}) and the current computational trade-offs of our analysis which future work could improve (Section~\ref{ssec:comp_lims}). We then discuss some theoretical questions based on observations from our analysis (Section~\ref{ssec:theoretical_questions}). Lastly, we describe several applications of our analysis (Section~\ref{ssec:apps}).

\subsection{Fully Adaptive Composition Theorems}
\label{ssec:back_full_comp}

One of the main technical contributions of this paper is generalizing the normal R\'enyi DP composition theorem (Proposition 1 in \citet{mironov2017renyi}), which sums worst-case per-step guarantees, to allow for better per-instance analysis. Other work have also generalized the composition theorem to have better per-instance analysis~\citep{feldman2021individual, koskela2022individual}, and called these new theorems Fully Adaptive Composition. For R\'enyi DP, \citet{feldman2021individual} showed that composition can be done by considering the worst-case sum of the per-step guarantees from a DP-SGD training run (Theorem~3.1 in \citet{feldman2021individual}), as opposed to summing the worst-case guarantee at each step. \citet{koskela2022individual} state an analogous composition theorem for Gaussian DP. However, for DP-SGD, the degree of improvement provided by the worse-case sum compared to the normal composition is not clear. It could be that the worst-case sum is equal to the sum of the worst-case per-step guarantees, which is the case if the training run goes to worst-case states with non-zero probability at each step. Furthermore, it is hard to measure the worst-case sum to show this is not the case. Specifically, it requires measuring the $L_{\infty}$ norm of a distribution, which without further assumptions beyond boundedness is much harder than the p-norms needed for our method. In short, we are the first to provide a per-instance composition theorem that can be used to determine better per-instance guarantees for DP-SGD.

\subsection{Computational Limitations and Future Improvements}
\label{ssec:comp_lims}

As explained in Section~\ref{ssec:comp}, there is a tension between preventing blow-up in our composition theorem (Theorem~\ref{thm:better_composition}) and estimating the per-step contributions with few samples: both require manipulating a parameter $p$, with the former requiring large $p$ and the latter requiring small $p$. We showed in Section~\ref{ssec:comp} how the value of $p$ we chose for our experiments strikes a balance where we can limit blow-up while still estimating the per-step contribution to the composition with few samples. This balance is further backed up by the confidence intervals for our estimates of the per-step contributions (see Figure~\ref{fig:confid_10points}). Nevertheless, we still require several training runs to compute the per-instance guarantee for a specific point.

\begin{figure}
    \centering
    \includegraphics[width=0.8\linewidth]{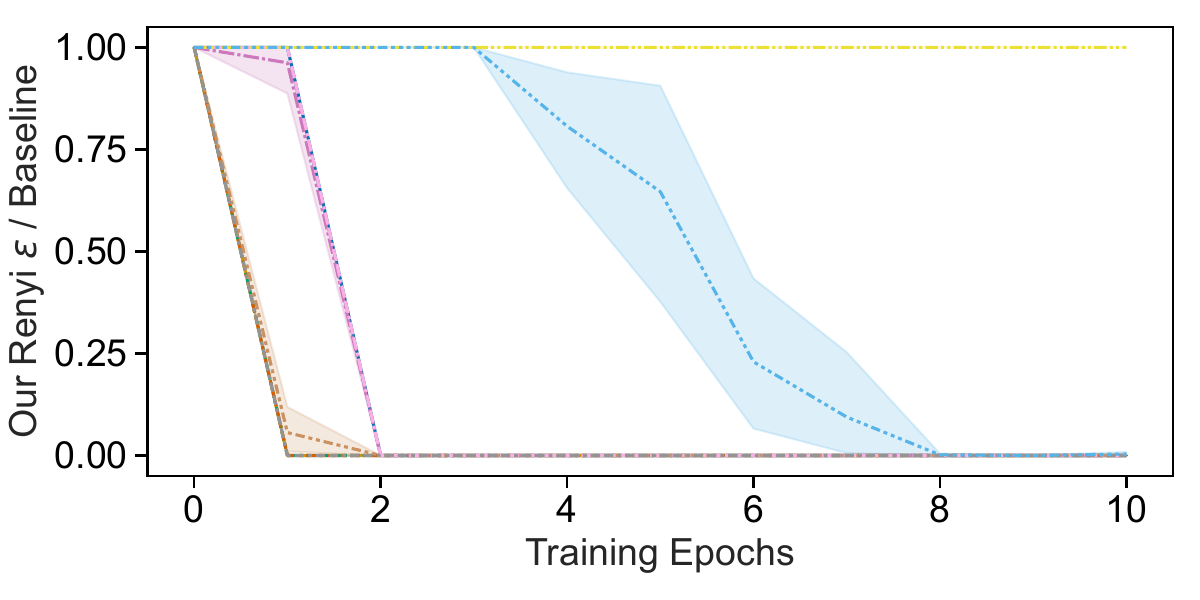}
    \caption{The expected reweighted per-step contributions which are summed for our composition theorem (Theorem~\ref{thm:better_composition}) using Theorem~\ref{thm:easy_renyi_dp} for the unweighted per-step guarantee for 10 different points in MNIST. The guarantees are computed once each epoch when training with the datapoint (i.e., on $X' = X \cup \{x^*\}$). The shaded region is the $95\%$ confidence interval over 10 trials. As seen by the confidence intervals having a width a small fraction of the baseline value, with just 10 trials we are very confident in the estimates of the per-step contributions for most points.
    \vspace{-5mm}
    }
    \label{fig:confid_10points}
\end{figure}

Future work may be able to improve (analytically) the trade-off between blow-up and sample complexity for Theorem~\ref{thm:better_composition}, and hence make it cheaper to compute the per-instance DP guarantees. Future work may also be able to derive composition theorems analogous to Theorem~\ref{thm:better_composition} which are easier to estimate. Similarly, Theorem~\ref{thm:renyi_dp_sens} is computationally expensive to compute; we require computing several mini-batch updates at every step to estimate the expectations. Future work may be able to derive alternative per-step guarantees for general update rules that are easier to empirically estimate.

\subsection{Theoretical Questions}
\label{ssec:theoretical_questions}

In this paper we applied our analysis of DP-SGD to deep learning. In particular, our analysis led to data-dependent guarantees but stopped short of data-independent guarantees; the technical problem for deriving data-independent guarantees is bounding the "expected per-step guarantee" for all datasets. However, one can still apply our analysis of DP-SGD to classical machine learning. For example, theoretical work has shown improved data-independent privacy guarantees for convex losses \cite{altschuler2022privacy}; the proof relied on the update rule being a contraction for convex losses. In contrast to this approach, our analysis can directly translate smaller gradient norms (in expectation) during training to better privacy guarantees. Hence, we believe an interesting future direction is applying our analysis to learning settings that permit direct calculations of sensitivity distributions and hence potentially derive data-independent guarantees. 

\looseness=-1
In this direction, a particular phenomenon we wish to highlight is that by training with less noise we sometimes see a disproportional decrease in per-instance privacy consumption. This is shown in Figure~\ref{fig:compo_1_more_10per} where training with the noise level for $\epsilon=10$ (data-independent bound) resulted in disproportionately smaller per-step privacy consumption than training with the higher noise level for $\epsilon=1$ (data-independent bound) for the 10th percentile of points with the smallest sensitivity. Stated another way, our analysis shows that for some datapoints more noise is not always better for privacy. In settings where sensitivity distributions can be explicitly calculated, one may be able to compute what this noise vs. per-instance privacy trade-off is.

\subsection{Applications}
\label{ssec:apps}

In this paper, we focused on explaining how privacy attacks will fail for many datapoints if the adversary only observes typical datasets common to deep learning. This was done by providing a new per-instance DP analysis for DP-SGD. We now highlight other applications of our analysis.

\paragraph{Estimating Privacy} A growing theme in private deep learning is empirically ``estimating" privacy in different data settings, and is broadly encapsulated by privacy auditing~\cite{nasr2021adversary,Nasr:2023, Zanella-Beguelin:2022, Jagielski:2020}. However, here estimating means obtaining lower bounds on privacy leakage (e.g., the parameter $\epsilon$ used in differential privacy). Our work presents a shift in how we can go about estimating privacy. Our analysis provides per-instance \emph{upper-bounds} that complement past lower-bounds, and when direct calculations of expected sensitivity distributions are not possible, one can still estimate our upper-bounds by repeating training. Future work can hence use our analysis to provide potentially matching empirical upper-bounds to previous empirical lower-bounds obtained with specific privacy attacks, and hence be able to conclude that these privacy attacks are optimal in more settings than just the worst-case.

However, it is possible that the result from auditing can be used in a way that increases the data-independent privacy leakage. Hence, to be more specific, we emphasize two use-cases of per-instance guarantees for auditing and when they retain data-independent differential privacy.

\noindent \emph{Internal Audit: }The first is an internal audit, formally:

\begin{enumerate}[leftmargin=*,noitemsep]
    \item Input dataset D, auditing dataset D*, training algorithm $T$
    \item  Compute auditing statistics $S(D^*, T)$ 
    \item Release $T(D)$
\end{enumerate}

If $T$ is a DP algorithm, we have the outputs of this protocol is private in the training dataset $D$. In our case, T is DP-SGD and our method provides tools to estimate $S$ when $S$ are per-instance guarantees for $x \in D^*$ (that are better than the data-independent guarantee).

\noindent \emph{Audit to Modify Training: } However, if the audit affects the public release, then we can leak privacy. This leads to the second use-case, stated formally as:

\begin{enumerate}[leftmargin=*,noitemsep]
    \item Input D, reference algorithm $T$, final algorithm $T'$
    \item Compute $S(D, T)$
    \item Release $T’(D,S)$ 
\end{enumerate}

An example of a possible $T'$ is computing our per-instance guarantees as $S$, and then dropping all the least private points in the training set (in the hope of having better privacy guarantees and hence better utility vs. privacy); this is broadly captured by privacy filtering where one defines $T’(D,S) = T(D(S))$ where $D(S) \subset D$. However, note $T'$ will now depend on the training dataset $D$ through $S$. In this case, to preserve data-independent privacy, one needs to bound the sensitivity of $T’$ to $S$ and the sensitivity of $S$ to $D$. How $T’$ differs from $T$ is not specified, giving a somewhat ill-posed problem. However, a specific privacy filtering algorithm was studied by \citet{feldman2021individual} which relied on using per-step per-instance guarantees (not composition of steps). While in this paper we do not provide a filtering algorithm, we believe our composition theorem provides a new tool that future work may use in designing privacy filtering algorithms. However, we remark that at least for the filtering algorithm of \citet{feldman2021individual}, our composition theorem can be worse than their composition theorem; their algorithm enforces the almost every condition for their composition theorem which is then tighter than our composition theorem as it does not have divergence order blow-up for early steps (which ours does). This is in contrast to DP-SGD (see Section~\ref{ssec:back_full_comp}), highlighting the interdependence between algorithm and composition theorem choices.

\paragraph{Estimating Memorization} Related to estimating privacy, a broad literature is concerned with measuring memorization~\cite{carlini2019secret,zhang2019identity,feldman2020neural,carlini2022quantifying,tirumala2022memorization}. The methodology for estimating memorization varies, but includes privacy attacks~\cite{carlini2022quantifying}, or approximations of influence~\cite{carlini2019secret,feldman2020neural}. Our work provides, to the best of our knowledge, the first approach to estimating memorization via upper bounds. Hence, our work may provide a complementary tool to past work on memorization.

\paragraph{Estimating Unlearning} Unlearning a datapoint $x^*$ is to obtain the model (distribution) coming from training on the dataset $D\setminus x^*$ given a model trained on the dataset $D$ \cite{cao2015towards}. The only known methods to do this exactly for deep learning are variations of naively retraining on $D \setminus x^*$ \cite{bourtoule2021machine}. Given the general intractability of exact unlearning, significant work has looked into \emph{approximate unlearning}; approximate unlearning is to obtain the same model (distribution) as training with $D \setminus x^*$ up to some error in a predefined metric. A popular measure of approximate unlearning has been using per-instance DP guarantees \cite{guo2019certified}, or only one of the per-instance DP inequalities~\cite{gupta2021adaptive}\cite{ginart2019making} (which is implied by the former). However, the only known methods (to the best of our knowledge) to achieve this kind of guarantee for deep learning is to train with DP-SGD and use the data-independent DP bound as the unlearning guarantee. Our analysis allows for unlearning guarantees that are specific to individual points. While DP-SGD does not explicitly target specific points to have better unlearning guarantees, future work may be able to use our analysis to derive a modified version of DP-SGD that explicitly unlearns a subpopulation of the training set (hence future deletion requests for that subpopulation are already handled).

Another influential notion of unlearning is adaptive machine unlearning~\cite{gupta2021adaptive}, which requires unlearning to also be private to the sequence of update requests; that is, regardless of the order in which people request for unlearning, one finally returns a model close to retraining without any of their data. Per-instance guarantees naturally leak information about the dataset, and so one might wonder if it is still possible to satisfy adaptive machine unlearning when using per-instance DP guarantees to unlearn. We now illustrate a specific unlearning algorithm using per-instance guarantees to not retrain when possible, which satisfies adaptive machine unlearning. However, this does not immediately give a more efficient unlearning algorithm than retraining. This is because checking the per-instance guarantees for each unlearning request is expensive with our current method. However, we hope this serves as motivation for future work on efficiently computing per-instance guarantees.

We now state a R\'enyi divergence version of adaptive machine unlearning; R\'enyi divergence implies the $(\epsilon,\delta)$-DP 
inequalities used in the original definition~\cite{gupta2021adaptive}, while also being consistent with our paper. Note, we use $u^i$ to denote the $i^{th}$ unlearning request in a sequence (only deletion), $D^{i} = D \setminus \{\cup_{j =1}^{i} u^j\}$, and $s$ for other hyperparameters.

\begin{definition}[R\'enyi $\alpha, \beta, \gamma$- adaptive unlearning]

We say $R_{A}$ is a R\'enyi $\alpha, \beta, \gamma$- adaptive unlearning algorithm for $A$ if  for all update request function $UpdReq$, initial datasets $D$, $t \geq 1$, and with probability $1- \gamma$ over the draw of unlearning requests $u^{1},\cdots,u^{t}$ from $UpdReq$ we have $D_{\alpha}(R_{A}(D^{t-1}, u^{t}, s^{t-1}) || A(D^{t})) \leq \beta$
\end{definition}

\looseness=-1
Our adaptive unlearning algorithm using per-instance guarantees, which we will call Naive Per-Instance Unlearning (NPIU), is defined recursively over the sequence of unlearning requests. Intuitively, it checks if the current distribution of models is far away from the retraining distribution (via a triangle inequality), and if so it retrains from scratch. Formally: 
\begin{enumerate}[leftmargin=*,noitemsep]
    \item If  $\frac{\alpha-1}{\alpha-2} D_{2\alpha}(R_{A}(D^{t-2},u^{t-1},s^{t-2}) || A(D^{t-1})) + D_{2\alpha-1}(A(D^{t-1}) || A(D^{t})) \leq \beta$ then $R_{A}(D^{t-1},u^{t},s^{t-1}) = R_{A}(D^{t-2},u^{t-1},s^{t-2})$, i.e., keep same output as before
    \item Else $R_{A}(D^{t-1},u^{t},s^{t-1}) = A(D^{t})$, i.e., retrain from scratch
\end{enumerate}

\begin{fact}
\label{fact:naive_unl_guarantee}
    NPIU satisfies $(\alpha,\beta, 0)$-adaptive unlearning.
\end{fact}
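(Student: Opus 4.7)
I would prove by induction on $t$ that for every realized sequence $u^1, \ldots, u^t$ of unlearning requests,
\[
D_\alpha\bigl(R_A(D^{t-1}, u^t, s^{t-1}) \,\|\, A(D^t)\bigr) \leq \beta
\]
holds \emph{deterministically}; the $\gamma = 0$ claim is then immediate, since this is precisely the inequality that is allowed to fail only with probability at most $\gamma$. The base case is the first request: no prior $R_A$ state exists to reuse, so NPIU can only invoke $A(D^1)$, giving divergence $0 \leq \beta$.

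For the inductive step at $t \geq 2$, I would case-split on the NPIU branch. If NPIU retrains, its output is $A(D^t)$ and $D_\alpha(A(D^t) \,\|\, A(D^t)) = 0 \leq \beta$. If NPIU reuses, then its output equals the previous state $R_A(D^{t-2}, u^{t-1}, s^{t-2})$, and by the guard of branch 1 the quantity
\[
\tfrac{\alpha-1}{\alpha-2} D_{2\alpha}\bigl(R_A(D^{t-2}, u^{t-1}, s^{t-2}) \,\|\, A(D^{t-1})\bigr) + D_{2\alpha-1}\bigl(A(D^{t-1}) \,\|\, A(D^t)\bigr)
\]
is at most $\beta$. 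I would then apply a Hölder-type (weak) triangle inequality for Rényi divergence with pivot distribution $R := A(D^{t-1})$ to upper bound $D_\alpha\bigl(R_A(D^{t-2}, u^{t-1}, s^{t-2}) \,\|\, A(D^t)\bigr)$ by this checked quantity, yielding the desired $\leq \beta$.

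The main technical obstacle is justifying the weak triangle inequality at the specific orders $(2\alpha, 2\alpha-1)$ with coefficient no larger than $\tfrac{\alpha-1}{\alpha-2}$ on the first term. Applying Hölder's inequality to the factorization
\[
\int P^\alpha Q^{1-\alpha}\, dx \;=\; \mathbb{E}_R\bigl[(P/R)^\alpha (R/Q)^{\alpha-1}\bigr]
\]
with conjugate exponents $(p,q) = (2,2)$, and recognizing the two resulting moments as $e^{(2\alpha-1)D_{2\alpha}(P\|R)}$ and $e^{(2\alpha-2)D_{2\alpha-1}(R\|Q)}$ respectively, gives the tight bound
\[
D_\alpha(P \,\|\, Q) \leq \tfrac{\alpha-1/2}{\alpha-1}\, D_{2\alpha}(P \,\|\, R) + D_{2\alpha-1}(R \,\|\, Q).
\]
For $\alpha > 2$, cross-multiplying shows $(\alpha-1/2)(\alpha-2) \leq (\alpha-1)^2$, i.e. $\tfrac{\alpha-1/2}{\alpha-1} \leq \tfrac{\alpha-1}{\alpha-2}$, so the algorithm's (slightly looser) coefficient still upper bounds $D_\alpha(P \,\|\, Q)$. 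The $\gamma = 0$ conclusion is then automatic, as both NPIU branches produce a hard, non-probabilistic bound that holds for every adaptive trajectory of requests.
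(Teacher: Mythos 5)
Your proof is correct and takes essentially the same route as the paper's: a case split on the NPIU branch, where the retrain branch trivially gives divergence zero and the reuse branch is handled by the weak triangle inequality for R\'enyi divergences with pivot $A(D^{t-1})$, exactly as the paper does via Corollary 4 of Mironov. The only differences are cosmetic: you re-derive that inequality from Cauchy--Schwarz instead of citing it and you explicitly verify $\tfrac{\alpha-1/2}{\alpha-1}\le\tfrac{\alpha-1}{\alpha-2}$ for $\alpha>2$ (a step the paper leaves implicit, and which is needed since the algorithm's guard uses the looser coefficient), while your induction framing is harmless but unnecessary, as the inductive hypothesis is never actually invoked—the guard itself supplies the bound on $D_{2\alpha}(R_A(D^{t-2},u^{t-1},s^{t-2})\,\|\,A(D^{t-1}))$.
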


\begin{proof}

Consider any $t$, $D$, and update requests  $u_1,\cdots,u_t$ (which defines the sequence of datasets). If the first condition is satisfied, then by Corollary 4 in [33] (the weak triangle inequality for R\'enyi Divergences) we have $D_{\alpha}(R_{A}(D^{t-2},u^{t-1},s^{t-2}) || A(D^{t})) \leq \frac{\alpha-1/2}{\alpha-1} D_{2\alpha}(R_{A}(D^{t-2},u^{t-1},s^{t-2}) || A(D^{t-1})) + D_{2\alpha-1}(A(D^{t-1}) || A(D^{t})) \leq \beta$ hence $D_{\alpha}(R_{A}(D^{t-1},u^{t},s^{t-1}) || A(D^{t}) \leq \beta$ and so the unlearning criteria is satisfied at step $t$.

If not we have $R_{A}(D^{t-1},u^{t},s^{t-1}) = A(D^{t})$ and so $D_{\alpha}(R_{A}(D^{t-1},u^{t},s^{t-1}) || A(D^t) ) = 0 \leq \beta$ meaning the unlearning criteria is once again satisfied at step $t$. Hence as we proved the condition holds for arbitrary $t, D$ and sequence of update requests, the algorithm is $(\alpha,\beta,0)$ adaptive machine unlearning. This completes the proof.
    
\end{proof}

Interpreting the algorithm in more detail, we have it checks if the per-instance guarantee $D_{2\alpha-1}(A(D^{t-1}) || A(D^{t}))$ is small enough while recursively using information on what the per-instance guarantee was already with respect to $D^{t-1}$; if the sum of the guarantees is over the budget, it retrains from scratch. In particular, on accounting for the budget already used, note for each unlearning request one needs a divergence of two times the original order to implement the algorithm (see the first condition), which could be done by recursively applying Corollary 4 in [33] (as done in the proof) until one hits only divergences between $A(D^{i-1})$ and $A(D^{i})$ for some $i$, where $i$ is at least as large as the last time one hit the else condition and had to rerun $A$. Our results do not give efficient methods to measure these divergences (though computing different orders can reuse cached data such as checkpoints and gradients), but we hope unlearning motivates further work on efficiently computing per-instance guarantees.

\paragraph{An Alternative Framework for Forgeability} However, underlying machine unlearning (as a legal requirement, e.g., the EU GDPR~\cite{mantelero2013eu}) is the problem of whether an auditor can ever claim an entity did not unlearn a point. That is, can a model trainer claim to have trained without a point even if they in fact did? Forgeability~\cite{thudi2022necessity} is a framework under which a model trainer can claim to have obtained their model by training on a dataset they did not in fact train on. To make a claim of training on a given dataset, forgeability relies on providing a valid Proof-of-Learning (PoL)~\cite{jia2021proof} that uses the claimed dataset (different from what the model trainer originally used). Recall PoL is a sequence of checkpoints and minibatches for which the update from $i'th$ minibatch given the $i'th$ checkpoint leads to the $i+1'th$ checkpoint upto some error $\delta$ in a metric $d$. However, it is currently not known how to properly pick the threshold $\delta$ and metric $d$ to define a ``valid" update for a PoL (due to a lack of models for the backend noise during training), or how to make PoL efficient to verify without introducing additional security risks~\cite{fang2023proof}. Hence the current framework for forgeability may not be robust until PoL is better understood.

As an alternative to using PoL, a per-instance DP guarantee tells us that it is very likely we would have obtained the same sequence of checkpoints with either of the two datasets. Hence, when an auditor claims a model trainer trained on a point (and the point has strong per-instance DP guarantees), the trainer can refute by submitting a dataset without the point and their original sequence of checkpoints and the details of their DP training implementation. Given this, an 
auditor that only has the information provided in the submitted proof can no longer distinguish between whether a trainer had or had not trained on the point.

\section{Conclusion}
\looseness=-1
Our work can be viewed as the first existential result showing better per-instance DP guarantees for deep learning when using DP-SGD. In doing so, we provided one resolution to an open problem in the field of privacy attacks against deep learning: why many privacy attacks fail in practice. However, further work is needed to convert our analysis into a fast algorithm to do privacy accounting. Our composition theorem requires computing several training runs, and the per-step analysis of Theorem~\ref{thm:renyi_dp_sens} (which allows better analysis of the mean update rule) requires computing updates for many mini-batches at each step. Future work may be able to significantly reduce the cost associated with using these theorems, or propose alternative theorems that are more efficient to implement. Future work can also likely design algorithms that explicitly take advantage of sensitivity distributions, which we showed are implicit in DP-SGD in explaining its better per-instance privacy guarantees.

\ifarxiv
\section*{Acknowledgements}

We would like to acknowledge our sponsors, who support our research with financial and in-kind contributions: Amazon, Apple, CIFAR through the Canada CIFAR AI Chair, DARPA through the GARD project, Intel, Meta, NSERC through the COHESA Strategic Alliance and a Discovery Grant, Ontario through an Early Researcher Award, and the Sloan Foundation. Anvith Thudi is supported by a Vanier Fellowship from the Natural Sciences and Engineering Research Council of Canada. Resources used in preparing this research were provided, in part, by the Province of Ontario, the Government of Canada through CIFAR, and companies sponsoring the Vector Institute. We would further like to thank Relu Patrascu at the University of Toronto for providing the compute infrastructure needed to perform the experimentation outlined in this work. We would also like to thank members of the CleverHans lab, Mahdi Haghifam, and our shepherd for their feedback on drafts of the manuscript.
\else
\footnotesize

\fi

\ifarxiv
\setlength{\bibsep}{0pt plus 0.3ex}
\bibliographystyle{abbrvnat}
\bibliography{references}
\else
\footnotesize
\setlength{\bibsep}{0pt plus 0.3ex}
\bibliographystyle{abbrvnat}
\bibliography{references}
\fi

\normalsize
\appendix
\newpage

\section{Proofs}

\subsection{Proof of Lemma~\ref{lem:holder_approach}}
\label{proof:holder_approach}

\begin{lemma}
\label{lem:holder_approach}
With the above notation, and $X' = X \cup \{x^*\}$, we have the per-instance inequality

\vspace{-7mm} \begin{multline}
    \mathbb{P}(M(X') \in S) \\ \leq \mathbb{P}_{x^*}(1) \mathbb{E}_{X_B}(e^{C_{\delta,\sigma} \Delta_{U,x^*}(X_B)p})^{1/p} \mathbb{P}(M(X) \in S)^{1-1/p} \\ + \mathbb{P}_{x^*}(1)\delta + \mathbb{P}_{x^*}(0) \mathbb{P}(M(X) \in S)
\end{multline} \vspace{-5mm}

\end{lemma}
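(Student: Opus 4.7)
The plan is to decompose the sampling of $X'_B$ from $X' = X \cup \{x^*\}$ into the two cases determined by whether $x^*$ is drawn, apply the standard Gaussian mechanism $(\epsilon,\delta)$-DP inequality pointwise in the remaining randomness, and then use H\"older's inequality to separate the sensitivity-dependent factor from the probability on $M(X)$.

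First, I would condition on whether $x^* \in X'_B$. By the independence assumption on the sampling of $x^*$, for any measurable set $S$,
\begin{align*}
\mathbb{P}(M(X') \in S) &= \mathbb{P}_{x^*}(1)\,\mathbb{E}_{X_B}\!\left[\mathbb{P}\!\left(U(X_B \cup \{x^*\}) + N(0,\sigma^2) \in S\right)\right] \\
&\quad + \mathbb{P}_{x^*}(0)\,\mathbb{E}_{X_B}\!\left[\mathbb{P}\!\left(U(X_B) + N(0,\sigma^2) \in S\right)\right],
\end{align*}
where the second term equals $\mathbb{P}_{x^*}(0)\mathbb{P}(M(X)\in S)$.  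Next, for each fixed $X_B$, the pair of Gaussian outputs $U(X_B)+N(0,\sigma^2)$ and $U(X_B\cup\{x^*\})+N(0,\sigma^2)$ differ by exactly $\Delta_{U,x^*}(X_B)$ in the means, so the standard Gaussian mechanism guarantee yields
\[
\mathbb{P}\!\left(U(X_B\cup\{x^*\}) + N(0,\sigma^2) \in S\right) \;\leq\; e^{C_{\delta,\sigma}\Delta_{U,x^*}(X_B)}\,\mathbb{P}\!\left(U(X_B)+N(0,\sigma^2)\in S\right) + \delta.
\]

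I would then take expectation over $X_B$, pushing the additive $\delta$ out. The resulting expectation $\mathbb{E}_{X_B}\!\big[e^{C_{\delta,\sigma}\Delta_{U,x^*}(X_B)}\,\mathbb{P}(U(X_B)+N\in S)\big]$ is a product expectation, to which I apply H\"older's inequality with exponent $p$ and conjugate $q=p/(p-1)$:
\[
\mathbb{E}_{X_B}\!\big[e^{C_{\delta,\sigma}\Delta_{U,x^*}(X_B)}\,\mathbb{P}(U(X_B)+N\in S)\big] \leq \big(\mathbb{E}_{X_B}[e^{C_{\delta,\sigma}\Delta_{U,x^*}(X_B)p}]\big)^{1/p}\big(\mathbb{E}_{X_B}[\mathbb{P}(U(X_B)+N\in S)^q]\big)^{1/q}.
\]
Since $\mathbb{P}(U(X_B)+N\in S) \in [0,1]$ and $q\geq 1$, the integrand in the second factor is dominated by the probability itself, so $\mathbb{E}_{X_B}[\mathbb{P}(U(X_B)+N\in S)^q] \leq \mathbb{P}(M(X)\in S)$, yielding $\mathbb{P}(M(X)\in S)^{1/q} = \mathbb{P}(M(X)\in S)^{1-1/p}$.

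Finally, I would assemble: multiply the H\"older bound by $\mathbb{P}_{x^*}(1)$, add the $\mathbb{P}_{x^*}(1)\delta$ slack coming from the pointwise Gaussian mechanism step, and add the $\mathbb{P}_{x^*}(0)\mathbb{P}(M(X)\in S)$ contribution from the ``$x^*$ not sampled'' branch. This recovers the claimed inequality exactly. The only non-routine step is the H\"older application; the main subtlety to be careful about is justifying $\mathbb{E}[g^q]\leq \mathbb{E}[g]$ via boundedness of probabilities, which is what allows us to replace the $L^q$ norm by the $(1-1/p)$-power of $\mathbb{P}(M(X)\in S)$ rather than being stuck with a higher-moment quantity.
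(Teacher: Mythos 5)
Your proposal is correct and follows essentially the same route as the paper's proof: decompose the Poisson sampling over whether $x^*$ is included, apply the Gaussian mechanism's $(\epsilon,\delta)$ guarantee pointwise with per-mini-batch sensitivity $\Delta_{U,x^*}(X_B)$, apply H\"older's inequality with exponent $p$, and use $\mathbb{P}(A(X_B)\in S)^{q}\leq \mathbb{P}(A(X_B)\in S)$ to collapse the conjugate factor into $\mathbb{P}(M(X)\in S)^{1-1/p}$. No gaps; the assembly of the $\mathbb{P}_{x^*}(1)\delta$ and $\mathbb{P}_{x^*}(0)\mathbb{P}(M(X)\in S)$ terms matches the paper exactly.
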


\begin{proof}
    
First note sampling mini-batches from $X'$ is equivalent to sampling a mini-batch $X_B$ from $X$, then sampling $x^*$ with probability $\mathbb{P}_{x^*}(1)$. Hence we have  

 \vspace{-7mm} \begin{multline}
    \mathbb{P}(M(X') \in S) = \sum_{x_B} (\mathbb{P}_{x^*}(1) \mathbb{P}(A(X_B \cup x^*) \in S) \\ + \mathbb{P}_{x^*}(0) \mathbb{P}(A(X_B) \in S))\mathbb{P}(X_B)
\end{multline} \vspace{-5mm}

Now note we have $\mathbb{P}(A(X_B \cup x^*) \in S) \leq e^{C_{\delta,\sigma}\Delta_{U,x^*}(X_B)} \mathbb{P}(A(X_B) \in S) + \delta$ by the $(\epsilon,\delta)$-DP guarantee of the Gaussian mechanism. So considering summing that over $X_B$ we have $\sum_{X_B} \mathbb{P}(A(X_B \cup x^*) \in S) \mathbb{P}(X_B) \leq \sum_{X_B} e^{C_{\delta,\sigma} \Delta_{U,x^*}(X_B)} \mathbb{P}(A(X_B) \in S) \mathbb{P}(X_B) + \delta$. Now we apply Holder's inequality to get $\sum_{X_B} e^{C_{\delta,\sigma} \Delta_{U,x^*}(X_B)} \mathbb{P}(A(X_B) \in S) \mathbb{P}(X_B) \leq \mathbb{E}_{X_B}((e^{C_{\delta,\sigma} \Delta_{U,x^*}(X_B)})^{p})^{1/p} \mathbb{E}_{x_B}(\mathbb{P}(A(x_B) \in S)^{q})^{1/q}$. Note that $\mathbb{P}(A(x_B) \in S)^{q} \leq \mathbb{P}(A(x_B) \in S)$ for $q \geq 1$ as $\mathbb{P}(A(x_B) \in S) \leq 1$.

So we have 

\vspace{-7mm} \begin{multline}
\sum_{X_B} \mathbb{P}(A(X_B \cup x^*) \in S) \\ \leq \mathbb{E}_{x_B}((e^{C_{\delta,\sigma} \Delta_{U,x^*}(X_B)})^{p})^{1/p} \mathbb{E}_{x_B}(\mathbb{P}(A(x_B) \in S))^{1/q} + \delta
\end{multline} \vspace{-5mm}

So to conclude we have

\vspace{-7mm} \begin{multline}
    \mathbb{P}(M(X') \in S) \mathbb{P}(X_B) \\ 
    \leq \mathbb{P}_{x^*}(1) \mathbb{E}_{x_B}((e^{C_{\delta,\sigma} \Delta_{U,x^*}(X_B)})^{p})^{1/p} \mathbb{E}_{x_B}(\mathbb{P}(A(x_B) \in S))^{1-1/p} \\ + \delta + \mathbb{P}_{x^*}(0) \mathbb{E}_{X_B}(\mathbb{P}(A(X_B) \in S))
\end{multline} \vspace{-5mm}

Note $\mathbb{E}_{X_B}(\mathbb{P}(A(X_B) \in S)) = \mathbb{P}(M(X) \in S)$ which completes the proof.

\end{proof}
\vspace{-7mm}

\subsection{Proof of Corollary~\ref{cor:eps_delta_sens}}
\label{proof:eps_delta_sens}

\begin{proof}

The following proof relies on independently analyzing two cases for what the value of $\delta$ could be to conclude the corollary statement (and is inspired by the proof of Proposition 3 in \cite{mironov2017renyi}).

Let $Q = \mathbb{P}(M(X) \in S)$ and note the first term in Lemma~\ref{lem:holder_approach} is then $(a_p^{\frac{1}{1-1/p}}Q)^{1- 1/p}$. Now consider two cases: if $a_p^{\frac{1}{1-1/p}}Q > \delta'^{\frac{p}{p-1}}$ we have $(a_p^{\frac{1}{1-1/p}}Q)^{1- 1/p} \leq a_p^{\frac{1}{1-1/p}}Q \cdot \delta'^{\frac{-1}{p-1}}$, and so the overall expression in Lemma~\ref{lem:holder_approach} is $\leq (a_p^{\frac{1}{1-1/p}} \delta'^{\frac{-1}{p-1}} + \mathbb{P}(0))Q + \mathbb{P}_{x^*}(1)\delta$.

Now else we have the first term is $\leq \delta'$ and the overall expression is $\leq \mathbb{P}_{\mathbf{x}^*}(0) Q + \mathbb{P}_{x^*}(1)\delta + \delta'$. Combining the two scenarios we see we always have $\mathbb{P}(M(X') \in S) \leq (a_p^{\frac{1}{1-1/p}}\delta'^{\frac{-1}{p-1}} + \mathbb{P}_{x^*}(0)) \mathbb{P}(M(X) \in S) + \mathbb{P}_{x^*}(1)\delta + \delta'$, giving the stated condition.

\end{proof}
\vspace{-7mm}

\subsection{Proof of Theorem~\ref{thm:easy_renyi_dp}}
\label{proof:easy_renyi_dp}

\begin{proof}

    The proof is analogous to the results of \citet{mironov2019r} with slight modifications to make it per-instance.
    
    Recall that the density function for the sampled Gaussian mechanism $M(X)$ is $$\sum_{X_B \subset D} \mathbb{P}(X_B) N(U(X_B),\sigma^2 \mathbb{I}^d)$$ and for $X' = X \cup x^*$ it is 
    
    \vspace{-7mm} \begin{multline}
        \sum_{X'_B \subset X'} \mathbb{P}(X'_B) N(U(X'_B),\sigma^2 \mathbb{I}^d) \\ = \sum_{X_B \subset D} \mathbb{P}(X_B) (\mathbb{P}_{x^*}(0)N(U(X_B),\sigma^2 \mathbb{I}^d) \\ + \mathbb{P}_{x^*}(1)N(U(X_B \cup x^*),\sigma^2 \mathbb{I}^d)) 
    \end{multline} \vspace{-5mm}

    By quasi concavity of the R\'enyi divergence we then have 

    \vspace{-7mm} \begin{multline}
        D_{\alpha}(M(X)|| M(X')) \\ \leq sup_{X_B \subset X} D_{\alpha}(N(U(X_B),\sigma^2 \mathbb{I}^d)|| \mathbb{P}_{x^*}(0) N(U(X_B),\sigma^2 \mathbb{I}^d) \\ + \mathbb{P}_{x^*}(1) N(U(X_B \cup x^*),\sigma^2 \mathbb{I}^d)) \\ \leq sup_{X_B \subset X} D_{\alpha}(N(0,\sigma^2 \mathbb{I}^d)|| \mathbb{P}_{x^*}(0) N(0,\sigma^2 \mathbb{I}^d) \\ + \mathbb{P}_{x^*}(1) N(U(X_B \cup x^*)- U(X_B),\sigma^2 \mathbb{I}^d))
    \end{multline} \vspace{-5mm}

    where we also used that R\'enyi divergences are translationally invariant. Now by noting the covariances are symmetric, we can apply a change of variables such that $U(X_B \cup x^*)- U(X_B) \rightarrow ||U(X_B \cup x^*)- U(X_B)||_2 e_1$ without changing the divergence. As now the change is only along one dimension of the product distribution, by additivity of R\'enyi divergences we conclude 
    
    \vspace{-7mm} \begin{multline}
        D_{\alpha}(M(X)|| M(X')) \leq sup_{X_B \subset D} D_{\alpha}(N(0,\sigma^2)|| \mathbb{P}_{x^*}(0) N(0,\sigma^2) \\ + \mathbb{P}_{x^*}(1) N(||U(X_B \cup x^*)- U(X_B)||_2,\sigma^2))
    \end{multline} \vspace{-5mm}
    
    However as $||U(X_B \cup x^*)- U(X_B)||_2 \leq \Delta_{U,x^*}$ we can conclude (by change of variables and post-processing) 
    
    \vspace{-7mm} \begin{multline}
        D_{\alpha}(M(X)|| M(X')) \leq D_{\alpha}(N(0,\sigma^2)|| \mathbb{P}_{x^*}(0) N(0,\sigma^2) \\ + \mathbb{P}_{x^*}(1) N(\Delta_{U,x^*},\sigma^2))
    \end{multline} \vspace{-5mm}
    
    Analogous calculations show 

    \vspace{-7mm} \begin{multline}
        D_{\alpha}(M(X')|| M(X)) \\ \leq D_{\alpha}(\mathbb{P}_{x^*}(0) N(0,\sigma^2) + \mathbb{P}_{x^*}(1) N(\Delta_{U,x^*},\sigma^2)|| N(0,\sigma^2))
    \end{multline} \vspace{-5mm}

    Now Theorem 5 in \citet{mironov2019r} states that if $P,Q$ are two differentiable distributions s.t $P(x) = Q(v(x))$ where $v(v(x)) = x$ and $v$ is also differentiable, then for all $\alpha \geq 1$ and $q \in [0,1]$ $D_{\alpha}((1-q)P + qQ|| Q) \geq D_{\alpha}(Q|| (1-q)P + qQ)$. Now defining $Q = N(0,\sigma^2)$ and $P = N(\Delta_{U,x^*},\sigma^2)$ we have $v(x) = \Delta_{U,x^*} - x$ which is differentiable and $v(v(x)) = x$, and so conclude 

    \vspace{-7mm} \begin{multline}
        D(M(X) || M(X')) \leq D_{\alpha}(N(0,\sigma^2)|| \mathbb{P}_{x^*}(0) N(0,\sigma^2) + \\ \mathbb{P}_{x^*}(1) N(\Delta_{U,x^*},\sigma^2)) 
 \\ \leq D_{\alpha}(\mathbb{P}_{x^*}(0) N(0,\sigma^2) + \mathbb{P}_{x^*}(1) N(\Delta_{U,x^*},\sigma^2)|| N(0,\sigma^2))
    \end{multline} \vspace{-5mm}
    
    As the last expression already bounds $D(M(X')||M(X))$, we proceed to bound it to get our desired guarantee.

    Following the calculations of Section 3.3 in \citet{mironov2019r} we use $\mu = \mathbb{P}_{x^*}(0) N(0,\sigma^2) + \mathbb{P}_{x^*}(1) N(\Delta_{U,x^*},\sigma^2)$, $\mu_0  = N(0,\sigma^2) $, and $\mu_{\Delta_{U,x^*}} = N(\Delta_{U,x^*},\sigma^2)$. We are thus interested in $D_{\alpha}(\mu || \mu_0) = \frac{1}{\alpha- 1} \ln{\int (\mu(w) / \mu_0(w))^{\alpha} \mu_0(w) }$. Note $(\mu / \mu_0)^{\alpha} = \sum_{k = 0}^{\alpha} {\alpha \choose k} (1-\mathbb{P}_{x^*}(1))^{\alpha -k} (\frac{\mu_{\Delta_{U,x^*}}}{\mu_0})^{k}$ and so plugging that into the previous integral, and then completing the square in the exponent for the Gaussian density function, we get 
    
    \begin{multline}
        D_{\alpha}(\mu || \mu_0) = \frac{1}{\alpha- 1} \ln  \sum_{k=0}^{\alpha} {\alpha \choose k} (1-\mathbb{P}_{x^*}(1))^{\alpha -k}  \\ \frac{1}{\sigma \sqrt{2 \pi}} \int \exp{\frac{-(x-k)^2 + (k^2 \Delta_{U,x^*}^2 - k \Delta_{U,x^*}^2 ) }{2\sigma^2}}  \\ = \frac{1}{\alpha- 1} \ln{ \sum_{k=0}^{\alpha} {\alpha \choose k} (1-\mathbb{P}_{x^*}(1))^{\alpha -k} \exp{\frac{\Delta_{U,x^*}^2 (k^2 -k) }{2\sigma^2}} }
    \end{multline}

    This concludes the proof.

\end{proof} \vspace{-7mm}

\subsection{Proof of Theorem~\ref{thm:composition}}
\label{proof:composition}

\looseness=-1
We first present a version of Theorem~\ref{thm:better_composition} that uses Cauchy-Schwarz, i.e., Holder's inequality with Holder constant $p=2$. This we believe is easier to follow, and makes clearer the specific role of the Holder's constants in the proof of Theorem~\ref{thm:better_composition}

\begin{theorem}
\label{thm:composition}

    Consider a sequence of functions $X_1(x_1), X_2(x_1,x_2), X_3(x_2,x_3),\cdots X_n(x_{n-1},x_n)$ where $X_{i}$ is a density function in the second arugment for any fixed value of the first argument, except $X_1$ which is a densitiy function in $x_1$. Consider an analogous sequence $Y_1(x_1),\cdots, Y_n(x_{n-1})$. Then letting $X = \prod_{j=1}^{n} X_j$ be the density function for a sequence $x_1,\cdots,x_n$ generated according to the Markov chain defined by $X_i$, and similarly $Y$, we have

    \vspace{-7mm} \begin{multline}
        D_{\alpha}(X || Y) \\ \leq \frac{1}{\alpha -1} (\sum_{i=0}^{n-2} \frac{1}{2^{i+1}} \ln (\mathbb{E}_{X_1,\cdots X_{n-(i+1)}}  ((e^{(g^{i}(\alpha) -1)D_{g^{i}(\alpha)}(X_{n-i}|| Y_{n-i})})^2))) \\ + \frac{1}{\alpha -1} ((\frac{1}{2})^{n} \ln ((e^{(g^{n-1}(\alpha) -1)D_{g^{n-1}(\alpha)}(X_{1}|| Y_{1})})^2)) 
    \end{multline} \vspace{-5mm}

    where $g(\alpha) = 2\alpha -1$ and $g^i$ means $g$ composed $i$ times, where $g^{0}(\alpha) = \alpha$
\end{theorem}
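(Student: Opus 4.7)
The plan is to specialize the induction argument used for Theorem~\ref{thm:better_composition} to the case $p=2$, where Hölder's inequality reduces to Cauchy--Schwarz. Specifically, I would start from the integral representation
$D_{\alpha}(X\|Y) = \tfrac{1}{\alpha-1}\ln\int (X_1\cdots X_n)^{\alpha}(Y_1\cdots Y_n)^{1-\alpha}\,dx_1\cdots dx_n$,
using the Markov factorization of both $X$ and $Y$. I would then peel off the last coordinate $x_n$ by rewriting the integrand as a product of $(X_1\cdots X_{n-1})^{\alpha-1/2}(Y_1\cdots Y_{n-1})^{1-\alpha}$, the factor $\int X_n^{\alpha} Y_n^{1-\alpha}\,dx_n$, and $(X_1\cdots X_{n-1})^{1/2}$.

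Next I would apply Cauchy--Schwarz (Hölder with exponent $p=2$) to split this product into two factors over the integration variables $x_1,\dots,x_{n-1}$. The first factor takes the form $\bigl(\int (X_1\cdots X_{n-1})^{2\alpha-1}(Y_1\cdots Y_{n-1})^{2(1-\alpha)}\bigr)^{1/2}$, which is exactly the integral expression defining $e^{(g(\alpha)-1)D_{g(\alpha)}(\text{prefix}_{n-1})}$ with $g(\alpha)=2\alpha-1$ (note that $\frac{p}{p-1}(1-\alpha)=1-g(\alpha)$ when $p=2$). The second factor becomes $\bigl(\mathbb{E}_{X_1,\dots,X_{n-1}}\bigl[(\int X_n^{\alpha}Y_n^{1-\alpha}dx_n)^2\bigr]\bigr)^{1/2}$, in which the inner integral equals $e^{(\alpha-1)D_{\alpha}(X_n\|Y_n)}$ for the fixed prefix $x_{n-1}$. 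This is the first per-step contribution to the bound.

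I would then iterate: the first factor is a smaller version of the original integral, but with $\alpha$ replaced by $g(\alpha)$ and $n$ replaced by $n-1$, raised to the power $1/2$. Repeating the peel-and-Cauchy--Schwarz step $i$ more times introduces an extra factor of $(1/2)^i$ in the exponent and pushes $\alpha$ to $g^{i+1}(\alpha)$, until only the integral over $x_1$ remains. Taking $\tfrac{1}{\alpha-1}\ln$ of the resulting product of factors produces the stated sum, with the $i$th term weighted by $1/2^{i+1}$ and evaluated at divergence order $g^{i}(\alpha)$, and the final leftover term at order $g^{n-1}(\alpha)$ weighted by $1/2^{n}$.

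The main obstacle is not any single calculation but careful bookkeeping through the recursion: keeping track of how each Cauchy--Schwarz step simultaneously (i) shifts the divergence order by $g$, (ii) halves the exponent on the remaining prefix integral, and (iii) leaves a new expectation-over-prefix contribution of the corresponding per-step Rényi divergence. A secondary subtlety is verifying that the algebraic identity $\tfrac{p}{p-1}(1-\alpha)=1-g_p(\alpha)$ holds so that the prefix integral retains exactly the Rényi-divergence form under the shifted order, since this is what permits the recursion to close cleanly. Once these are in place, the bound follows by induction on $n$.
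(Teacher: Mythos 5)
Your proposal is correct and matches the paper's own proof essentially step for step: the same Markov factorization, the same peeling of $x_n$ followed by Cauchy--Schwarz, the same identity $2(1-\alpha)=1-g(\alpha)$ closing the recursion with $\alpha \to g(\alpha)$, $n \to n-1$, and the exponent halved at each stage. No gaps to flag.
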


\begin{proof}

The proof relies on repeating the same reduction on the number of steps being considered. First note 
    
\vspace{-7mm} \begin{multline}
    \int (X_1 \cdots X_n)^{\alpha} (Y_1 \cdots Y_n)^{1 - \alpha} dx_1 \cdots dx_n \\  = \int (X_1 \cdots X_{n-1})^{\alpha - 1/2} (Y_1 \cdots Y_{n-1})^{1 - \alpha}  \\ (\int X_n^{\alpha} Y_n^{1- \alpha} dx_n) (X_1 \cdots X_{n-1})^{1/2} dx_1 \cdots dx_{n-1}
    \\ \leq ( \int (X_1 \cdots X_n)^{2\alpha -1} (Y_1 \cdots Y_n)^{1 - (2\alpha-1)}  dx_1 \cdots dx_{n-1})^{1/2} \\ (\int (\int X_n^{\alpha} Y_n^{1- \alpha} dx_n)^2 (X_1 \cdots X_{n-1}) dx_1 \cdots dx_{n-1})^{1/2}
\end{multline} \vspace{-5mm}

where the first equality was from using the markov property, and the last inequality was from Cauchy-Schwarz. So now looking at the first term, we are back to the original expression but with $\alpha \rightarrow g(a) = 2\alpha -1$ and $n \rightarrow n-1$, and an exponent to $1/2$. Note the second term is an expectation over the $n-1$ model state of the Markov chain. Do note $\int X_n^{\alpha} Y_n^{1- \alpha} dx_n$ is $e^{(\alpha -1)D_{\alpha}(X_{n-i}|| Y_{n-i})}$ for a fixed $n-1$ model state (i.e., fixed $x_{n-1}$ ). So repeating this step on the first term until we are left only with an integral over $x_1$ we have

\vspace{-7mm} \begin{multline}
    \int (X_1 \cdots X_n)^{\alpha} (Y_1 \cdots Y_n)^{1 - \alpha} dx_1 \cdots dx_n \\  
    \leq (\prod_{i=0}^{n-2} (\mathbb{E}_{X_1,\cdots X_{n-(i+1)}}  ((e^{(g^{i}(\alpha) -1)D_{g^{i}(\alpha)}(X_{n-i}|| Y_{n-i})})^2))^{(\frac{1}{2})^{i+1}}) \\ ( (e^{(g^{n-1}(\alpha) -1)D_{g^{n-1}(\alpha)}(X_{1}|| Y_{1})})^2)^{(\frac{1}{2})^{n}}
\end{multline} \vspace{-5mm}

So now noting \vspace{-3mm} $$D_{\alpha}(X || Y) = \frac{1}{\alpha -1} \ln(\int (X_1 \cdots X_n)^{\alpha} (Y_1 \cdots Y_n)^{1 - \alpha} dx_1 \cdots dx_n)$$ \vspace{-3mm}

we conclude by the previous expression that 

\vspace{-7mm} \begin{multline}
    D_{\alpha}(X || Y) \\ \leq \frac{1}{\alpha -1} (\sum_{i=0}^{n-2} \frac{1}{2^{i+1}} \ln (\mathbb{E}_{X_1,\cdots X_{n-(i+1)}}  ((e^{(g^{i}(\alpha) -1)D_{g^{i}(\alpha)}(X_{n-i}|| Y_{n-i})})^2))) \\ + \frac{1}{\alpha -1} ((\frac{1}{2})^{n} \ln ((e^{(g^{n-1}(\alpha) -1)D_{g^{n-1}(\alpha)}(X_{1}|| Y_{1})})^2)) 
\end{multline} \vspace{-5mm}

which completes the proof.

\end{proof} \vspace{-7mm}

\ifarxiv
\section{Detailed Empirical Results}
\label{sec:detailed_emp_res}

In this section we present evaluation for the three different per-step guarantees presented in Section~\ref{sec:analysis}. In evaluating the guarantees they give for DP-SGD, we also consider two update rules $U$: the "sum" $U(X_B) = \frac{1}{L} \sum_{x \in X_B} \nabla_{\theta}L(\theta,x)/ \max(1,\frac{||\nabla_{\theta}L(\theta,x)||_2}{C})$, and the "mean" $U(X_B) = \frac{1}{|X_B|}\sum_{x \in X_B} \nabla_{\theta}L(\theta,x)/ \max(1,\frac{||\nabla_{\theta}L(\theta,x)||_2}{C})$. Note the subtle difference between dividing by a fixed constant $L$ (typically the expected mini-batch size when Poisson sampling datapoints) and by the mini-batch size $|X_B|$. This means for the sum the upper-bound on sensitivity is $\frac{C}{L}$, while for the mean the upper-bound on sensitivity is only $C$ (consider neighbouring mini-batches of size 1 and 2). In practice, this means using the mean update rule requires far more noise and hence is not practical to use. We will highlight how our analysis by sensitivity distributions allows better analysis of the mean update rule.

\paragraph{Experimental Setup.} In the subsequent experiments, we empirically verify our claimed guarantees on MNIST~\citep{lecun1998mnist} and CIFAR-10~\citep{krizhevsky2009learning}. Unless otherwise specified, LeNet-5~\citep{lecun1989backpropagation} and ResNet-20~\citep{he2016deep} were trained on the two datasets for 10 and 200 epochs respectively, with a mini-batch size equal to 128, clipping norm equal to 1, $\epsilon=10$, and $\delta = 10^{-5}$ (or $\alpha = 8$ in cases of R\'enyi-DP) 
. 
Regarding hardware, we used NVIDIA T4 to accelerate our experiments.

\subsection{Studying the $(\epsilon,\delta)$-DP Case}
\label{ssec:eps_delta_experiments}

Here we compute one of the inequalities needed for $(\epsilon,\delta)$-DP given by Corollary~\ref{cor:eps_delta_sens}.
We set $\delta' = 0.5 \times 10^{-5}$ and $\delta'' = 10^{-5}$, and $p = 10^{4}$. For CIFAR-10, we set $\sigma =  0.488, 0.508, 0.566, 0.625, 0.703$ for mini-batch size $16, 32, 64, 128, 256$ respectively; and for MNIST, we set $\sigma =  0.410, 0.430, 0.449, 0.469, 0.508, 0.566$ for mini-batch size $16, 32, 64, 128, 256, 512$ respectively.
In the proceeding figures we plot results with respect to models at three stages of training: before training starts where the model is just randomly initialized, in the middle of the training where the model has been trained for half the number of epochs, and after the training. We further plot distributions of the guarantee over 500 training points, and indicate whether the data points are correctly or incorrectly classified. Note that we conduct the experiments on both mean and sum update rules. For the case of mean update, which requires computing an expectation of the sensitivity distribution $\Delta_{U,x^*}(X_B)$ to some power, we use 100 samples.

To the best of our knowledge, the standard solely $(\epsilon,\delta)$-DP analysis of the sample Gaussian mechanism would give the guarantee of $\epsilon' = \ln( \mathbb{P}_{x^*}(1) e^{C_{\delta,\sigma} \Delta_{U}} + \mathbb{P}_{x^*}(0))$ and $\delta'' = \mathbb{P}_{x^*}(1) \delta$. We use this as our baseline in this subsection.

\paragraph{Comparison to Baseline.} We see in Figures~\ref{fig:ed_eps_distrib_bs_mnist_mean},\ref{fig:ed_eps_distrib_bs_mnist_sum},\ref{fig:ed_eps_distrib_bs_cifar_mean},\ref{fig:ed_eps_distrib_bs_cifar_sum} that our analysis applied to both the mean and sum beat the baseline analysis of the $(\epsilon,\delta)$-DP guarantee. In particular, we see that for checkpoints in the middle and end of the training, we give a privacy guarantee more than a magnitude better than the baseline for the sum or mean. Furthermore, both our and the baseline's guarantee increases for the sum update rule as we increase the expected mini-batch size, however, for the mean update rule our guarantee decreases (concentrating at smaller values for more datapoints) where as the baseline still increases.

\paragraph{Disparity between Correct and Incorrect.} Shown in Figures~\ref{fig:ed_eps_curve_vary_arch_mnist},\ref{fig:ed_eps_curve_vary_arch_cifar} we see that on average correctly classified data points have better per-step privacy guarantees than incorrectly classified data points across training with different architectures, and this holds most strongly towards the end of training.

\paragraph{Changing DP strength used to get checkpoints.} Now we re-conduct the previous experiments but with varying epsilon, $\epsilon = 1, 3, 10, 30, 100$ shown in Figures~\ref{fig:ed_eps_curve_vary_eps_mnist},\ref{fig:ed_eps_curve_vary_eps_cifar}. We see that changing the strength of DP used to get the checkpoints increases our guarantees, but less so for correctly classified datapoints.

\begin{figure}[t]
\centering
\subfloat[Mini-batch size = 16]
{
\includegraphics[width=0.48\linewidth]{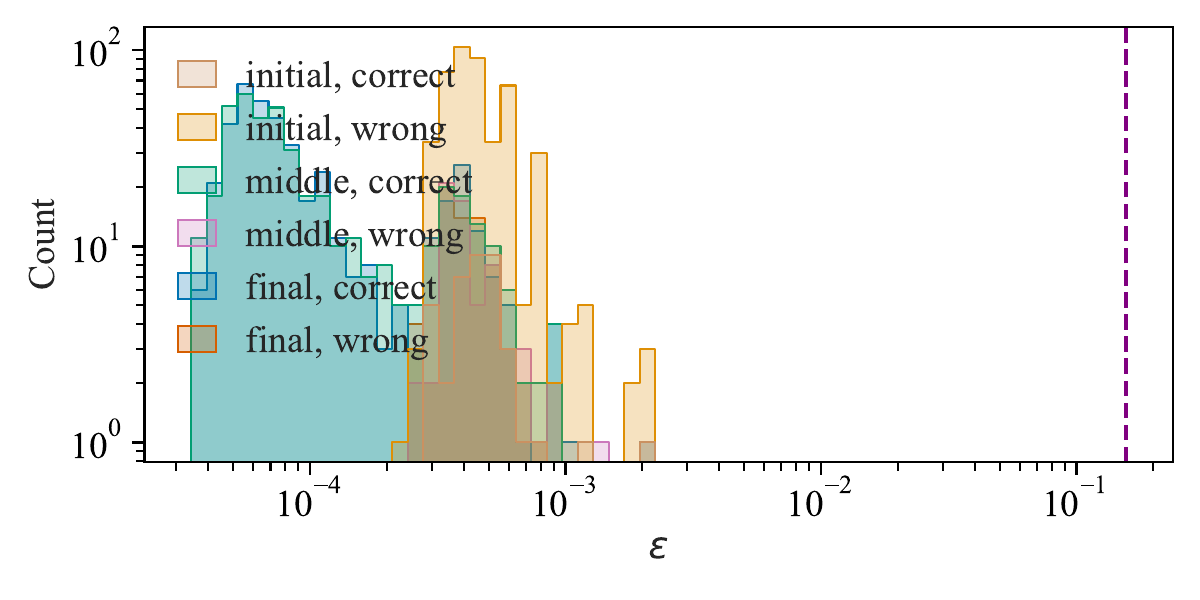}
}
\subfloat[Mini-batch size = 32]
{
\includegraphics[width=0.48\linewidth]{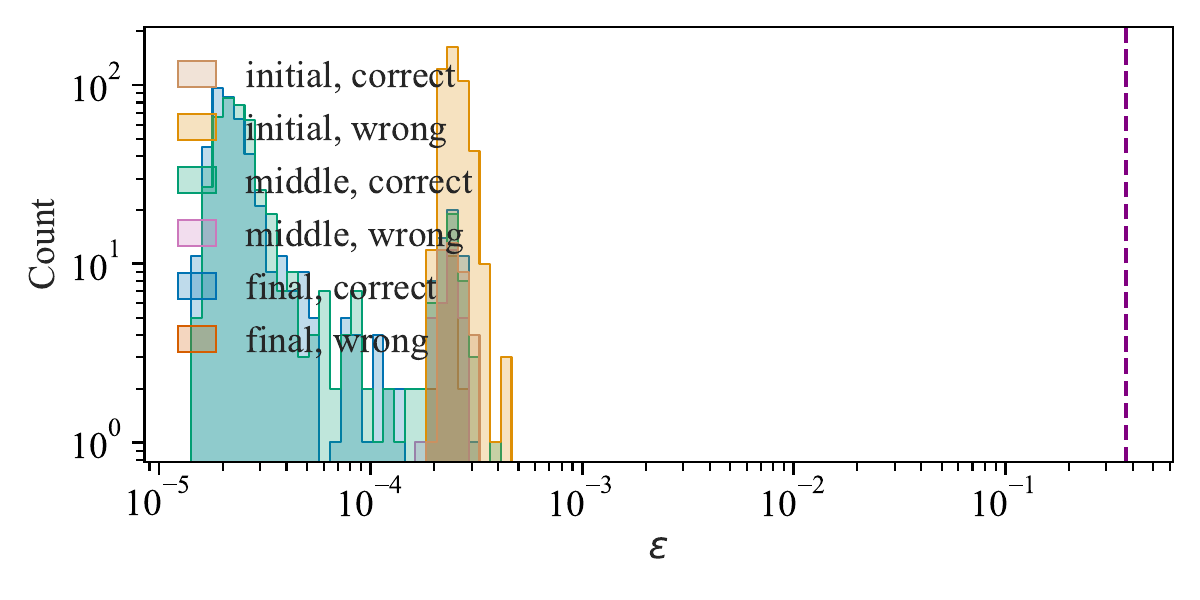}
}
\\\subfloat[Mini-batch size = 64]
{
\includegraphics[width=0.48\linewidth]{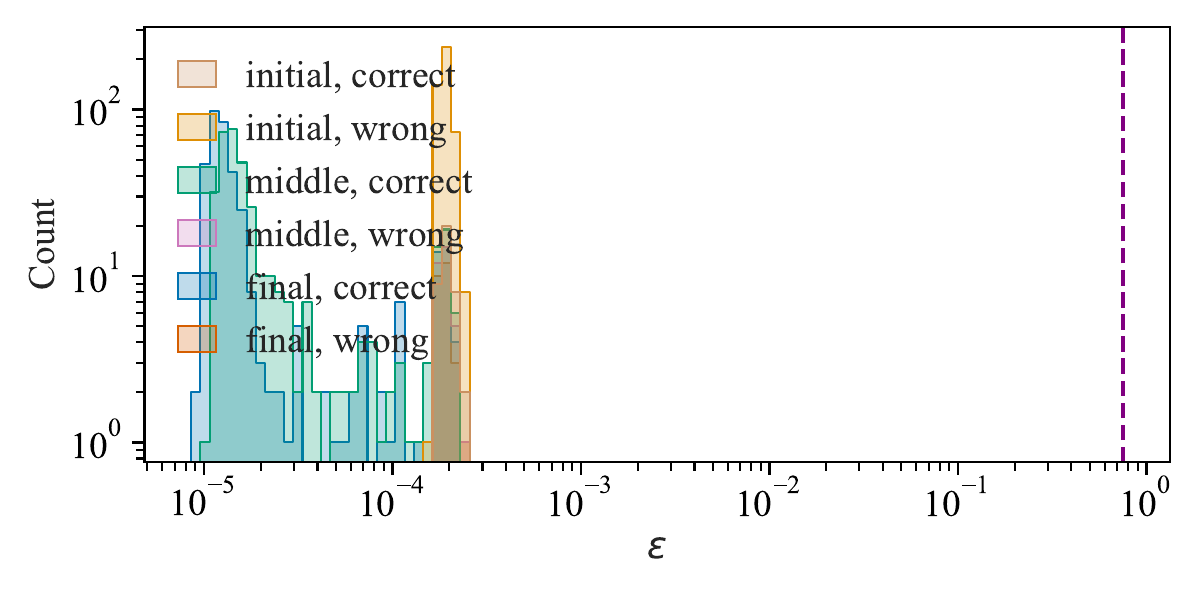}
}
\subfloat[Mini-batch size = 128]
{
\includegraphics[width=0.48\linewidth]{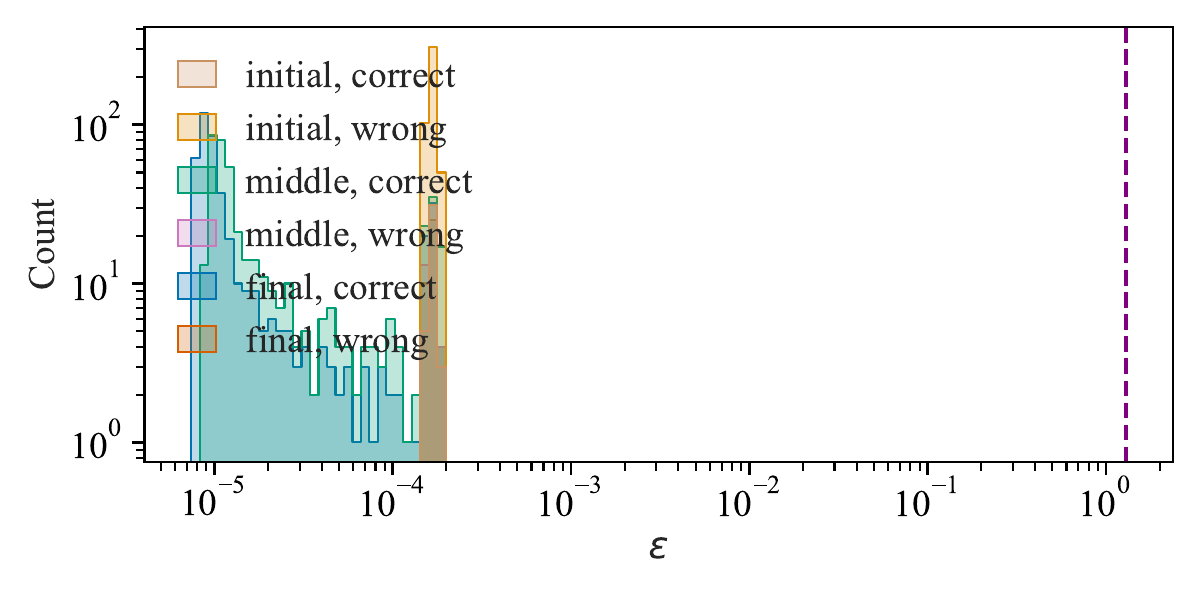}
}
\\\subfloat[Mini-batch size = 256]
{
\includegraphics[width=0.48\linewidth]{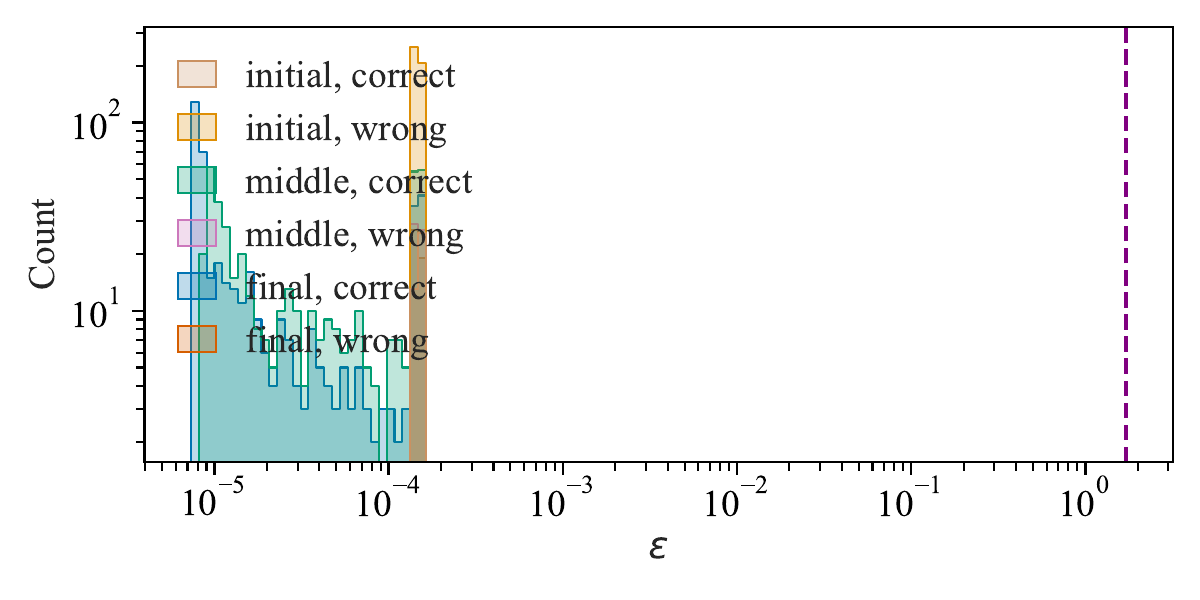}
}
\subfloat[Mini-batch size = 512]
{
\includegraphics[width=0.48\linewidth]{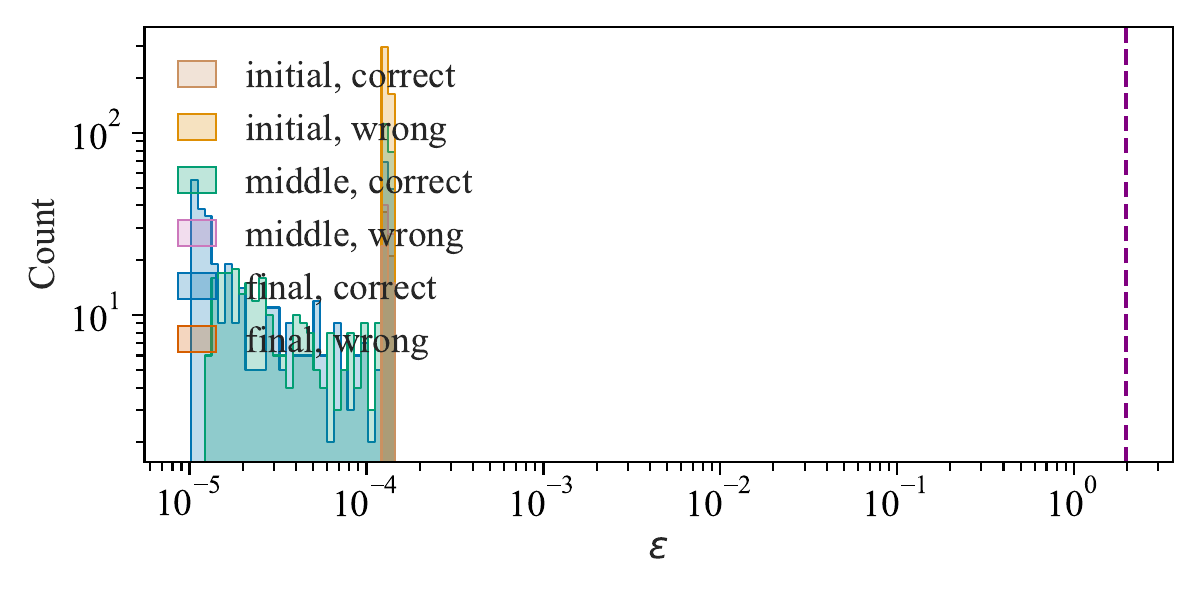}
}
\caption{ Distribution plots of per-step guarantee given by Corollary~\ref{cor:eps_delta_sens} computed on LeNet-5 trained on MNIST with mean update rule and varying mini-batch sizes of $16, 32, 64, 128, 256, 512$. As specified by the legend labels, we group the plotted guarantees by (a) whether the model is at the initial, middle, or final stage of the training, and (b) whether the point on which the data-dependent guarantee is computed is classified correctly or not by the model. It can be seen that in all settings our guarantee is better than the baseline, which is represented by the dashed purple line, by orders of magnitudes. However, the guarantee distributions of incorrectly classified points and points at the initial stage of training are closer to the baseline compared to the other settings.
}
\label{fig:ed_eps_distrib_bs_mnist_mean}
\end{figure}

\begin{figure}[t]
\centering
\subfloat[Mini-batch size = 16]
{
\includegraphics[width=0.48\linewidth]{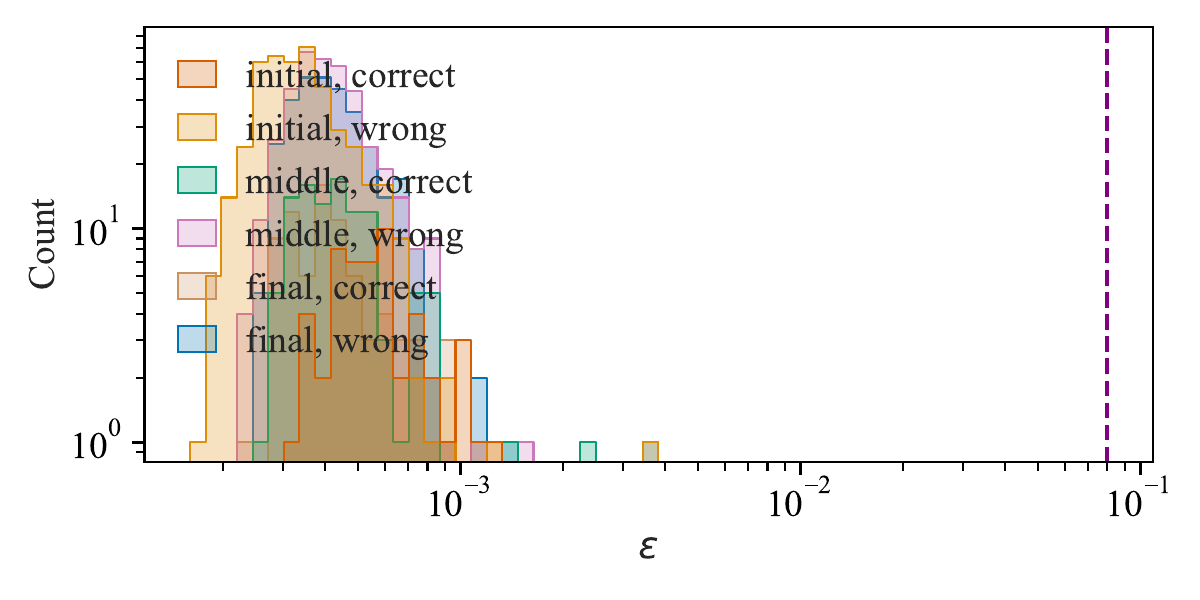}
}
\subfloat[Mini-batch size = 32]
{
\includegraphics[width=0.48\linewidth]{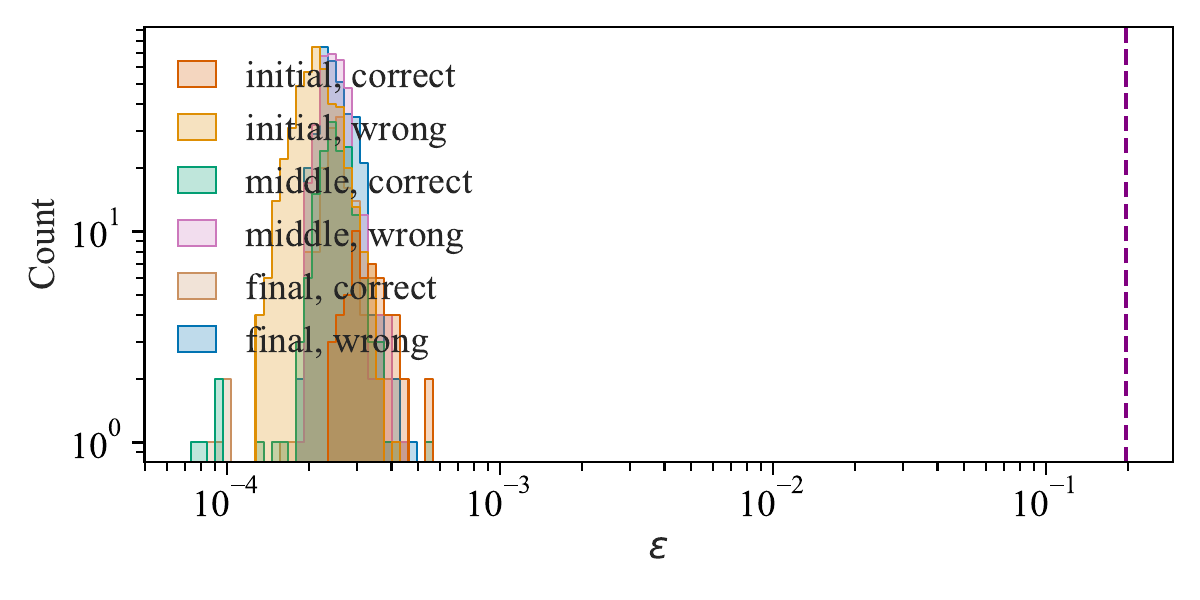}
}
\\\subfloat[Mini-batch size = 64]
{
\includegraphics[width=0.48\linewidth]{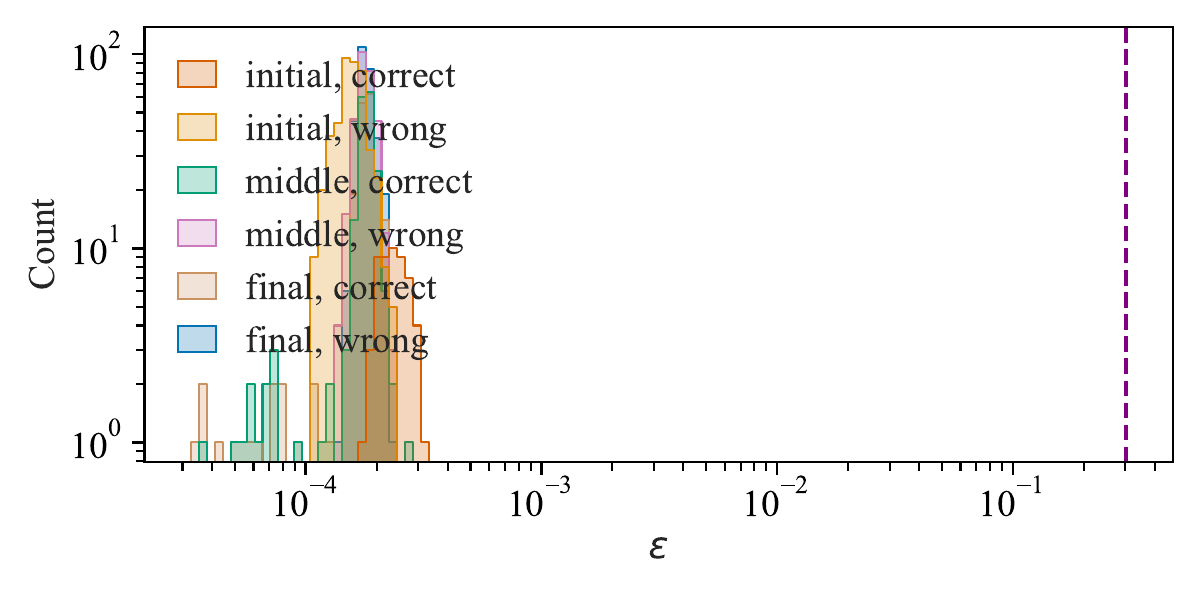}
}
\subfloat[Mini-batch size = 128]
{
\includegraphics[width=0.48\linewidth]{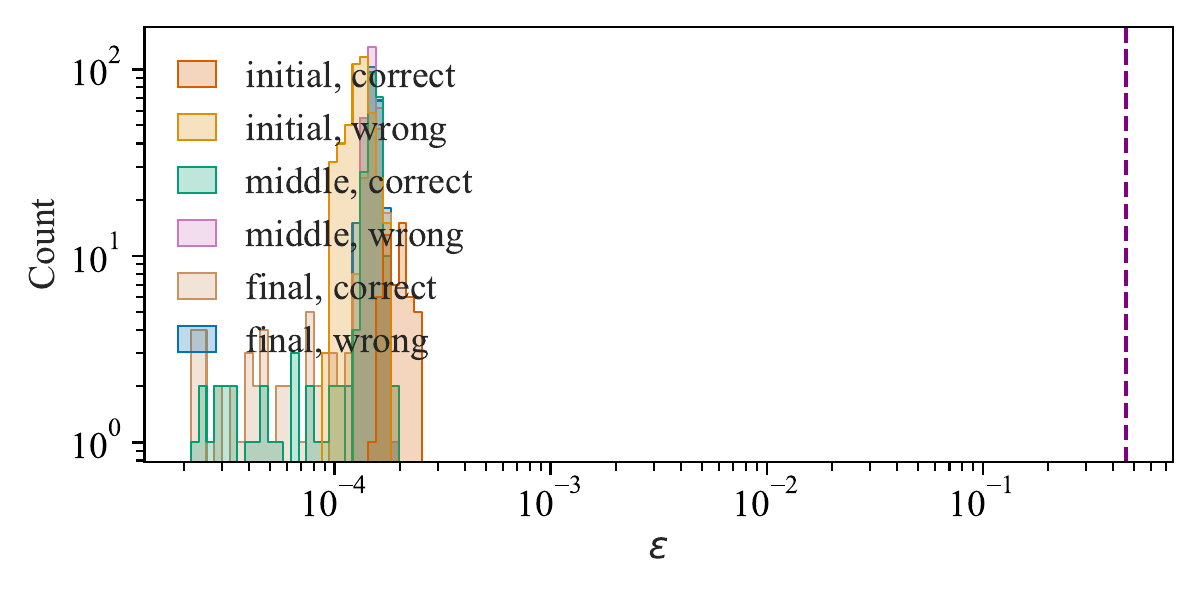}
}
\\\subfloat[Mini-batch size = 256]
{
\includegraphics[width=0.48\linewidth]{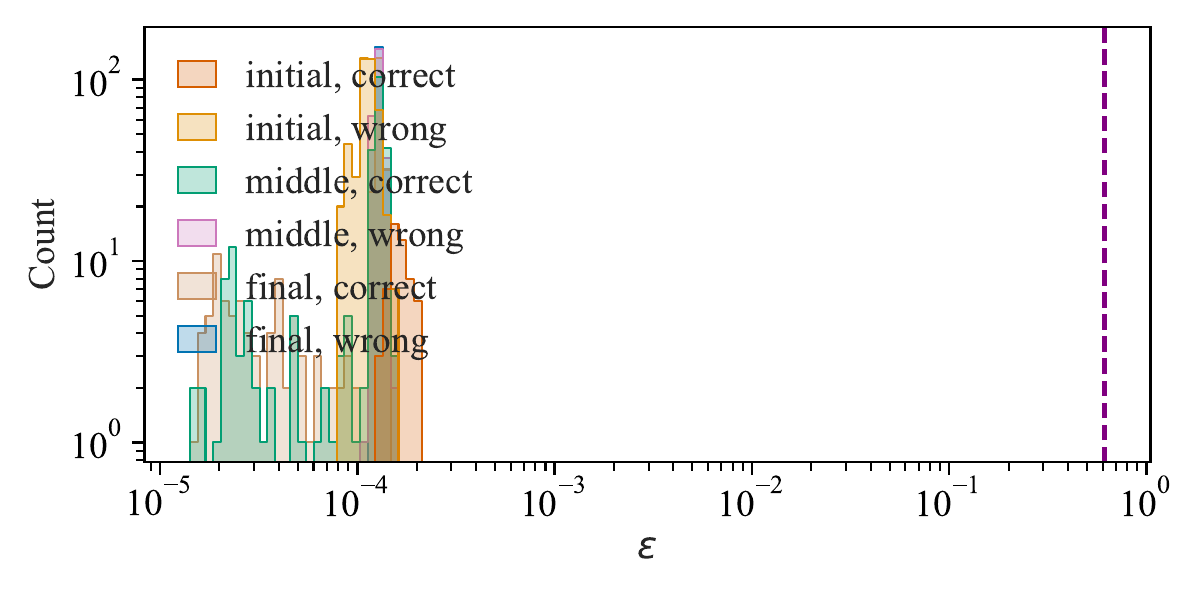}
}
\caption{ This is the reproduction of Figure~\ref{fig:ed_eps_distrib_bs_mnist_mean} except we now use ResNet-20 models trained on CIFAR-10. Similar results are observed so we conclude that our guarantee is effective across datasets.
}
\label{fig:ed_eps_distrib_bs_cifar_mean}
\end{figure}

\begin{figure}[t]
\centering
\subfloat[Mini-batch size = 16]
{
\includegraphics[width=0.48\linewidth]{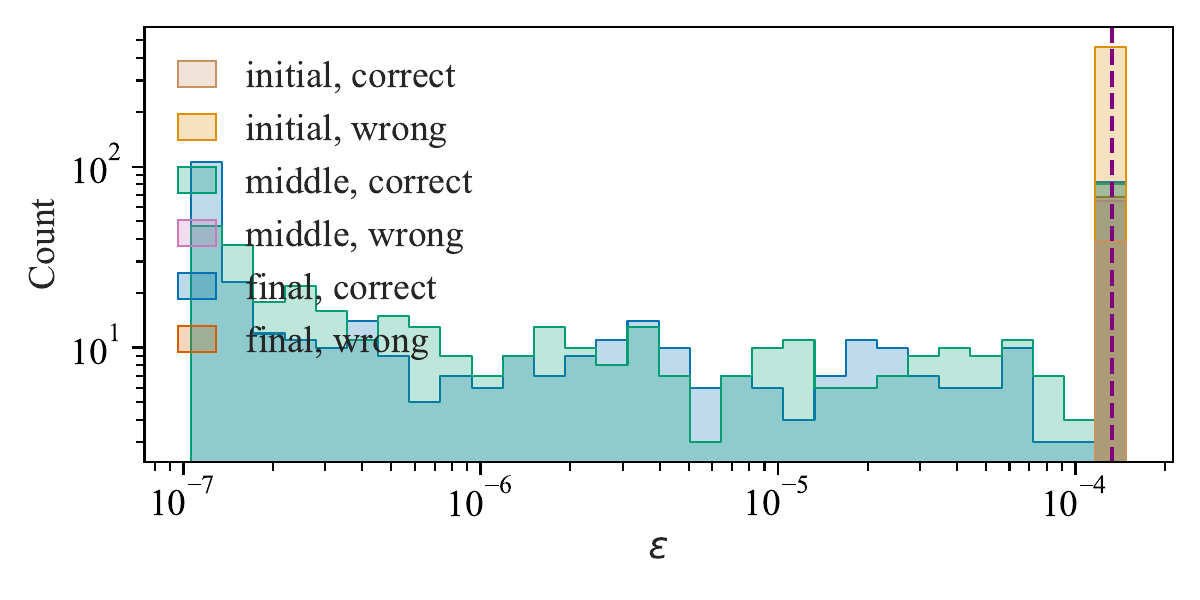}
}
\subfloat[Mini-batch size = 32]
{
\includegraphics[width=0.48\linewidth]{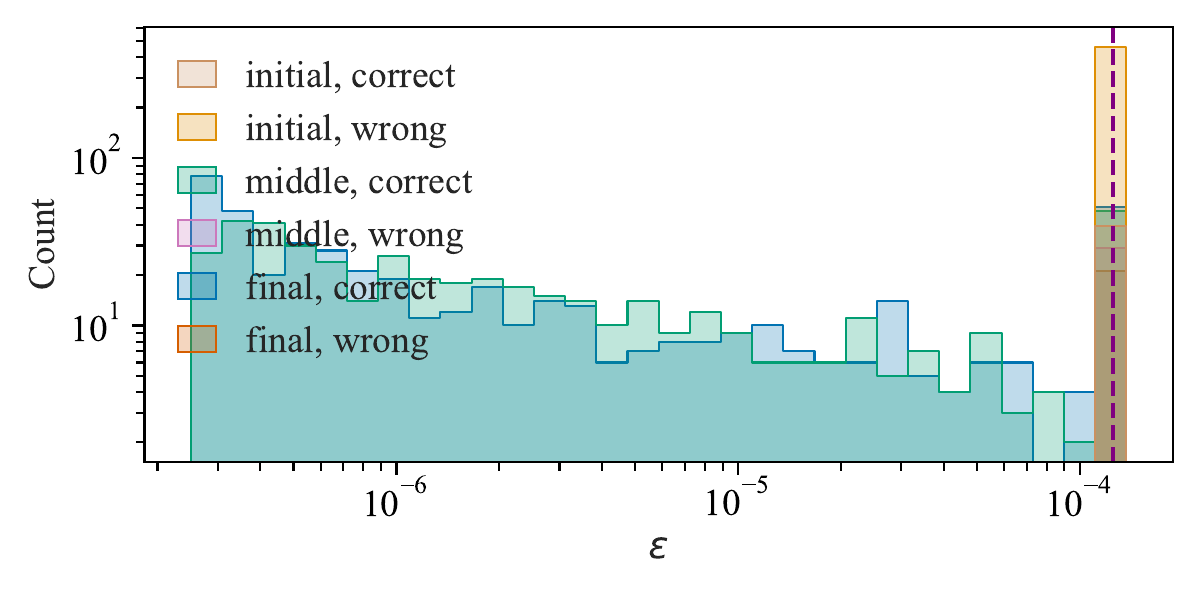}
}
\\\subfloat[Mini-batch size = 64]
{
\includegraphics[width=0.48\linewidth]{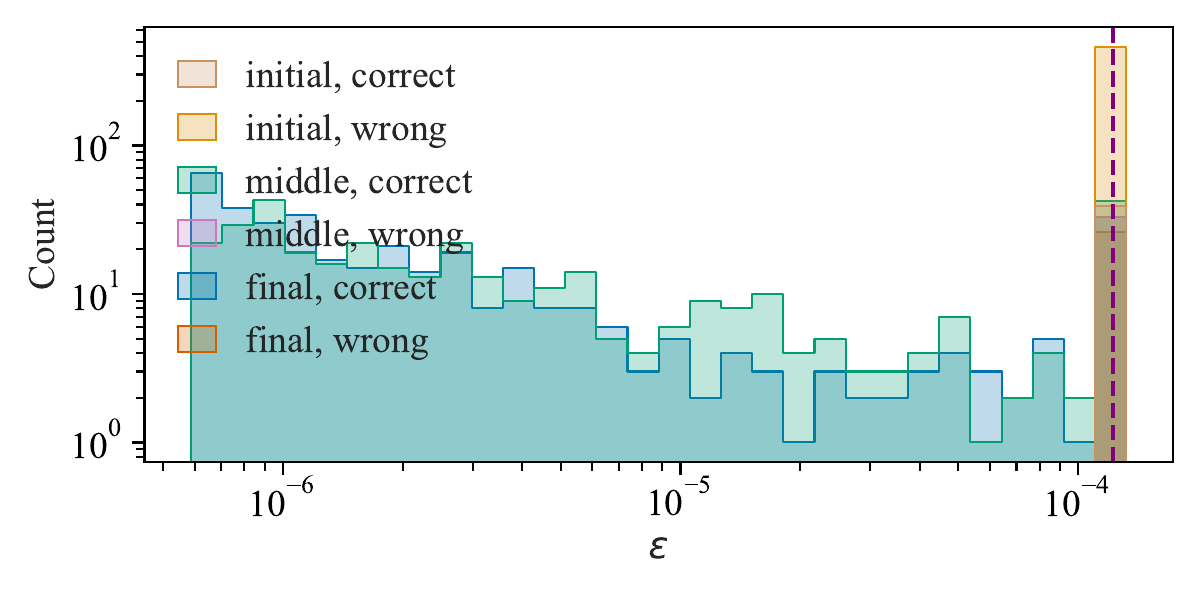}
}
\subfloat[Mini-batch size = 128]
{
\includegraphics[width=0.48\linewidth]{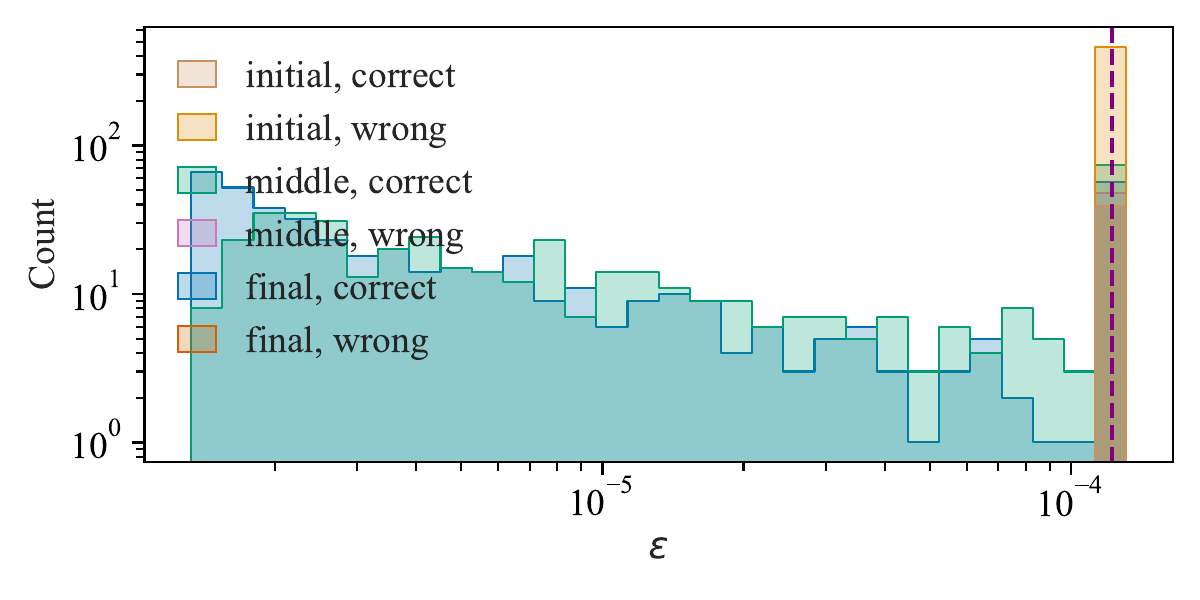}
}
\\\subfloat[Mini-batch size = 256]
{
\includegraphics[width=0.48\linewidth]{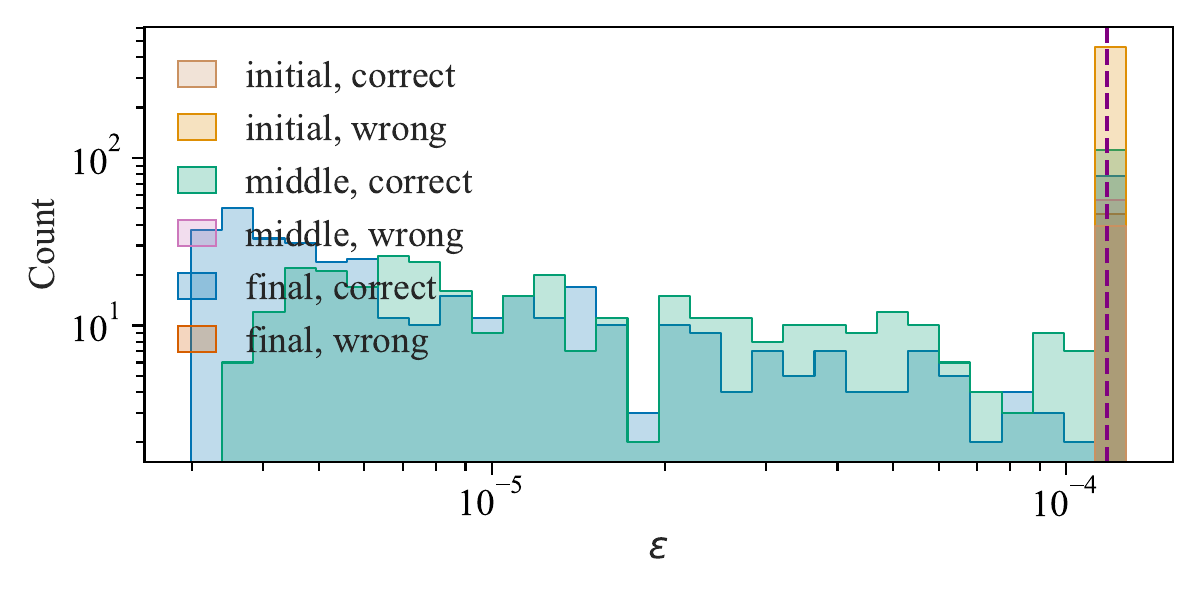}
}
\subfloat[Mini-batch size = 512]
{
\includegraphics[width=0.48\linewidth]{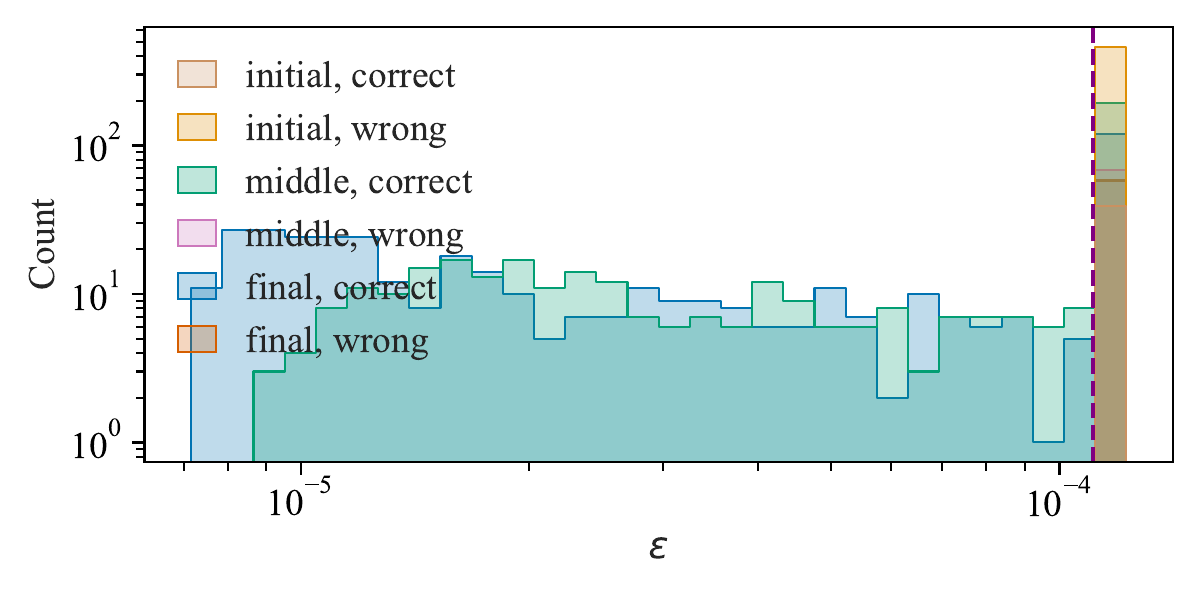}
}
\caption{ This is the reproduction of Figure~\ref{fig:ed_eps_distrib_bs_mnist_mean} except we now use a sum update rule. Unlike the case of using the mean update rule, we now see that our guarantees are more similar to the baseline. Note that the magnitude of our guarantees here does not significantly differ from the case of the mean update rule. The reason for this observation is that the baseline guarantee being much tighter/smaller as explained in Section~\ref{ssec:exp_hard_renyi}.
}
\label{fig:ed_eps_distrib_bs_mnist_sum}
\end{figure}

\begin{figure}[t]
\centering
\subfloat[Mini-batch size = 16]
{
\includegraphics[width=0.48\linewidth]{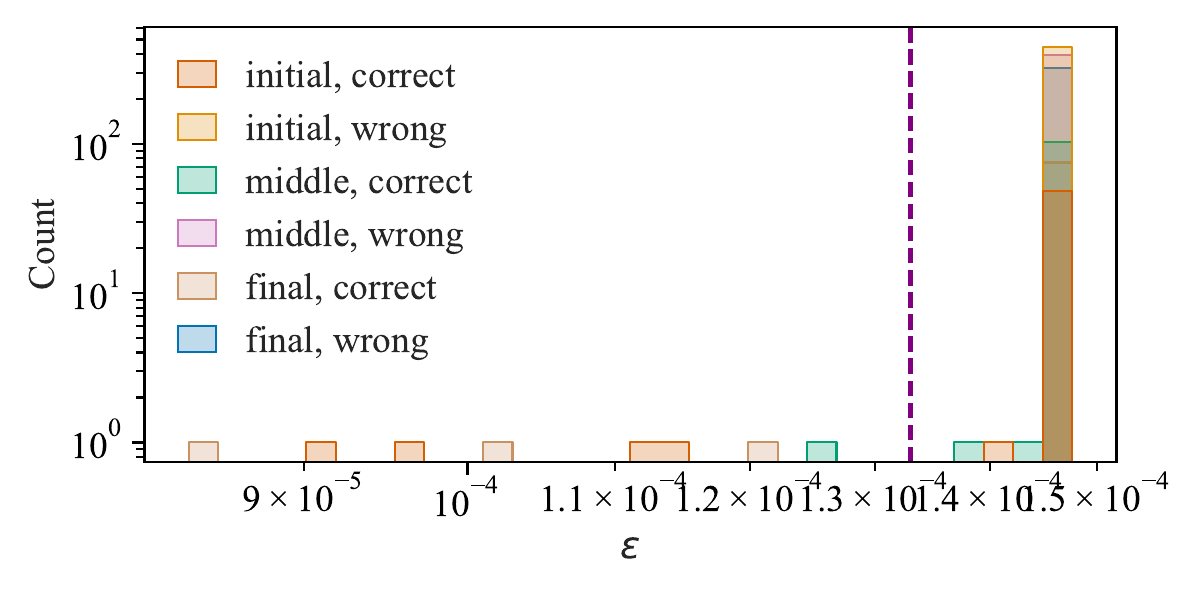}
}
\subfloat[Mini-batch size = 32]
{
\includegraphics[width=0.48\linewidth]{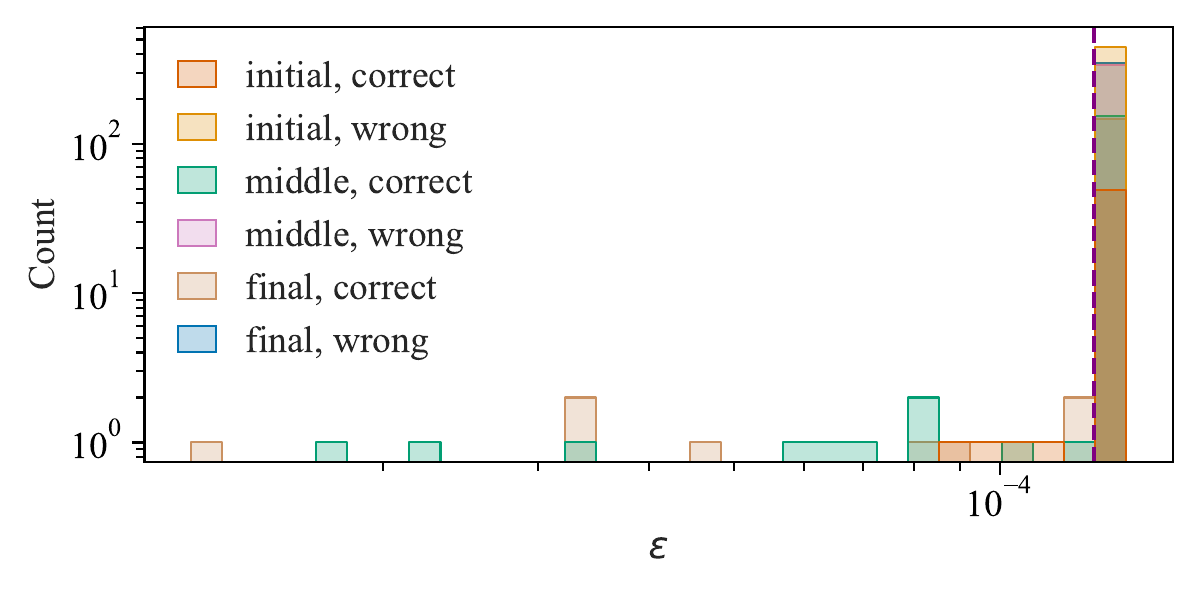}
}
\\\subfloat[Mini-batch size = 64]
{
\includegraphics[width=0.48\linewidth]{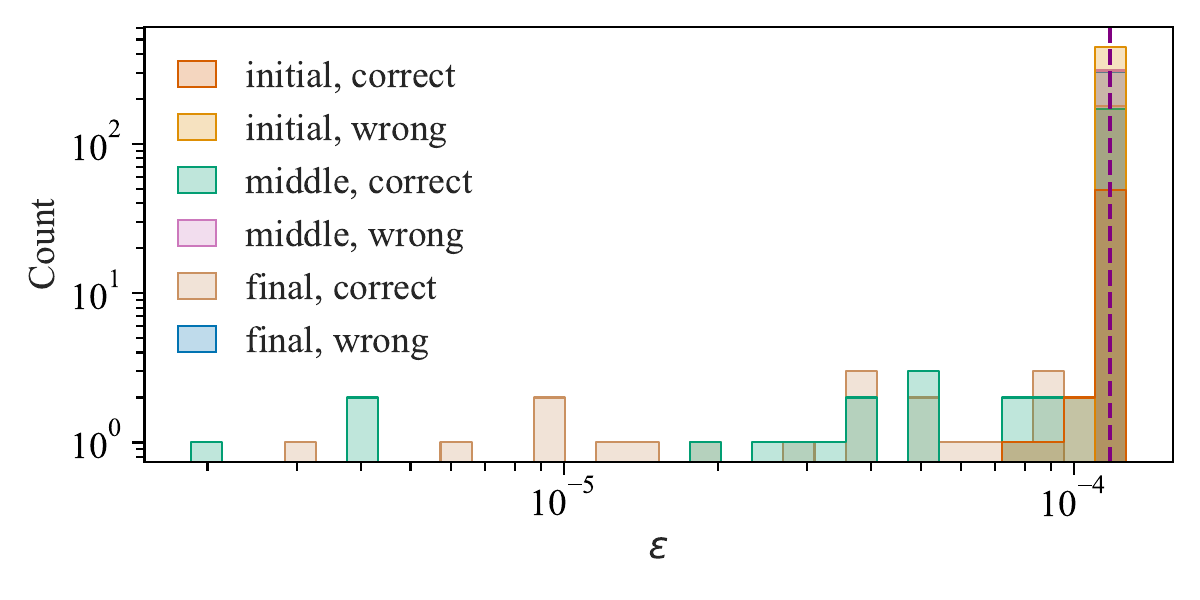}
}
\subfloat[Mini-batch size = 128]
{
\includegraphics[width=0.48\linewidth]{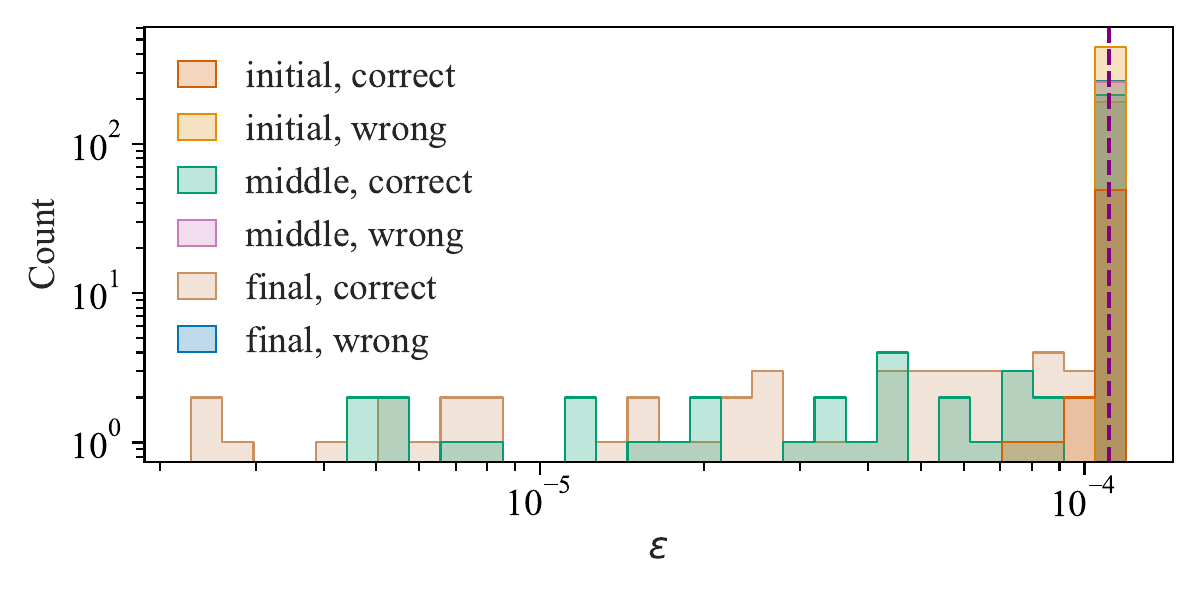}
}
\\\subfloat[Mini-batch size = 256]
{
\includegraphics[width=0.48\linewidth]{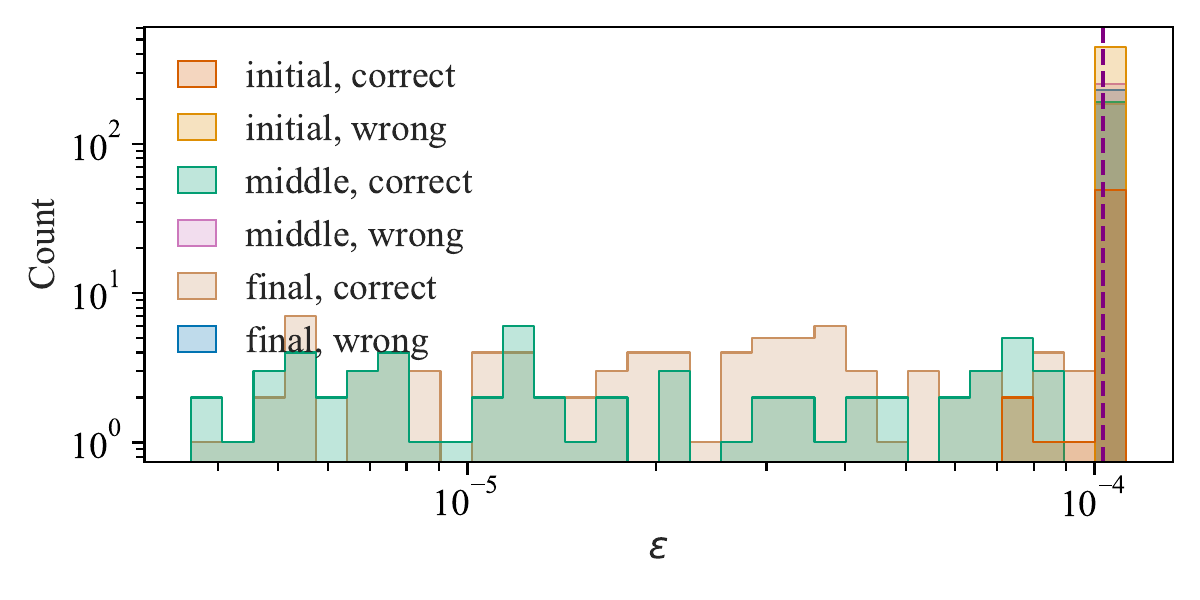}
}
\caption{ This is the reproduction of Figure~\ref{fig:ed_eps_distrib_bs_mnist_sum} except we now use ResNet-20 models trained on CIFAR-10. More of our guarantees concentrate near the baseline guarantee, which may be due to the fact that the CIFAR-10 model has worse accuracy than the MNIST model and we have shown in Section~\ref{ssec:exp_better_privacy} that more accurate points tend to have better data-dependency guarantee.
}
\label{fig:ed_eps_distrib_bs_cifar_sum}
\end{figure}

\begin{figure}[t]
\centering
\subfloat[Update-rule: mean]
{
\includegraphics[width=0.48\linewidth]{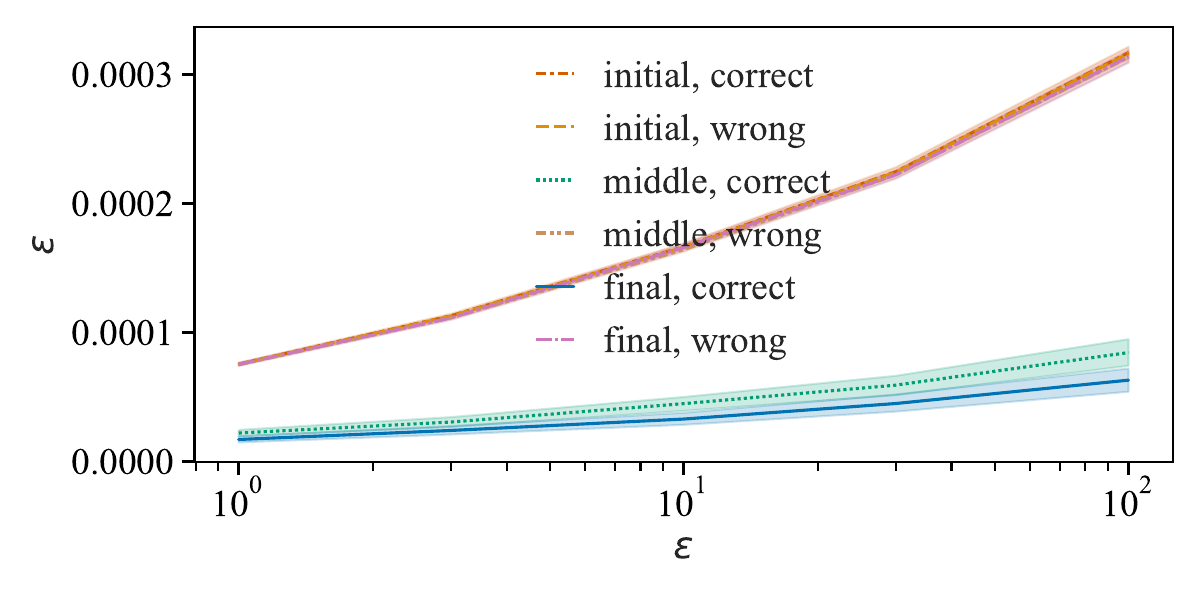}
}
\subfloat[Update-rule: sum]
{
\includegraphics[width=0.48\linewidth]{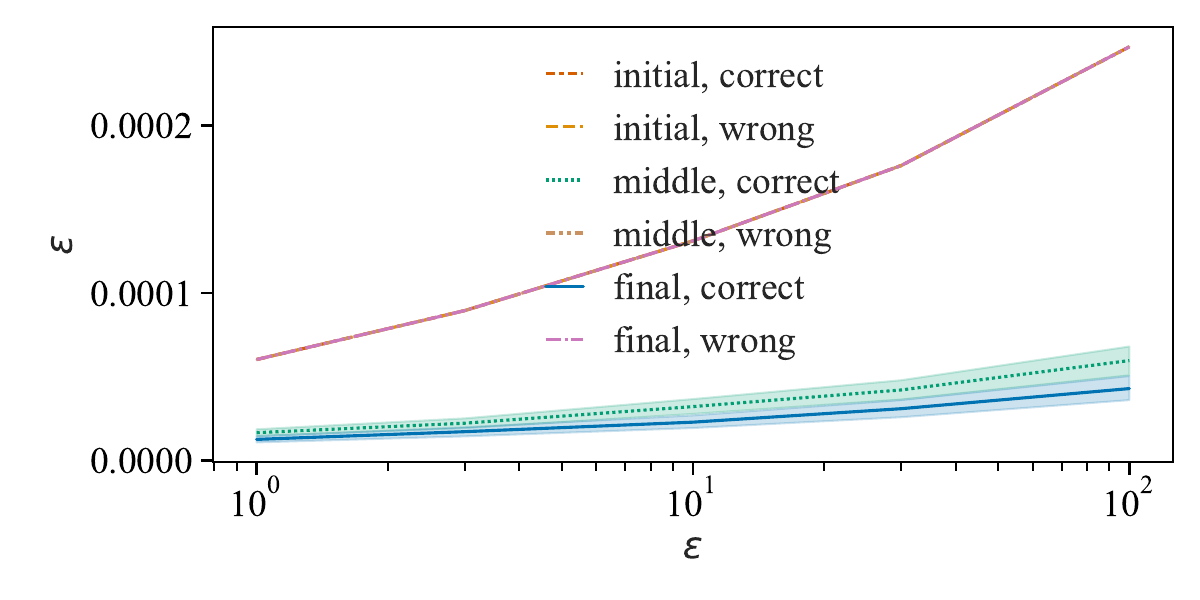}
}
\caption{ 
Per-step guarantee given by Corollary~\ref{cor:eps_delta_sens} with respect to the overall DPSGD privacy guarantee used to obtain the models on MNIST. Each curve corresponds to a certain training stage and whether the data points are correctly classified or not. We include both mean and sum update rules and do not observe a significant difference between them.
}
\label{fig:ed_eps_curve_vary_eps_mnist}
\end{figure}

\begin{figure}[t]
\centering
\subfloat[Update-rule: mean]
{
\includegraphics[width=0.48\linewidth]{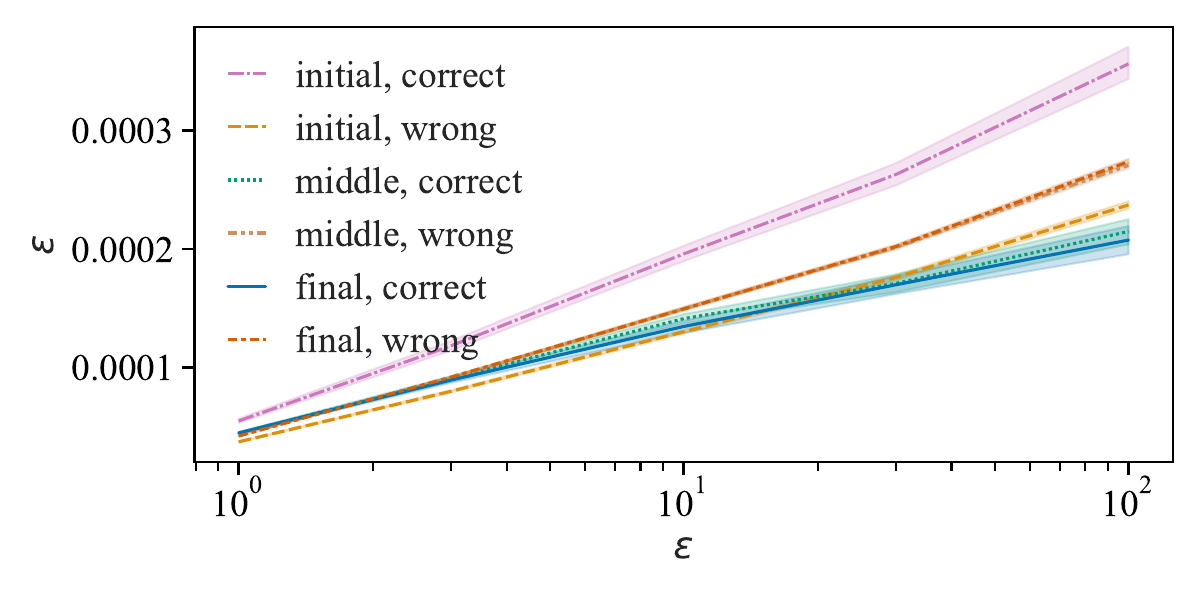}
}
\subfloat[Update-rule: sum]
{
\includegraphics[width=0.48\linewidth]{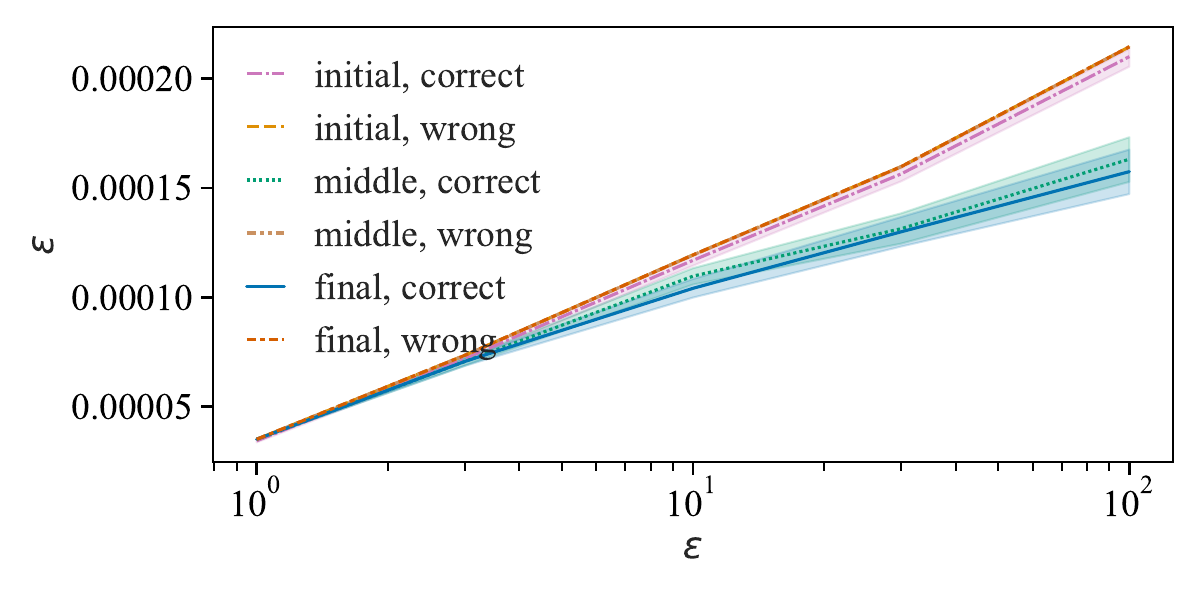}
}
\caption{ This is the reproduction of Figure~\ref{fig:ed_eps_curve_vary_eps_mnist} except we now use ResNet-20 models trained on CIFAR-10. Similar results are observed.
}
\label{fig:ed_eps_curve_vary_eps_cifar}
\end{figure}

\begin{figure}[t]
\centering
\subfloat[Update-rule: mean]
{
\includegraphics[width=0.48\linewidth]{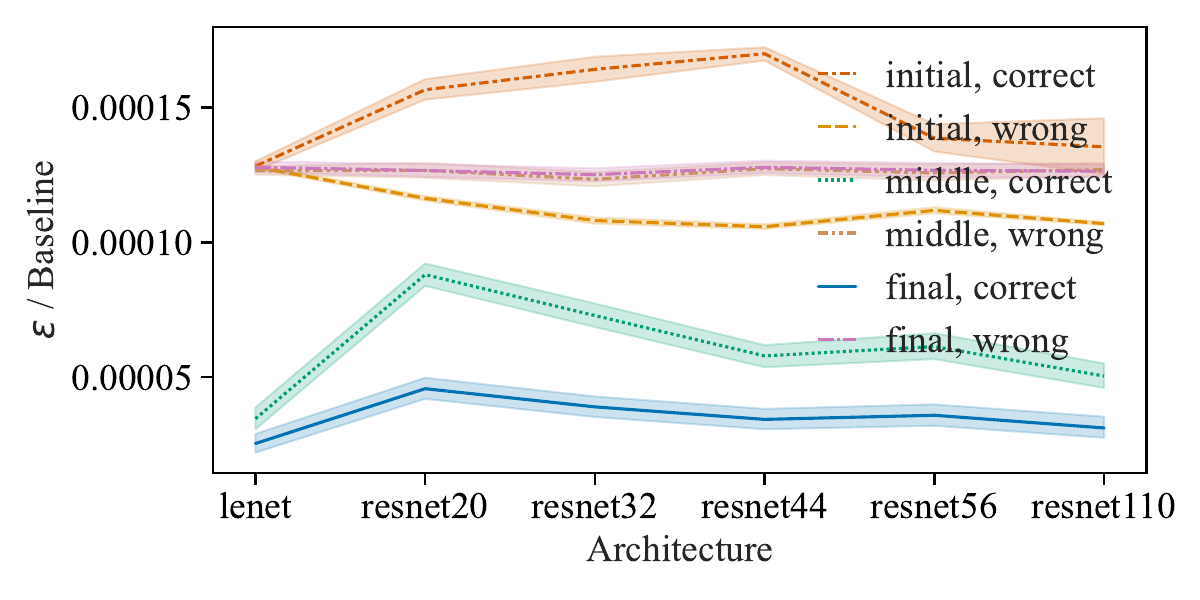}
}
\subfloat[Update-rule: sum]
{
\includegraphics[width=0.48\linewidth]{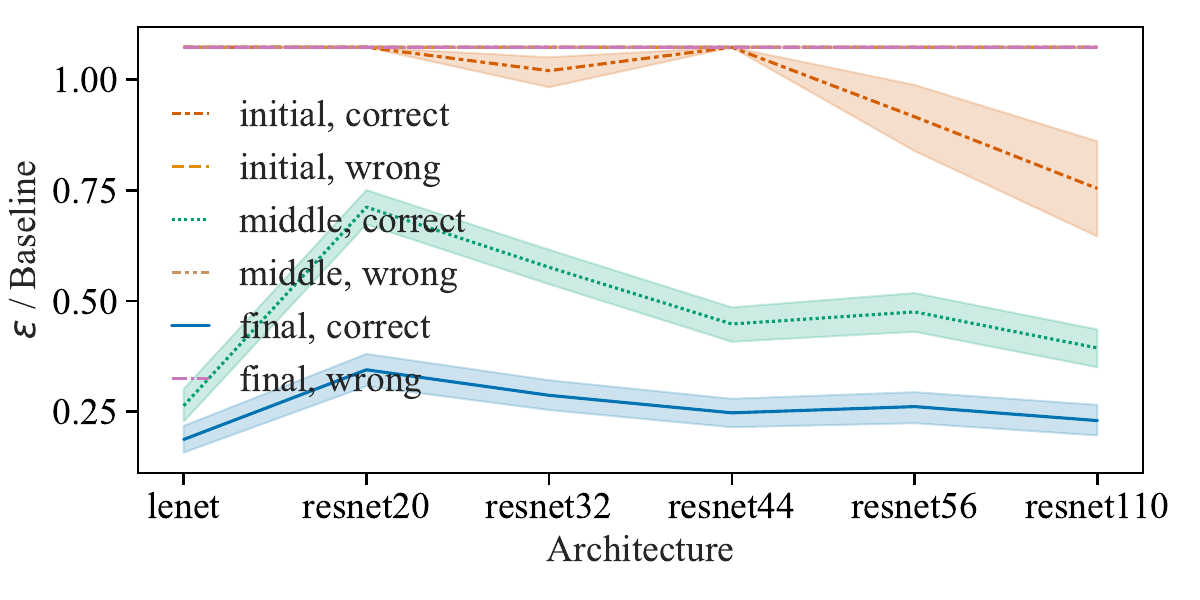}
}
\caption{ Per-step guarantee given by Corollary~\ref{cor:eps_delta_sens} computed by using models trained on MNIST with different architecture, divided by the per-step DPSGD guarantee (baseline). Each curve corresponds to a certain training stage and whether the data points are correctly classified or not. We include both mean and sum update rules and do not observe a significant difference between them. Within each figure, we can see that our guarantee does not vary significantly across different architectures so we may claim to be independent of datasets. Besides, the disparity of these curves also supports our claim that more accurate points have better privacy guarantees.
}
\label{fig:ed_eps_curve_vary_arch_mnist}
\end{figure}

\begin{figure}[t]
\centering
\subfloat[Update-rule: mean]
{
\includegraphics[width=0.48\linewidth]{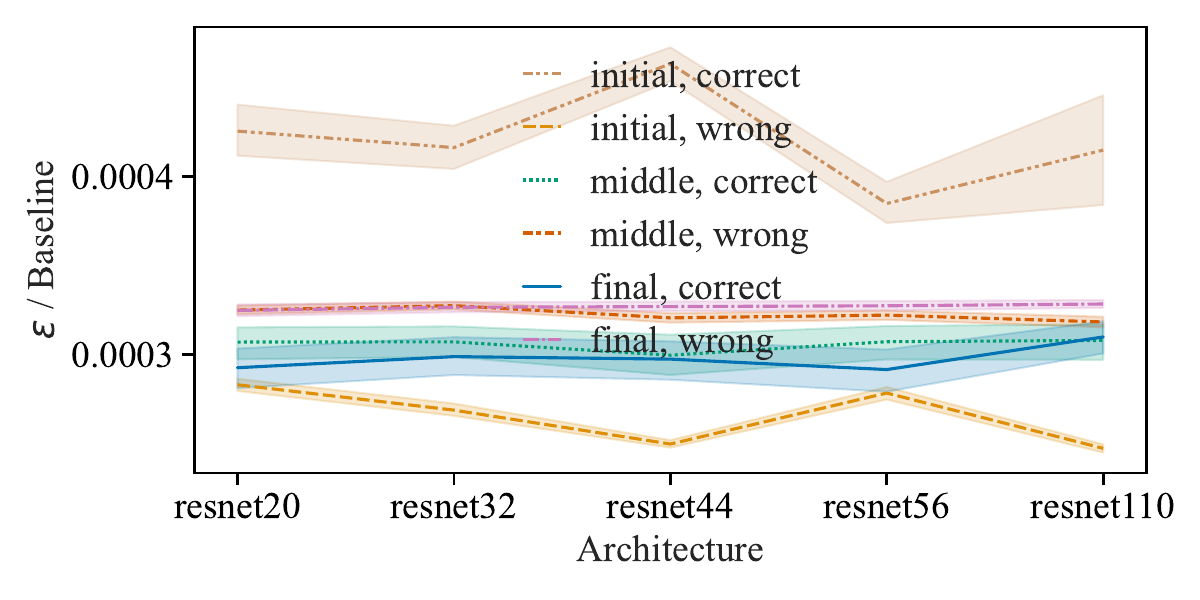}
}
\subfloat[Update-rule: sum]
{
\includegraphics[width=0.48\linewidth]{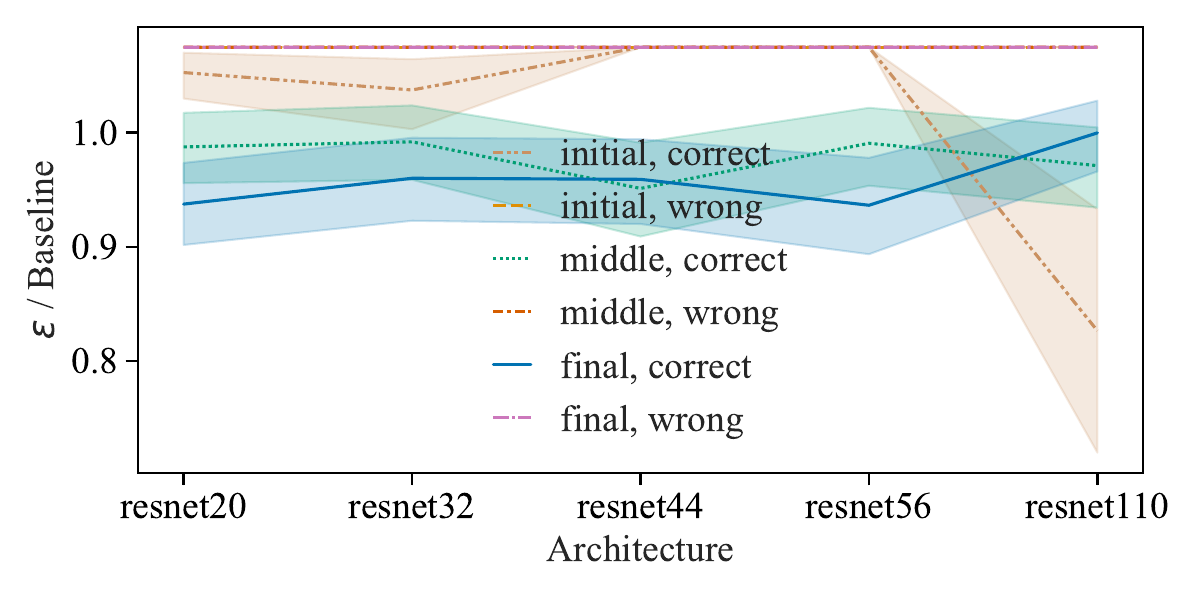}
}
\caption{ This is the reproduction of Figure~\ref{fig:ed_eps_curve_vary_arch_mnist} except we now use ResNet-20 models trained on CIFAR-10. Similar results can be seen so we believe our guarantee given by Corollary~\ref{cor:eps_delta_sens} is independent of the model architecture and the dataset. 
}
\label{fig:ed_eps_curve_vary_arch_cifar}
\end{figure}

\subsection{Studying the Theorem~\ref{thm:easy_renyi_dp} Results}
\label{ssec:eval_easy_renyi}

Here we compute the $(\alpha,\epsilon)$-R\'enyi-DP guarantee given by Theorem~\ref{thm:easy_renyi_dp} and plot results analogous to Section~\ref{ssec:eps_delta_experiments}. We only evaluate the sum using the guarantee of Theorem~\ref{thm:easy_renyi_dp}, as in this case $\Delta_{U,x^*} = \Delta_{U,x^*}(X_B)$ is a constant. For the mean update rule, it is not clear how to get reliable estimates of $\Delta_{U,x^*}$, which is a supremum over mini-batches.

\paragraph{Comparison to Baseline.} We see in Figures~\ref{fig:renyi_simple_eps_distrib_bs_mnist_sum}, \ref{fig:renyi_simple_eps_distrib_bs_cifar_sum} that our analysis applied to sum beats the baseline analysis of the R\'enyi-DP guarantee. In particular, we see that for checkpoints in the middle and end of training, we give a privacy guarante several magnitudes better than the baseline for most datapoints. Considering the effect of the expected mini-batch size, we see both our and the baseline's guarantee increases. However, we see that the datapoints with the most privacy lose several magnitudes of privacy as we increase the expected batch size; that is, all the points get concentrated at more similar privacy guarantees (still magnitudes below the baseline). Considering how our guarantee scales with $\alpha$,  in Figure~\ref{fig:renyi_simple_eps_curve_alpha_mnist_sum},\ref{fig:renyi_simple_eps_curve_alpha_cifar_sum} we see both our and the baseline's guarantees increase. Comparing the relative change, in Figure~\ref{fig:renyi_simple_fraction_eps_curve_alpha_mnist_sum},\ref{fig:renyi_simple_fraction_eps_curve_alpha_cifar_sum} we see we scale proportionally to the baseline with varying $\alpha$.

\paragraph{Disparity between Correct and Incorrect.}  Shown in Figures~\ref{fig:renyi_simple_fraction_curve_vary_arch_mnist} we see that correctly classified data points on average have better per-step privacy guarantees across different architectures.

\paragraph{Expected Guarantees for Composition.} In Figure~\ref{fig:renyi_simple_composition_mnist_sum} we plot the expected guarantee over 10 trials at different steps of training (starting from the same checkpoint for each trial) where training was done with the full training dataset. We are evaluating the guarantee for 100 test points, hence computing the expectations needed to bound $D_{\alpha}(X||X\cup \{x_{\text{test}}\})$ according to Theorem~\ref{thm:better_composition}. We find that our expected guarantee decreases with respect to the baseline guarantee as we progress through training, and this persists regardless of the epsilons used during training (also shown in Figure~\ref{fig:renyi_simple_fraction_eps_curve_vary_eps_mnist_sum}). 

In Figure~\ref{fig:remove_10points} we plot the expected guarantees when generating checkpoints without training on the given datapoint: the max of this quantity and the quantity in Figure~\ref{fig:renyi_simple_composition_mnist_sum} bounds the data-dependent R\'enyi-DP guarantee by Theorem~\ref{thm:better_composition}.

\begin{figure}[t]
\centering
\subfloat[Mini-batch size = 16]
{
\includegraphics[width=0.48\linewidth]{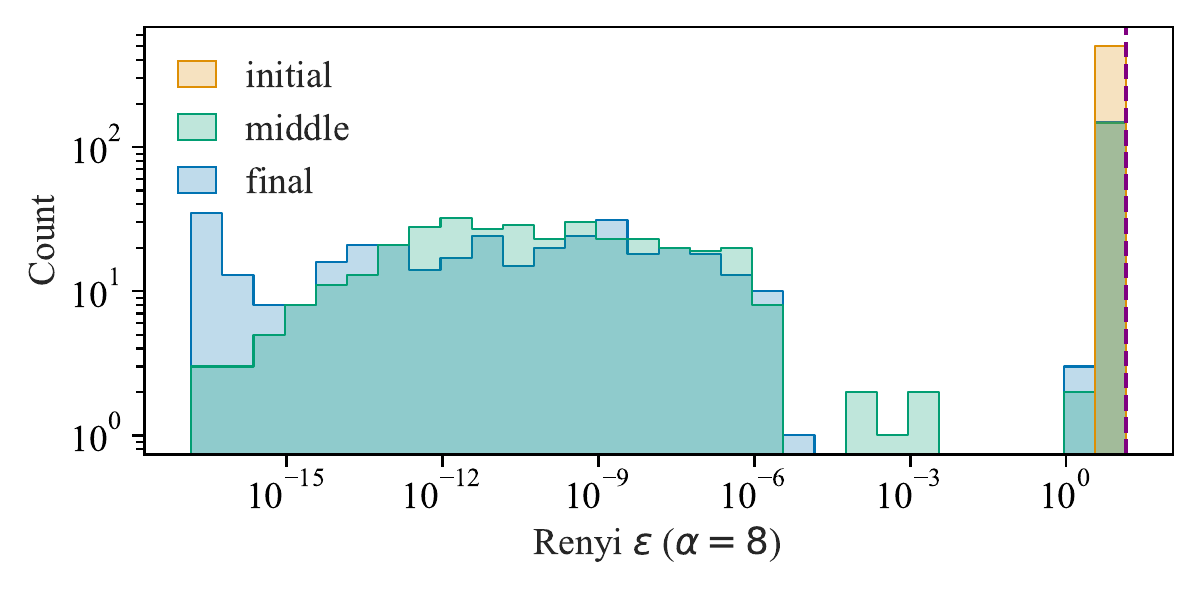}
}
\subfloat[Mini-batch size = 32]
{
\includegraphics[width=0.48\linewidth]{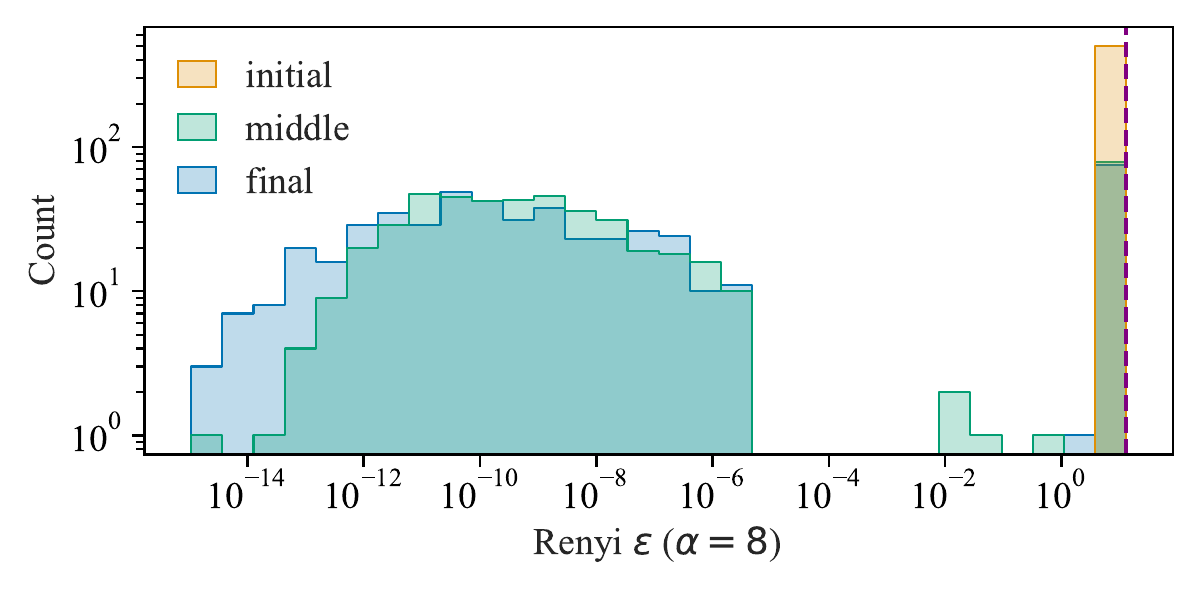}
}
\\\subfloat[Mini-batch size = 64]
{
\includegraphics[width=0.48\linewidth]{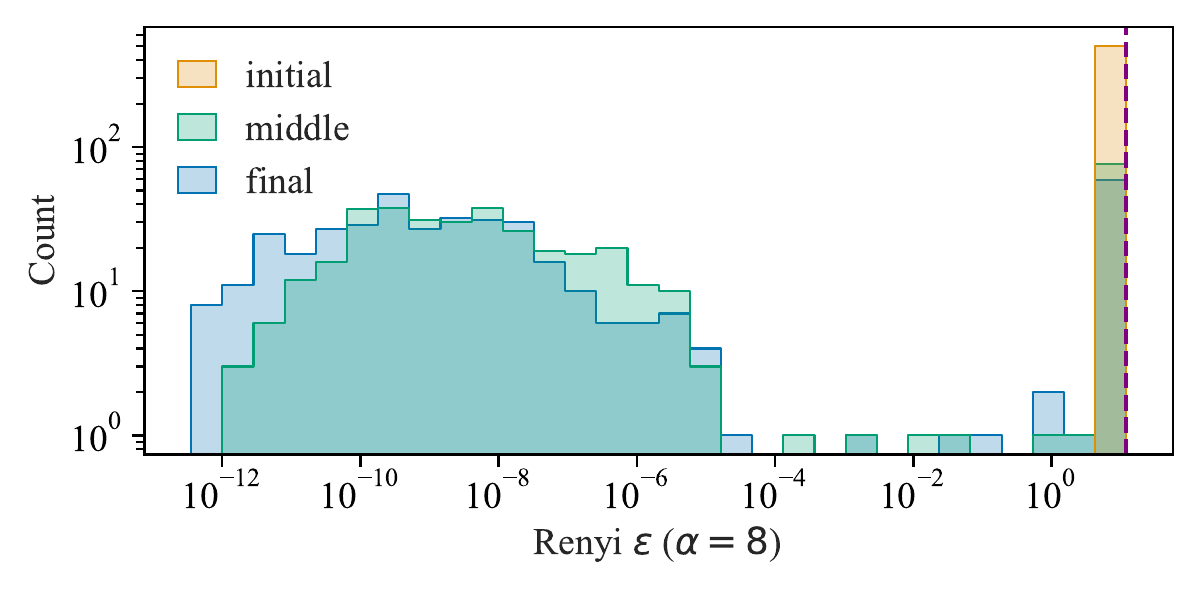}
}
\subfloat[Mini-batch size = 128]
{
\includegraphics[width=0.48\linewidth]{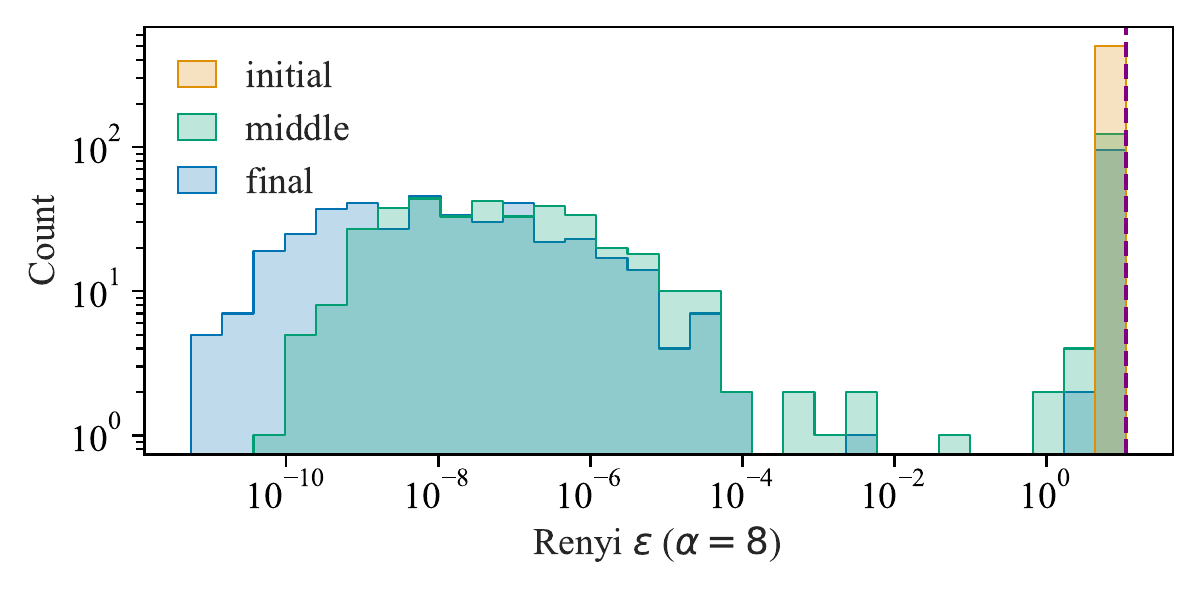}
}
\\\subfloat[Mini-batch size = 256]
{
\includegraphics[width=0.48\linewidth]{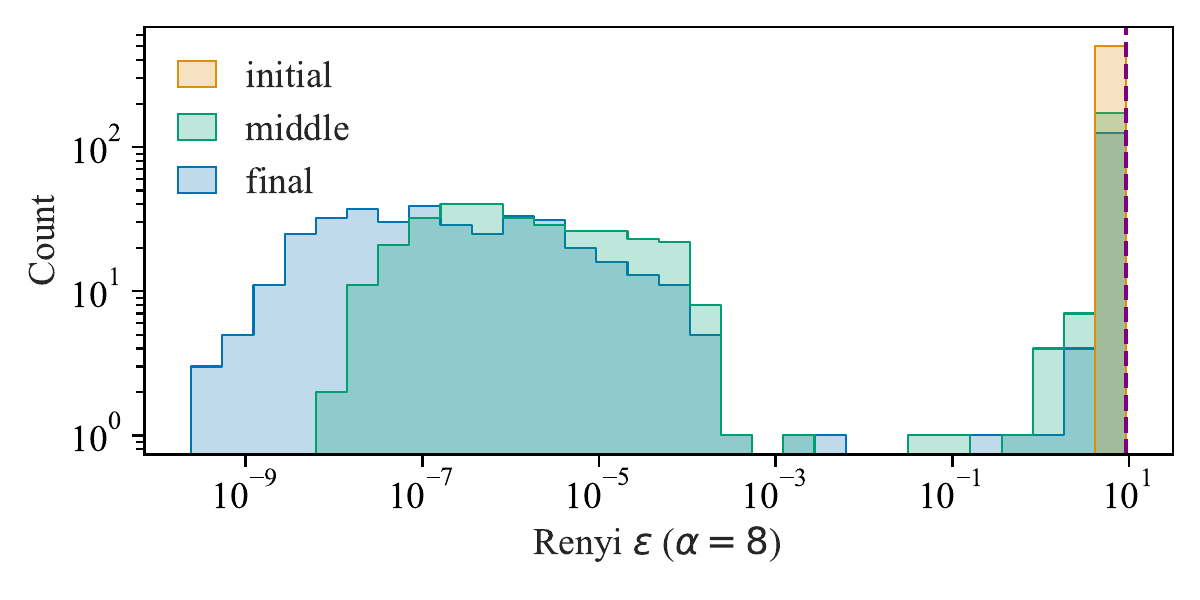}
}
\subfloat[Mini-batch size = 512]
{
\includegraphics[width=0.48\linewidth]{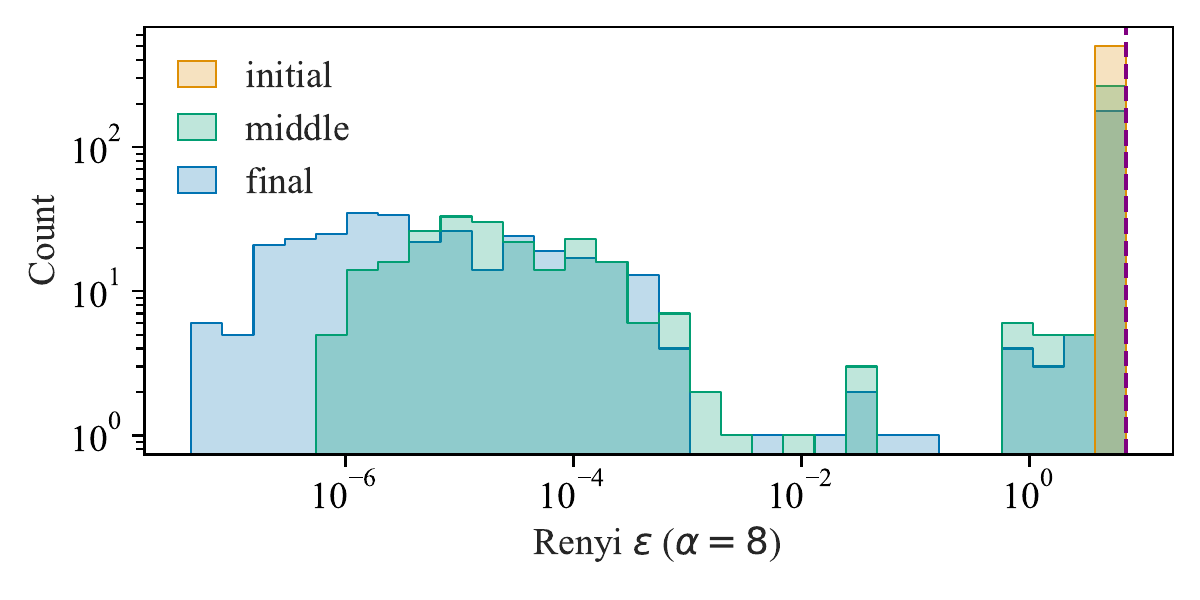}
}
\caption{ Distribution plots of per-step R\'enyi-DP guarantee given by Theorem~\ref{thm:easy_renyi_dp} computed on LeNet-5 trained on MNIST with sum update rule and varying mini-batch sizes of $16, 32, 64, 128, 256, 512$ at three different training stages: at initialization,  in the middle of training, and after training is finalized. The purple dashed line represents the baseline per-step DP-SGD guarantee (Section 3.3 in~\citet{mironov2019r}). We can see our guarantees computed at just initialized models are mostly at the baseline, whereas as training proceeds, more data points obtain guarantees that are better than the baseline by orders of magnitudes. This impact of training stage holds across different mini-batch sizes, while at the same time, the magnitudes of the guarantees increase as the mini-batch size increases--this may be caused by the increasing sampling rate.
}
\label{fig:renyi_simple_eps_distrib_bs_mnist_sum}
\end{figure}

\begin{figure}[t]
\centering
\subfloat[Mini-batch size = 16]
{
\includegraphics[width=0.48\linewidth]{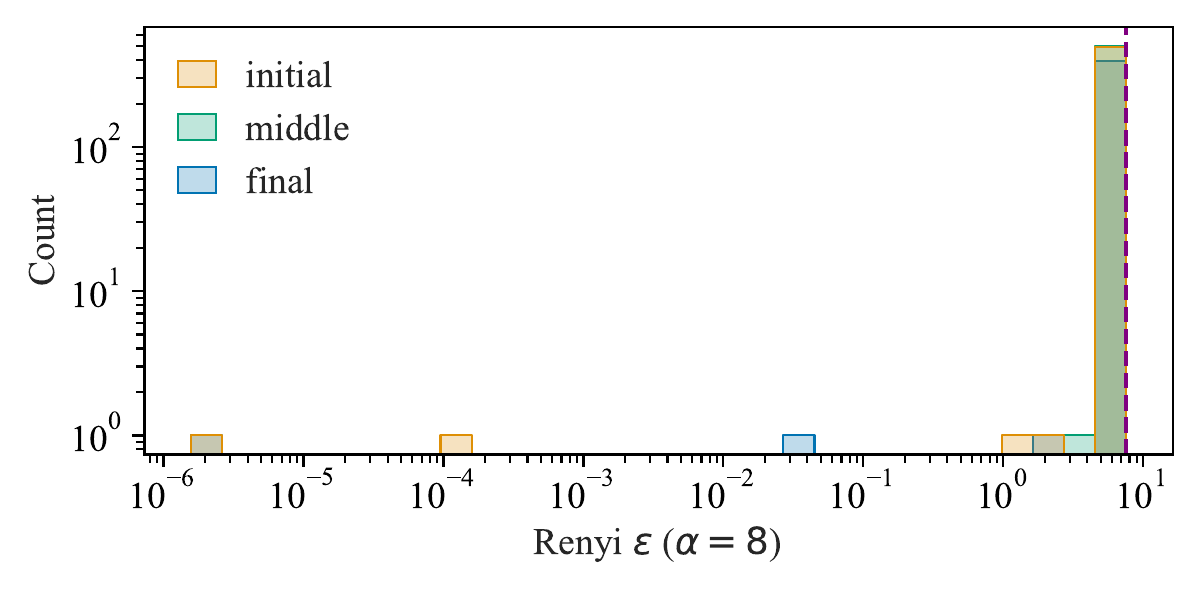}
}
\subfloat[Mini-batch size = 32]
{
\includegraphics[width=0.48\linewidth]{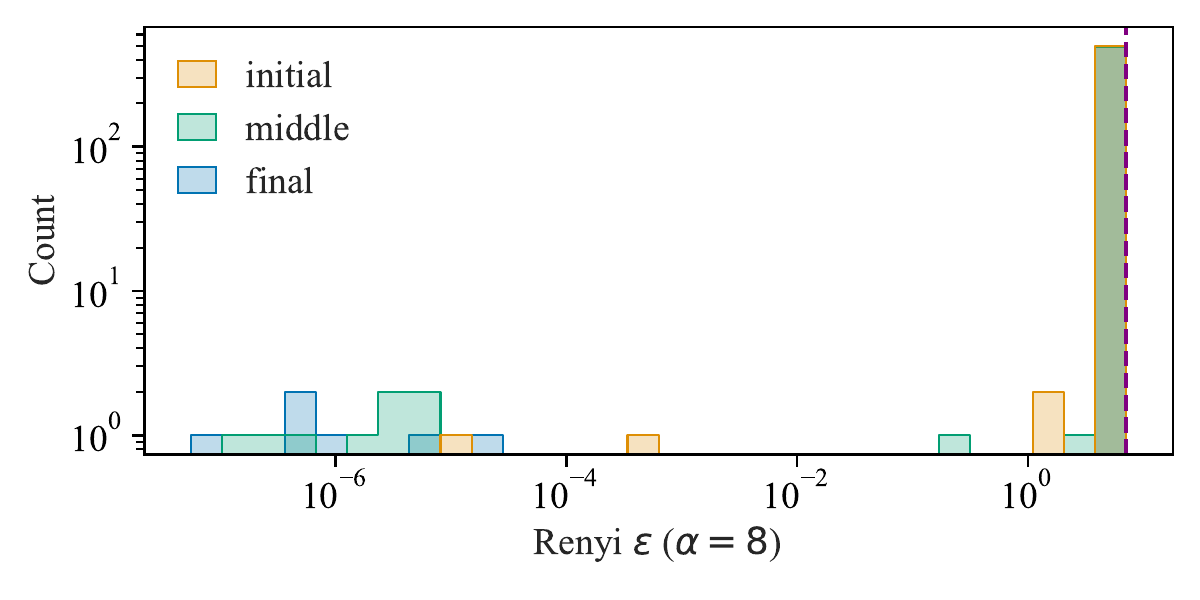}
}
\\\subfloat[Mini-batch size = 64]
{
\includegraphics[width=0.48\linewidth]{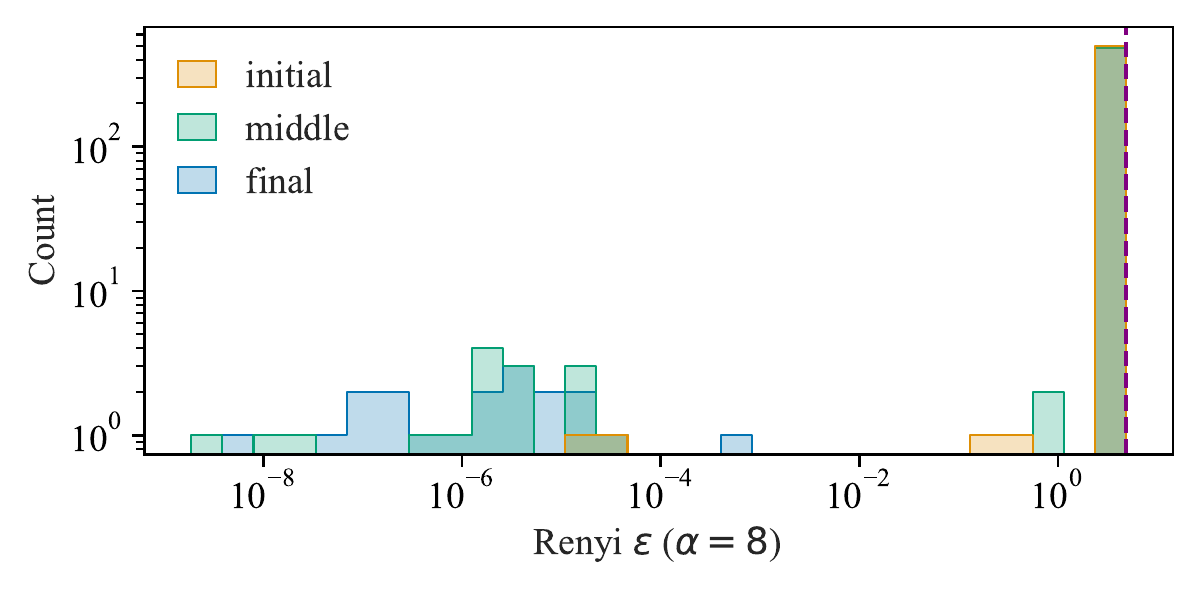}
}
\subfloat[Mini-batch size = 128]
{
\includegraphics[width=0.48\linewidth]{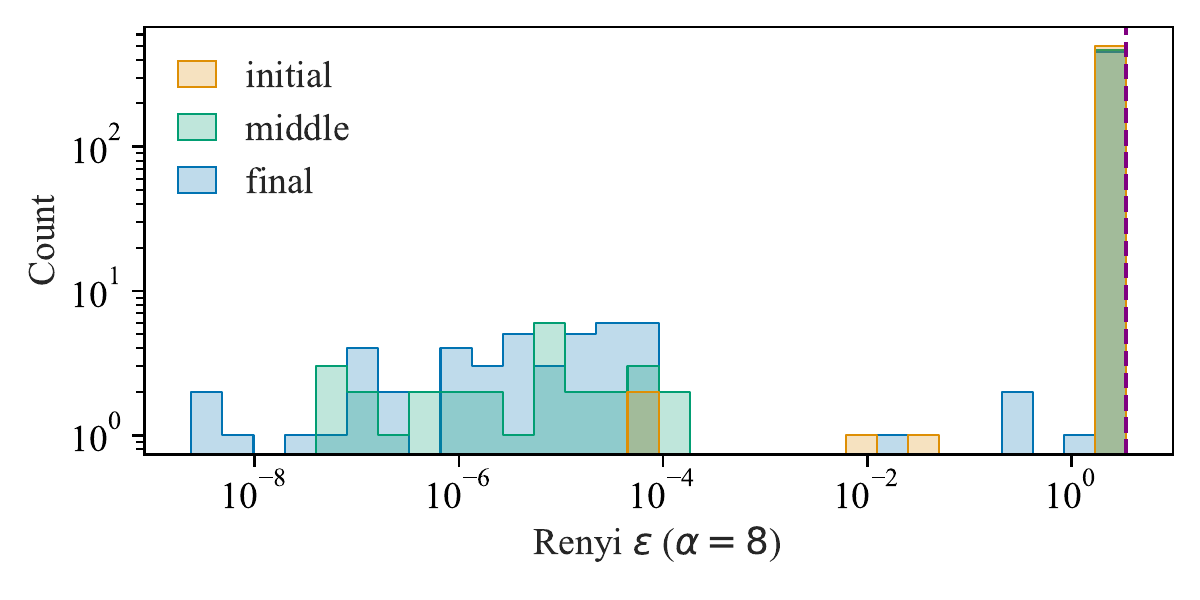}
}
\\\subfloat[Mini-batch size = 256]
{
\includegraphics[width=0.48\linewidth]{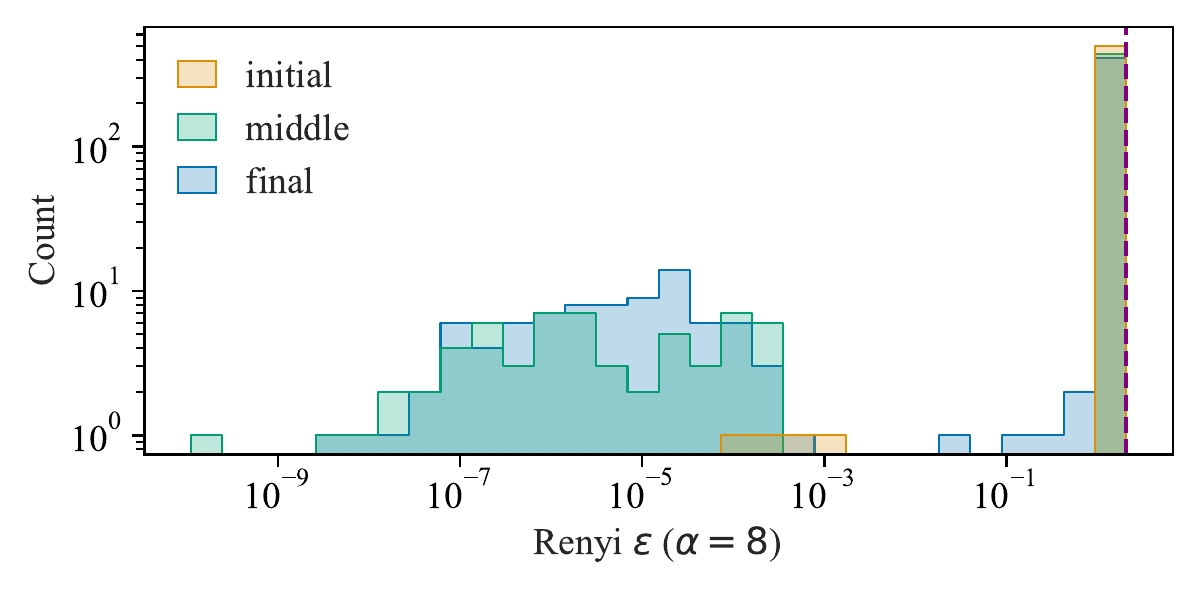}
}
\caption{ This is the reproduction of Figure~\ref{fig:renyi_simple_eps_distrib_bs_mnist_sum} except we now use ResNet-20 models trained on CIFAR-10. We continue to see that some data points are able to obtain significantly better guarantees than the baseline at later stages of training. However, it seems varying mini-batch size does not lead to worse guarantees (and even improves the guarantees for some points). We hypothesize that this may be due to the fact that increasing the mini-batch size for training on CIFAR-10 has a greater impact on model accuracy, which cancels out or even dominates the impact of increasing sampling rate.
}
\label{fig:renyi_simple_eps_distrib_bs_cifar_sum}
\end{figure}

\begin{figure}[t]
\centering
\subfloat[Mini-batch size = 16]
{
\includegraphics[width=0.48\linewidth]{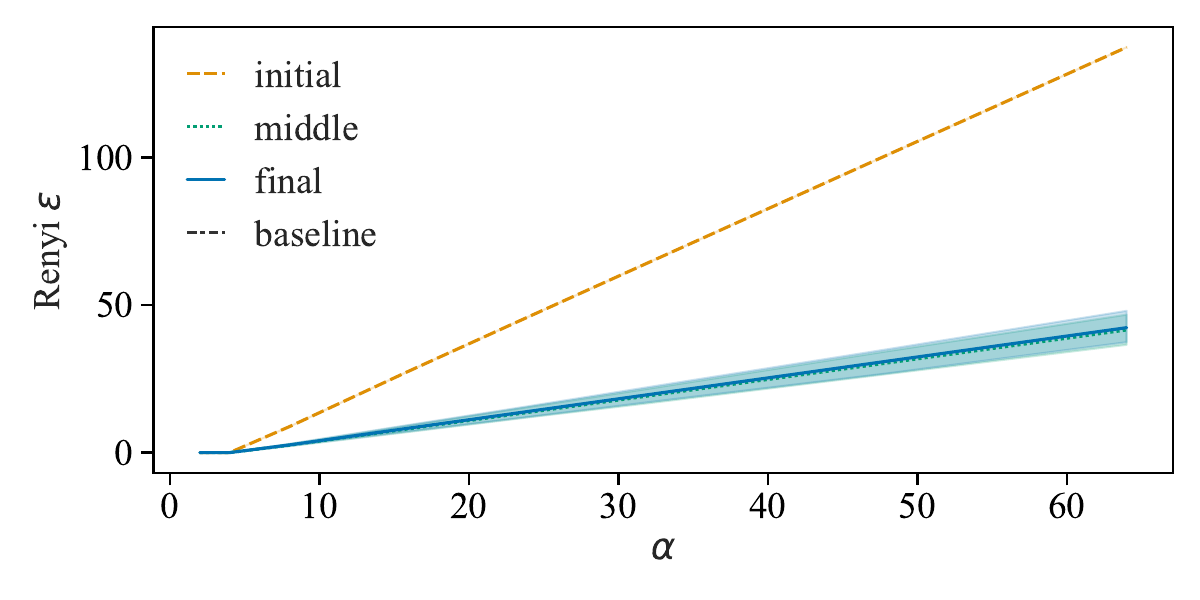}
}
\subfloat[Mini-batch size = 32]
{
\includegraphics[width=0.48\linewidth]{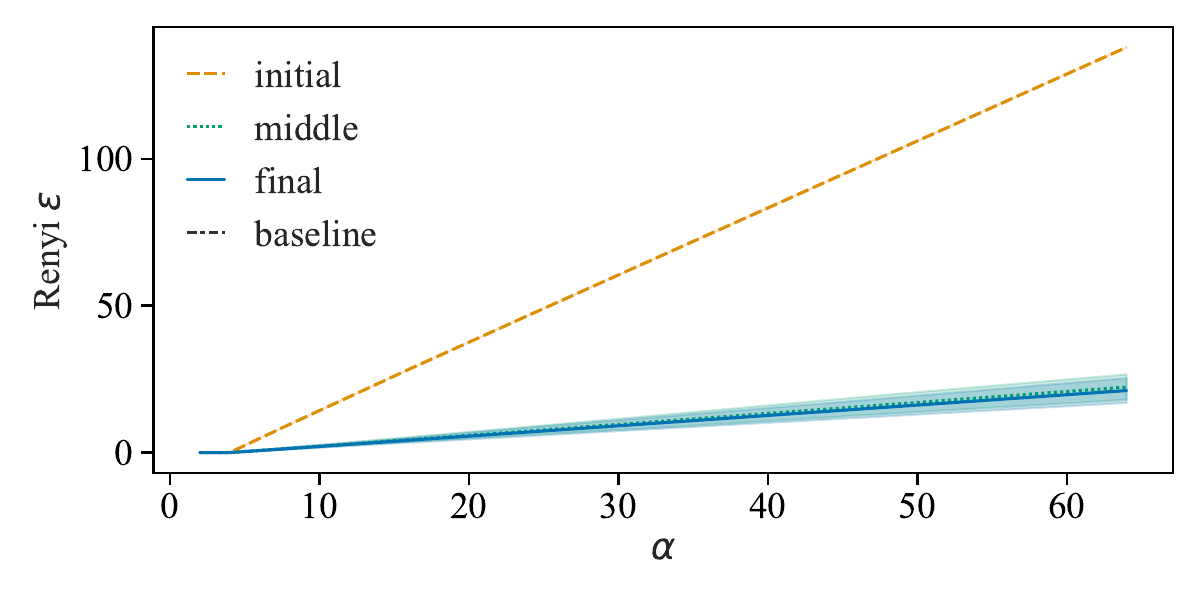}
}
\\\subfloat[Mini-batch size = 64]
{
\includegraphics[width=0.48\linewidth]{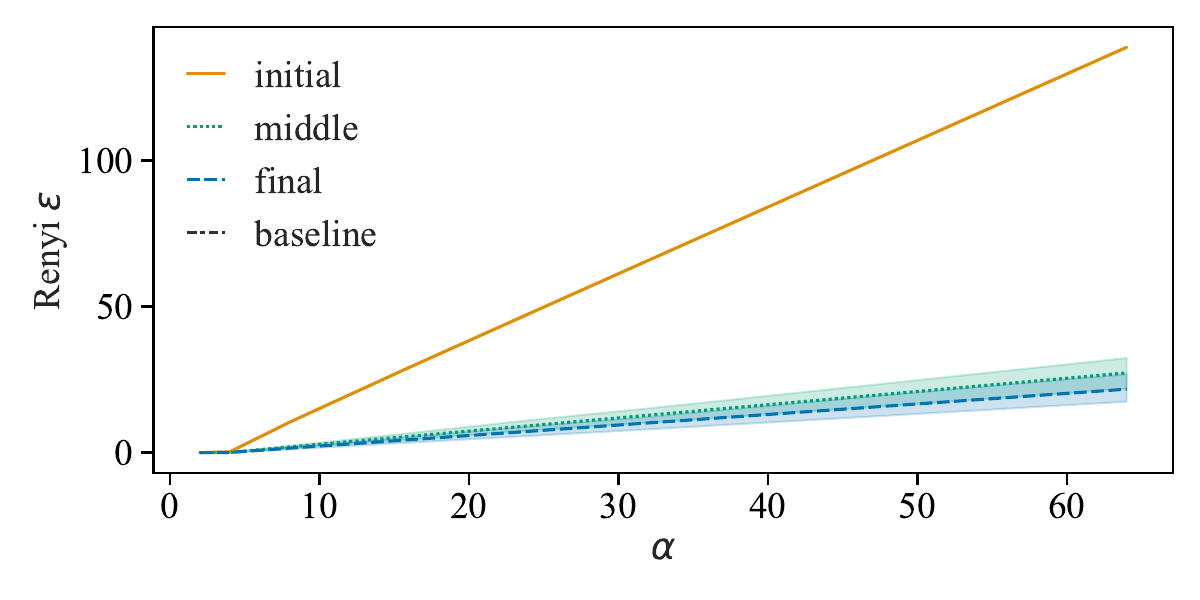}
}
\subfloat[Mini-batch size = 128]
{
\includegraphics[width=0.48\linewidth]{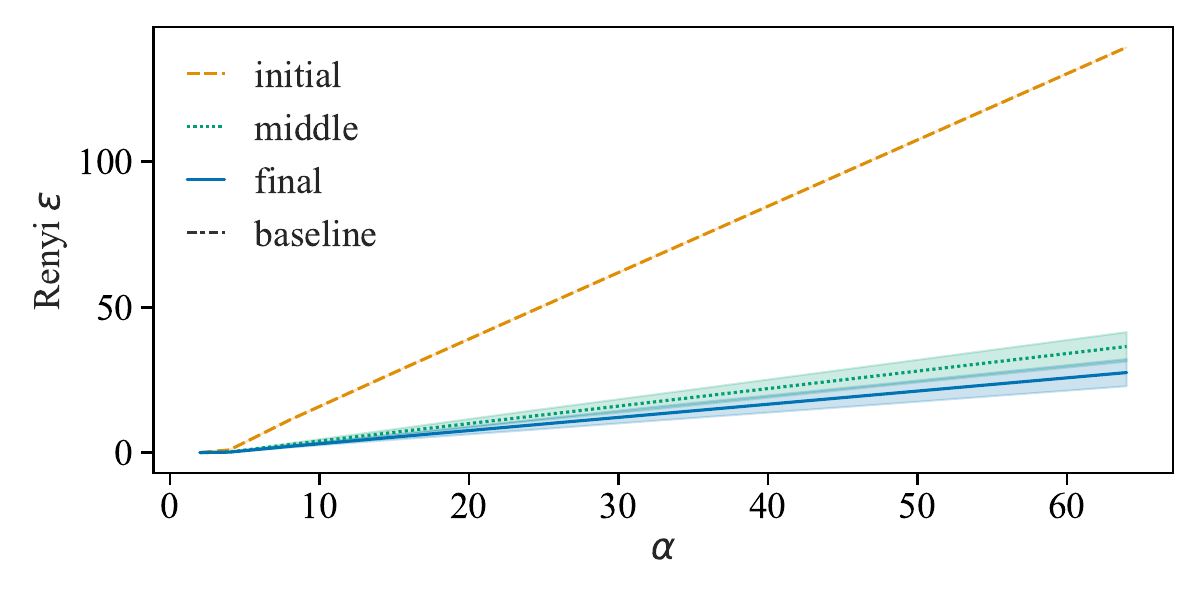}
}
\\\subfloat[Mini-batch size = 256]
{
\includegraphics[width=0.48\linewidth]{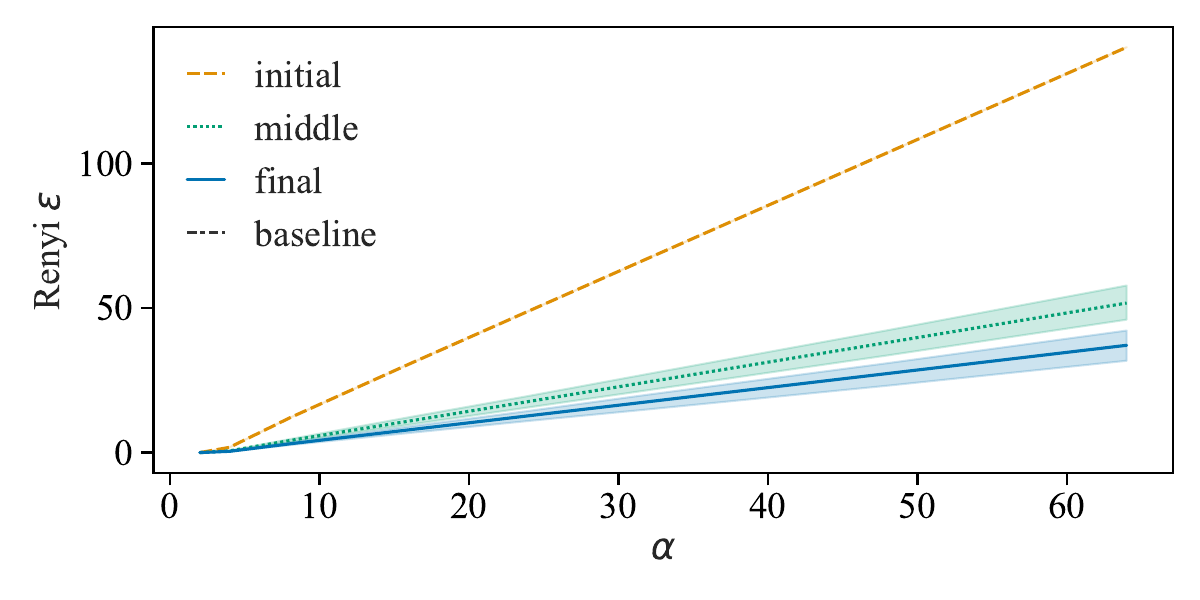}
}
\subfloat[Mini-batch size = 512]
{
\includegraphics[width=0.48\linewidth]{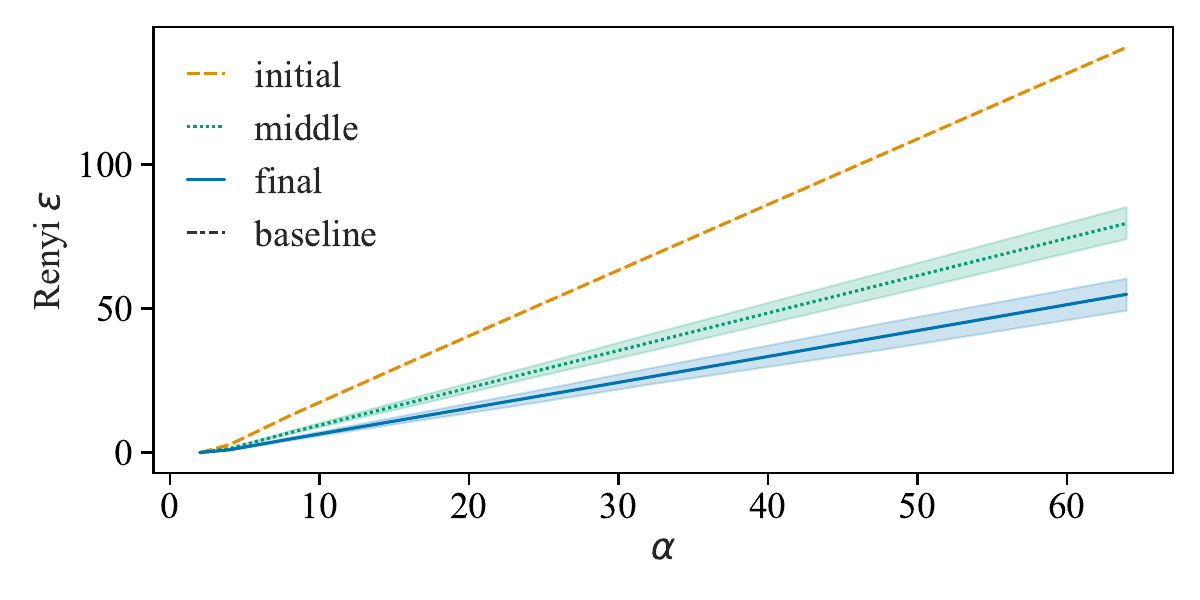}
}
\caption{ Per-step R\'enyi-DP guarantee given by Theorem~\ref{thm:easy_renyi_dp} as a function of $\alpha$, plotted at 3 stages of training and varying mini-batch sizes. It is shown that for all mini-batch sizes, the guarantees at the initial stage overlap with the baseline, whereas the guarantees at the middle and final stages increase slower as $\alpha$ increases. Also see Figure~\ref{fig:renyi_simple_fraction_eps_curve_alpha_mnist_sum} which plots the ratio between our guarantee and the baseline.
}
\label{fig:renyi_simple_eps_curve_alpha_mnist_sum}
\end{figure}

\begin{figure}[t]
\centering
\subfloat[Mini-batch size = 16]
{
\includegraphics[width=0.48\linewidth]{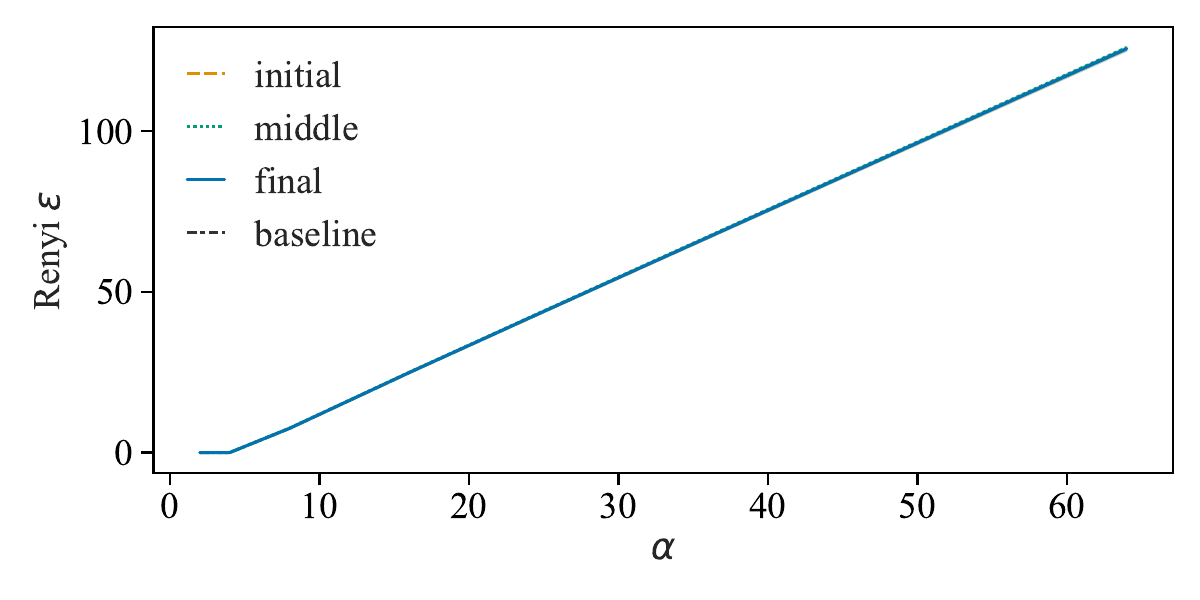}
}
\subfloat[Mini-batch size = 32]
{
\includegraphics[width=0.48\linewidth]{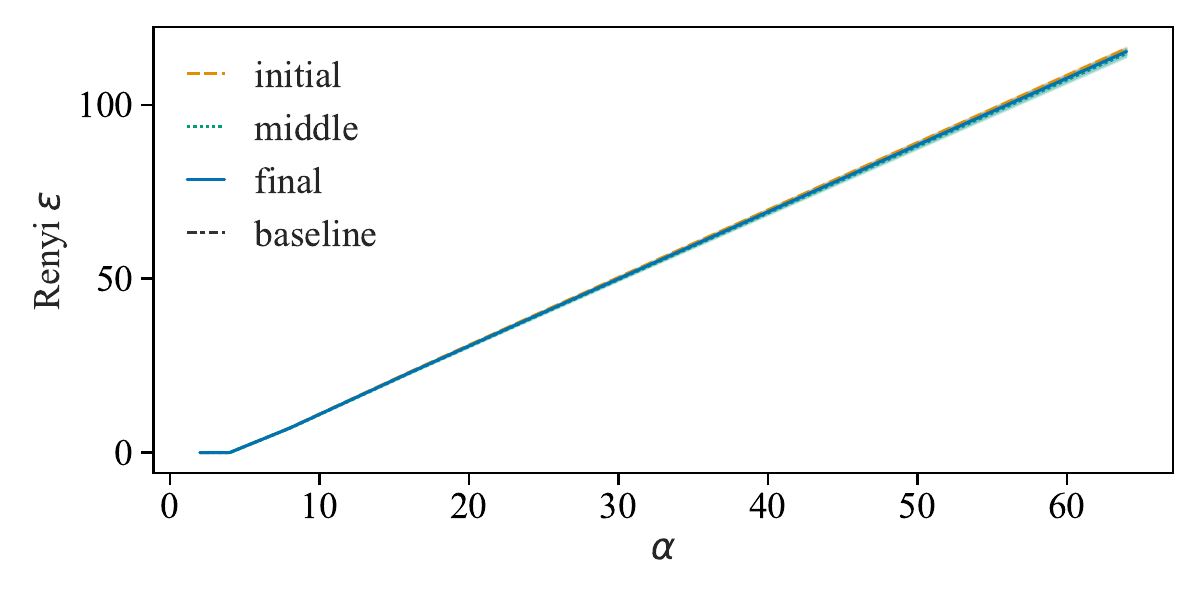}
}
\\\subfloat[Mini-batch size = 64]
{
\includegraphics[width=0.48\linewidth]{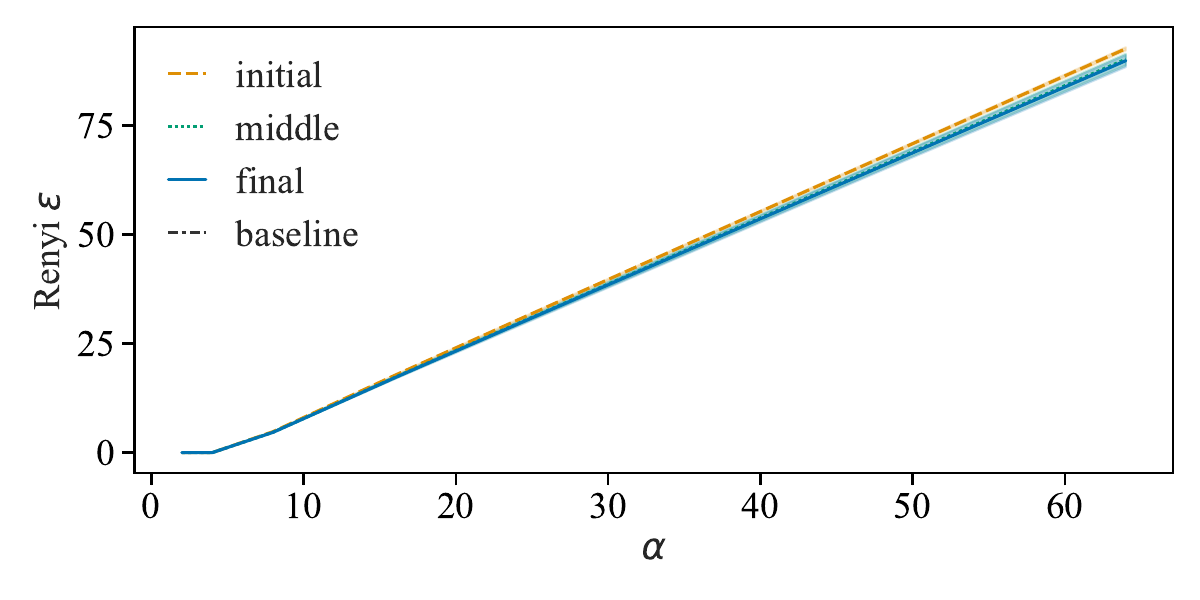}
}
\subfloat[Mini-batch size = 128]
{
\includegraphics[width=0.48\linewidth]{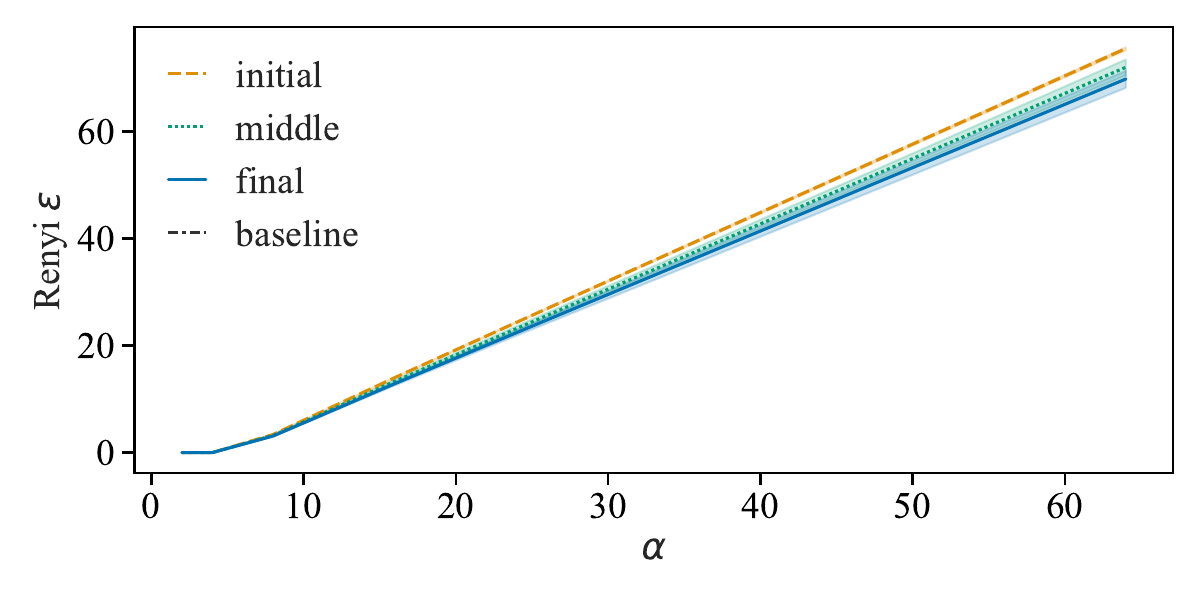}
}
\\\subfloat[Mini-batch size = 256]
{
\includegraphics[width=0.48\linewidth]{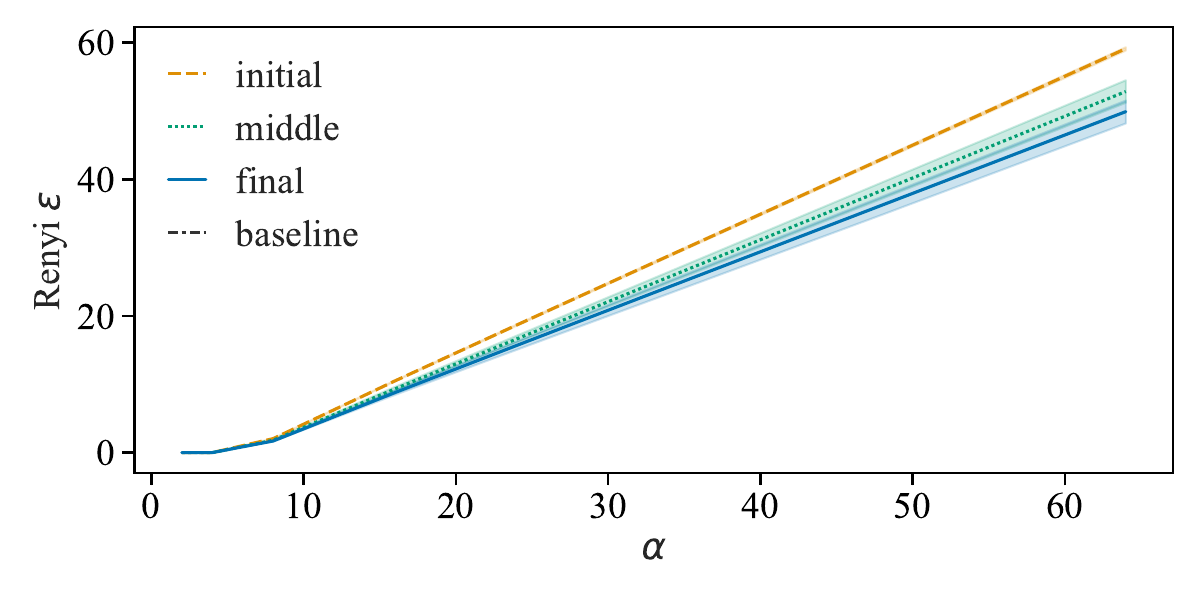}
}
\caption{ This is the reproduction of Figure~\ref{fig:renyi_simple_eps_curve_alpha_mnist_sum} except we now use ResNet-20 models trained on CIFAR-10, and similar results are observed. However, the curves are too close to each other, so for ease of visualization, we plot the ratio between our guarantee and the baseline in Figure~\ref{fig:renyi_simple_fraction_eps_curve_alpha_cifar_sum}.
}
\label{fig:renyi_simple_eps_curve_alpha_cifar_sum}
\end{figure}

\begin{figure}[t]
\centering
\subfloat[Mini-batch size = 16]
{
\includegraphics[width=0.48\linewidth]{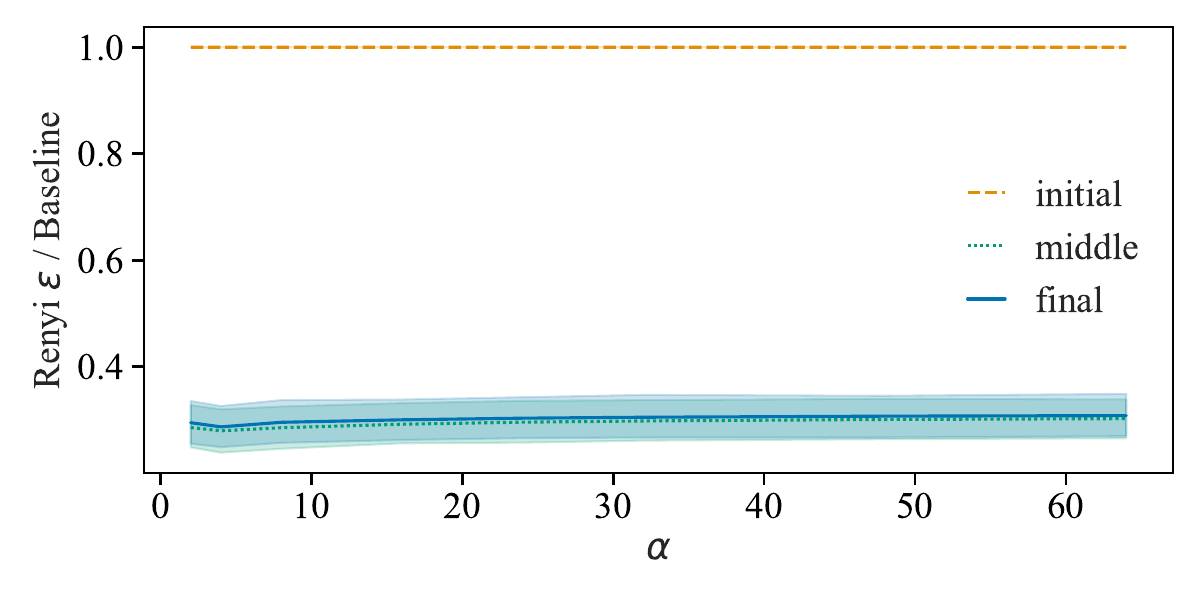}
}
\subfloat[Mini-batch size = 32]
{
\includegraphics[width=0.48\linewidth]{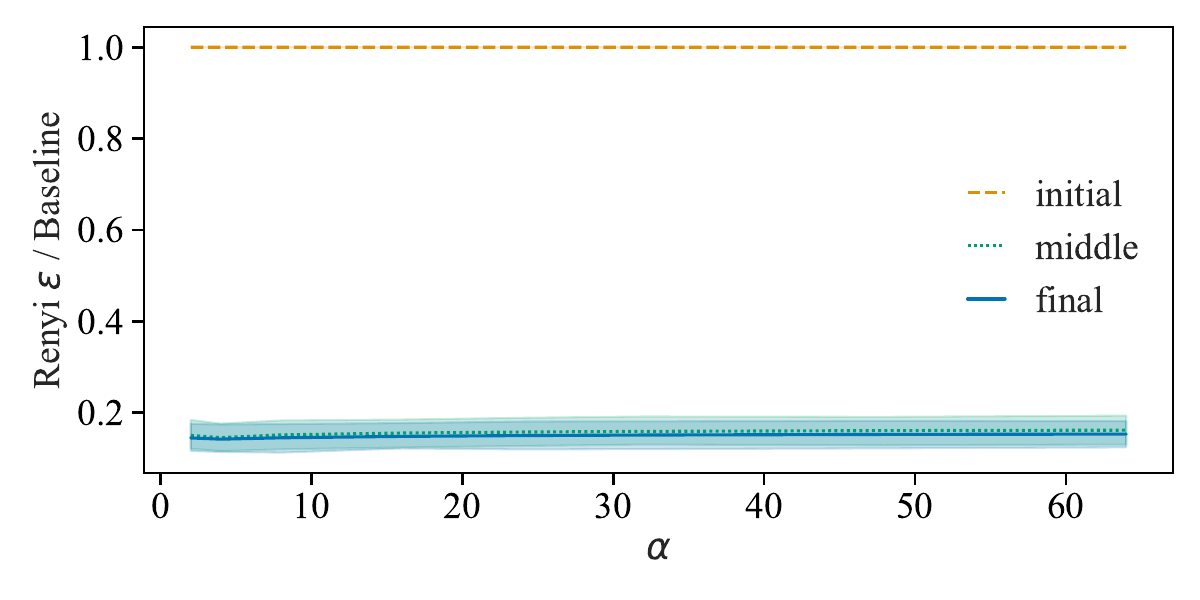}
}
\\\subfloat[Mini-batch size = 64]
{
\includegraphics[width=0.48\linewidth]{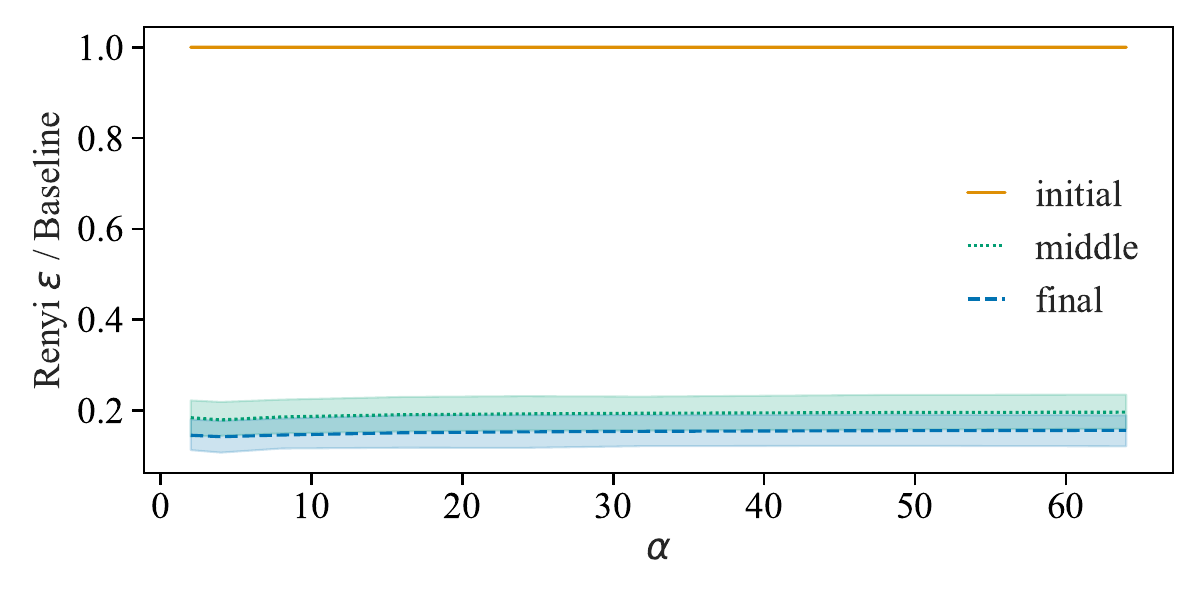}
}
\subfloat[Mini-batch size = 128]
{
\includegraphics[width=0.48\linewidth]{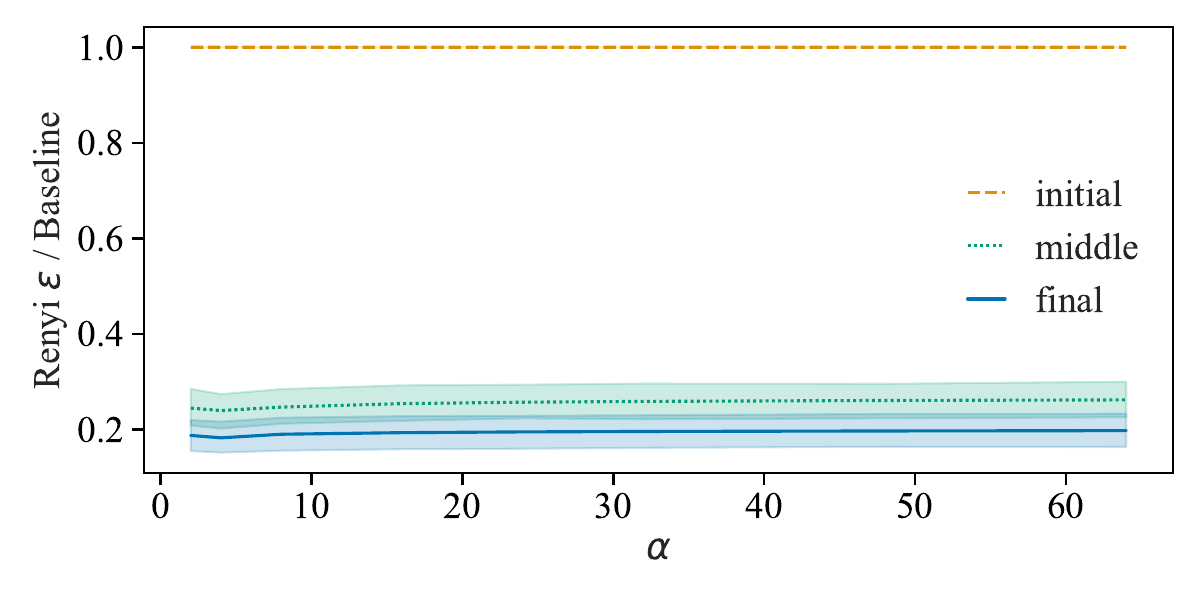}
}
\\\subfloat[Mini-batch size = 256]
{
\includegraphics[width=0.48\linewidth]{figures/renyi_simple_eps_alpha_fraction_curve_MNIST_lenet_128_sum.pdf}
}
\subfloat[Mini-batch size = 512]
{
\includegraphics[width=0.48\linewidth]{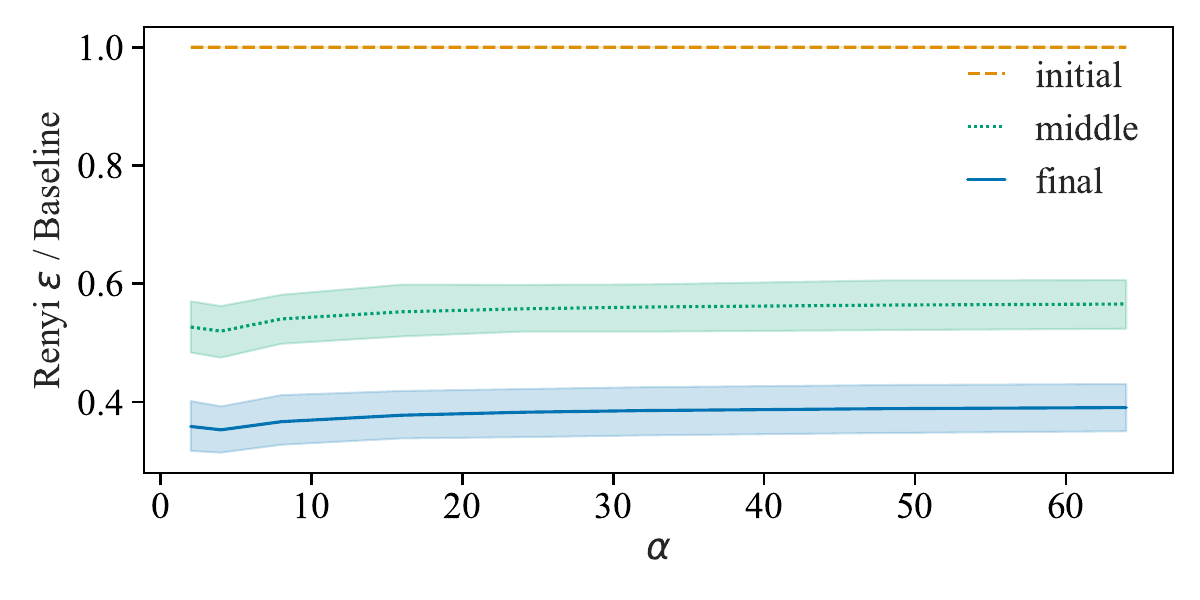}
}
\caption{ This is the exact reproduction of Figure~\ref{fig:renyi_simple_eps_curve_alpha_mnist_sum} except we normalize the y-axis by the baseline guarantee. Besides the takeaway mentioned in Figure~\ref{fig:renyi_simple_eps_curve_alpha_mnist_sum}, we can also see that mini-batch size does not have a significant impact  on how our guarantee changes with respect to $\alpha$ except for mini-batch size $=512$. This exception may be because the model with mini-batch size $512$ was not trained to convergence since the number of training epochs was set to $10$ for all MNIST models.
}
\label{fig:renyi_simple_fraction_eps_curve_alpha_mnist_sum}
\end{figure}

\begin{figure}[t]
\centering
\subfloat[Mini-batch size = 16]
{
\includegraphics[width=0.48\linewidth]{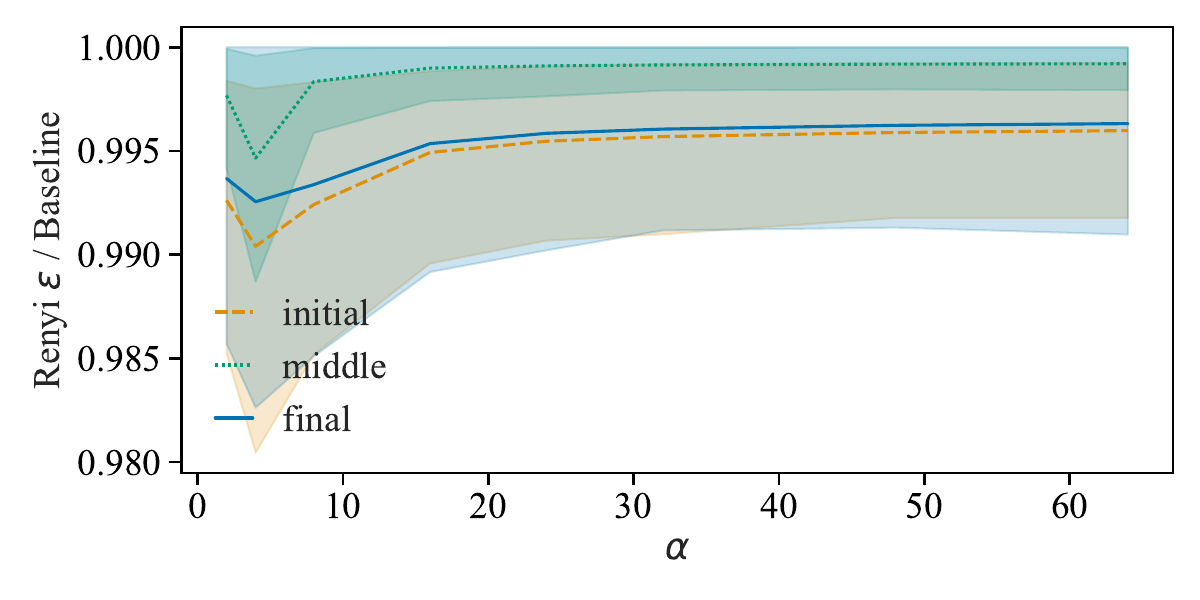}
}
\subfloat[Mini-batch size = 32]
{
\includegraphics[width=0.48\linewidth]{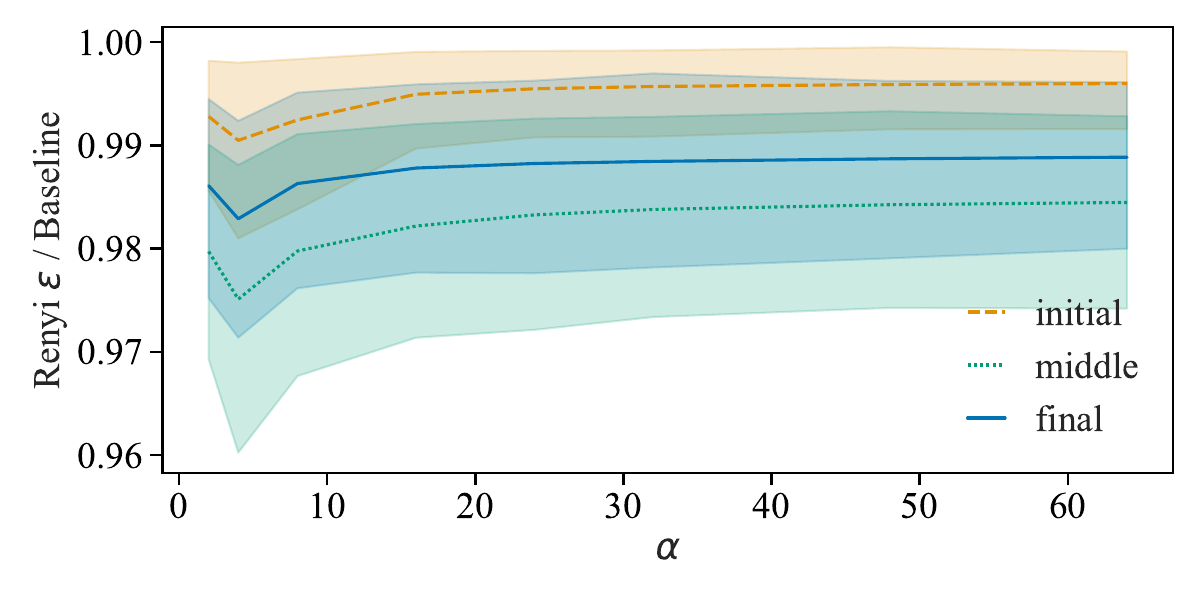}
}
\\\subfloat[Mini-batch size = 64]
{
\includegraphics[width=0.48\linewidth]{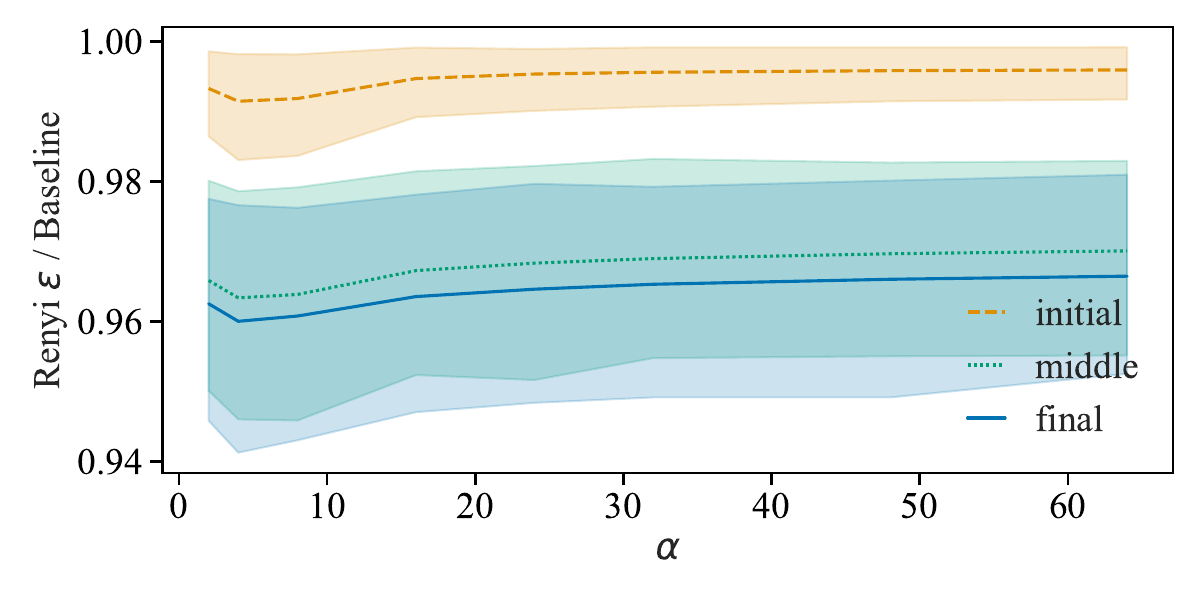}
}
\subfloat[Mini-batch size = 128]
{
\includegraphics[width=0.48\linewidth]{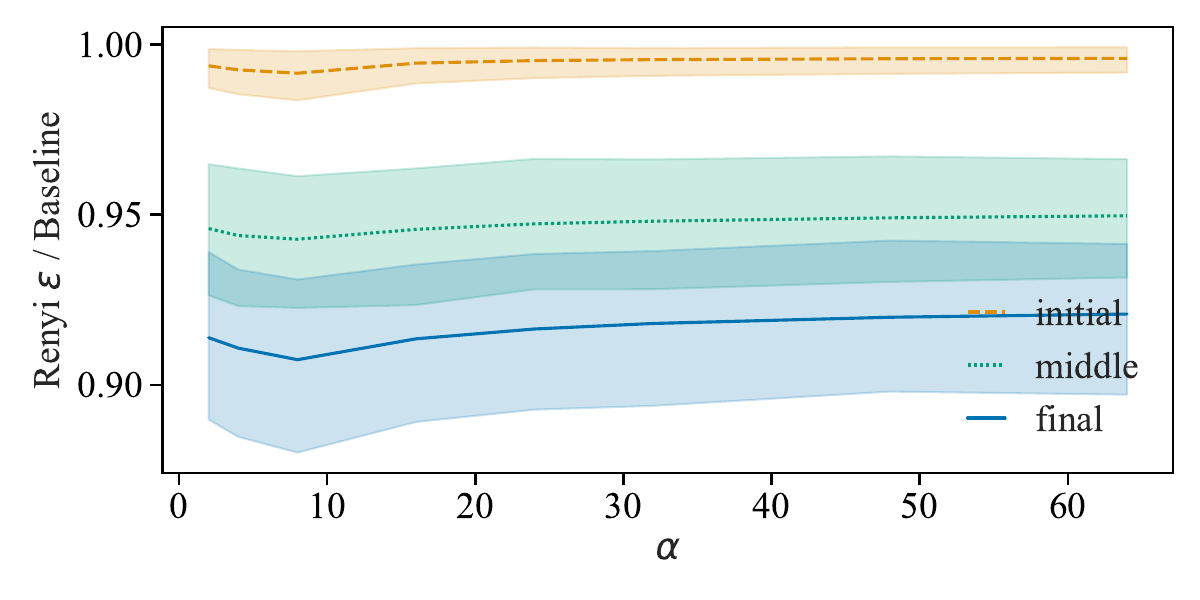}
}
\\\subfloat[Mini-batch size = 256]
{
\includegraphics[width=0.48\linewidth]{figures/renyi_simple_eps_alpha_fraction_curve_CIFAR10_resnet20_128_sum.pdf}
}
\caption{ This is the exact reproduction of Figure~\ref{fig:renyi_simple_eps_curve_alpha_cifar_sum} except we normalize the y-axis by the baseline guarantee. It can be seen now that at later stages of training, our guarantees increases slower comparing to the baseline guarantee.
}
\label{fig:renyi_simple_fraction_eps_curve_alpha_cifar_sum}
\end{figure}

\begin{figure}[t]
\centering
\subfloat[our guarantee]
{
\includegraphics[width=0.48\linewidth]{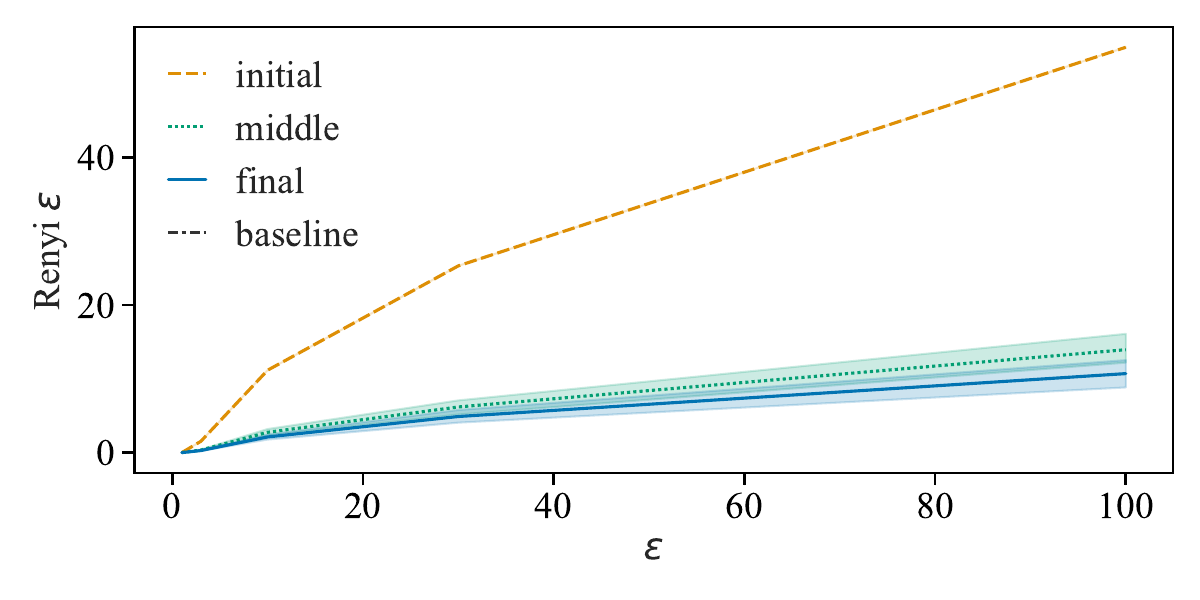}
}
\subfloat[our guarantee divided by baseline]
{
\includegraphics[width=0.48\linewidth]{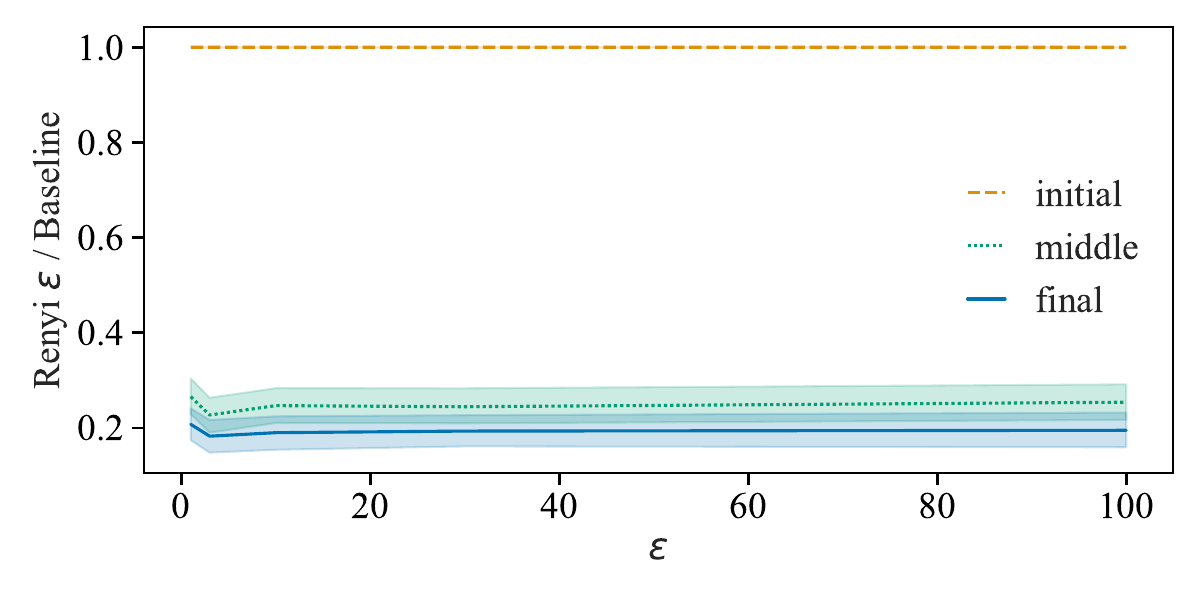}
}

\caption{ Per-step R\'enyi-DP guarantee given by Theorem~\ref{thm:easy_renyi_dp} as a function of $\epsilon$, plotted at 3 stages of training and varying mini-batch sizes. The baseline is either (a) plotted in the figure, or (b) used to normalize the plotted guarantees. We can see that our guarantees increase significantly slower than the baseline as  $\epsilon$ increases.
}
\label{fig:renyi_simple_fraction_eps_curve_vary_eps_mnist_sum}
\end{figure}

\begin{figure}[t]
\centering
\subfloat[our guarantee]
{
\includegraphics[width=0.48\linewidth]{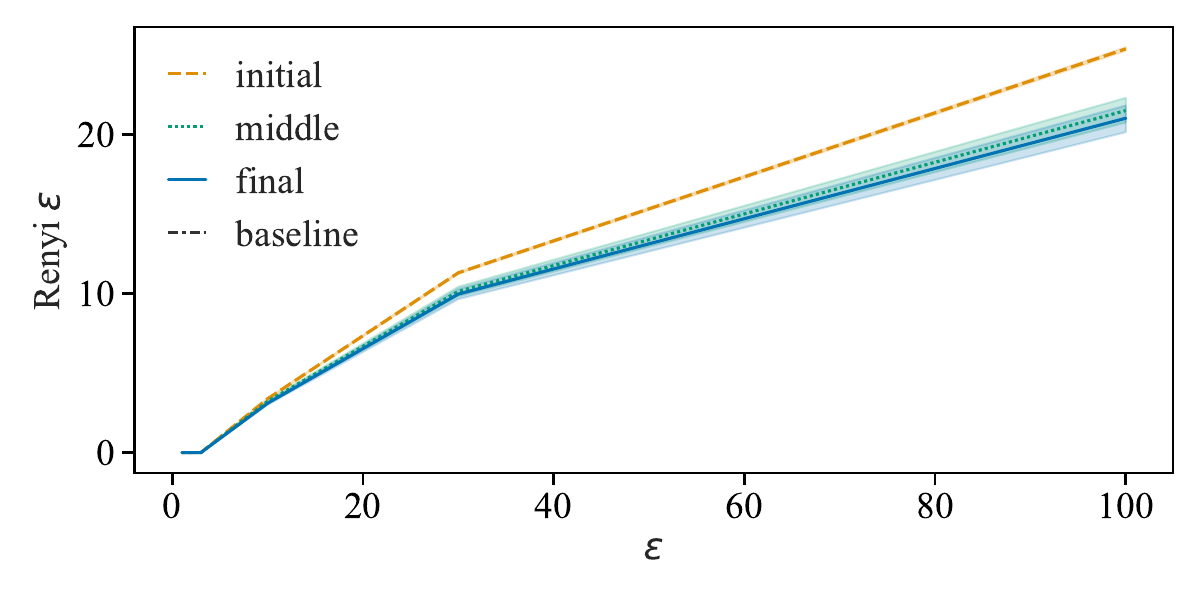}
}
\subfloat[our guarantee divided by baseline]
{
\includegraphics[width=0.48\linewidth]{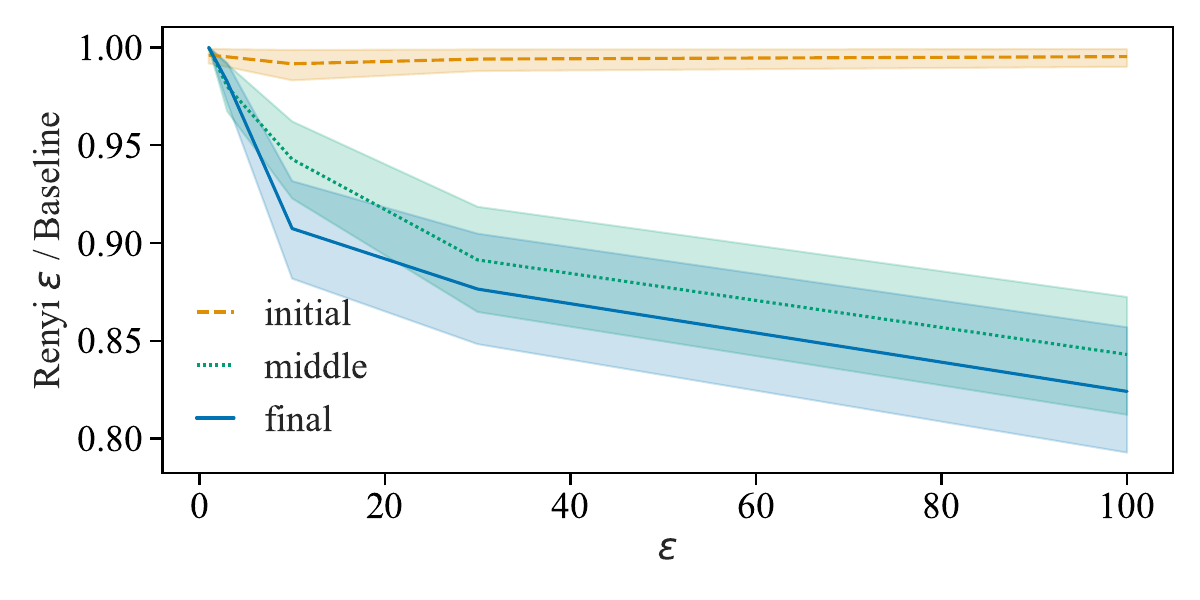}
}

\caption{ This is the reproduction of Figure~\ref{fig:renyi_simple_fraction_eps_curve_vary_eps_mnist_sum} except we now use ResNet-20 models trained on CIFAR-10. Unlike the results of MNIST, we now observe that the ratio between our guarantee and the baseline decreases as $\epsilon$ increases. We hypothesize that this may be because increasing  $\epsilon$ has a much larger impact on accuracy of CIFAR-10 models.
}
\label{fig:renyi_simple_fraction_eps_curve_vary_eps_cifar_sum}
\end{figure}

\begin{figure}[t]
\centering
\subfloat[MNIST]
{
\includegraphics[width=0.48\linewidth]{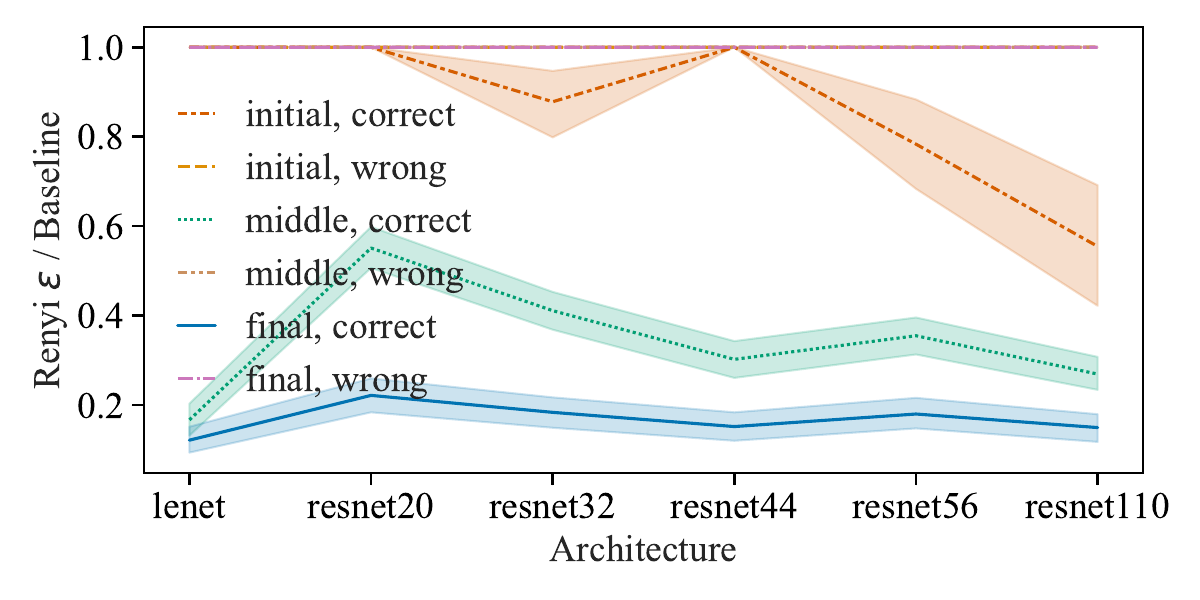}
}
\subfloat[CIFAR10]
{
\includegraphics[width=0.48\linewidth]{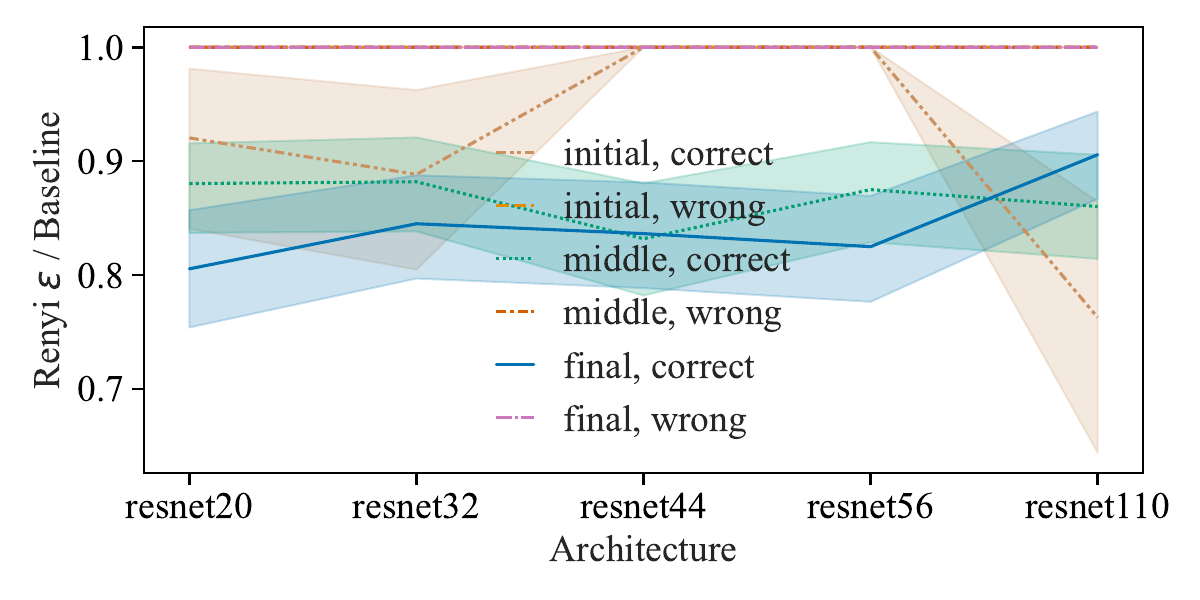}
}
\caption{ Per-step R\'enyi-DP guarantee given by Theorem~\ref{thm:easy_renyi_dp} (divided by the baseline guarantee) plotted with respect to different model architectures trained on MNIST and CIFAR-10, at 3 stages of training. Consistently across different architectures and datasets, data points at later stages of training that are correctly classified have significantly better privacy guarantees than the baseline.
}
\label{fig:renyi_simple_fraction_curve_vary_arch_mnist}
\end{figure}

\begin{figure}[t]
\centering
\subfloat[varying batch size]
{
\includegraphics[width=0.48\linewidth]{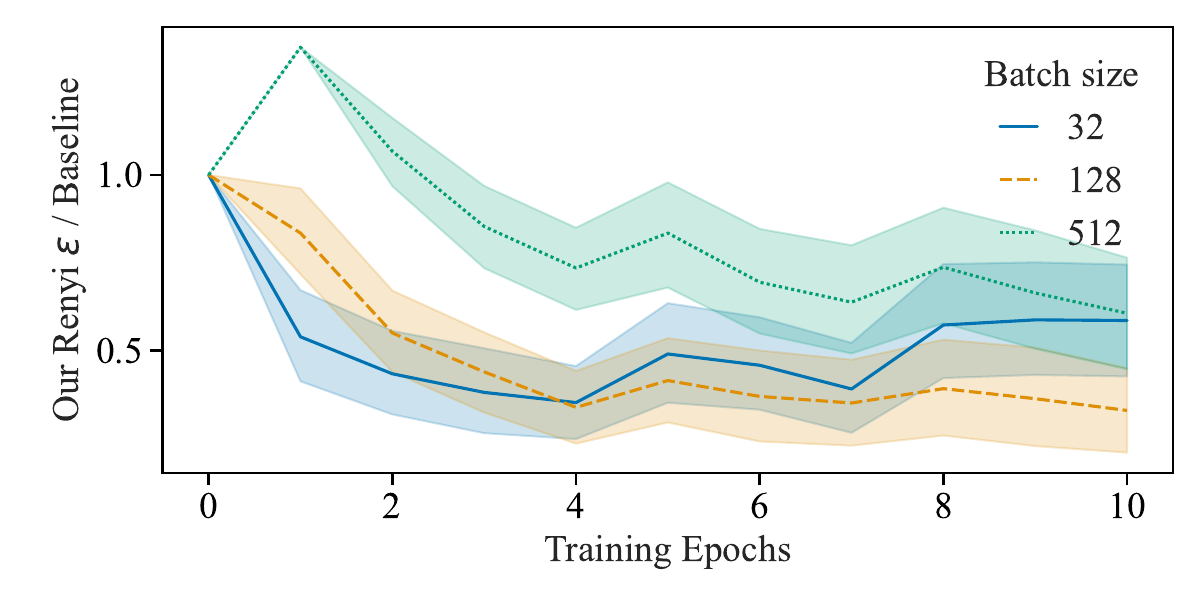}
}
\subfloat[varying architecture]
{
\includegraphics[width=0.48\linewidth]{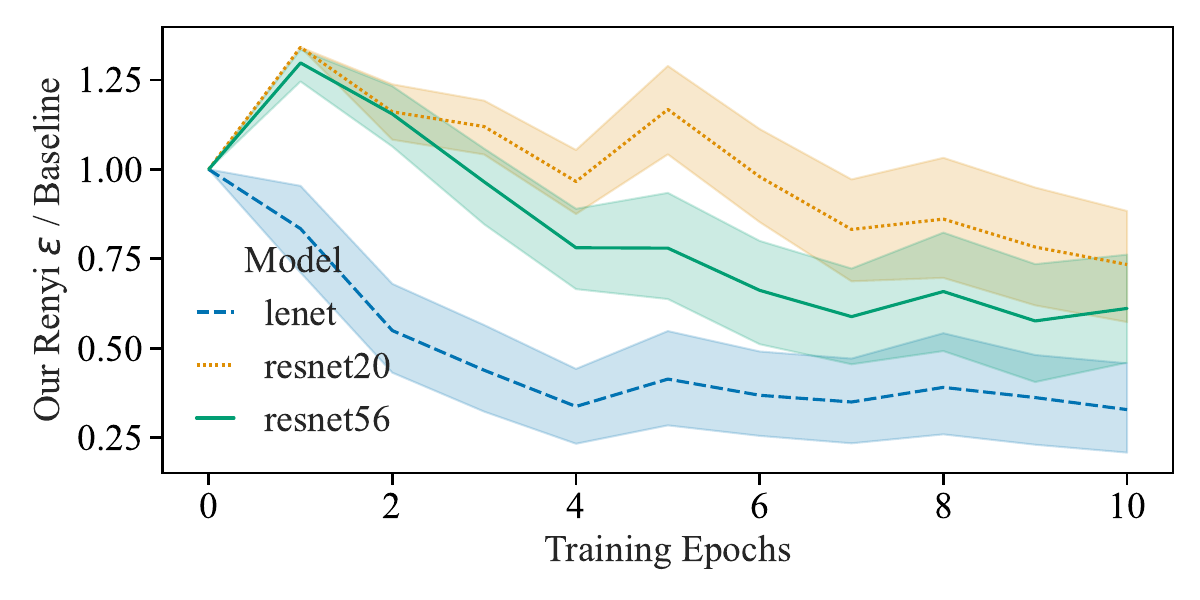}
}
\\\subfloat[varying epsilon]
{
\includegraphics[width=0.48\linewidth]{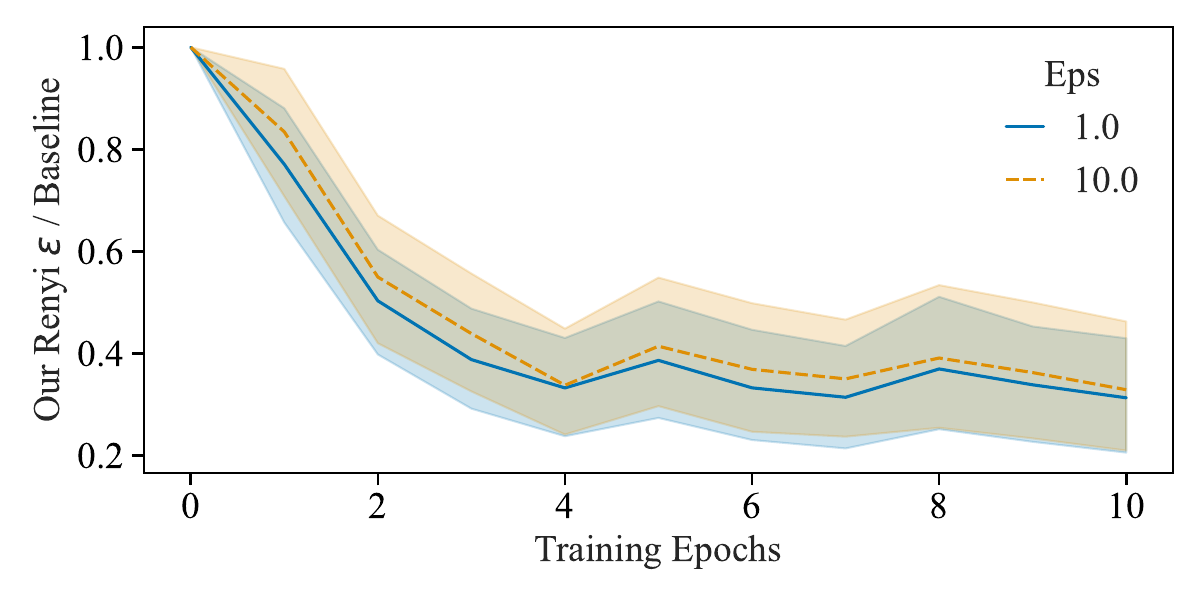}
}
\subfloat[varying batch size\\$\text{~~~~~}$($10^{th}$ percentile)]
{
\includegraphics[width=0.48\linewidth]{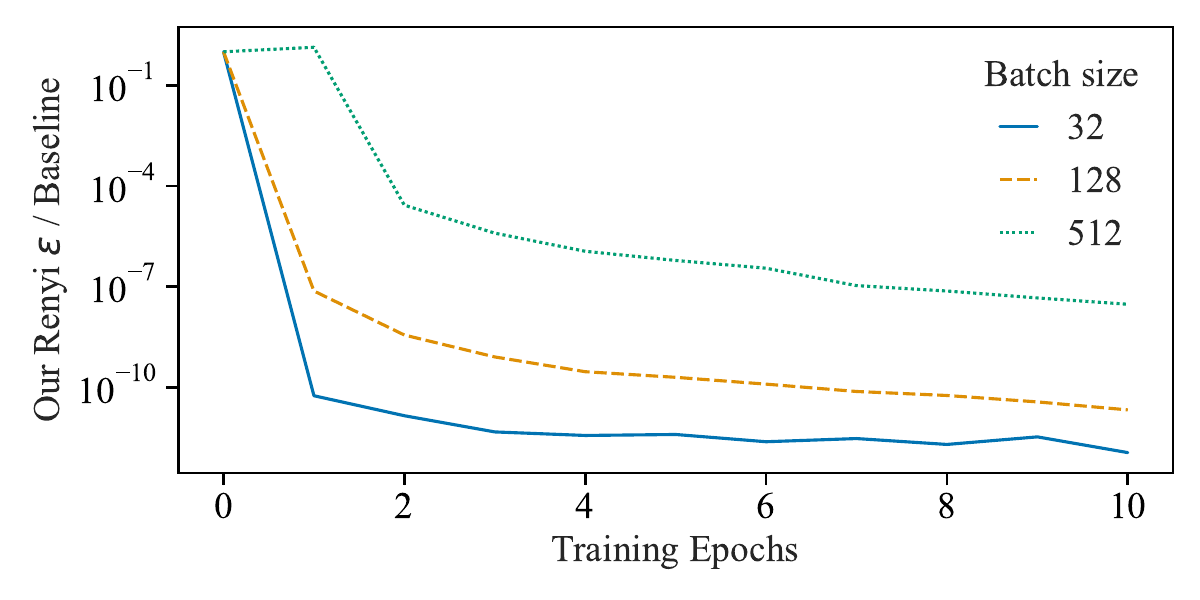}
}
\\\subfloat[varying architecture\\$\text{~~~~~}$($10^{th}$ percentile)]
{
\includegraphics[width=0.48\linewidth]{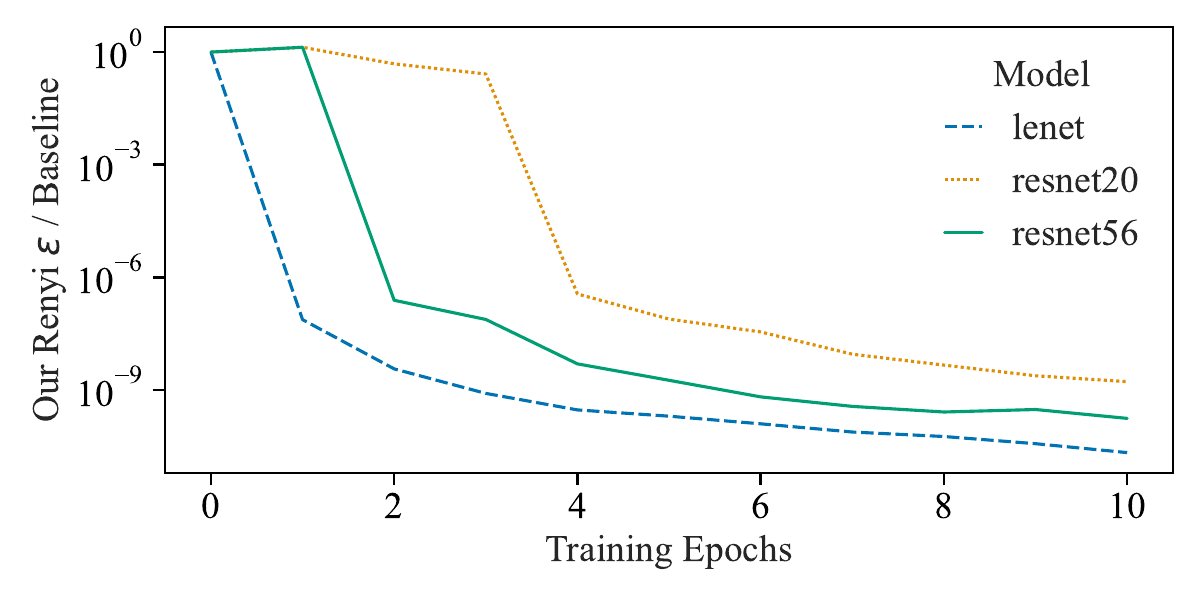}
}
\subfloat[varying epsilon\\$\text{~~~~~}$($10^{th}$ percentile)]
{
\includegraphics[width=0.48\linewidth]{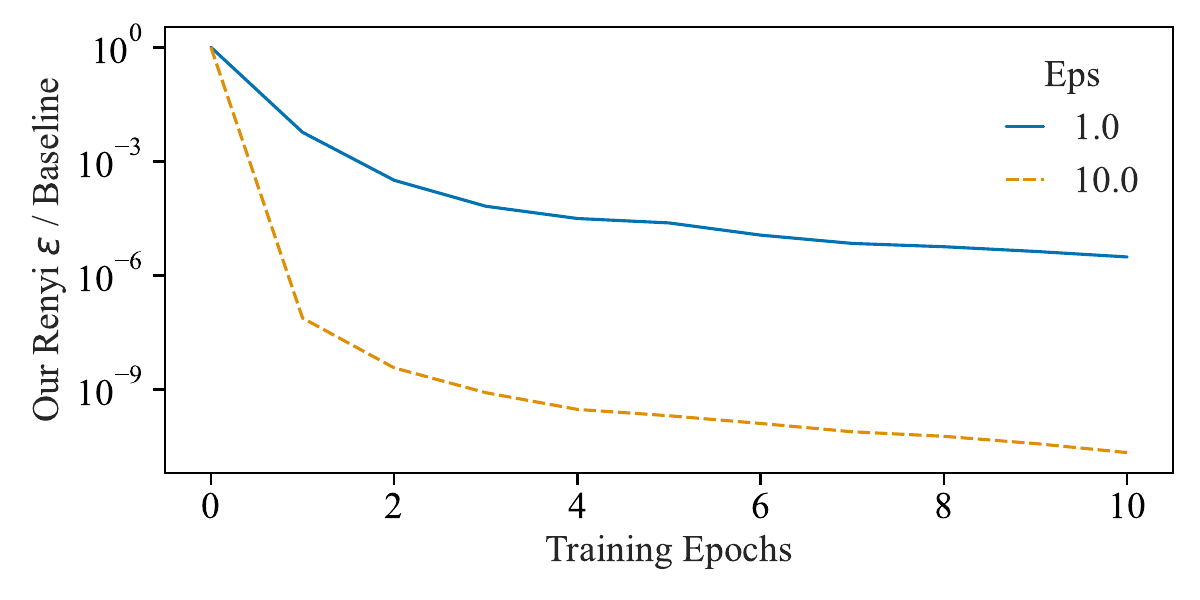}
}
\caption{ Expected privacy guarantees from Theorem~\ref{thm:easy_renyi_dp} plotted as a fraction of the per-step DP-SGD guarantee over training. One can see that the ratio between our guarantee and the per-step DP-SGD guarantee (the baseline) decreases as training approaches the end, and this is consistent across different strengths of DP (i.e., $\epsilon$ set for the entire training), varying mini-batch size, and different model architectures.
}
\label{fig:renyi_simple_composition_mnist_sum}
\end{figure}

\begin{figure}[t]
\centering
\subfloat[average (with confidence interval)]
{
\includegraphics[width=0.48\linewidth]{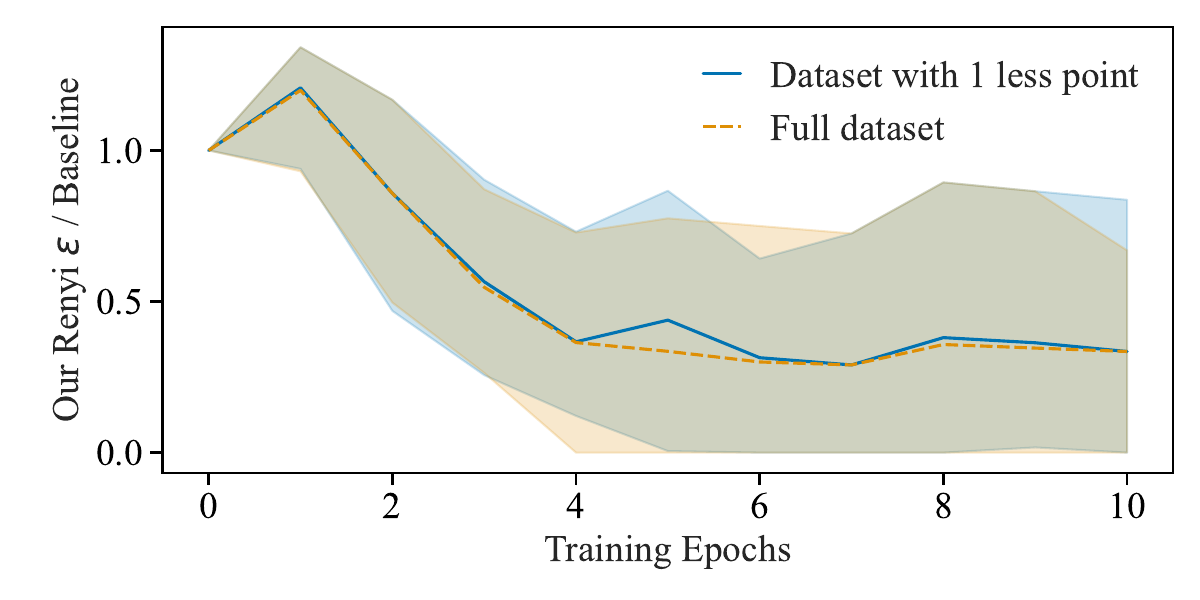}
}
\subfloat[$10^{th}$ percentile]
{
\includegraphics[width=0.48\linewidth]{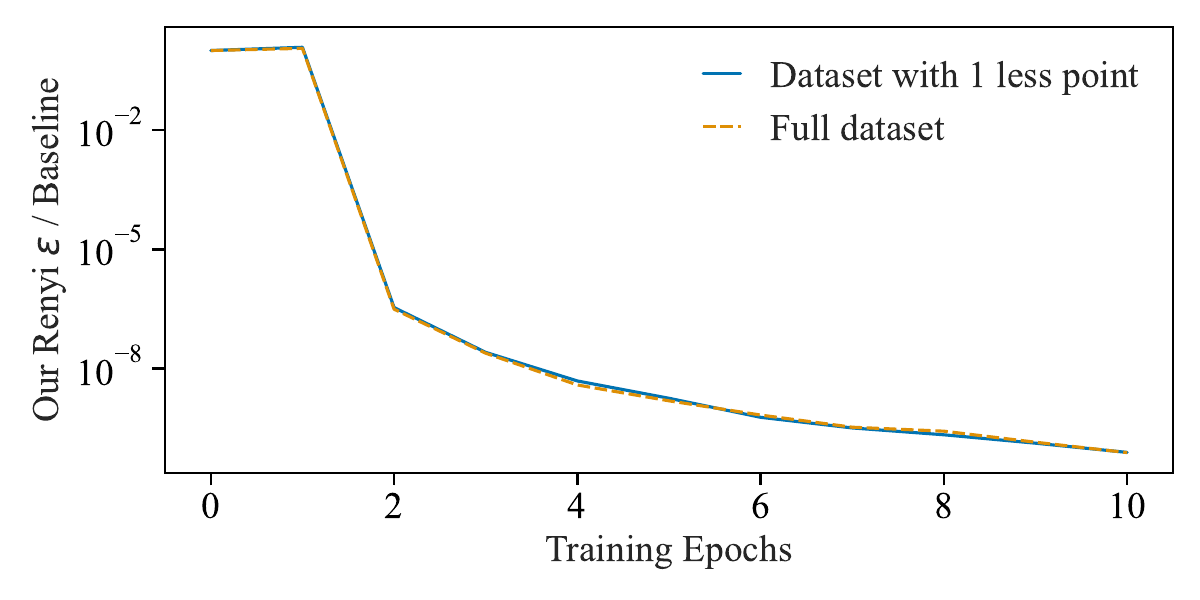}
}

\caption{ This is a reproduction of Figure~\ref{fig:renyi_simple_composition_mnist_sum} except the expected guarantee is computed for $D_{\alpha}(M(X')||M(X))$ instead of $D_{\alpha}(M(X)||M(X'))$. However, a similar trend can be observed.
}
\label{fig:remove_10points}
\end{figure}

\subsection{Studying the Theorem~\ref{thm:renyi_dp_sens} Results}

Here we compute the $(\alpha,\epsilon)$-R\'enyi-DP guarantee given by Theorem~\ref{thm:renyi_dp_sens} (in particular just the $D_{\alpha}(M(X')||M(X))$ upper-bound unless otherwise stated) and plot results analogous to the previous subsections. In computing Theorem~\ref{thm:renyi_dp_sens} guarantees we use 20 samples for both the inner and outer expectations.

We re-use the standard R\'enyi-DP analysis as our baseline, as also used in Section~\ref{ssec:eval_easy_renyi}. In the case of the mean update rule, this means taking our sensitivity norm to be the clipping norm $C = 1.0$ as used in training.

\paragraph{Comparison to Baseline.} We see in Figures~\ref{fig:renyi_hard_eps_distrib_bs_mnist_mean},\ref{fig:renyi_hard_eps_distrib_bs_cifar_mean} that our analysis for the mean beats the baseline analysis of the R\'enyi-DP guarantee. In particular, we see that for checkpoints in the middle and end of training, we give a privacy guarantee roughly a magnitude better than the baseline. In the case of the sum we see we do significantly worse than the baseline with this analysis as shown in Figures~\ref{fig:renyi_hard_eps_distrib_bs_mnist_sum},\ref{fig:renyi_hard_eps_distrib_bs_cifar_sum}.

\paragraph{Varying Expected Batch Size.} We see in Figures~\ref{fig:renyi_hard_eps_distrib_bs_mnist_sum},\ref{fig:renyi_hard_eps_distrib_bs_cifar_sum} for the sum update rule, both our and the baseline's guarantee increases (with our guarantee continuing to be worse than the baseline). However, for the mean update rule our guarantee decreases where as the baseline still increases, as show in Figures~\ref{fig:renyi_hard_eps_distrib_bs_mnist_mean},\ref{fig:renyi_hard_eps_distrib_bs_cifar_mean}. 

\paragraph{Varying Alphas.} In Figure~\ref{fig:renyi_hard_eps_distrib_alpha_mnist_mean},\ref{fig:renyi_hard_eps_distrib_alpha_cifar_mean} we see that our guarantee for the mean update rule does better than the baseline for sufficiently large alpha, but for small alpha (e.g., $\alpha = 2,4$) does worse. Varying $\alpha$ does not make the guarantee given by Theorem~\ref{thm:renyi_dp_sens} for the sum better than the baseline (Figures~\ref{fig:renyi_hard_eps_distrib_alpha_mnist_sum},\ref{fig:renyi_hard_eps_distrib_alpha_cifar_sum}).

\paragraph{Changing DP strength used to get checkpoints.} In Figure~\ref{fig:ed_eps_curve_vary_eps_mnist} we see that changing the strength of DP used to get the checkpoints also increases our guarantees.

\paragraph{The Reverse Divergence.}
In Figures~\ref{fig:reverse_renyi} and~\ref{fig:reverse_renyi_cifar} we plot the upper-bound for $D_{\alpha}(M(X)||M(X'))$ from Theorem~\ref{thm:renyi_dp_sens} for models trained on MNIST and CIFAR-10 respectively, and conclude we once again do better than the baseline for the mean update rule. Hence, as we do better than the baseline for both divergences, we conclude the analysis of Theorem~\ref{thm:renyi_dp_sens} gives tighter guarantees for the mean update rule.

\begin{figure}[t]
\centering
\subfloat[Mini-batch size = 16]
{
\includegraphics[width=0.48\linewidth]{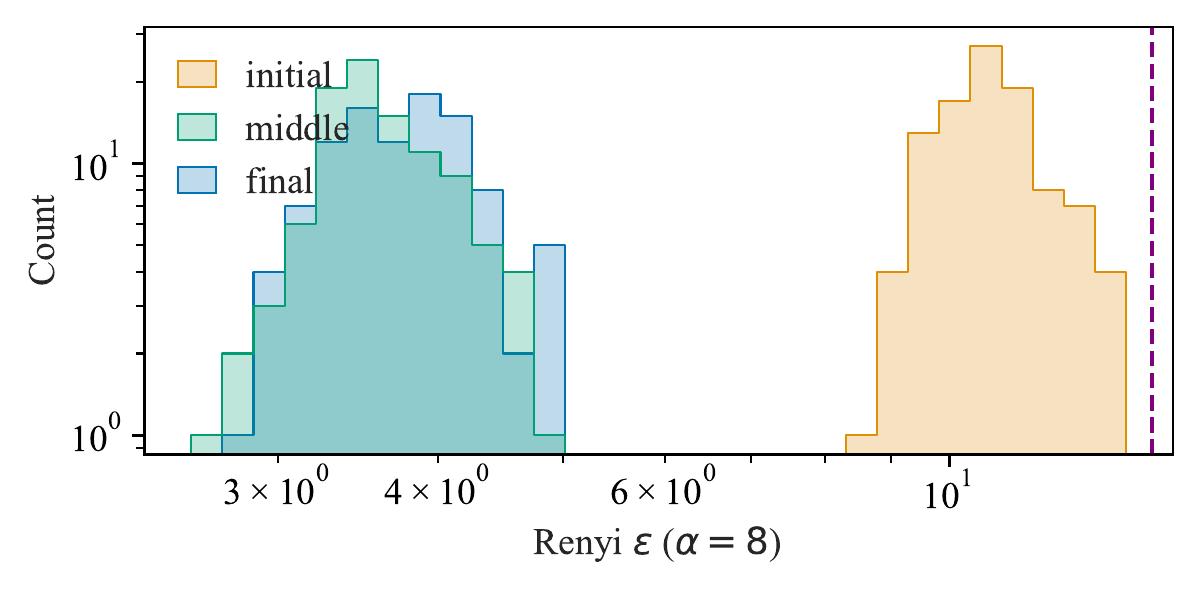}
}
\subfloat[Mini-batch size = 32]
{
\includegraphics[width=0.48\linewidth]{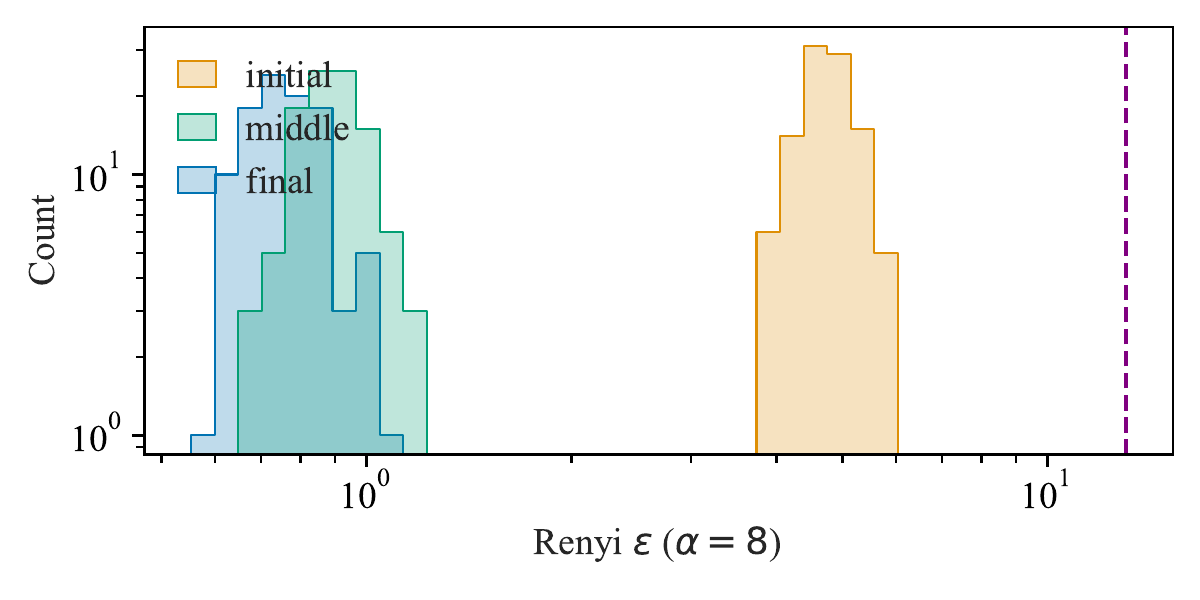}
}
\\\subfloat[Mini-batch size = 64]
{
\includegraphics[width=0.48\linewidth]{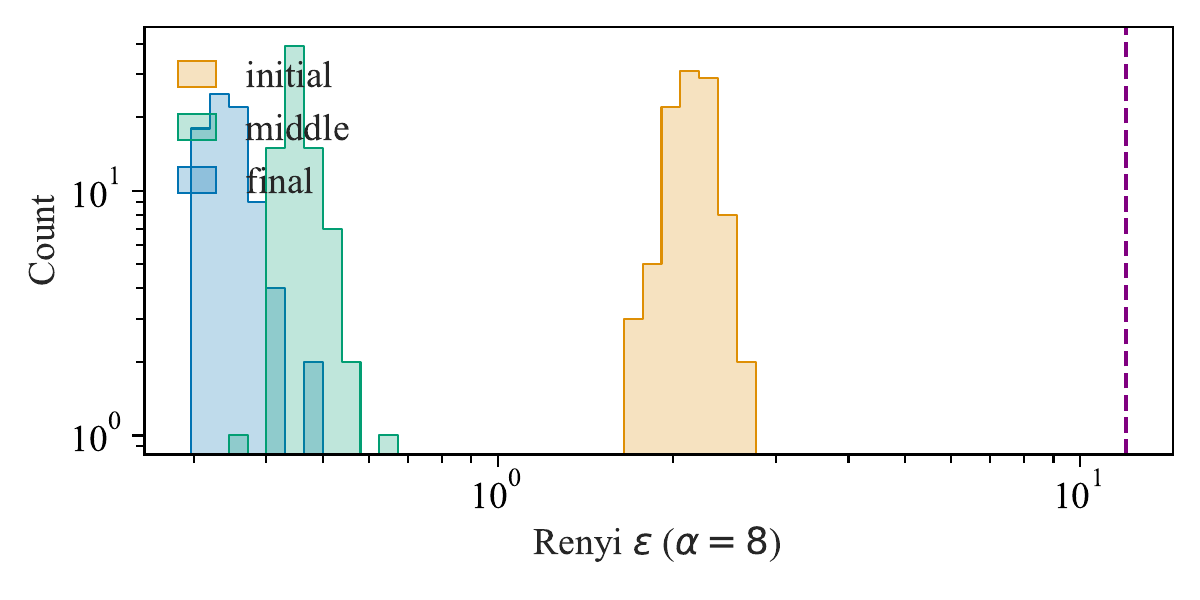}
}
\subfloat[Mini-batch size = 128]
{
\includegraphics[width=0.48\linewidth]{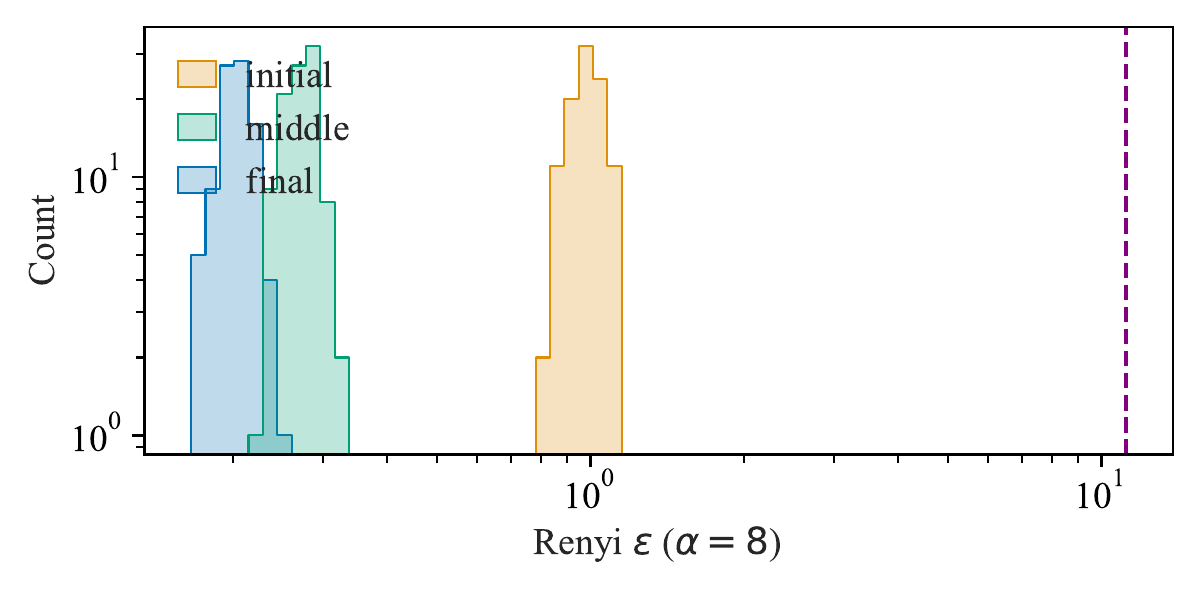}
}
\\\subfloat[Mini-batch size = 256]
{
\includegraphics[width=0.48\linewidth]{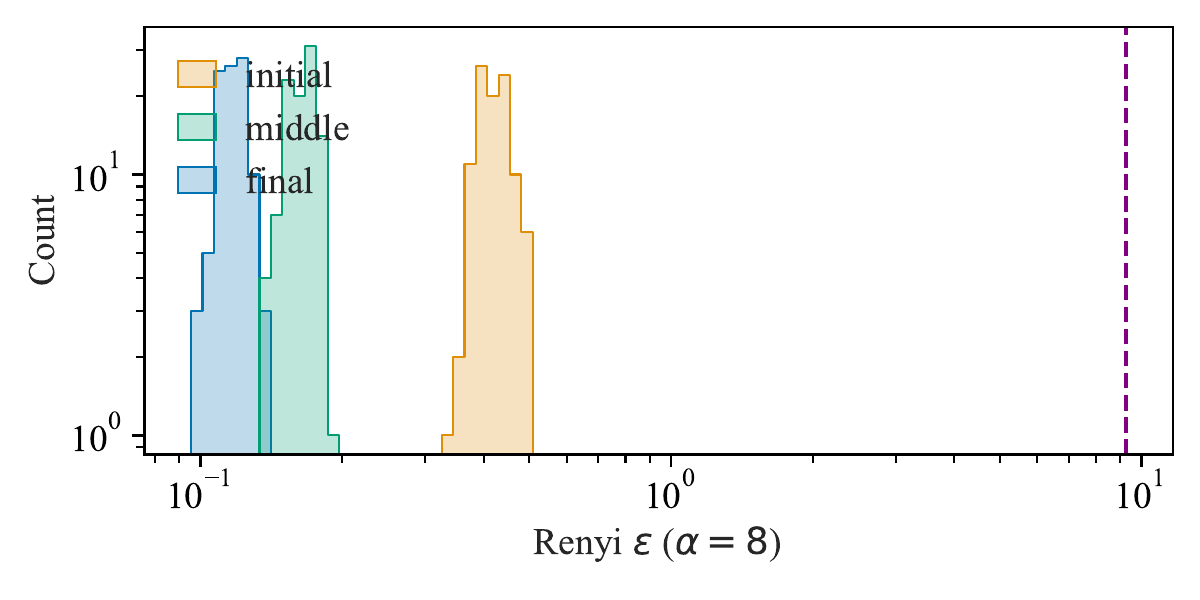}
}
\subfloat[Mini-batch size = 512]
{
\includegraphics[width=0.48\linewidth]{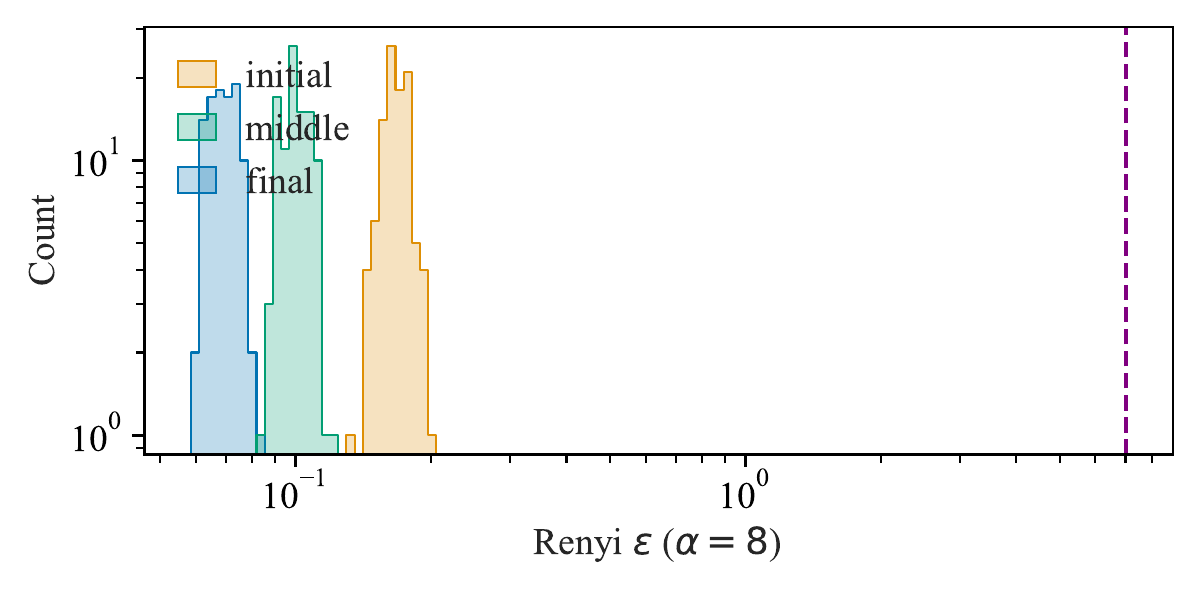}
}

\caption{ Distribution plots (log scale) of per-step guarantees given by Theorem~\ref{thm:renyi_dp_sens} computed on LeNet-5 trained on MNIST with mean update rule and varying mini-batch sizes of $16, 32, 64, 128, 256, 512$. As specified by the legend labels, we group the plotted guarantees by whether the model is at the initial, middle, or final stage of the training. It can be seen that in all settings our guarantee is better than the baseline, which is represented by the dashed purple line. However, the guarantee distributions of points at the initial stage of training are closer to the baseline compared to the other distributions.
Additionally, since a mean update rule is used, the bounds depend on the mini-batch size, and better bounds are achieved when the mini-batch size is large. 
}
\label{fig:renyi_hard_eps_distrib_bs_mnist_mean}
\end{figure}

\begin{figure}[t]
\centering
\subfloat[Mini-batch size = 16]
{
\includegraphics[width=0.48\linewidth]{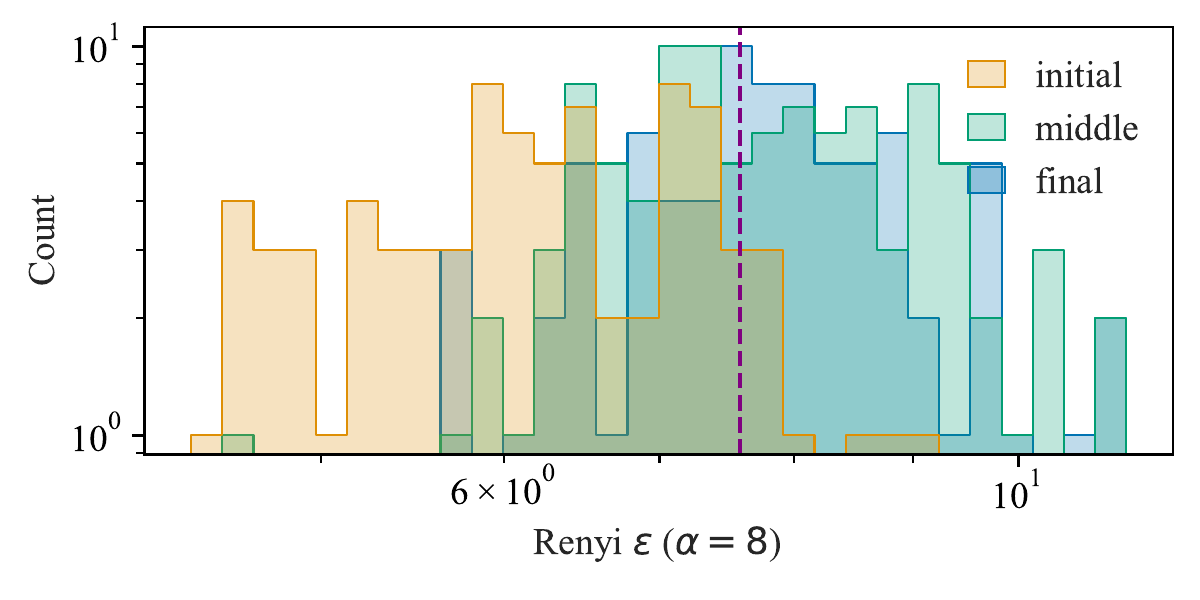}
}
\subfloat[Mini-batch size = 32]
{
\includegraphics[width=0.48\linewidth]{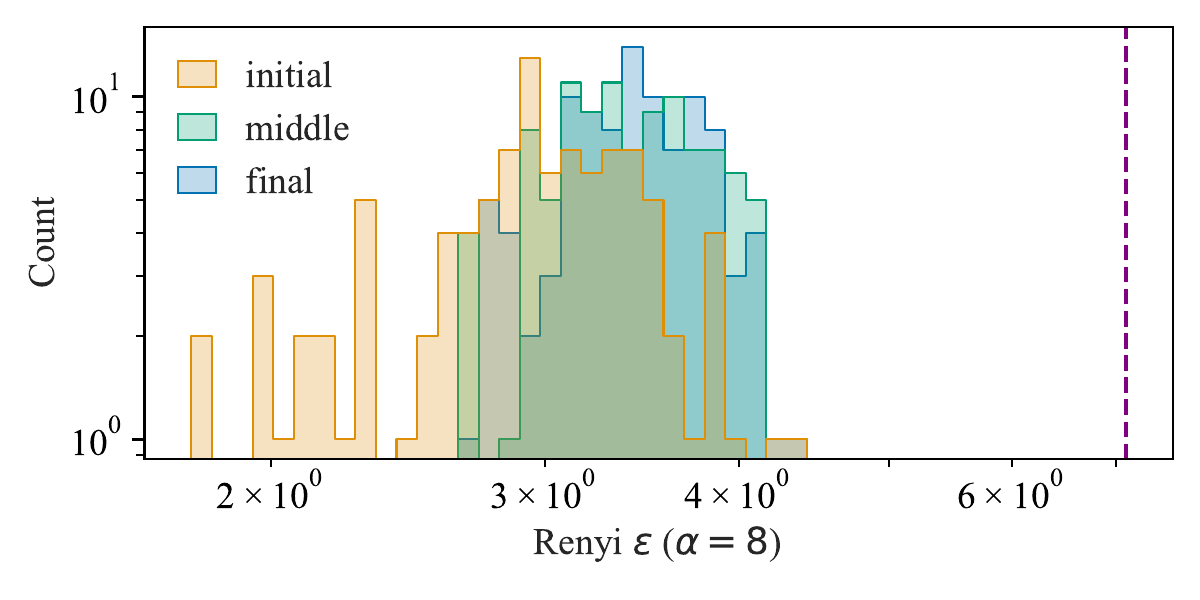}
}
\\\subfloat[Mini-batch size = 64]
{
\includegraphics[width=0.48\linewidth]{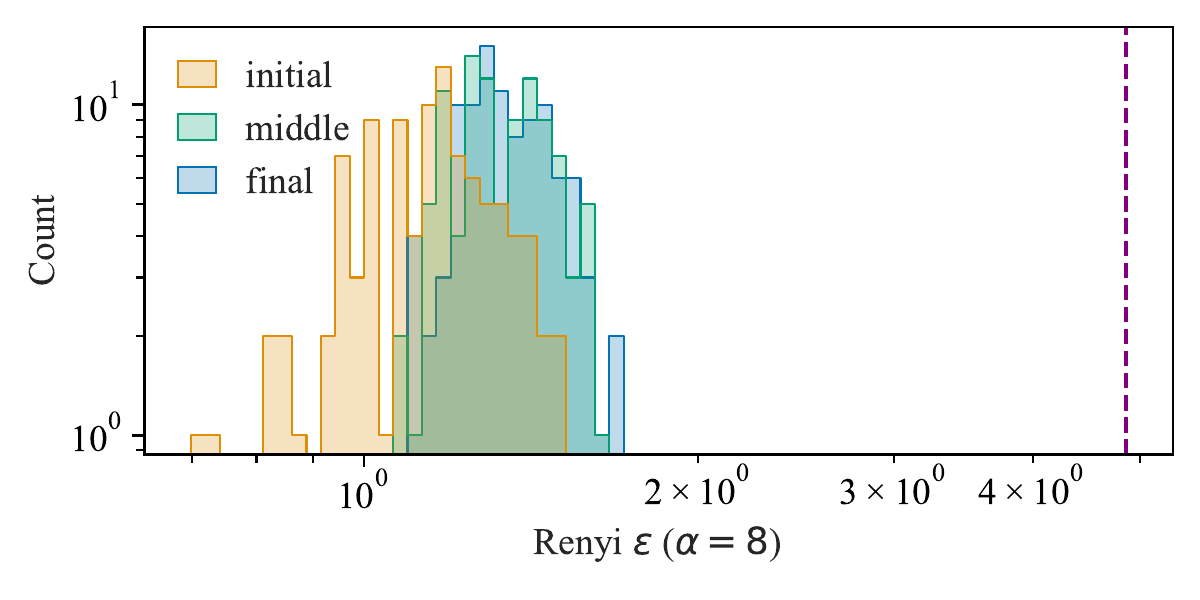}
}
\subfloat[Mini-batch size = 128]
{
\includegraphics[width=0.48\linewidth]{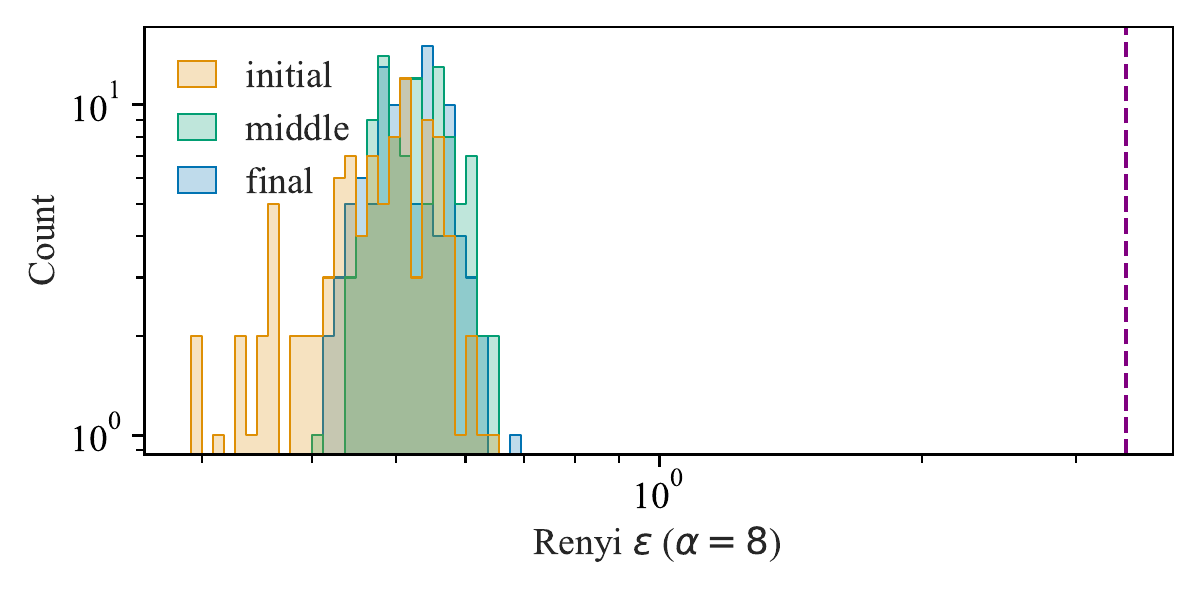}
}
\\\subfloat[Mini-batch size = 256]
{
\includegraphics[width=0.48\linewidth]{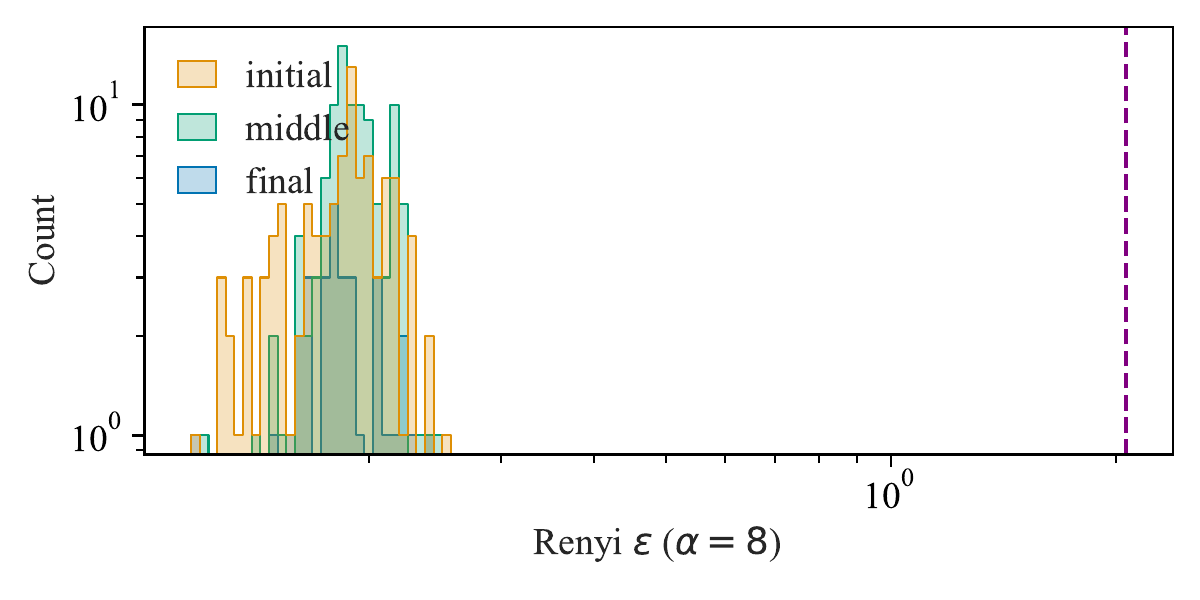}
}
\caption{ This is the reproduction of Figure~\ref{fig:renyi_hard_eps_distrib_bs_mnist_mean} except we now use ResNet-20 models trained on CIFAR-10. Similar results are observed so we conclude that the guarantee given by Theorem~\ref{thm:renyi_dp_sens} is effective across datasets.
}
\label{fig:renyi_hard_eps_distrib_bs_cifar_mean}
\end{figure}

\begin{figure}[t]
\centering
\subfloat[Alpha = 2]
{
\includegraphics[width=0.48\linewidth]{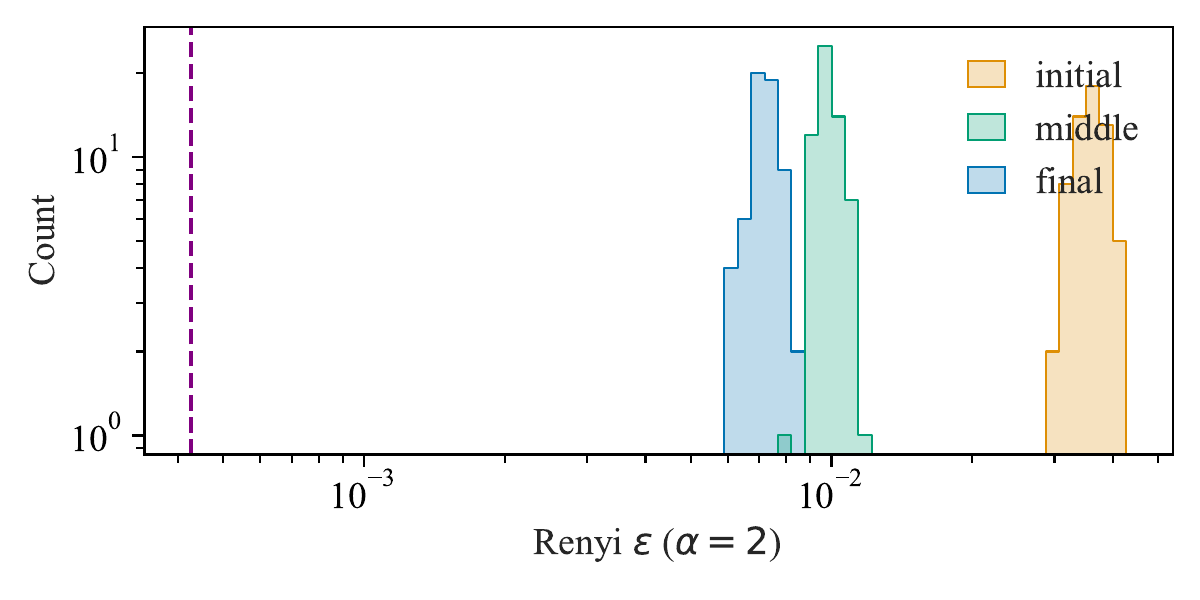}
}
\subfloat[Alpha = 4]
{
\includegraphics[width=0.48\linewidth]{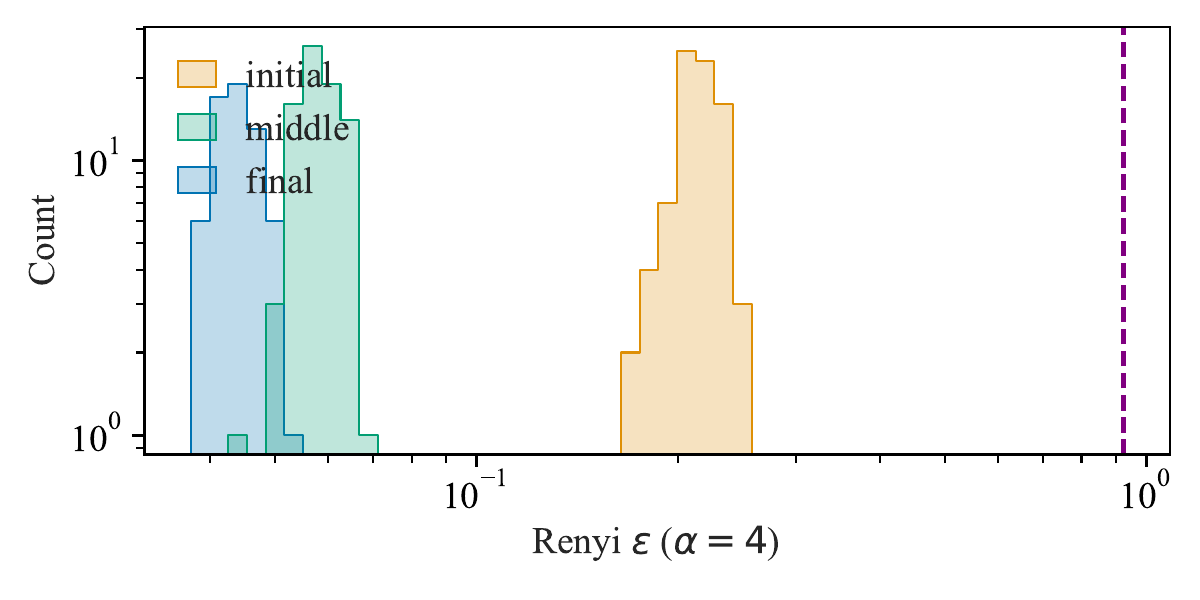}
}
\\\subfloat[Alpha = 8]
{
\includegraphics[width=0.48\linewidth]{figures/renyi_hard_eps_hist_MNIST_lenet_128_8_mean.pdf}
}
\subfloat[Alpha = 16]
{
\includegraphics[width=0.48\linewidth]{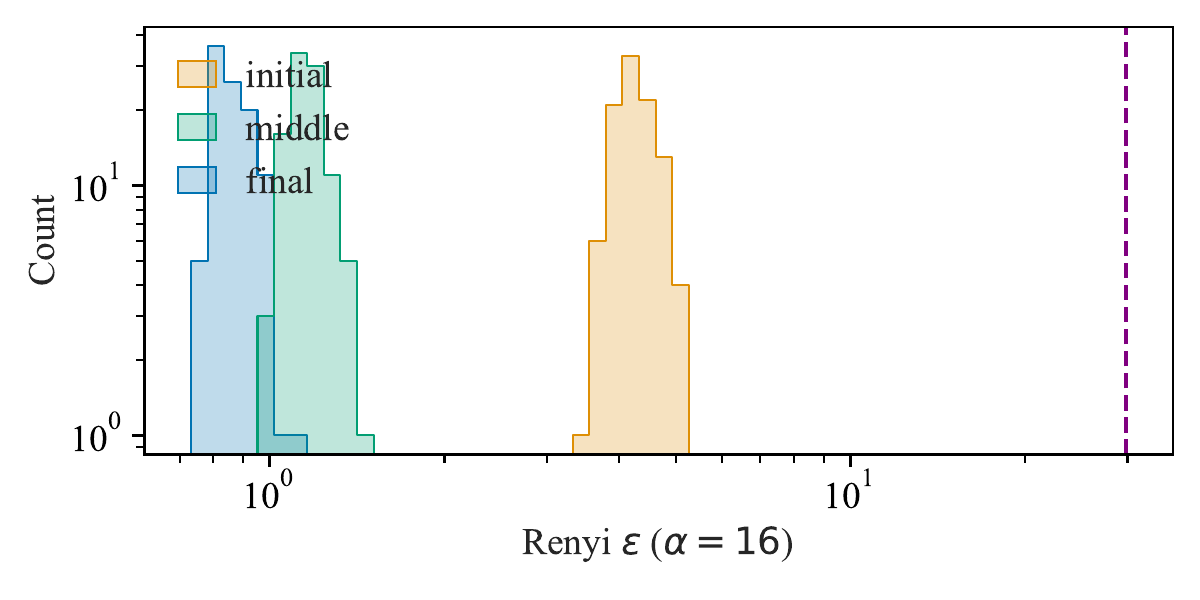}
}
\\\subfloat[Alpha = 32]
{
\includegraphics[width=0.48\linewidth]{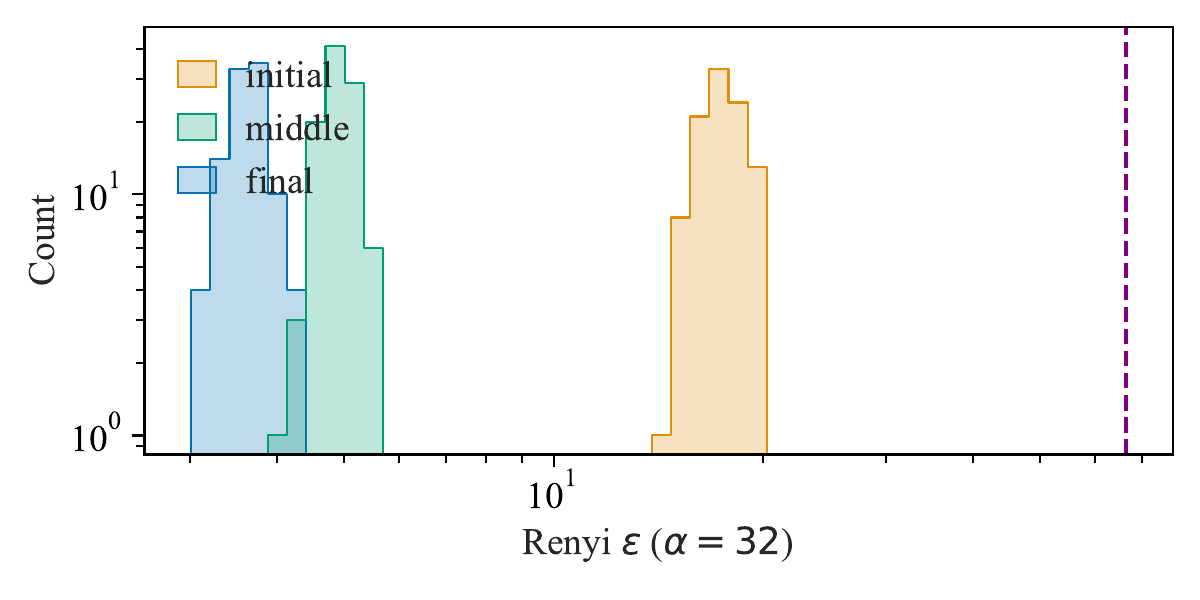}
}
\subfloat[Alpha = 64]
{
\includegraphics[width=0.48\linewidth]{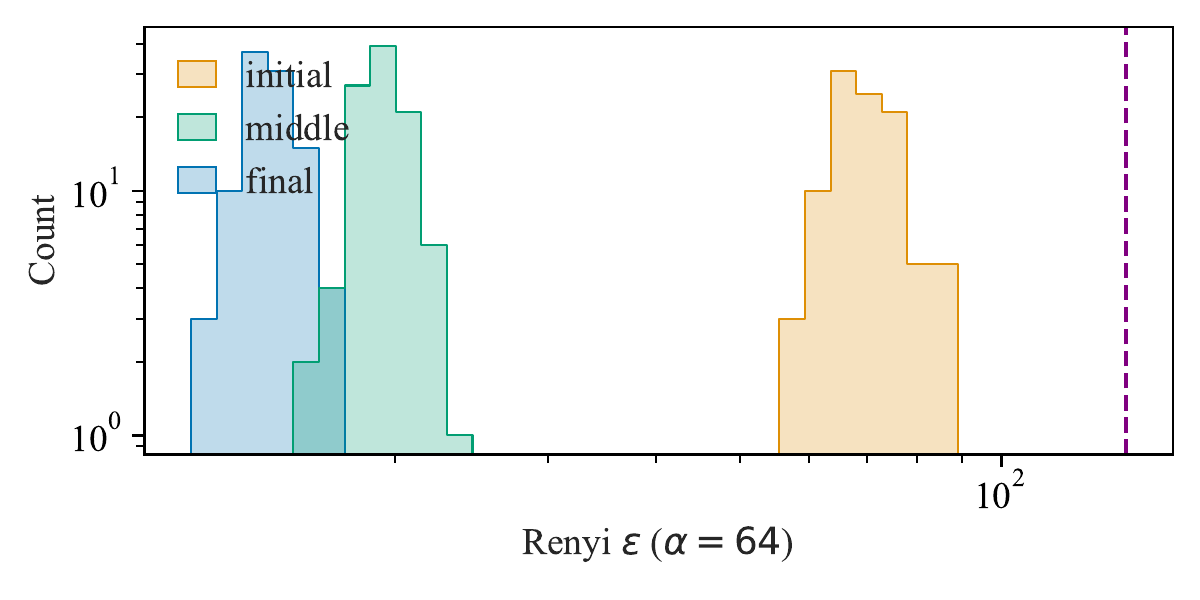}
}
\caption{ This is the reproduction of Figure~\ref{fig:renyi_hard_eps_distrib_bs_mnist_mean} except we now fix the batch size to be 128 and vary the value of $\alpha$. It can be seen that our guarantee is worse than the baseline when $\alpha=2$ and outperforms the latter in all other settings of $\alpha$. This suggests our guarantee is favored when $\alpha$ is large.
}
\label{fig:renyi_hard_eps_distrib_alpha_mnist_mean}
\end{figure}

\begin{figure}[t]
\centering
\subfloat[Alpha = 2]
{
\includegraphics[width=0.48\linewidth]{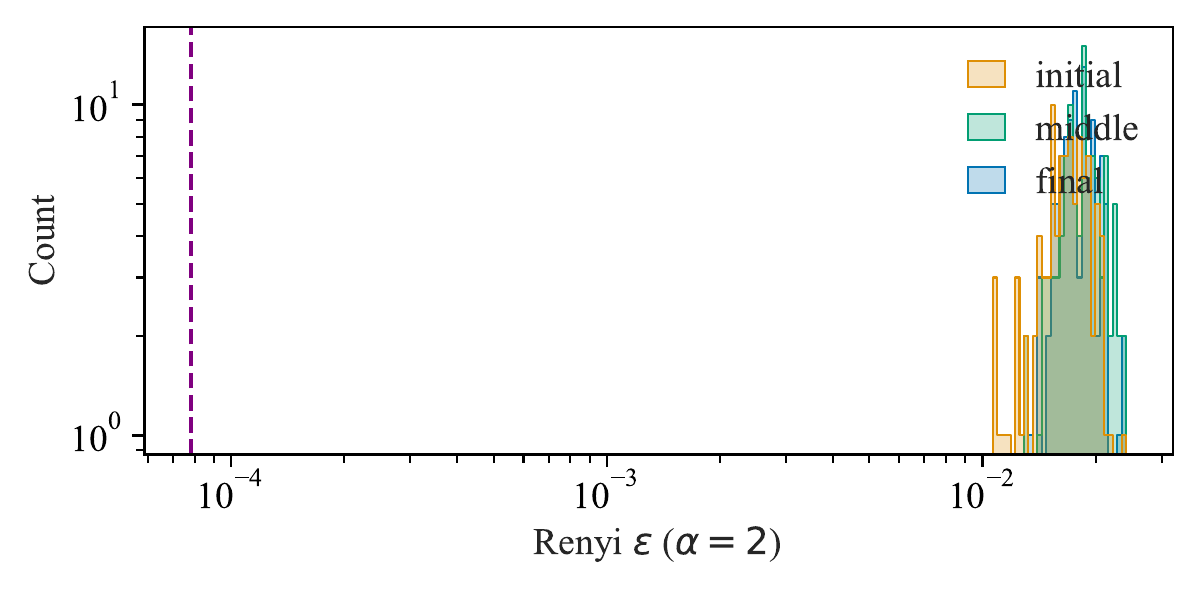}
}
\subfloat[Alpha = 4]
{
\includegraphics[width=0.48\linewidth]{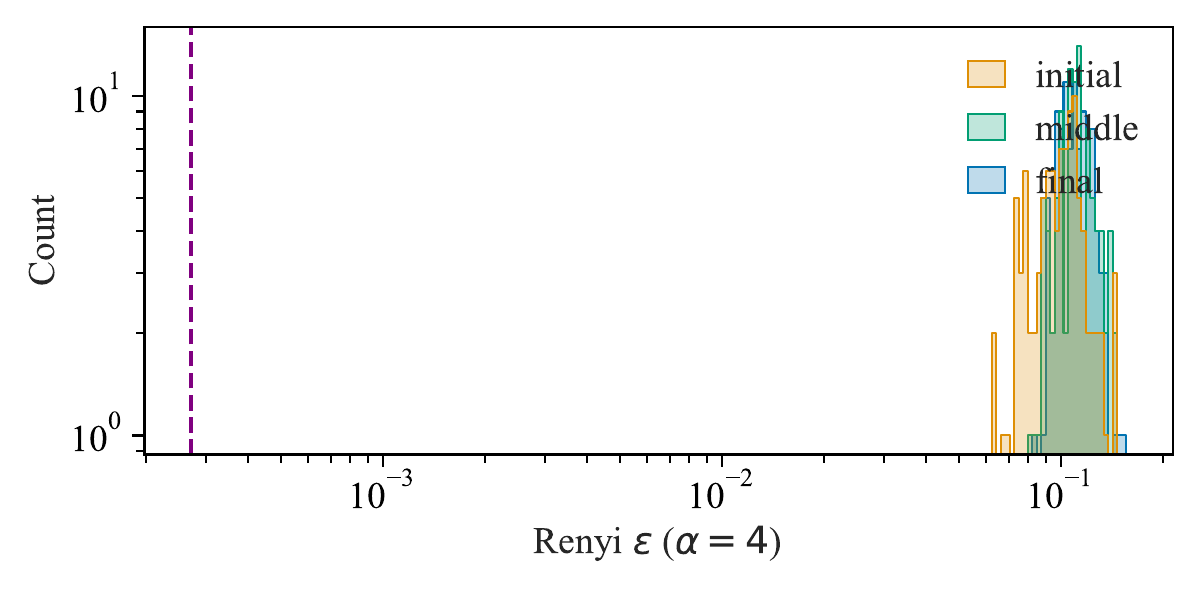}
}
\\\subfloat[Alpha = 8]
{
\includegraphics[width=0.48\linewidth]{figures/renyi_hard_eps_hist_CIFAR10_resnet20_128.0_8_mean.pdf}
}
\subfloat[Alpha = 16]
{
\includegraphics[width=0.48\linewidth]{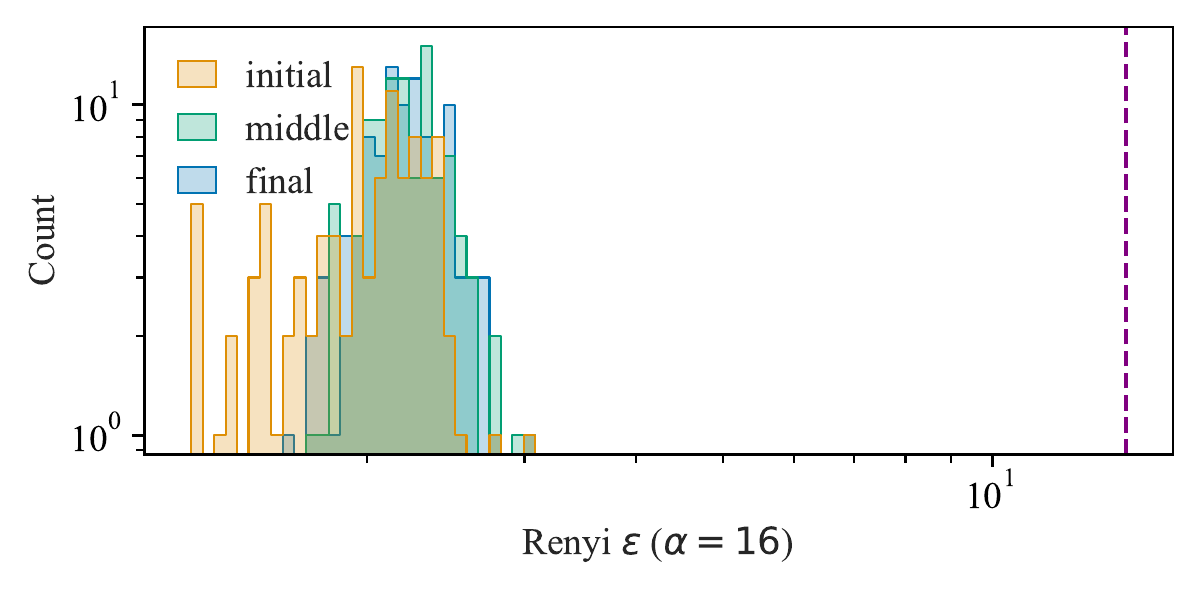}
}
\\\subfloat[Alpha = 32]
{
\includegraphics[width=0.48\linewidth]{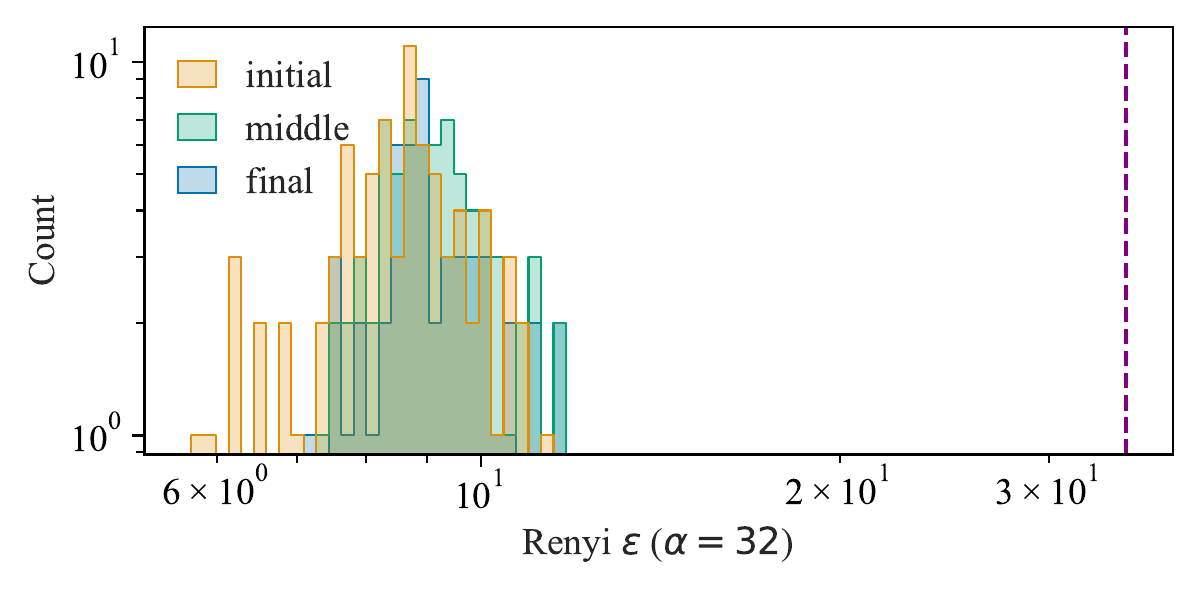}
}
\subfloat[Alpha = 64]
{
\includegraphics[width=0.48\linewidth]{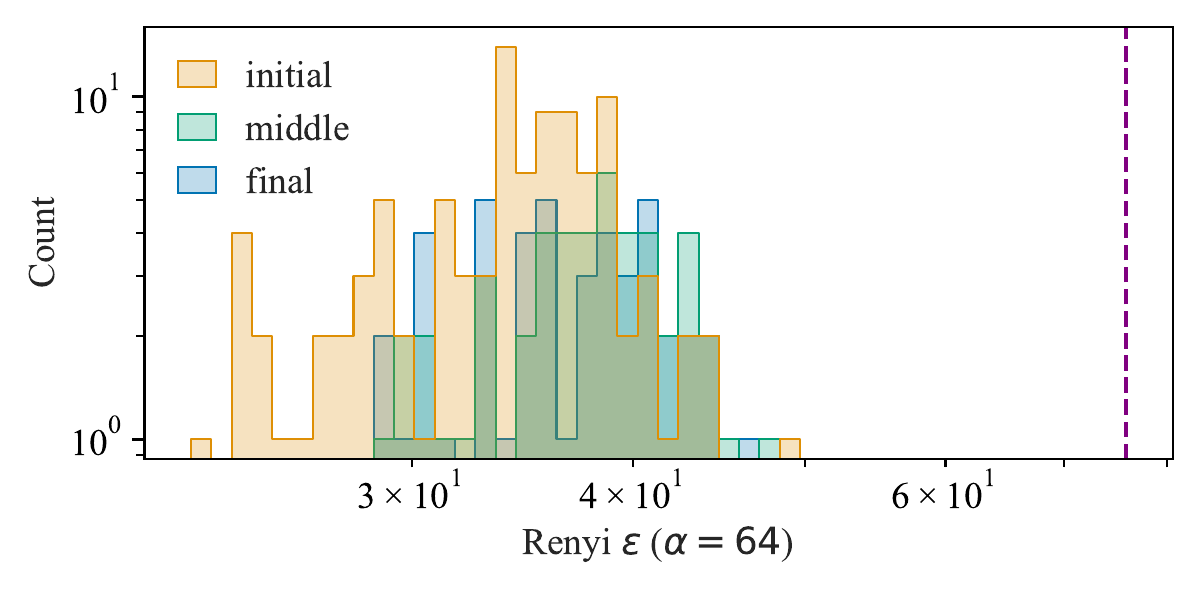}
}
\caption{ This is the reproduction of Figure~\ref{fig:renyi_hard_eps_distrib_alpha_mnist_mean} except we now use ResNet-20 models trained on CIFAR-10. Whereas our guarantees are worse than the baseline for $\alpha=2,4$, the conclusion that our guarantee is favored when $\alpha$ is large still holds.
}
\label{fig:renyi_hard_eps_distrib_alpha_cifar_mean}
\end{figure}

\begin{figure}[t]
\centering
\subfloat[Mini-batch size = 16]
{
\includegraphics[width=0.48\linewidth]{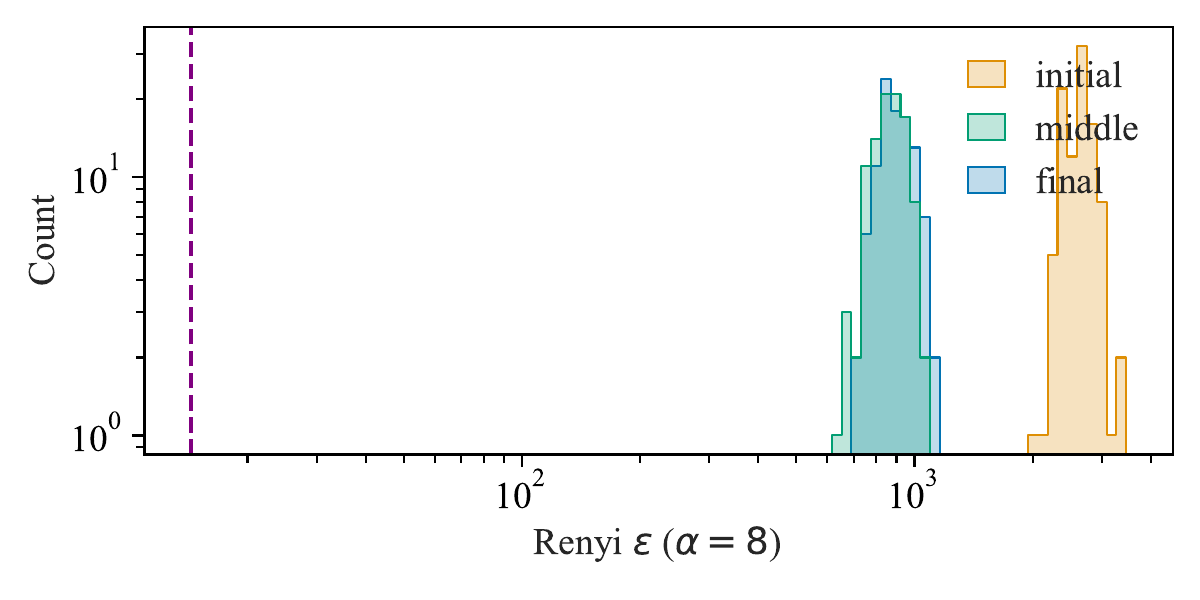}
}
\subfloat[Mini-batch size = 32]
{
\includegraphics[width=0.48\linewidth]{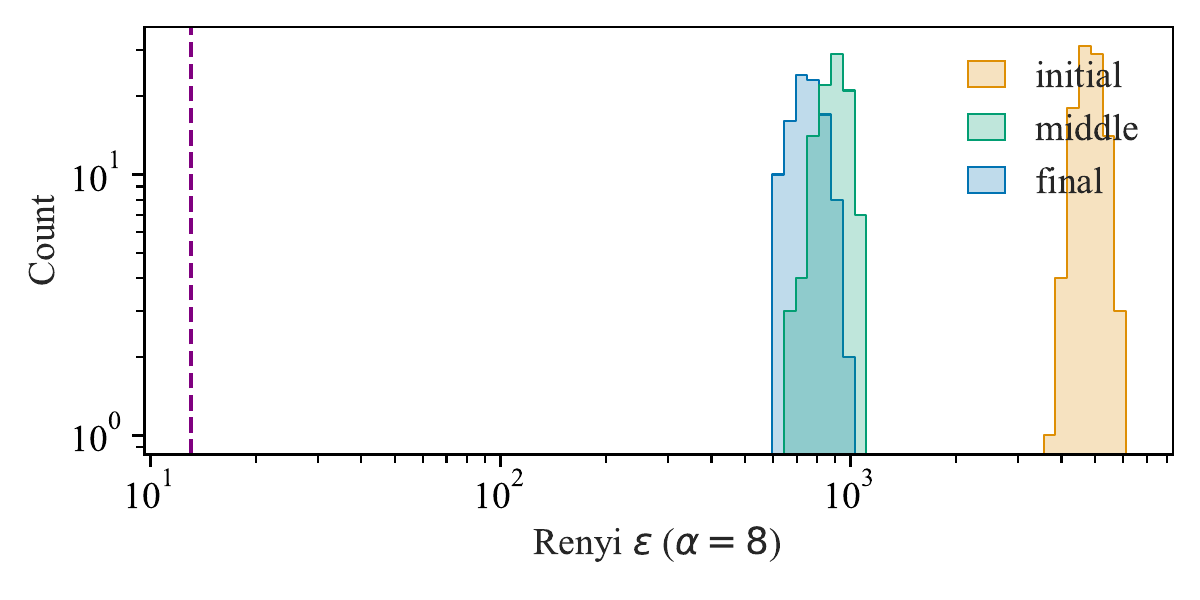}
}
\\\subfloat[Mini-batch size = 64]
{
\includegraphics[width=0.48\linewidth]{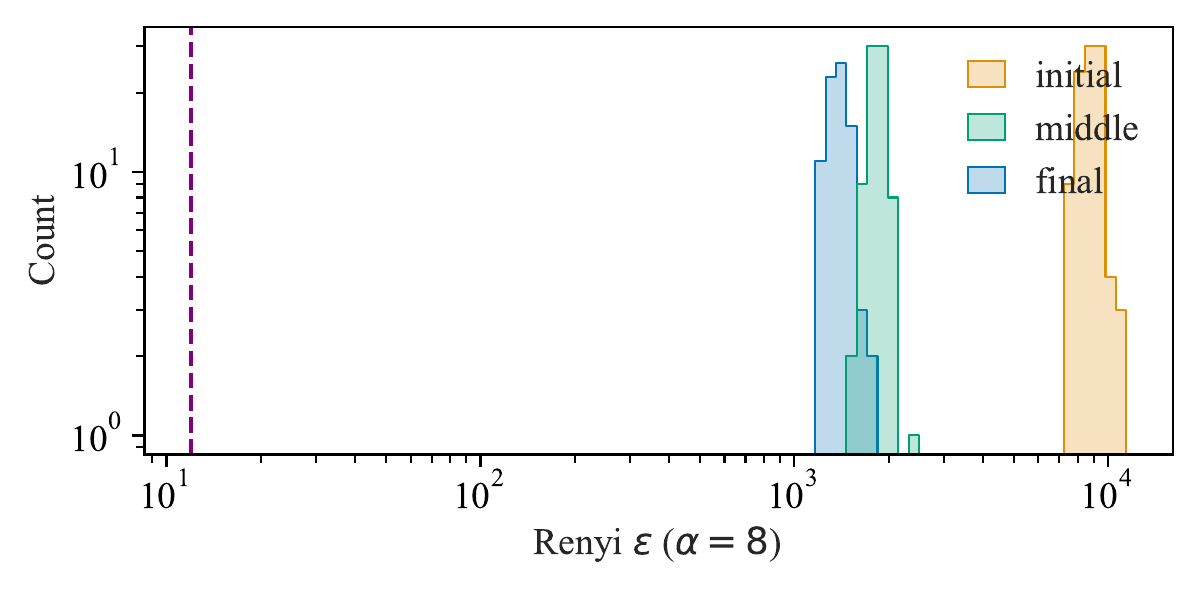}
}
\subfloat[Mini-batch size = 128]
{
\includegraphics[width=0.48\linewidth]{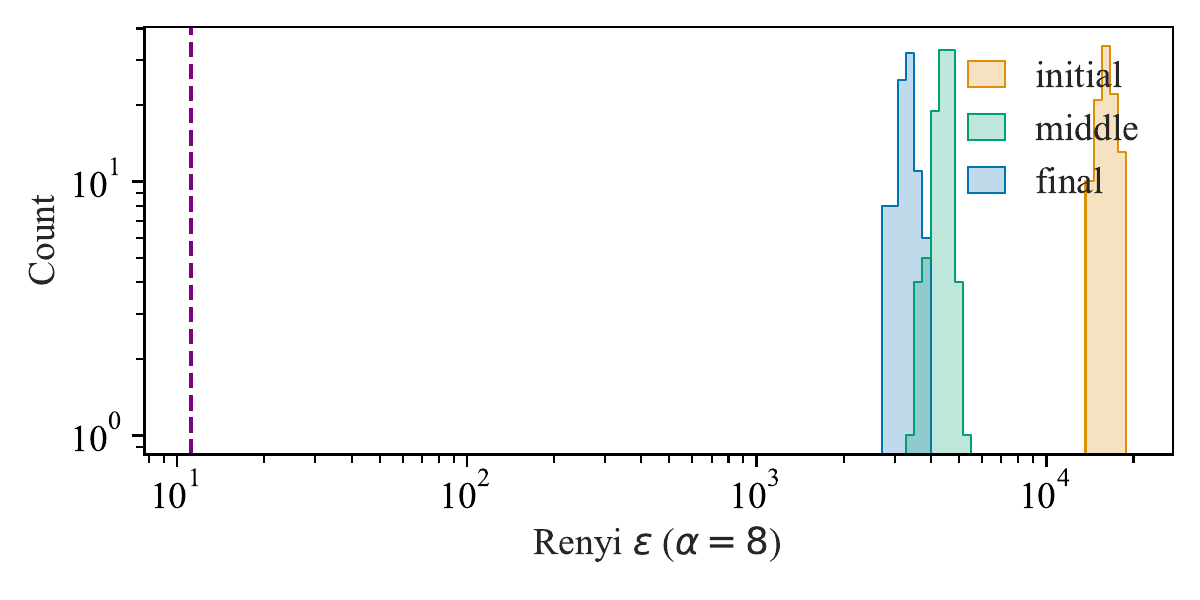}
}
\\\subfloat[Mini-batch size = 256]
{
\includegraphics[width=0.48\linewidth]{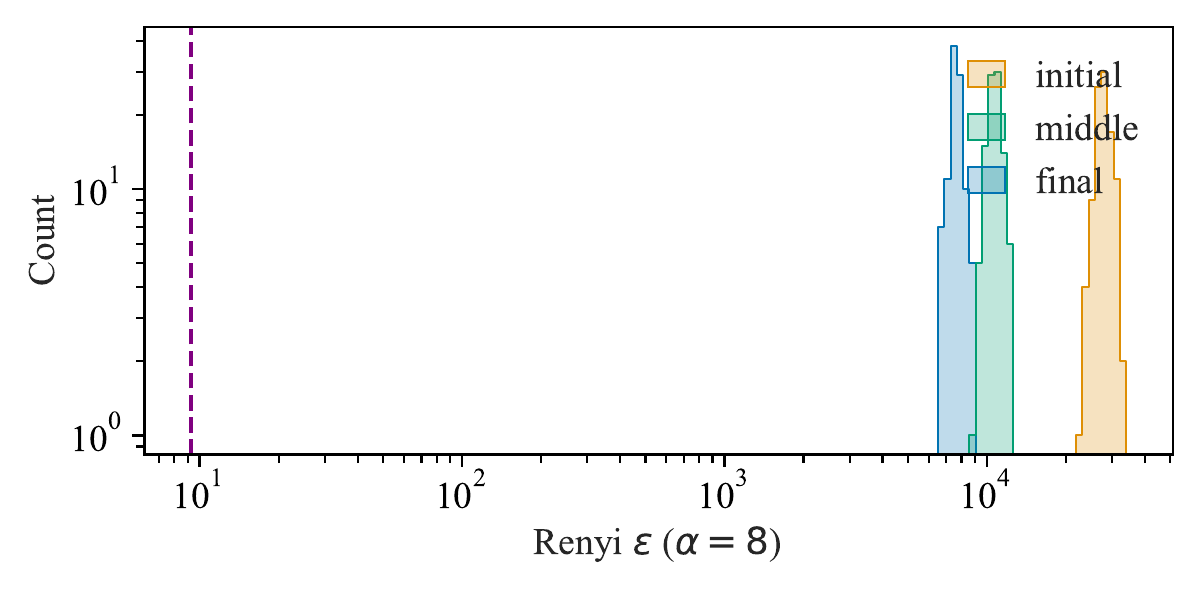}
}
\subfloat[Mini-batch size = 512]
{
\includegraphics[width=0.48\linewidth]{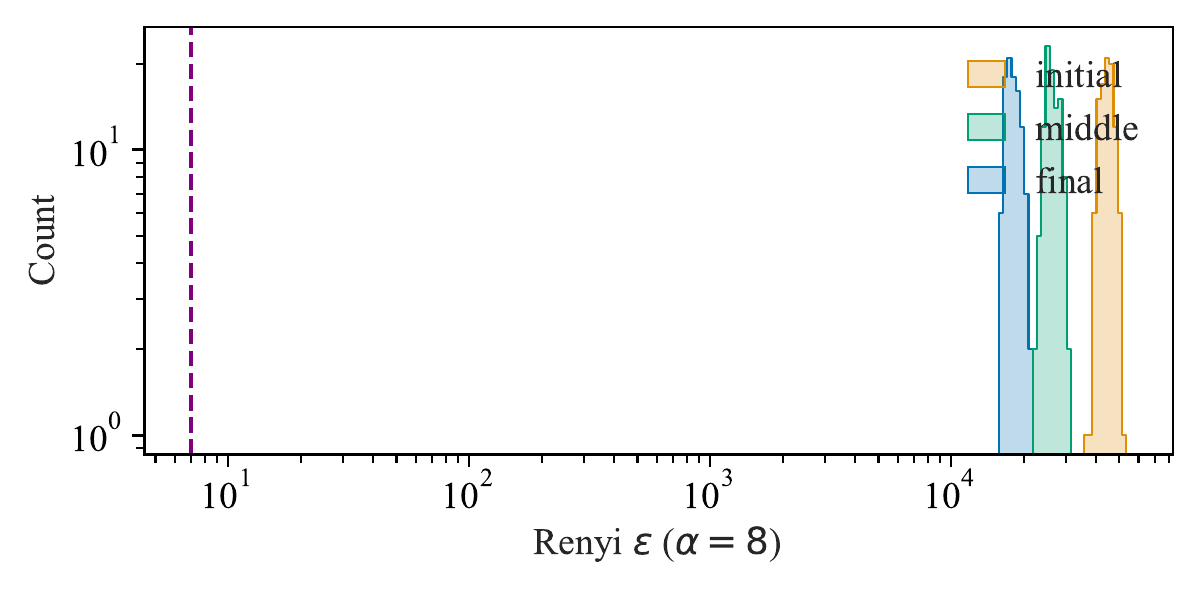}
}
\caption{ This is the reproduction of Figure~\ref{fig:renyi_hard_eps_distrib_bs_mnist_mean} except we now use a sum update rule. Unlike the mean update rule, we observe that the guarantees given by Theorem~\ref{thm:renyi_dp_sens} are significantly larger than the baseline guarantees. We suspect that this is because Theorem~\ref{thm:renyi_dp_sens} essentially computes how similar the gradient of any batch from $X$ is to any other batch from $X'$ in Euclidean space. In the absence of the factor that scale down the gradients by the mini-batch size, we do not expect these gradients concentrate at similar values.
}
\label{fig:renyi_hard_eps_distrib_bs_mnist_sum}
\end{figure}

\begin{figure}[t]
\centering
\subfloat[Mini-batch size = 16]
{
\includegraphics[width=0.48\linewidth]{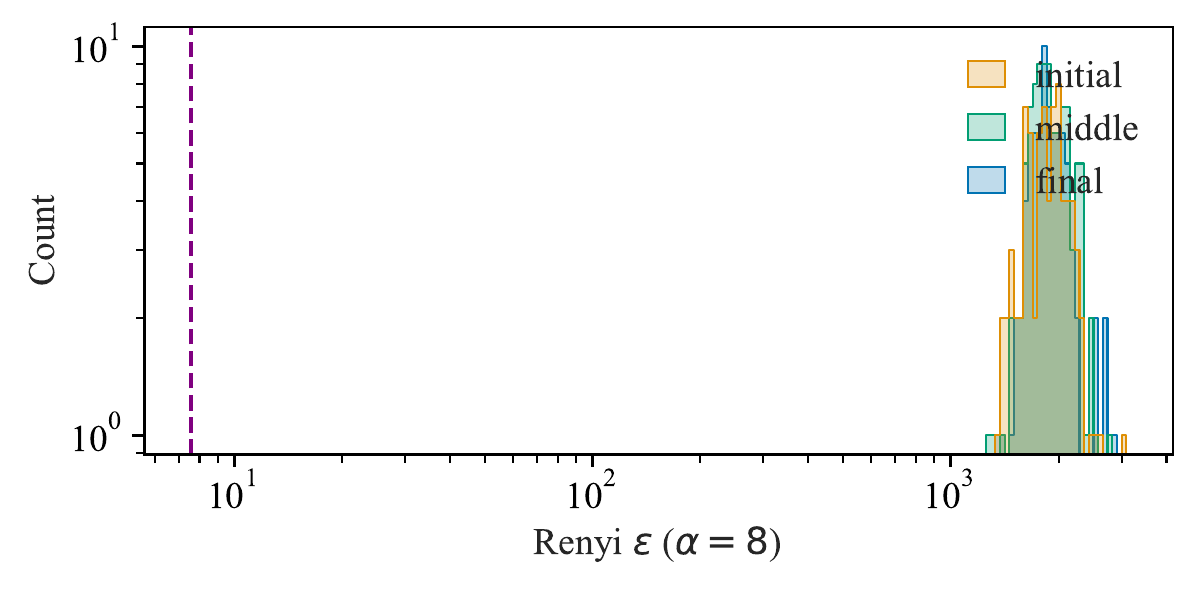}
}
\subfloat[Mini-batch size = 32]
{
\includegraphics[width=0.48\linewidth]{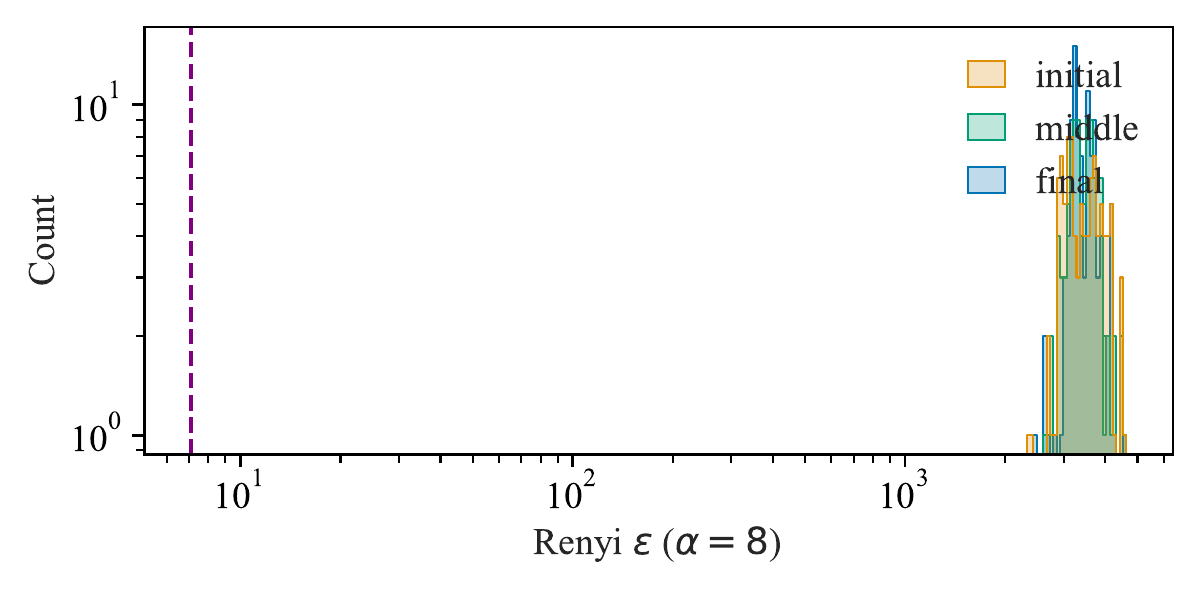}
}
\\\subfloat[Mini-batch size = 64]
{
\includegraphics[width=0.48\linewidth]{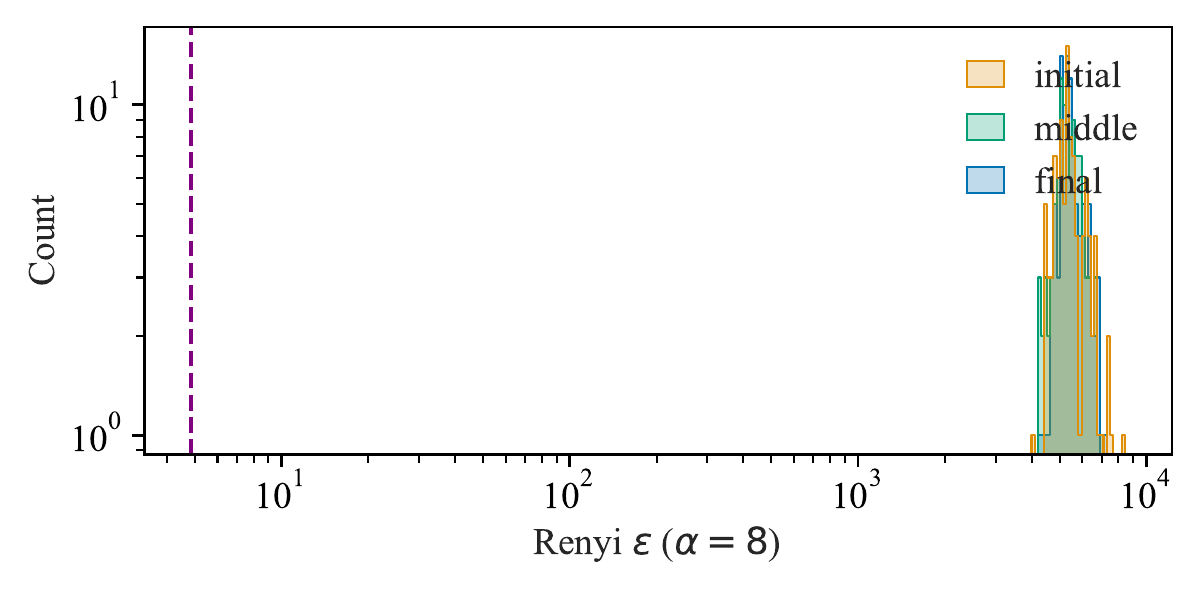}
}
\subfloat[Mini-batch size = 128]
{
\includegraphics[width=0.48\linewidth]{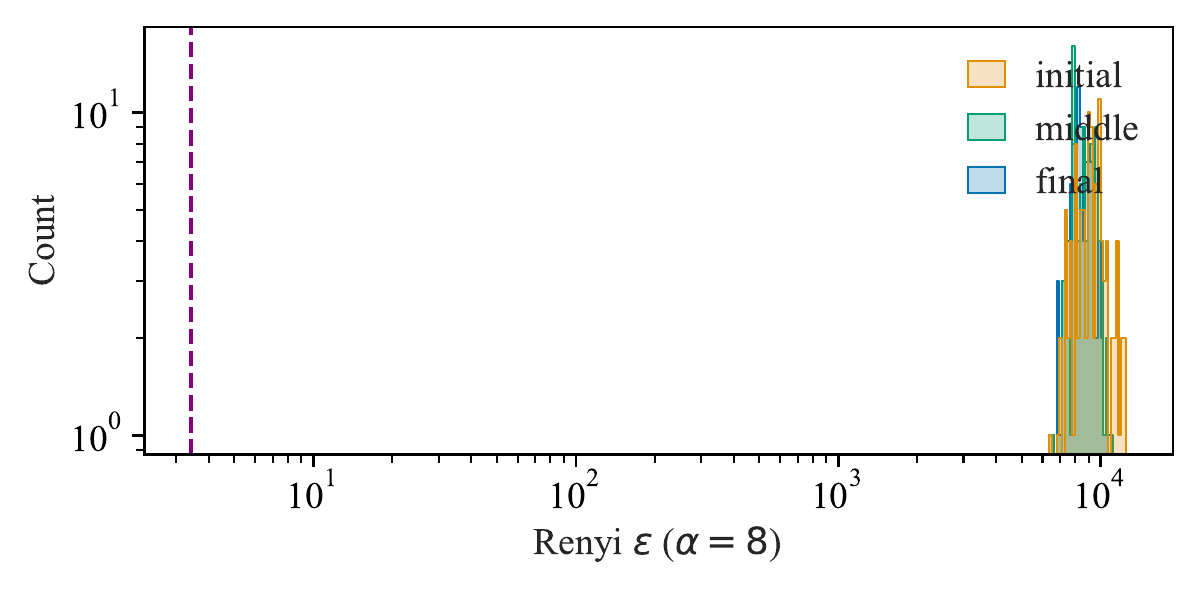}
}
\\\subfloat[Mini-batch size = 256]
{
\includegraphics[width=0.48\linewidth]{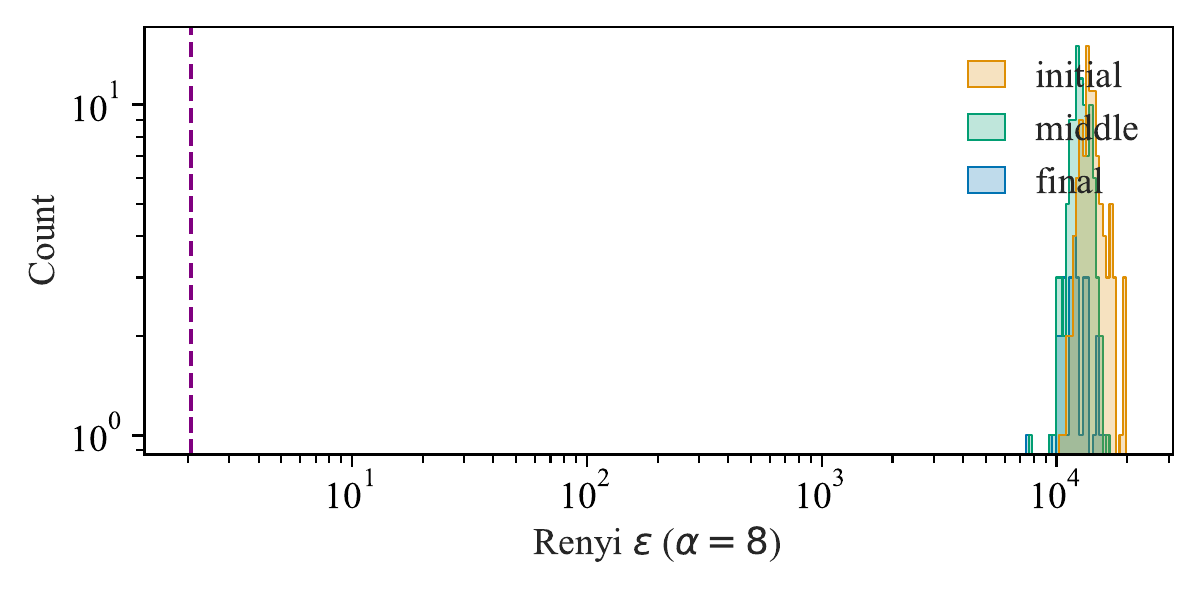}
}
\caption{ This is the reproduction of Figure~\ref{fig:renyi_hard_eps_distrib_bs_mnist_sum} except we now use ResNet-20 models trained on CIFAR-10. The results are similar.
}
\label{fig:renyi_hard_eps_distrib_bs_cifar_sum}
\end{figure}

\begin{figure}[t]
\centering
\subfloat[Alpha = 2]
{
\includegraphics[width=0.48\linewidth]{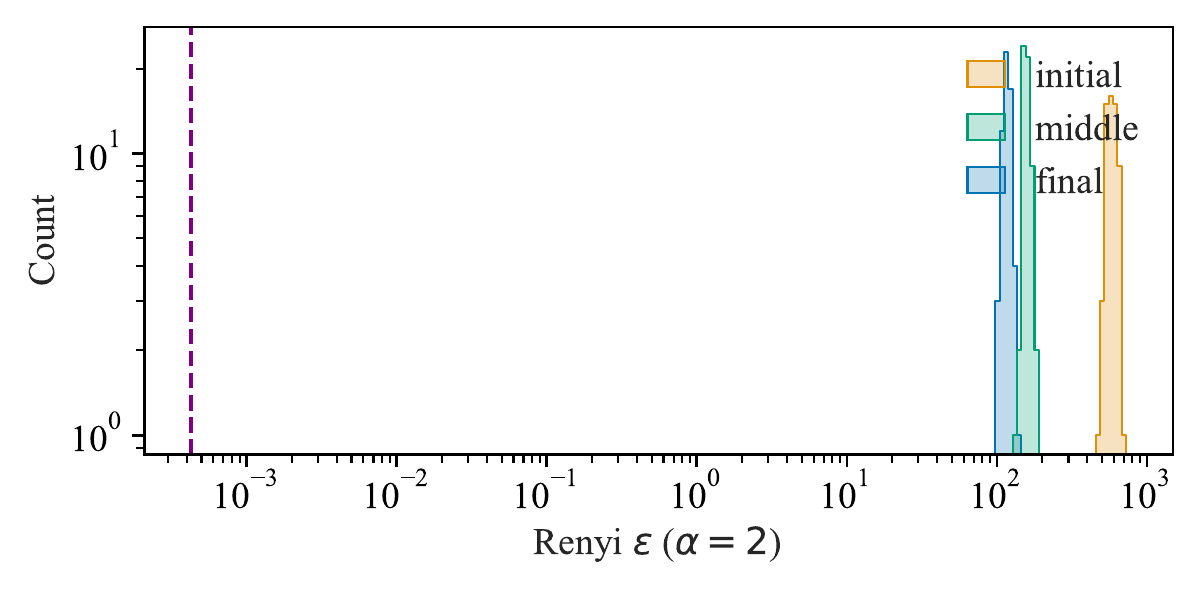}
}
\subfloat[Alpha = 4]
{
\includegraphics[width=0.48\linewidth]{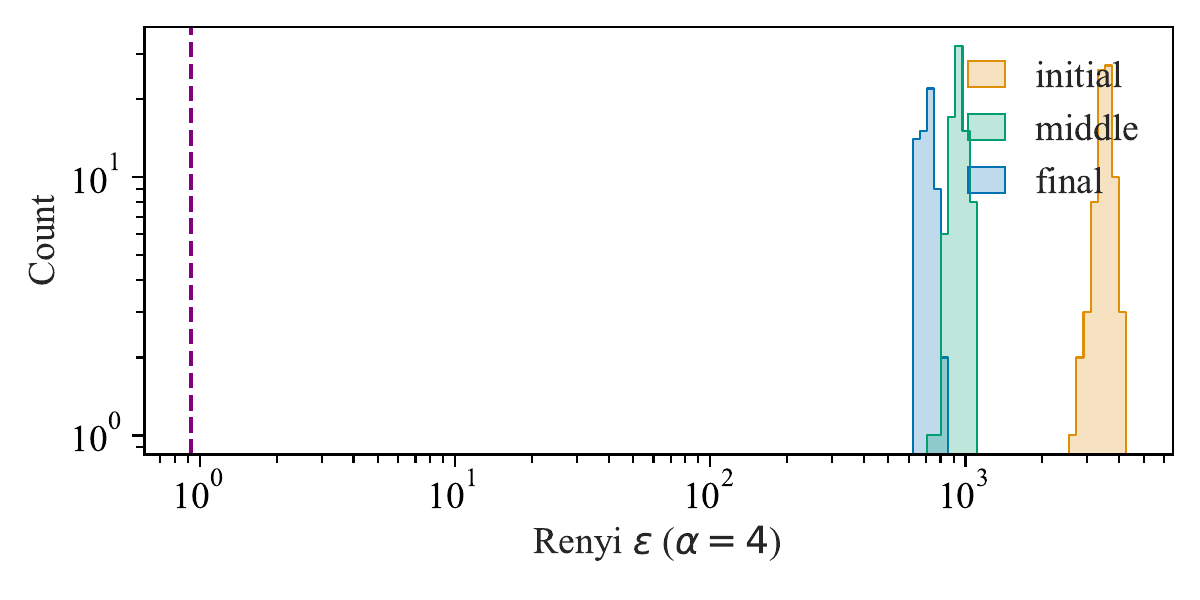}
}
\\\subfloat[Alpha = 8]
{
\includegraphics[width=0.48\linewidth]{figures/renyi_hard_eps_hist_MNIST_lenet_128_8_sum.pdf}
}
\subfloat[Alpha = 16]
{
\includegraphics[width=0.48\linewidth]{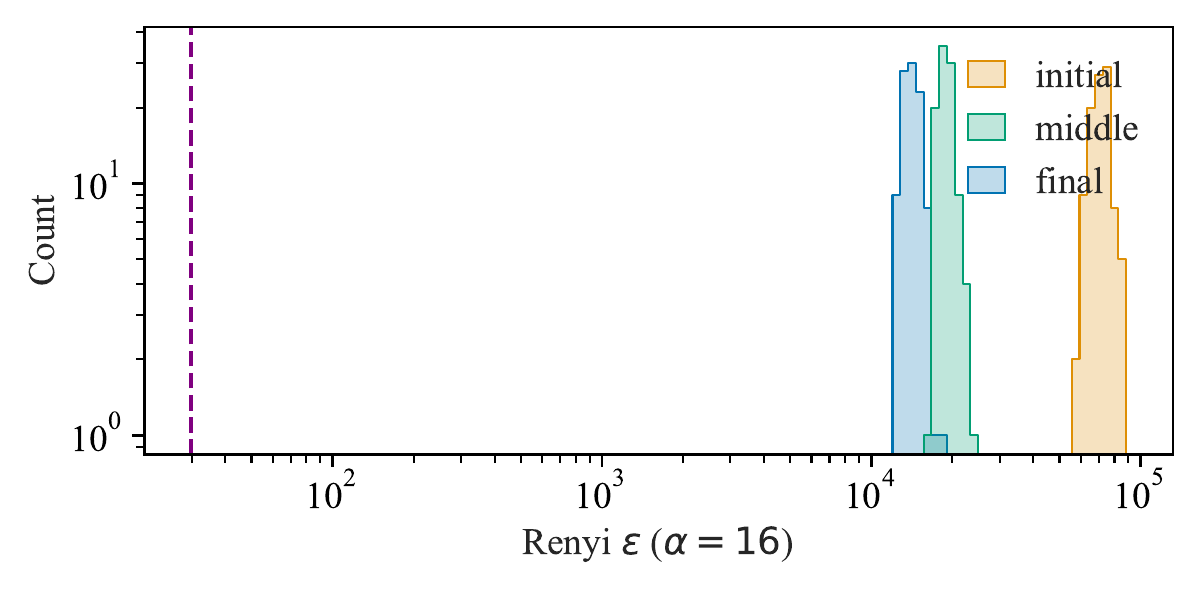}
}
\\\subfloat[Alpha = 32]
{
\includegraphics[width=0.48\linewidth]{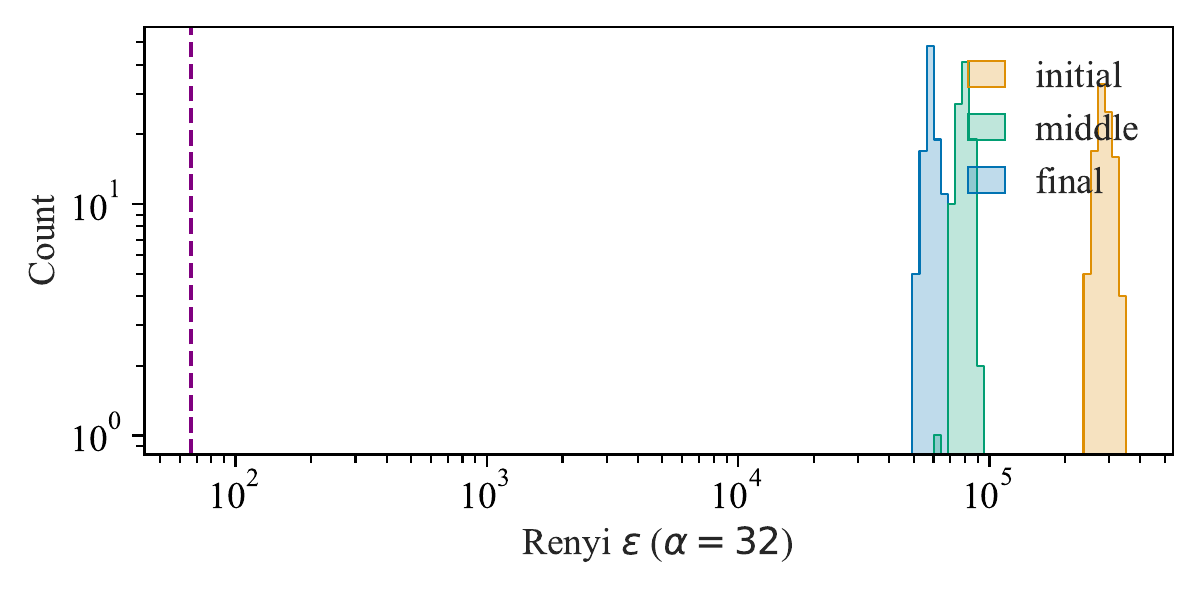}
}
\subfloat[Alpha = 64]
{
\includegraphics[width=0.48\linewidth]{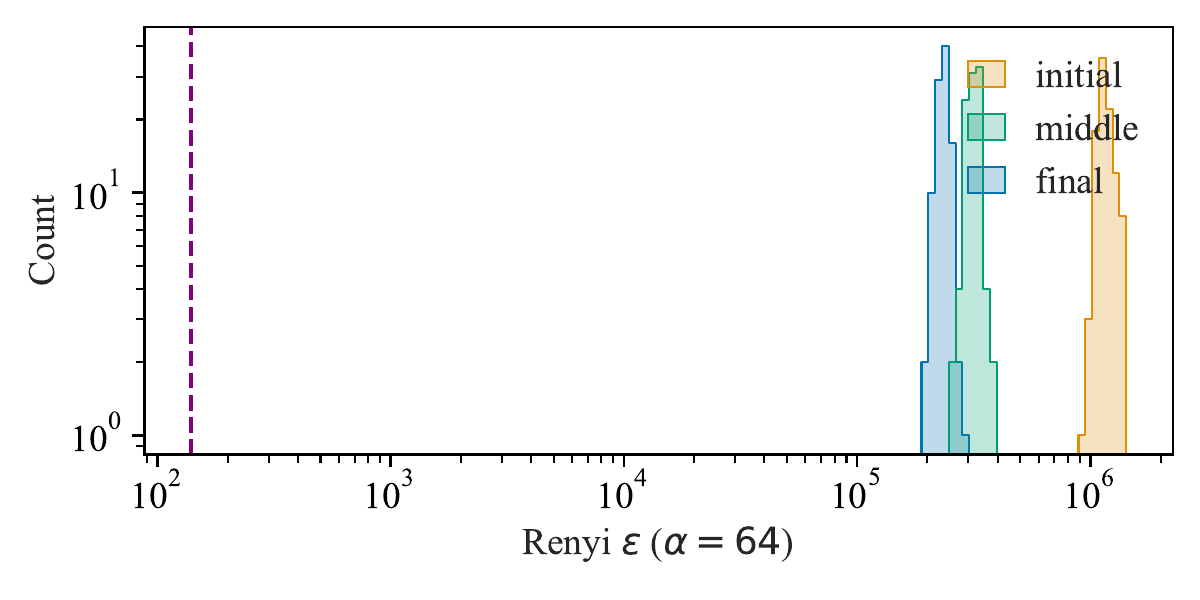}
}
\caption{ This is the reproduction of Figure~\ref{fig:renyi_hard_eps_distrib_bs_mnist_sum} except we now fix the batch size to be 128 and vary the value of $\alpha$. It can be seen that increasing alpha does not bring our guarantee and the baseline guarantee closer.
}
\label{fig:renyi_hard_eps_distrib_alpha_mnist_sum}
\end{figure}

\begin{figure}[t]
\centering
\subfloat[Alpha = 2]
{
\includegraphics[width=0.48\linewidth]{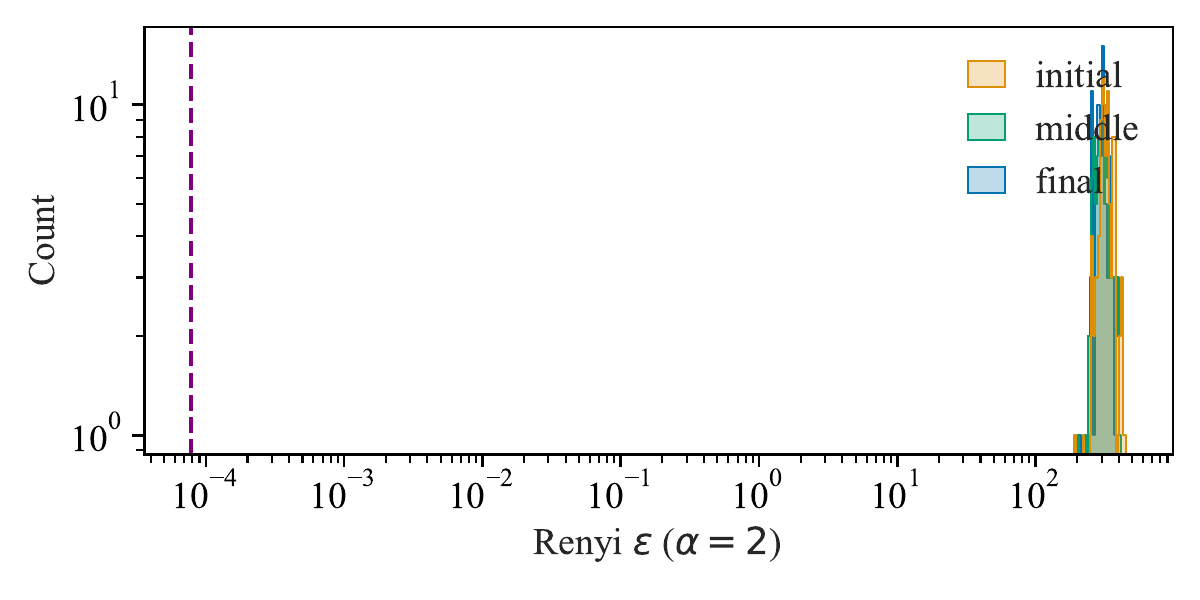}
}
\subfloat[Alpha = 4]
{
\includegraphics[width=0.48\linewidth]{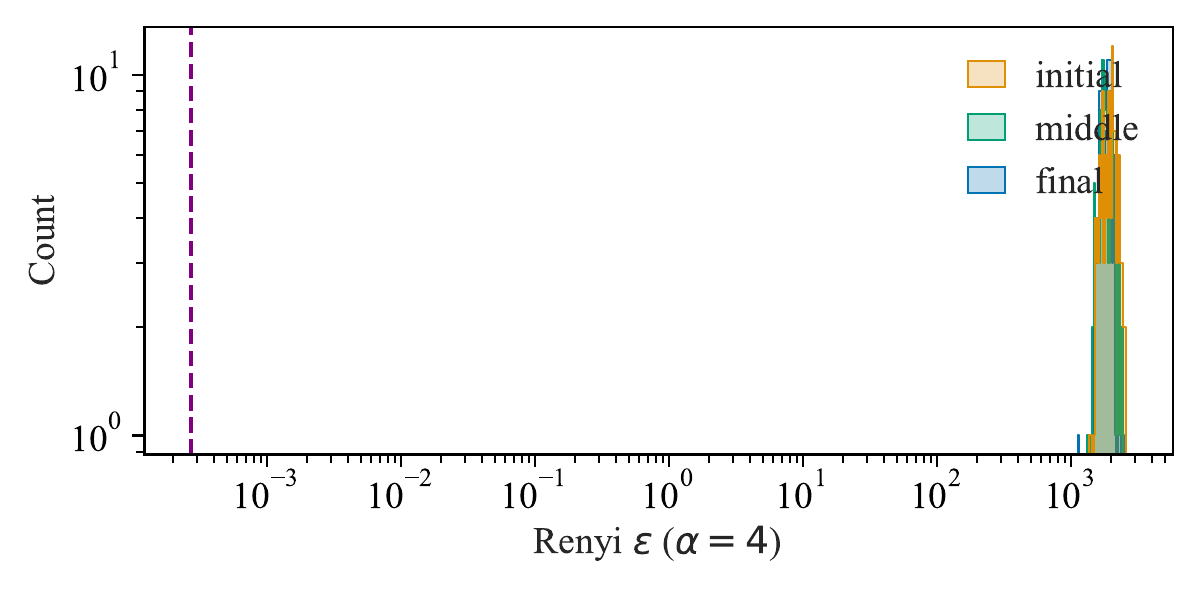}
}
\\\subfloat[Alpha = 8]
{
\includegraphics[width=0.48\linewidth]{figures/renyi_hard_eps_hist_CIFAR10_resnet20_128.0_8_sum.pdf}
}
\subfloat[Alpha = 16]
{
\includegraphics[width=0.48\linewidth]{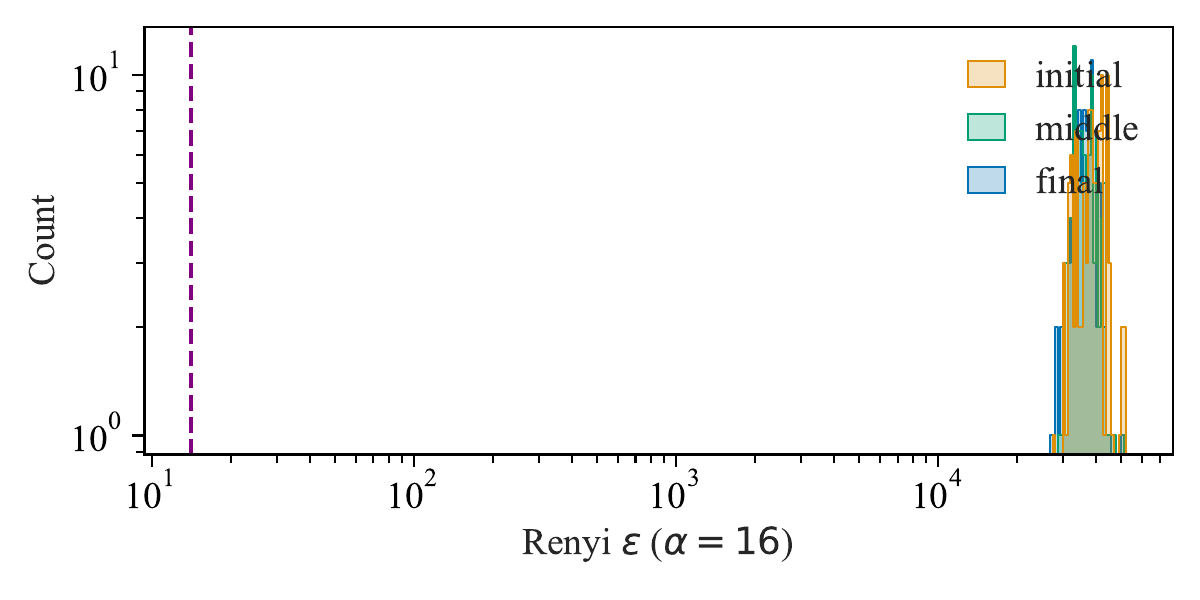}
}
\\\subfloat[Alpha = 32]
{
\includegraphics[width=0.48\linewidth]{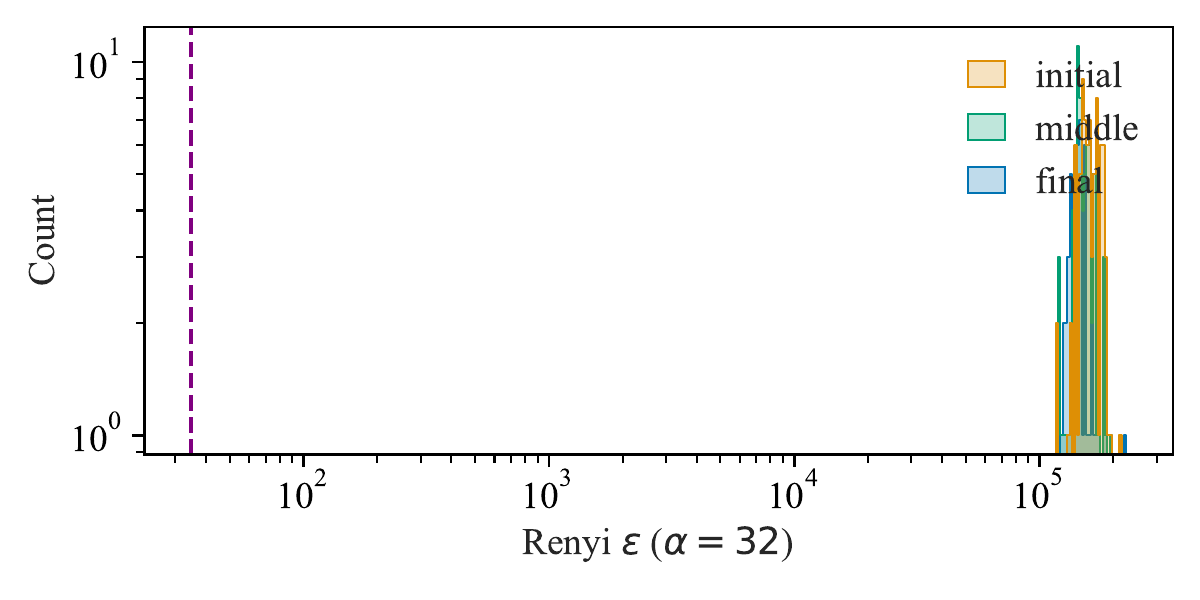}
}
\subfloat[Alpha = 64]
{
\includegraphics[width=0.48\linewidth]{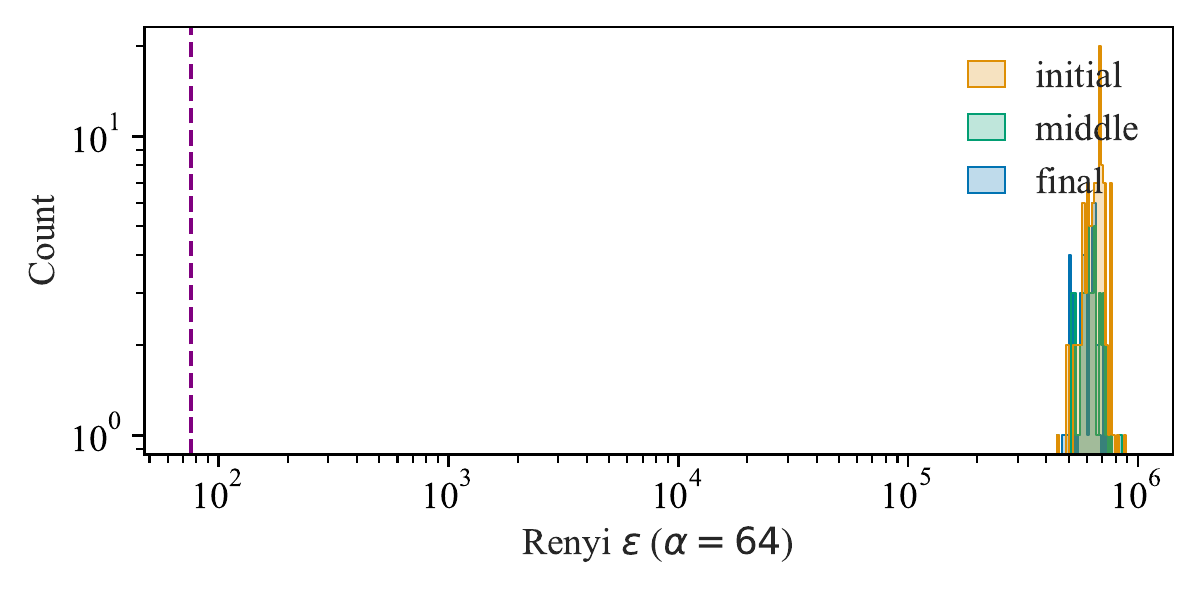}
}
\caption{ This is the reproduction of Figure~\ref{fig:renyi_hard_eps_distrib_alpha_mnist_sum} except we now use ResNet-20 models trained on CIFAR-10. The results are similar.
}
\label{fig:renyi_hard_eps_distrib_alpha_cifar_sum}
\end{figure}

\begin{figure}[t]
\centering
\subfloat[Update-rule: mean]
{
\includegraphics[width=0.48\linewidth]{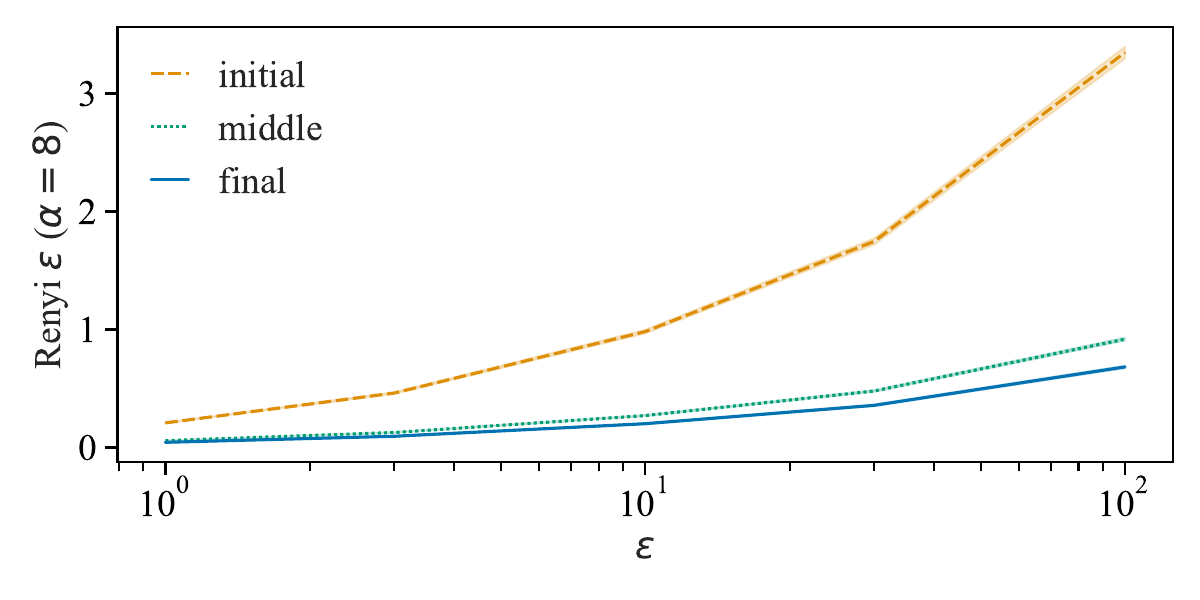}
}
\subfloat[Update-rule: sum]
{
\includegraphics[width=0.48\linewidth]{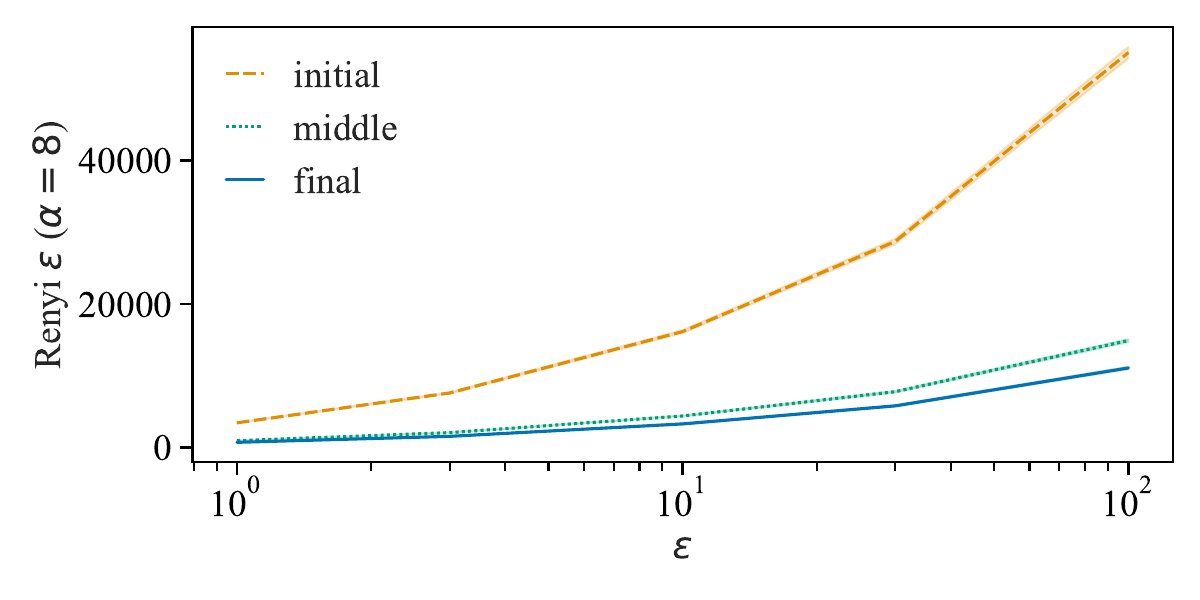}
}
\caption{ Per-step R\'enyi-DP guarantee given by Theorem~\ref{thm:renyi_dp_sens} computed on LeNet-5 trained on MNIST as a function of $\epsilon$, plotted at 3 stages of training and varying mini-batch sizes with 2 different update rules. We can see that as training proceeds, our guarantees increase slower while $\epsilon$ increases.
}
\label{fig:renyi_hard_eps_curve_vary_eps_mnist}
\end{figure}

\begin{figure}[t]
\centering
\subfloat[Update-rule: mean]
{
\includegraphics[width=0.48\linewidth]{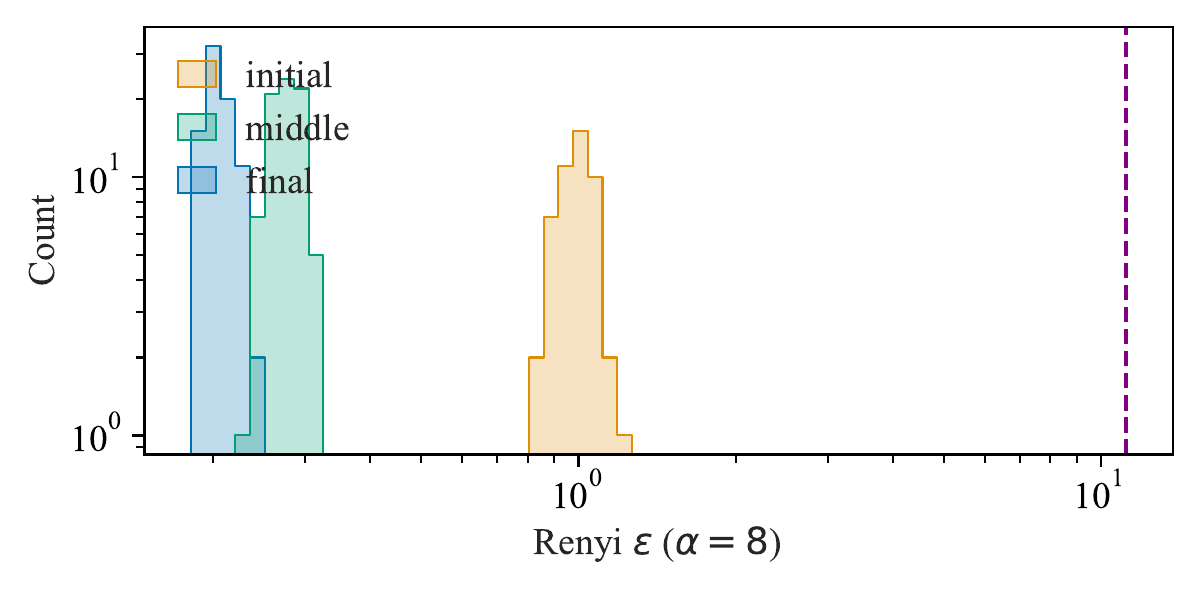}
}
\subfloat[Update-rule: sum]
{
\includegraphics[width=0.48\linewidth]{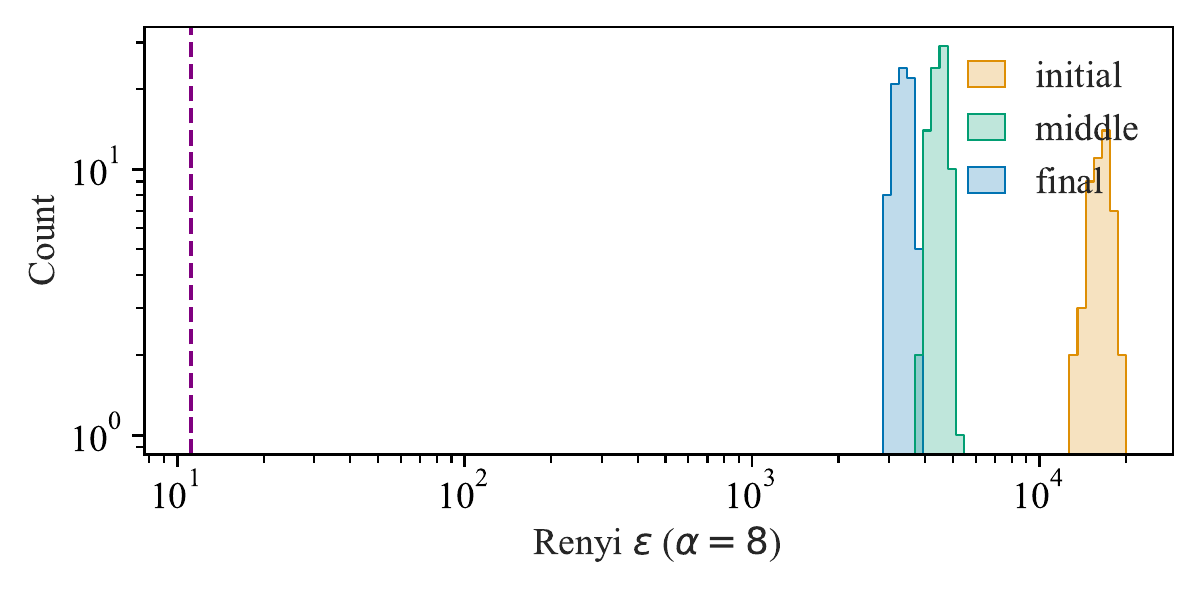}
}
\caption{ This is the reproduction of the mini-batch size 128 case of Figures~\ref{fig:renyi_hard_eps_distrib_bs_mnist_mean} and~\ref{fig:renyi_hard_eps_distrib_bs_mnist_sum} where $X$ and $X'$ are swapped to show that our bounds for both divergences are tighter than the baseline for mean update rule. Thus our guarantees are better than the baseline.
}
\label{fig:reverse_renyi}
\end{figure}

\begin{figure}[t]
\centering
\subfloat[Update-rule: mean]
{
\includegraphics[width=0.48\linewidth]{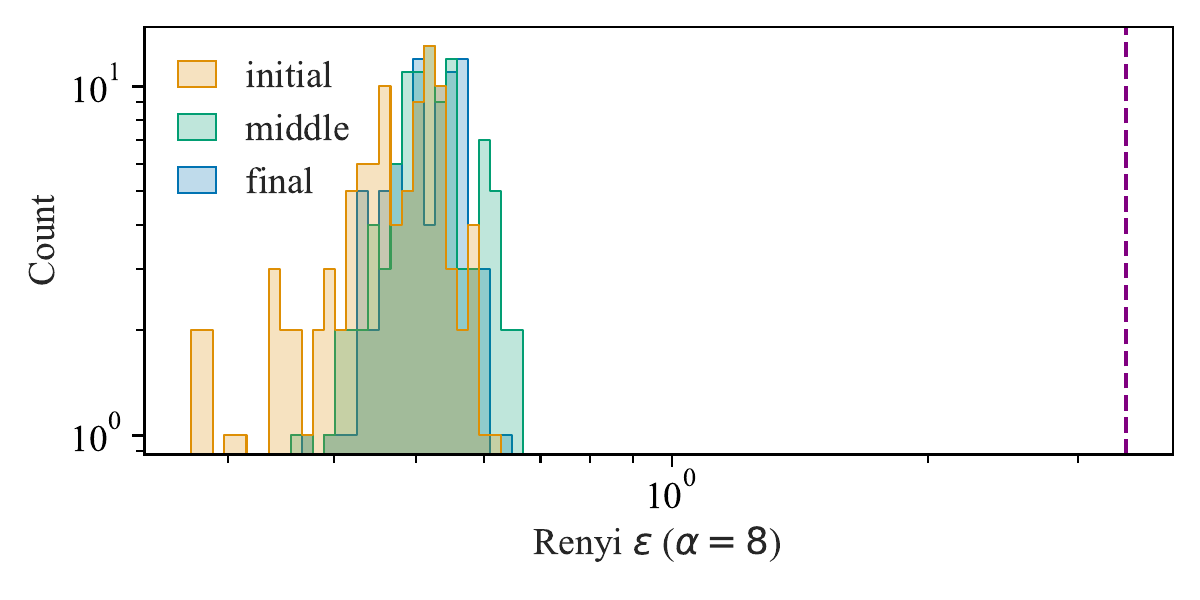}
}
\subfloat[Update-rule: sum]
{
\includegraphics[width=0.48\linewidth]{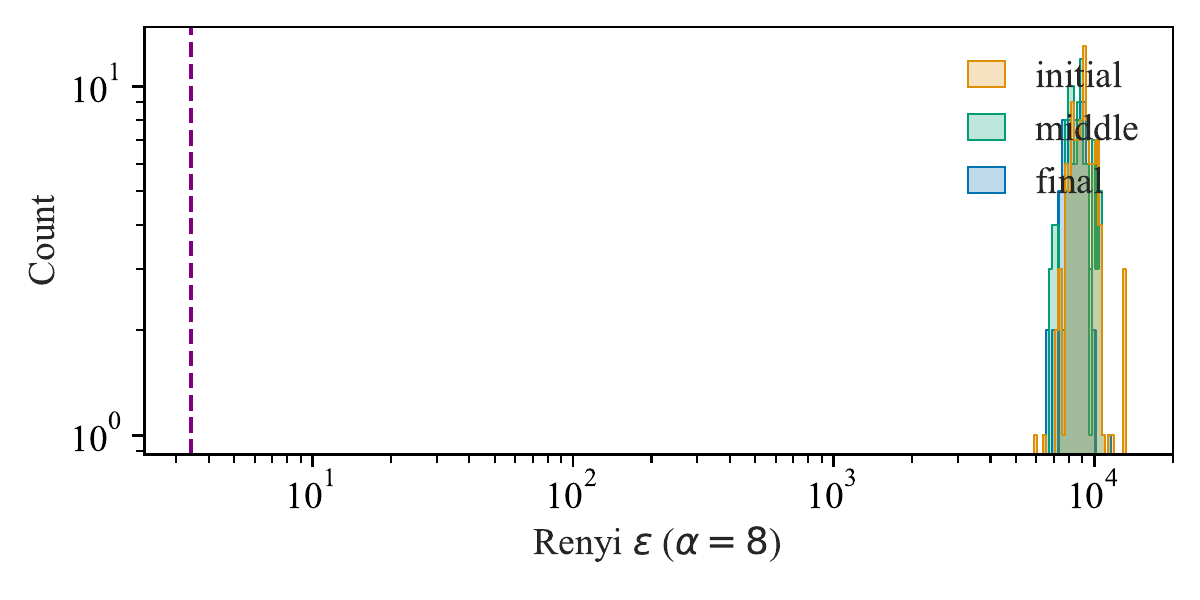}
}
\caption{ This is the reproduction of Figures~\ref{fig:reverse_renyi} except 
we now use ResNet-20 models trained on CIFAR-10. The results are consistent with MNIST.}
\label{fig:reverse_renyi_cifar}
\end{figure}

\else
\section{Additional Empirical Results}
\label{sec:detailed_emp_res}

We present additional experiments here.

\begin{figure}[h!]
\centering
\subfloat[varying batch size]
{
\includegraphics[width=0.48\linewidth]{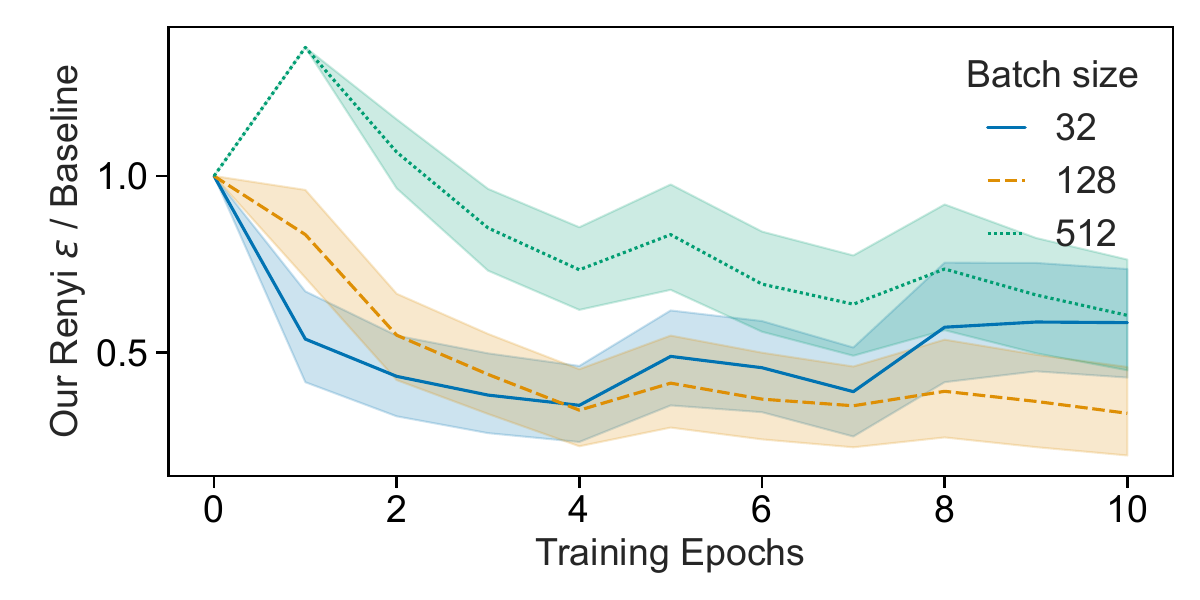}
}
\subfloat[varying architecture]
{
\includegraphics[width=0.48\linewidth]{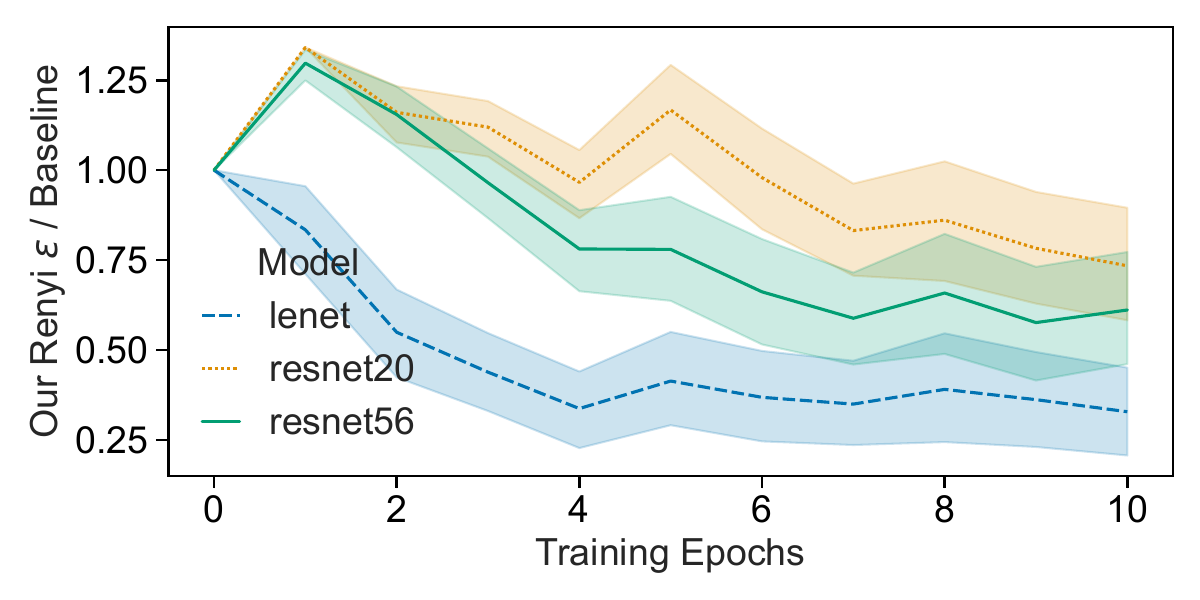}
}
\\\subfloat[varying epsilon]
{
\includegraphics[width=0.48\linewidth]{figures_camera_ready/compo_simple_eps_compo_vary_eps_fraction_curve_MNIST_eps.pdf}
}
\subfloat[varying batch size\\$\text{~~~~~}$($10^{th}$ percentile)]
{
\includegraphics[width=0.48\linewidth]{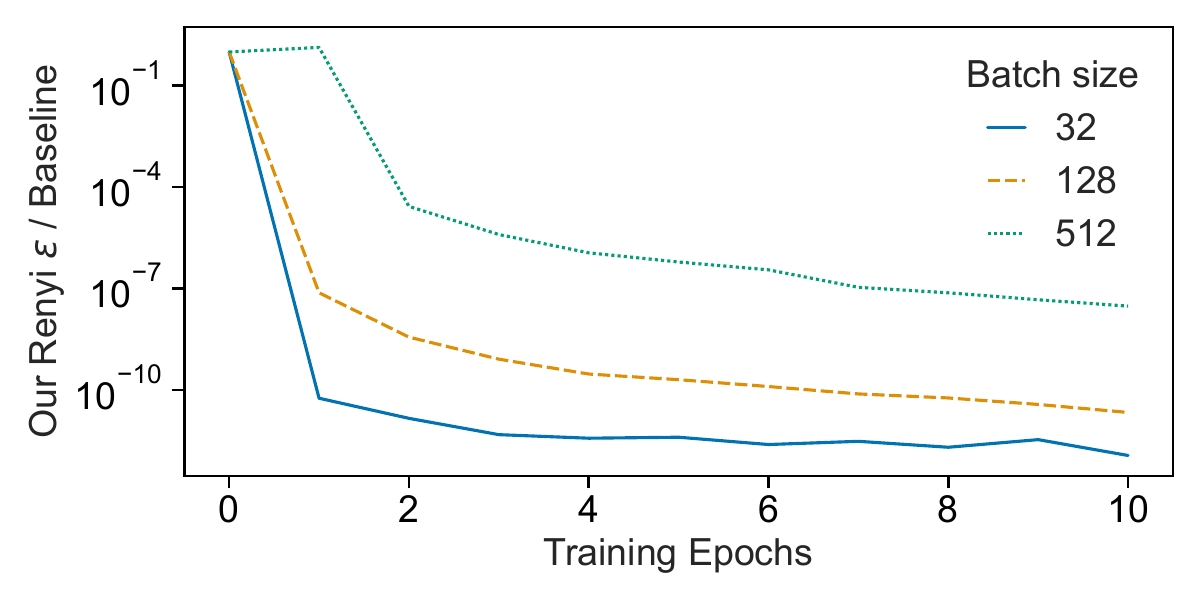}
}
\\\subfloat[varying architecture\\$\text{~~~~~}$($10^{th}$ percentile)]
{
\includegraphics[width=0.48\linewidth]{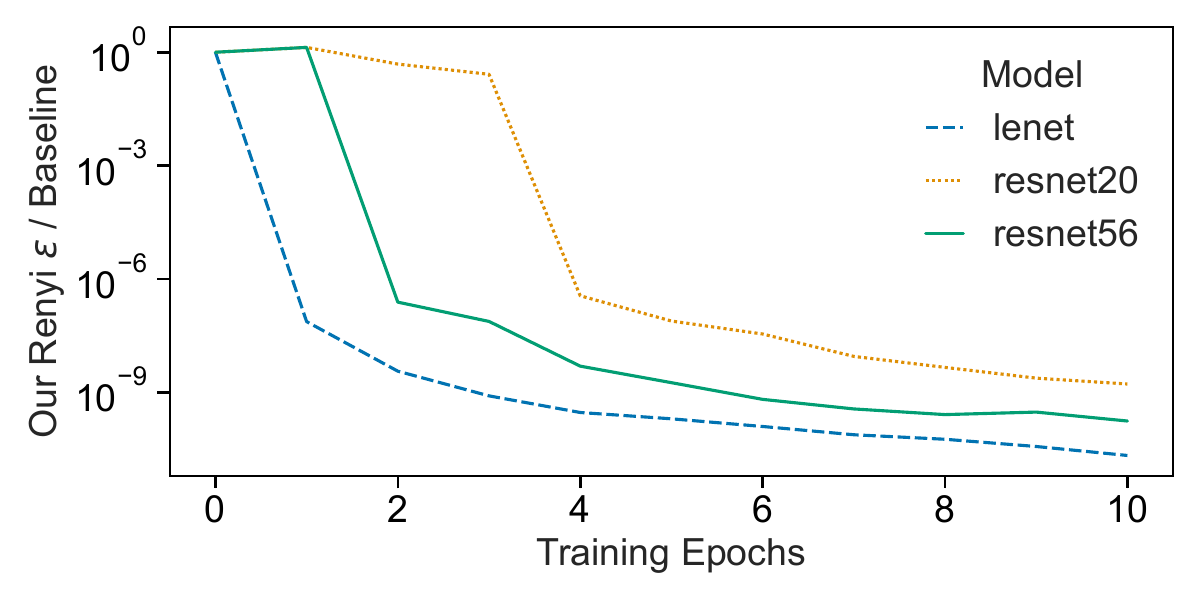}
}
\subfloat[varying epsilon\\$\text{~~~~~}$($10^{th}$ percentile)]
{
\includegraphics[width=0.48\linewidth]{figures_camera_ready/compo_simple_eps_compo_vary_eps_fraction_curve_MNIST_epspercentile10.pdf}
}
\caption{\footnotesize  Expected privacy guarantees from Theorem~\ref{thm:easy_renyi_dp} plotted as a fraction of the per-step DP-SGD guarantee over training. One can see that the ratio between our guarantee and the per-step DP-SGD guarantee (the baseline) decreases as training approaches the end, and this is consistent across different strengths of DP (i.e., $\epsilon$ set for the entire training), varying mini-batch size, and different model architectures.
}
\label{fig:renyi_simple_composition_mnist_sum}
\end{figure}

\begin{figure}[t]
\vspace{-3mm}
\centering
\subfloat[MNIST]
{
\includegraphics[width=0.48\linewidth]{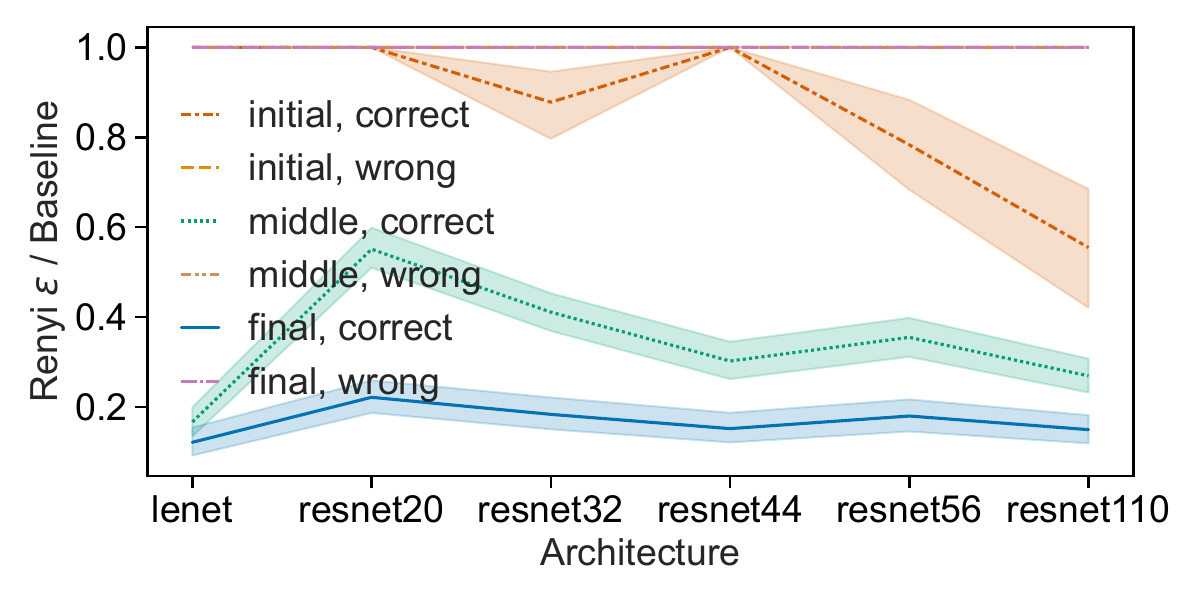}
}
\subfloat[CIFAR10]
{
\includegraphics[width=0.48\linewidth]{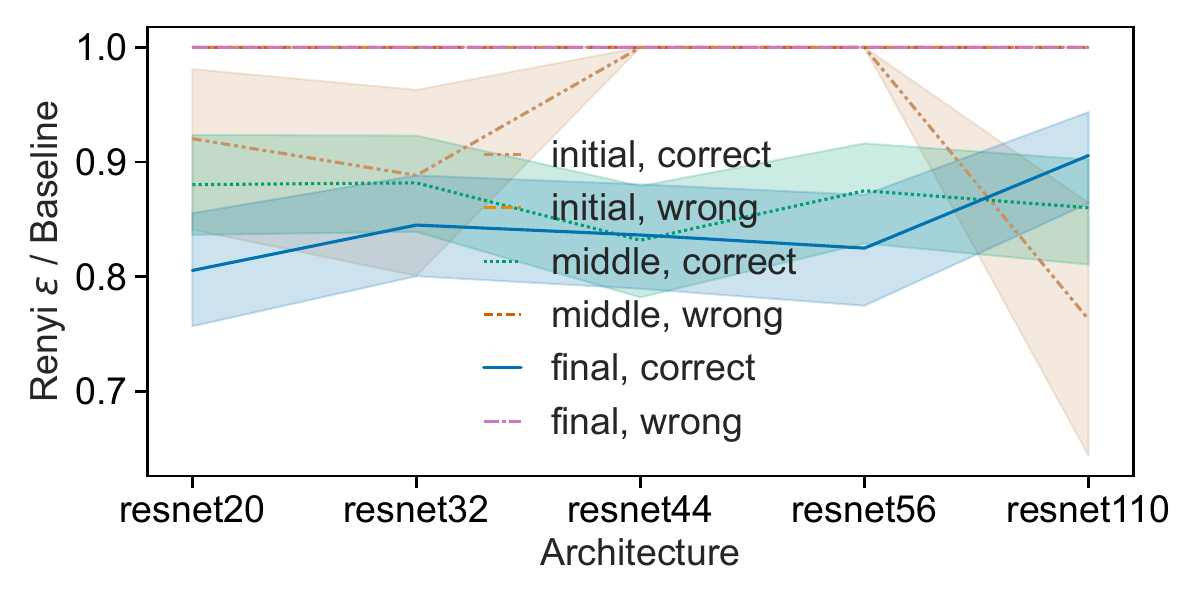}
}
\caption{\footnotesize \looseness=-1 Per-step R\'enyi-DP guarantee given by Theorem~\ref{thm:easy_renyi_dp} (divided by the baseline guarantee) plotted with respect to different model architectures trained on MNIST and CIFAR-10, at 3 stages of training. Consistently across different architectures and datasets, data points at later stages of training that are correctly classified have significantly better privacy guarantees than the baseline.
\vspace{-5mm}
}
\label{fig:renyi_simple_fraction_curve_vary_arch_mnist}
\end{figure}

\begin{figure}[t]
\centering
\subfloat[Mini-batch size = 16]
{
\includegraphics[width=0.48\linewidth]{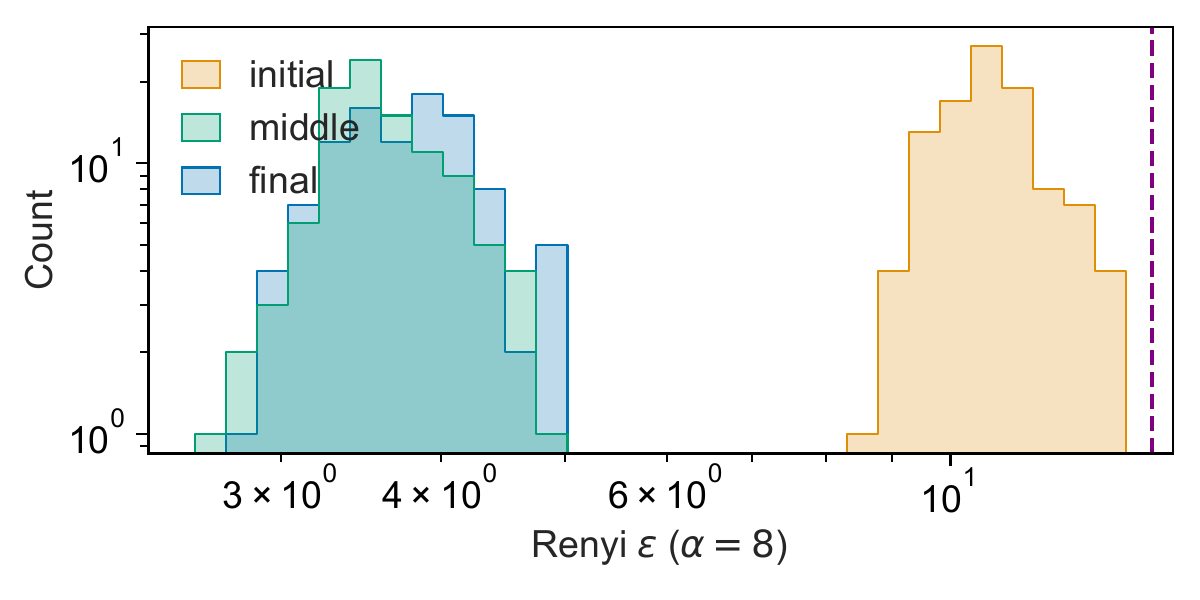}
}
\subfloat[Mini-batch size = 32]
{
\includegraphics[width=0.48\linewidth]{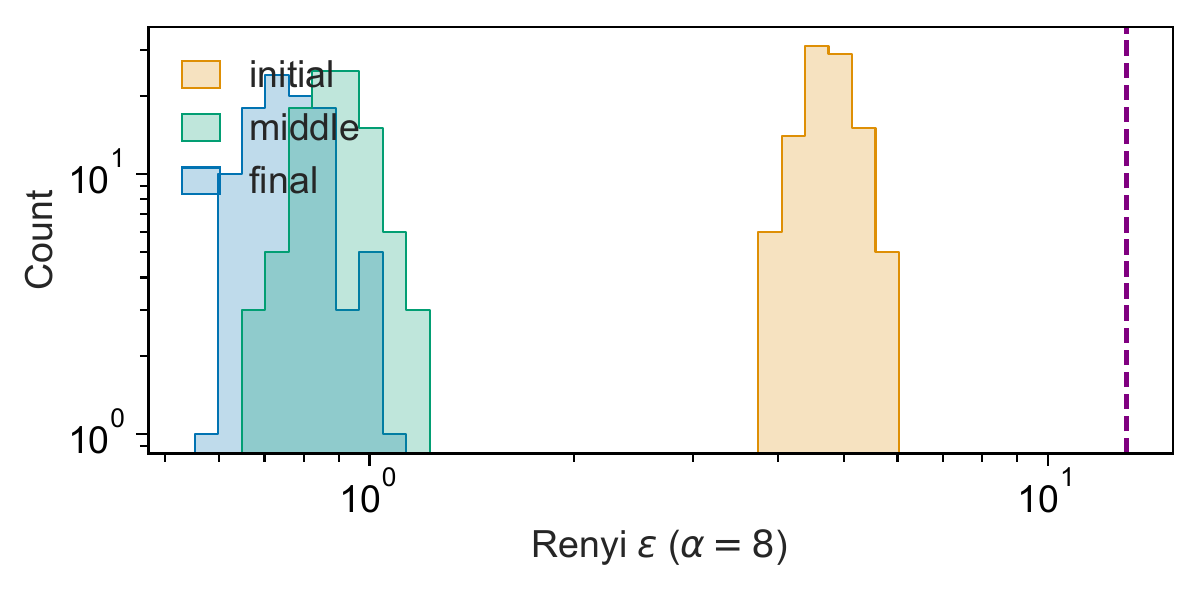}
}
\\\subfloat[Mini-batch size = 64]
{
\includegraphics[width=0.48\linewidth]{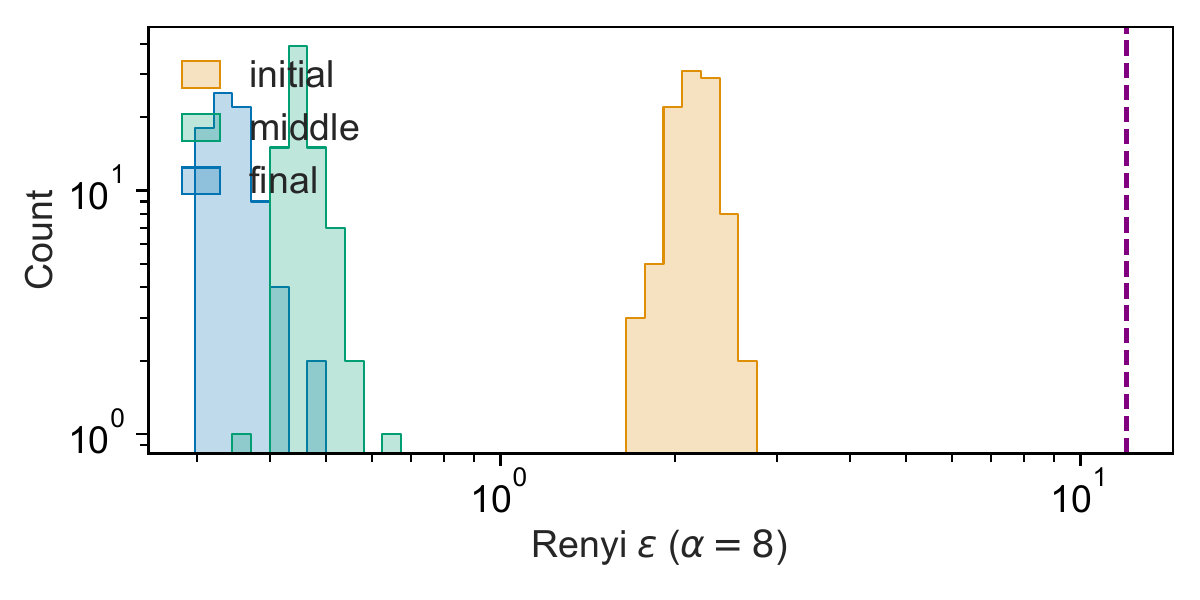}
}
\subfloat[Mini-batch size = 128]
{
\includegraphics[width=0.48\linewidth]{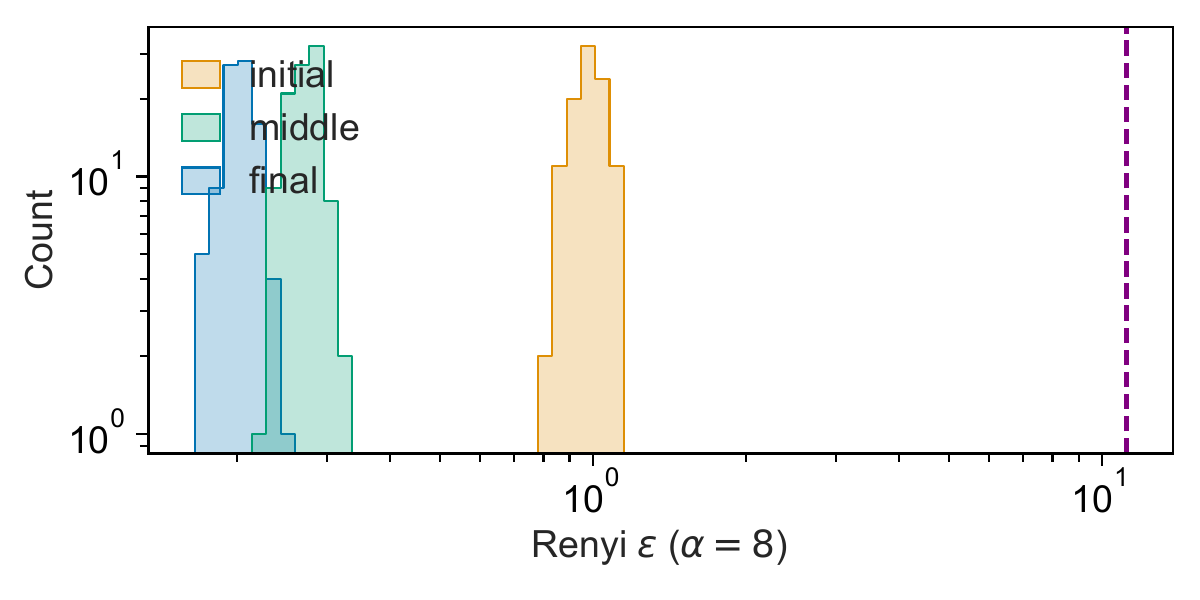}
}
\\\subfloat[Mini-batch size = 256]
{
\includegraphics[width=0.48\linewidth]{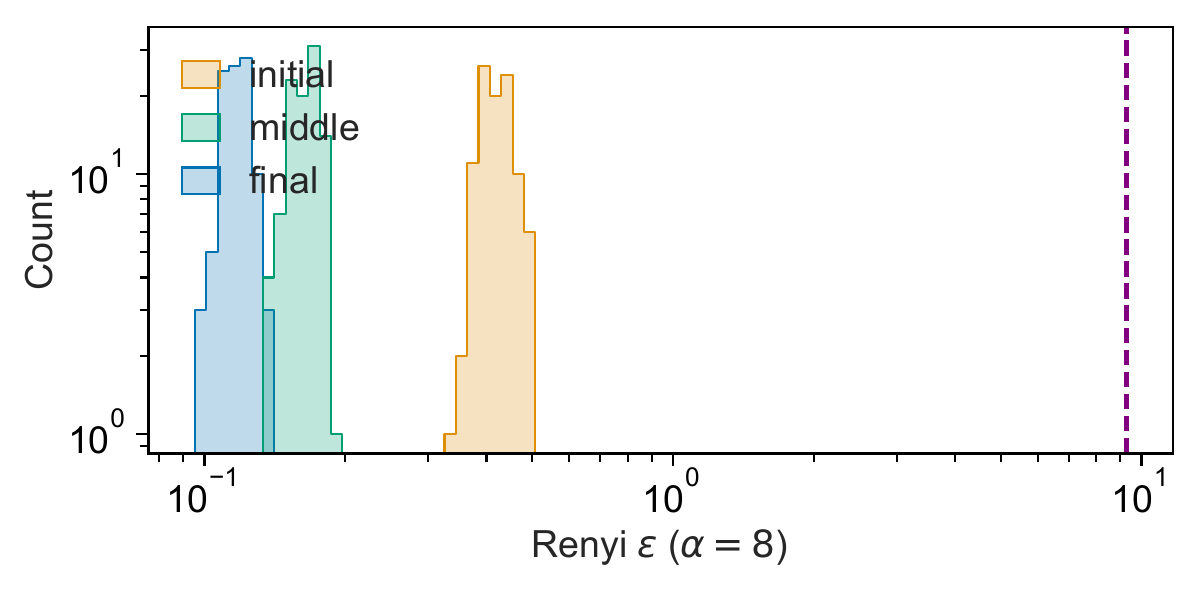}
}
\subfloat[Mini-batch size = 512]
{
\includegraphics[width=0.48\linewidth]{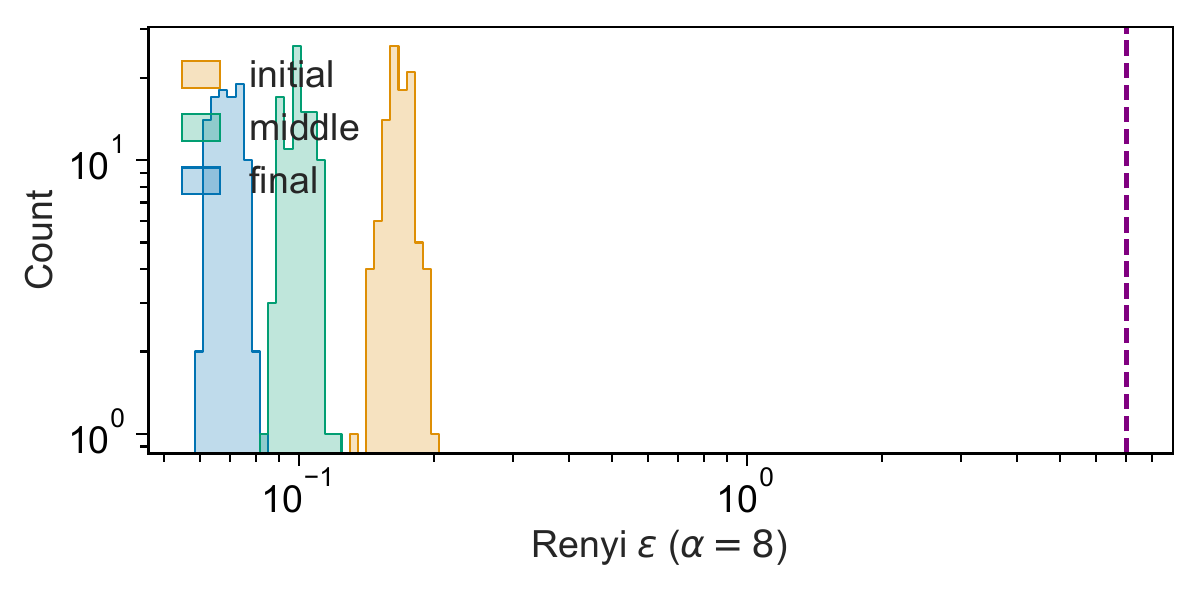}
}

\caption{\footnotesize \looseness=-1 Distribution plots (log scale) of per-step guarantees given by Theorem~\ref{thm:renyi_dp_sens} computed on LeNet-5 trained on MNIST with mean update rule and varying mini-batch sizes of $16, 32, 64, 128, 256, 512$. As specified by the legend labels, we group the plotted guarantees by whether the model is at the initial, middle, or final stage of the training. It can be seen that in all settings our guarantee is better than the baseline, which is represented by the dashed purple line. However, the guarantee distributions of points at the initial stage of training are closer to the baseline compared to the other distributions.
Additionally, since a mean update rule is used, the bounds depend on the mini-batch size, and better bounds are achieved when the mini-batch size is large. 
\vspace{-5mm}
}
\label{fig:renyi_hard_eps_distrib_bs_mnist_mean}
\end{figure}

\begin{figure}[t]
\centering
\subfloat[MNIST]
{
\includegraphics[width=0.48\linewidth]{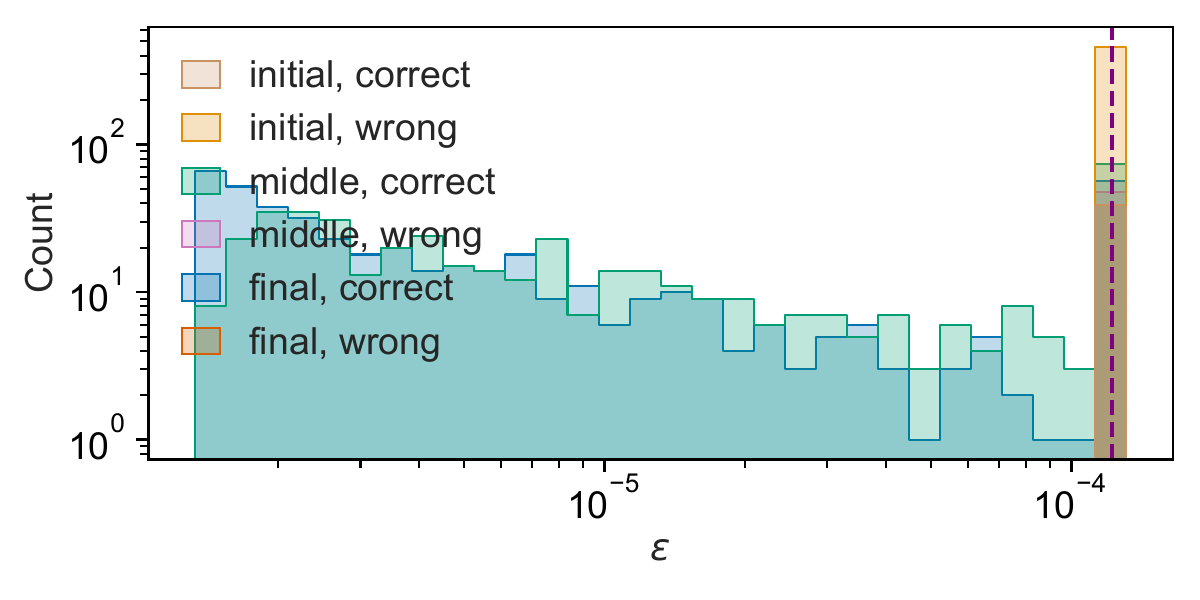}
}
\subfloat[CIFAR10]
{
\includegraphics[width=0.48\linewidth]{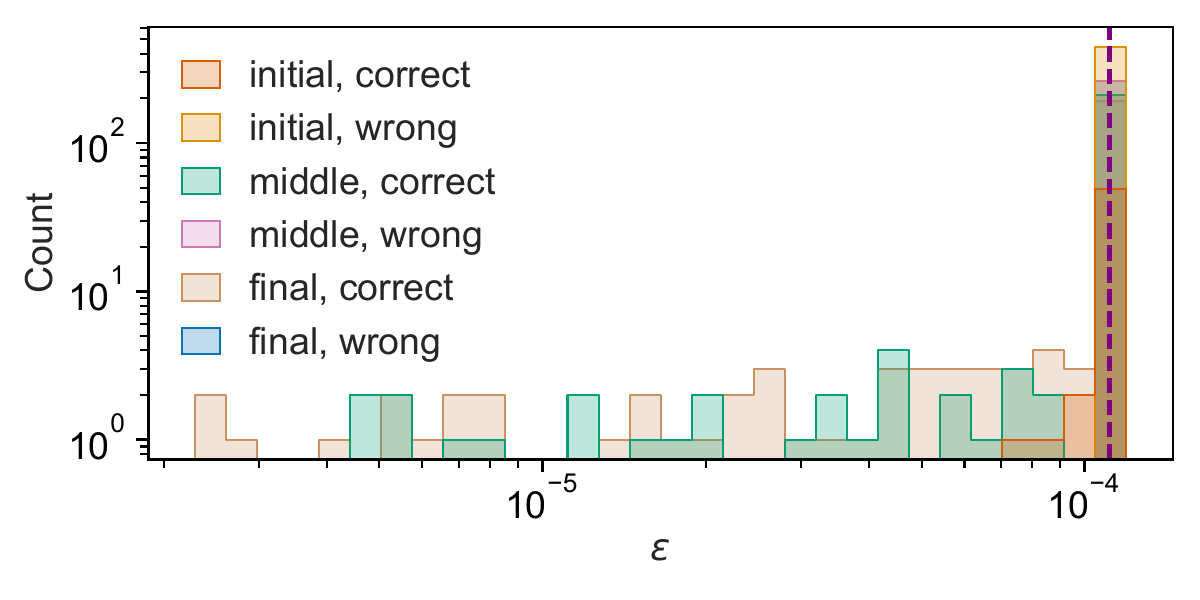}
}
\caption{\footnotesize \looseness=-1 Distribution plots of per-step guarantee given by Corollary~\ref{cor:eps_delta_sens} computed on MNIST and CIFAR10 with sum update rule with expected minibatch size $128$, for different stages of training and correctly and incorrectly classified points. It can be seen that in all settings our guarantee is better than the baseline. However, the guarantee distributions of incorrectly classified points and points at the initial stage of training are closer to the baseline compared to the other settings.
}
\label{fig:ed_eps_distrib_sum}
\end{figure}

\fi

\end{document}